\def\eqref#1{equation~\ref{#1}}
\def\1{\bm{1}}
\DeclareMathAlphabet{\mathsfit}{\encodingdefault}{\sfdefault}{m}{sl}
\SetMathAlphabet{\mathsfit}{bold}{\encodingdefault}{\sfdefault}{bx}{n}
\def\sP{{\mathbb{P}}}
\newcommand{\E}{\mathbb{E}}
\newcommand{\R}{\mathbb{R}}
\newcommand{\softmax}{\mathrm{softmax}}
\DeclareMathOperator*{\argmin}{arg\,min}
\DeclareMathOperator{\sign}{sign}
\newcommand{\annotatorbias}{b}
\newcommand{\dataidx}{i}
\newcommand{\embd}{\Psi}
\newcommand{\embdedpair}[1]{\embd(x_{#1}, y_{#1})}
\newcommand{\embdpairabbr}[2]{\bm{\embd}_{#1}^{(#2)}}
\newcommand{\embdpairabbrgen}[1]{\bm{\embd}_{#1}}
\newcommand{\reward}{r} % this is the function taking embedding
\newcommand{\rewardvec}{\bm{p}} % this is actually the probability
\newcommand{\truereward}{r}
\newcommand{\estreward}{\hat{\truereward}}
\newcommand{\nnparam}{\theta}
\newcommand{\nnrewardfull}{\hat{\reward}_{\nnparam}}
\newcommand{\nnreward}{\hat{\reward}}
\newcommand{\nnrewardiff}{\nnreward^{\Delta}}
\newcommand{\nndepth}{L}
\newcommand{\nnwidthplain}{m}
\newcommand{\nnwidthvec}{\bm{\nnwidthplain}}
\newcommand{\ReLU}{\psi}
\newcommand{\classprobplain}{p}
\newcommand{\trueprobvec}[1]{\bm{\classprobplain}_{0}^{(#1)}}
\newcommand{\trueprobvecgen}{\bm{\classprobplain}_{0}}
\newcommand{\predprobvec}[1]{\hat{\bm{\classprobplain}}^{(#1)}}
\newcommand{\predprobvecsimp}{\hat{\bm{\classprobplain}}}
\newcommand{\prefplain}{h}
\newcommand{\preff}{\prefplain}
\newcommand{\prefvec}[1]{\bm{\prefplain}^{(#1)}}
\newcommand{\datasetplain}{\mathcal{D}}
\newcommand{\dataset}[1]{\datasetplain_{\mathrm{#1}}}
\newcommand{\prompt}{x}
\newcommand{\response}{y}
\newcommand{\llm}{\ell}
\newcommand{\ordermodel}{\hat{H}}
\newcommand{\clfordermodel}{\ordermodel_{\text{clf}}}
\newcommand{\btrewardmodel}{\nnreward_{\text{BT}}}
\DeclareMathOperator{\logit}{logit}
\newcommand{\ys}[1]{{\textcolor{cyan}{\small [YS]: #1}}}
\newmdenv[
    middlelinecolor=none,
    middlelinewidth=1pt,
    backgroundcolor=blue!5,
    roundcorner=5pt
]{bluebox}
\newmdenv[
    middlelinecolor=none,
    middlelinewidth=1pt,
    backgroundcolor=gray!20,
    roundcorner=5pt
]{graybox}
\lstdefinestyle{mypy}{
    language=Python,             % Set language to Python
    basicstyle=\ttfamily,        % Use a monospaced font
    keywordstyle=\color{blue},   % Style for Python keywords
    stringstyle=\color{gray},     % Style for strings
    commentstyle=\color{green},  % Style for comments
    frame=single,                % Draw a frame around the code
    numbers=left,                % Display line numbers on the left
    numberstyle=\tiny\color{gray}, % Line number style
    backgroundcolor=\color{lightgray!20}, % Light gray background
    breaklines=true,             % Allow line breaks
    tabsize=4,                   % Set tab size to 4 spaces
    showstringspaces=false       % Do not show spaces in strings
}
\begin{document}

% \title{Rethinking the Bradley-Terry Models in Preference-based Reward Modeling: Foundation, Theory, and its Alternatives}
\title{Rethinking Bradley-Terry Models in Preference-Based Reward Modeling: Foundations, Theory, and Alternatives}

\author{\name Hao Sun\footnote{equal contribution.} \email hs789@cam.ac.uk \\
      \addr University of Cambridge\\
      \addr Cambridge, UK
      \AND
      \name Yunyi Shen$^*$ \email yshen99@mit.edu \\
      \addr Massachusetts Institute of Technology \\
      \addr Cambridge, MA, USA
      \AND
      \name Jean-Francois Ton \email jeanfrancois@bytedance.com \\
      \addr ByteDance Research \\
      \addr London, UK
        }

\editor{My editor}

\maketitle

\begin{abstract}%   <- trailing '%' for backward compatibility of .sty file
The Bradley-Terry (BT) model is a common and successful practice in reward modeling for Large Language Model (LLM) alignment. However, it remains unclear \textit{why} this model --- originally developed for multi-player stochastic game matching --- can be adopted to convert pairwise response comparisons to reward values and make predictions. Especially given the fact that only a limited number of prompt-response pairs are sparsely compared with others. 
In this paper, we first revisit the foundations of using BT models in reward modeling, and establish the convergence rate of BT reward models based on deep neural networks using embeddings, providing a theoretical foundation for their use.
Despite theoretically sound, we argue that the BT model is not a necessary choice from the perspective of downstream optimization. This is because a reward model only needs to preserve the correct ranking predictions through a monotonic transformation of the true reward. 
We highlight the critical concept of \textit{order consistency} in reward modeling and demonstrate that the BT model possesses this property.
Consequently, we propose a simple and straightforward upper-bound algorithm, compatible with off-the-shelf binary classifiers, as an alternative order-consistent reward modeling objective. 
To offer practical insights, we empirically evaluate the performance of these different reward modeling approaches across more than 12,000 experimental setups, using $6$ base LLMs, $2$ datasets, and diverse annotation designs that vary in quantity, quality, and pairing choices in preference annotations.
\end{abstract}
\begin{keywords}
  Bradley-Terry Models, Large Language Model Alignment, Preference-based Reward Modeling, Reinforcement Learning from Human Feedback
\end{keywords}
\section{Introduction}
The alignment of Large Language Models (LLMs) is crucial for their safe and effective deployment across various applications. Current research on reinforcement learning from human feedback (RLHF)~\citep{christiano2017deep,stiennon2020learning,ouyang2022training,bai2022training} has largely focused on utilizing preference-based annotations provided by humans or general-purpose LLMs~\citep{bai2022constitutional,lee2023rlaif,guo2024direct}. In general, there are two primary approaches to RLHF, namely the direct policy optimization \citep{rafailov2024direct, zhao2023slic, azar2023general} that aligns LLMs with supervised learning objectives, and the alternate method that constructs a reward model to guide the LLM optimization via either supervised learning or reinforcement learning \citep{yuan2023rrhf, dong2023raft, munos2023nash, li2023remax}. 

Among these strategies, the Bradley-Terry (BT) model~\citep{bradley1952rank} is commonly employed to convert pairwise comparisons into scores and has demonstrated its success in large-scale alignment systems~\citep{ouyang2022training,touvron2023llama}. Despite its empirical success, the theoretical justification for using the BT model in this context remains underexplored, particularly when dealing with sparse comparisons on a limited number of prompt-response pairs. Furthermore, it is unclear how necessary it is to use the BT models, and what data format is preferred in annotation.
Our work aims to address these key questions by reflecting on reward modeling in LLM alignment. The subsequent sections of this paper are structured to answer those questions:
% In this work, we reflect the reward modeling choices in LLM alignment, focusing on addressing those key questions. The rest sections of this paper is developed as follows:
\begin{itemize}[leftmargin=*, nosep]
    \item \textbf{Question 1}:\textit{\textbf{ When the number of players is greater than the number of comparisons (as often the case in LLM alignment), is the use of the BT model theoretically sound, and what factors contribute to its empirical success?}}\\
     In Section~\ref{sec:statistical_bt} we first review the usage of BT models from a statistics perspective. We show that the classical BT model can be seamlessly applied to the LLM arena cases~\citep{chiang2024chatbot}, but falls short in reward modeling due to the extremely sparse comparison and the requirement of making predictions. 
     We explore the regression variants of BT models~\citep[e.g.,][]{springall1973response, bockenholt1988logistic} and their neural network extensions, providing theoretical results on their ability to approximate true reward up to an additive constant in the context of LLM alignment. %\item
     %The regression variants of BT model by \citet{springall1973response, bockenholt1988logistic}, and their neural net versions can be used to learn reward up to a constant additive.
        % We provide theoretical results on its ability to learn reward up to an additive constant.
    \item \textbf{Question 2}: \textit{\textbf{What are alternative approaches to reward modeling other than the BT model?}}\\
     Since the primary objective of reward modeling is to optimize LLMs outputs by identifying the good response in inferences, learning a reward function up to any monotonic transformation should be sufficient for such an objective. This fact motivates us to propose a simple objective based on \textbf{\textit{order consistency}} in Section~\ref{sec:order_consistency}. We show both the BT model and Classification models fall in this objective class. Corresponding empirical studies are conducted in Section~\ref{sec:experiments}.
    % demonstrating empirically that this approach performs comparably to BT while being easier to implement and more flexible across different base models.    
    % Given the fact that the ultimate goal of reward modeling is to align LLMs by seeking the proxy optimal reponse, learning a reward model up to any monotone transformation is as good as learning the reward models. We provide a simpler objective that aligned with this observation based on order consistency. We empirically showed that our simpler objective can be as good as BT, and is much easier to implement with flexibility to compliant with diverse base models.
    \item \textbf{Question 3}: \textbf{\textit{The conventional applications of the BT model assume randomized pairwise comparisons (e.g., randomized game matches among players). Would cross-prompt comparisons lead to more effective reward modeling?}}\\
    Our theoretical analysis emphasizes the necessity of using regression on the embedding space to predict rewards, which hints at the possibility of different types of comparisons. Specifically, we find no inherent advantages in restricting comparisons to responses from the same prompt, as is commonly done in the literature. In Section~\ref{sec:x-prompt}, we theoretically analyze the superiority of cross-prompt annotations in LLM alignment, and we empirically investigate the impact of cross-prompt comparisons and provide insights into their potential benefits in Section~\ref{sec:experiments}.
\end{itemize}
The contributions of this paper can be summarized as follows:
\begin{enumerate}[leftmargin=*,nosep]
    % \item Formally, we reflect the application of the BT model in LLM alignment, contrasting its use in multi-player arenas with its unique challenges. We analyze the underlying assumptions and implications of using the BT model in LLM alignment, and justify their applications.
    \item Formally, we provide a comprehensive analysis of the application of the BT model in LLM alignment, contrasting its traditional use in multi-player arenas with the unique challenges posed in this context. We analyze the underlying rationale and offer a thorough justification for applying the BT model to LLM reward modeling.
    \item Theoretically, we introduce the first asymptotic theory for neural network-based BT regression in preference-based reward modeling. Our work establishes the first risk bound for BT model reward estimation in the context of LLM alignment.
    \item Practically, we propose \textit{order consistency} as a core objective of reward modeling, demonstrating how this principle can derive both the BT model and an alternative classification-based approach. This alternative offers greater flexibility compared to the BT model, broadening its applicability.
    \item Empirically, we conduct extensive experiments --- covering $6$ base LLMs, $2$ datasets, $3$ response sampling methods, $6$ annotation noise levels, $3$ reward model implementations, $4$ annotation availability scenarios, and $5$ random seeds --- resulting in over 12,000 runs. These experiments demonstrate the statistical efficacy of the classification-based reward models and compare them with the BT model across diverse settings.
    % \item Formally, we reflect the usage of the BT model in LLM alignment problems, we contrast different use cases of the BT model in LLM arena and alignment and pointing out why the latter is conceptually different and more challenging than the former. We reflect the assumptions and implications behind the usage of BT model in alignment, and formally justify such a usage.
    % \item Theoretically, we provide the asymptotic theory on deep neural network based BT regression on reward modeling, and give the first risk bound when using BT model to estimate the rewards.%\hs{@YS, $>.<$}
    % \item Practically, we introduce and highlight the general objective of reward modeling should be order consistency, based on which we can derive both the BT model and alternative Classification models.
    % \item Empirically, we conducted extensive experiments (6 base LLMs, 2 datasets, 3 response sampling methods in annotation, 6 label noise levels, 5 seeds, 3 RM implementations, 4 training prompt availability, in total more than 12,000 experiments) to demonstrate the statistical effectiveness of the classification-based reward model and compare it with the BT model under diverse settings.
\end{enumerate}

\section{Rethinking the Usage of BT Models in LLM Alignment}
\label{sec:statistical_bt}

\subsection{Two Different BT Models --- Parameter Estimation and Prediction}
The original BT model \citep{bradley1952rank}, aligned with the Luce-Shephard choice rule~\citep{luce1959individual,shepard1957stimulus}, posits that in a simplified two-option scenario, the \textbf{probability} of selecting option $i$ from a set ${i,j}$ is proportional to the utility $u(\cdot)$ assigned to that option. Formally, this can be expressed as a softmax output of the log utilities $\truereward(\cdot)$~\citep{bockenholt1988logistic}\vspace{-0.28cm}
\begin{equation}
\label{eqn:BT}
    P(i\succ j) = \frac{u(i)}{u(i)+u(j)}= \frac{\exp(\truereward(i))}{\exp(\truereward(i))+\exp(\truereward(j))}=\softmax(\truereward(i),\truereward(j)).
\end{equation}
\paragraph{LLM Arena with the BT Model}
One of the successful applications of the classical BT model in the LLM is the chatbot arenas~\citep{chiang2024chatbot}, where multiple LLMs compete against one another based on human feedback through paired comparisons. Here, each LLM functions as a player, and the human-annotated preferences represent the outcome of these game matches. The goal is to assign a single performance score to each LLM player. In \citet{chiang2024chatbot} $130$ models were compared across more than $1.7$ million comparisons, with each model participating in over $26,000$ matches on average.

In such a setting, estimating the values of $\truereward(\cdot)$ is sufficient to achieve the primary goal of evaluating each chatbot's performance. This aligns closely with the original objective of BT model in ranking sports teams~\citep{bradley1952rank}. Previous work has shown that, with enough pairwise competition, one can estimate these ability scores well~\citep{ford1957solution,han2020asymptotic,wu2022asymptotic} up to a constant additive factor. It is shown that to estimate $N$ scores via random pairwise comparisons, the theoretical lower bound on the number of comparisons is $\mathcal{O}(N\log(N))$\footnote{Intuition behind this bound can come from the fact that, given the true underlying scores, a quicksort algorithm requires $\mathcal{O}(N\log(N))$ comparisons on average.}, while the best-known methods require $\mathcal{O}(N\log^3(N))$ comparisons~\citep{han2020asymptotic}.

% There are at least two distinct applications of the BT model in LLM settings: (1) LLM chatbot arenas~\citep{chiang2024chatbot} and (2) reward modeling. While both utilize pairwise comparisons, their objectives and the volume of comparisons differ significantly. In the chatbot arena, multiple LLMs compete against one another based on human feedback through paired comparisons. Here, each LLM functions as a player, and the human-annotated preferences represent the outcome of these matches. The goal is to assign a single performance score to each LLM player. In \citet{chiang2024chatbot} $130$ models were compared across more than $1.7$ million comparisons, with each model participating in over $26,000$ matches on average.

% In contrast, reward modeling seeks to assign a single reward value for each \textit{prompt-response pair}, with the added challenge of predicting rewards for unseen pairs. Typically, any given prompt-response is compared to another one only once, resulting in far fewer comparisons than in the arena setting. To understand how a single model would handle these seemingly different tasks, it is beneficial to have an overview of the BT model and their variants. Table~\ref{tab:1_compare_bt_usages} summarizes the key differences between the two usages.
\paragraph{Reward Modeling with BT Model: Understanding Implicit Assumptions}
In contrast, the application of the BT model to reward modeling is not as straightforward. First, the implications of using the BT model in this context are not well-defined in the literature. For instance, if each prompt-response pair is treated as a player, how do we characterize the stochastic nature of human annotations as the game results? What assumptions need to be made? The answers to these questions were not elaborated in the literature~\citep{christiano2017deep,stiennon2020learning,ouyang2022training,bai2022training}. In the following Section~\ref{appdx:implication_of_assumptions}, we first provide an analysis of the underlying assumptions of applying the BT model to preference-based annotation as a game.

Besides the implicit assumptions, another challenge arises in the application of BT models in reward modeling lies in the comparison sparsity --- any given prompt-response is compared to another one \textbf{only once}, resulting in far fewer comparisons than in the arena setting (i.e., much less than the $\mathcal{O}(N\log(N))$ theoretical lower-bound). 
To understand how a single model would handle these seemingly different tasks, we provide an overview of the BT model and their variants in Section~\ref{sec:bt_variants}. 

\subsection{Comprehending the BT Model Application in Preference Annotations}
\label{appdx:implication_of_assumptions}
%\ys{can we say this is rederiving this model from cognitive bottle neck or something indicating we start from human bias rather than random game?}
In the literature, several prior works have challenged the \textit{practical application} of the BT model~\citep{azar2023general,munos2023nash,tang2024generalized}. While the previous analysis of whether the BT model is a good choice focused on the compatibility between the BT model and data (e.g., transitivity~\citep{gardner1970paradox}, existance of hard decision boundary~\citep{azar2023general,tang2024generalized}). In this section, we revisit the basic assumptions of modeling human preference annotations with the BT model, and focus on answering the following question:

\textbf{\textit{What are the underlying assumptions when we assume the BT model can be used to model the preference annotations?}}

The canonical interpretation of how to apply \cref{eqn:BT} in preference-based learning is that: when randomly sampling a human annotator from the population, the human annotator's choice of the preferred response is proportional to the response's utility value. 
% Conceptually, this is equivalent of defining the utility value under the Bradley-Terry model using the population preferences.
Formally, we use $\prompt, \response_1,\response_2$ to denote the prompt and responses, the above interpretation implies the following assumptions:
\begin{assumption}[Existence of Deterministic Oracle Utility Values]
    The (log-)utility value $\truereward_{\prompt,\response}$ of any response $\response$ given $\prompt$ exists and is deterministic. %i.e., $\truereward_{\prompt,\response}, \forall \prompt,\response$ is a deterministic real value.
\end{assumption}
\begin{comment}
\begin{definition}[Individual Utility Evaluation Bias]
    For a \textbf{random} human annotator $A$, their evaluation of the utility value will deviate from the oracle utility value according to their individualized bias:
    \begin{equation}
        \truereward^{(A)}_{\prompt,\response} = \truereward_{\prompt,\response} + \annotatorbias(\prompt,\response,A),
    \end{equation}
    where $\annotatorbias(\prompt,\response,A)$ is a function over prompt $\prompt$, response $\response$, and annotator-dependent. Furthermore, we have 
    \begin{equation}
        \mathbb{E}_A \left[ \annotatorbias(\prompt,\response,A) \right] = 0, \forall \prompt,\response,
    \end{equation}
    and therefore
    \begin{equation}
        \mathbb{E}_A \left[ \truereward^{(A)}_{\prompt,\response} \right] = \truereward_{\prompt,\response}, \forall \prompt,\response,
    \end{equation}
\end{definition}
\ys{Note to self: this might not be compatible with logistic difference assumption}
\end{comment}
\begin{assumption}[Deterministic Comparisons]
    For annotator $A$, the annotation result is deterministic and depends on the comparison of their biased evaluation of the utility values of both responses $\response_1$ and $\response_2$, such that 
    \begin{equation}
        \mathbbm{1}(\response_1\succ \response_2 | \prompt, A) 
        % = \mathbbm{1}(\truereward^{(A)}_{\prompt,\response_1} > \truereward^{(A)}_{\prompt,\response_2}) 
        = \mathbbm{1}\bigl(\truereward_{\prompt,\response_1} + \annotatorbias(\prompt,\response_1,A) > \truereward_{\prompt,\response_2} + \annotatorbias(\prompt,\response_2,A) \bigr)
    \end{equation}
\end{assumption}
The randomness of annotations originates from the bias $b$ associated with annotators. The following distributional assumption on $b$ leads to BT model in the annotation process:
\begin{assumption}[Logistic Difference Assumption]
    The $\annotatorbias(\prompt,\response_1,A) - \annotatorbias(\prompt,\response_2,A)$ is sampled i.i.d. from a standard logistic distribution for all $\prompt,\response$:
    \begin{equation}
        P(\annotatorbias(\prompt,\response_1,A) - \annotatorbias(\prompt,\response_2,A) \le t|A) = \frac{1}{1+e^{-t}}
    \end{equation}
\end{assumption}
\begin{remark}(Transitive property of difference)
    A reader might be (rightfully) worried if this assumption is consistent with transitive property of difference, e.g., when considering multiple comparisons we have to have $\annotatorbias(\prompt,\response_{1},A)-\annotatorbias(\prompt,\response_3,A)=\annotatorbias(\prompt,\response_{1},A)-\annotatorbias(\prompt,\response_{2},A) + \annotatorbias(\prompt,\response_{2},A)-\annotatorbias(\prompt,\response_3,A)$ while knowing sum of (independent) logistic distributions is not logistic. One should be aware that the two terms being summed are not independent and the assumption can be achieved by assuming all annotator biases are independently Gumbel distributed with the same scale parameter.   
\end{remark}
With those assumptions, we arrive at the BT-\textit{type} model
\begin{proposition}[Modeling Annotations under Logistic Difference Assumption]
    \begin{equation}
        P(\response_{1}\succ \response_{2} | \prompt) = P\bigl(\truereward_{\prompt,\response_{1}} -\truereward_{\prompt,\response_{2}}  > \annotatorbias(\prompt,\response_1,A) - \annotatorbias(\prompt,\response_2,A) \bigr) = \frac{1}{1+ e^{-(\truereward_{\prompt,\response_{1}} -\truereward_{\prompt,\response_{2}})}}
    \end{equation}
\end{proposition}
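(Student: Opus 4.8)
The plan is to unwind the three assumptions in order and then read off the value from the CDF of the standard logistic distribution. First I would fix the prompt $\prompt$ and the two responses $\response_1,\response_2$, and regard the preference label as a random variable whose only source of randomness is the draw of the annotator $A$ from the population. By definition, $P(\response_1\succ \response_2 \mid \prompt) = \mathbb{E}_A\bigl[\mathbbm{1}(\response_1\succ \response_2 \mid \prompt, A)\bigr]$, so the \emph{Deterministic Comparisons} assumption lets me replace the indicator inside the expectation by $\mathbbm{1}\bigl(\truereward_{\prompt,\response_1} + \annotatorbias(\prompt,\response_1,A) > \truereward_{\prompt,\response_2} + \annotatorbias(\prompt,\response_2,A)\bigr)$. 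Taking the expectation over $A$ then turns the expectation of an indicator into the probability of an event, giving $P(\response_1\succ \response_2 \mid \prompt) = P_A\bigl(\truereward_{\prompt,\response_1} - \truereward_{\prompt,\response_2} > \annotatorbias(\prompt,\response_2,A) - \annotatorbias(\prompt,\response_1,A)\bigr)$. Here the \emph{Existence of Deterministic Oracle Utility Values} assumption is exactly what allows me to treat $\truereward_{\prompt,\response_1} - \truereward_{\prompt,\response_2}$ as a fixed number and pull it outside the probability rather than carrying it as a random quantity.

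Next I would bring in the \emph{Logistic Difference Assumption}, which states that $\annotatorbias(\prompt,\response_1,A) - \annotatorbias(\prompt,\response_2,A)$ has the standard logistic law with CDF $t \mapsto (1+e^{-t})^{-1}$, not depending on $(\prompt,\response_1,\response_2)$. Since this law is symmetric about the origin, $\annotatorbias(\prompt,\response_2,A) - \annotatorbias(\prompt,\response_1,A)$ has the same distribution; this symmetry is precisely what reconciles the $\annotatorbias(\prompt,\response_1,A) - \annotatorbias(\prompt,\response_2,A)$ written in the statement with the $\annotatorbias(\prompt,\response_2,A) - \annotatorbias(\prompt,\response_1,A)$ produced by rearranging the inequality from the previous step. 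Writing $Z := \annotatorbias(\prompt,\response_2,A) - \annotatorbias(\prompt,\response_1,A)$, which is standard logistic, I then have $P(\response_1\succ \response_2 \mid \prompt) = P\bigl(Z < \truereward_{\prompt,\response_1} - \truereward_{\prompt,\response_2}\bigr) = \bigl(1+e^{-(\truereward_{\prompt,\response_1} - \truereward_{\prompt,\response_2})}\bigr)^{-1}$ directly from the CDF; continuity of the logistic distribution makes the distinction between $<$ and $\le$ immaterial (ties occur with probability zero). This is the claimed formula.

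The computation is short, so there is no serious technical obstacle; the two points that deserve care are (i) making the marginalization over the random annotator explicit — being clear that the probability on the left is over $A$, and that linearity of expectation is what converts $\mathbb{E}_A$ of an indicator into the probability of the comparison event — and (ii) invoking the symmetry of the standard logistic distribution so that the sign convention in the displayed inequality is consistent with what comes out of the rearrangement. I would also remark in passing that this argument uses only the within-pair distributional assumption and never any independence across distinct comparison pairs, so the concern raised in the preceding Remark (and its Gumbel-based resolution) is orthogonal to this proposition and does not enter the proof.
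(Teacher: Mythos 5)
Your proposal is correct and follows essentially the same route the paper intends: the proposition is an immediate consequence of the three stated assumptions, obtained by marginalizing the deterministic comparison indicator over the random annotator and reading the result off the standard logistic CDF. Your explicit use of the symmetry of the logistic law to reconcile the sign of $\annotatorbias(\prompt,\response_1,A)-\annotatorbias(\prompt,\response_2,A)$ in the displayed inequality with the $\annotatorbias(\prompt,\response_2,A)-\annotatorbias(\prompt,\response_1,A)$ arising from rearranging the comparison is a careful touch the paper leaves implicit, but it is the same argument.
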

While the above assumption on logistic bias differences lead to the BT-type models, it is also natural to check other assumptions such as the Gaussian difference assumption and its corresponding model:
\begin{assumption}[Gaussian Difference Assumption]
    The $\annotatorbias(\prompt,\response_1,A) - \annotatorbias(\prompt,\response_2,A)$ is sampled from a standard Gaussian distribution:
    \begin{equation}
        \annotatorbias(\prompt,\response_1,A) - \annotatorbias(\prompt,\response_2,A)\sim \mathcal{N}(0,1)
    \end{equation}
\end{assumption}
\begin{proposition}[Modeling Annotations under Gaussian Difference Assumption]
    \begin{equation}
        P(\response_{1}\succ \response_{2} | \prompt) = P\bigl(\truereward_{\prompt,\response_{1}} -\truereward_{\prompt,\response_{2}}  > \annotatorbias(\prompt,\response_1,A) - \annotatorbias(\prompt,\response_2,A) \bigr) = \Phi(\truereward_{\prompt,\response_{1}} -\truereward_{\prompt,\response_{2}}),
    \end{equation}
    where $\Phi$ is the CDF of the Gaussian distribution.
\end{proposition}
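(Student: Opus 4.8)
The plan is to unwind the modeling assumptions exactly as in the preceding proposition for the logistic case, substituting the Gaussian CDF $\Phi$ for the logistic CDF $t\mapsto 1/(1+e^{-t})$. First I would invoke the Assumption on Deterministic Comparisons: for a \emph{fixed} annotator $A$, the event $\{\response_1\succ\response_2\mid\prompt,A\}$ equals $\{\truereward_{\prompt,\response_1}+\annotatorbias(\prompt,\response_1,A) > \truereward_{\prompt,\response_2}+\annotatorbias(\prompt,\response_2,A)\}$. Rearranging terms, this is the event $\{\annotatorbias(\prompt,\response_1,A)-\annotatorbias(\prompt,\response_2,A) < \truereward_{\prompt,\response_1}-\truereward_{\prompt,\response_2}\}$, which is precisely the middle expression of the claimed display.

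Next I would take the probability over the random draw of $A$ from the annotator population. By the Assumption on Existence of Deterministic Oracle Utility Values, $\truereward_{\prompt,\response_1}$ and $\truereward_{\prompt,\response_2}$ are constants, so the only source of randomness is $\Delta:=\annotatorbias(\prompt,\response_1,A)-\annotatorbias(\prompt,\response_2,A)$. The Gaussian Difference Assumption states $\Delta\sim\mathcal{N}(0,1)$, hence $P(\Delta < t)=\Phi(t)$ for every $t\in\R$; plugging in $t=\truereward_{\prompt,\response_1}-\truereward_{\prompt,\response_2}$ yields the result. The only point requiring a moment's care is sign bookkeeping: the statement writes the middle event as $\truereward_{\prompt,\response_1}-\truereward_{\prompt,\response_2} > \annotatorbias(\prompt,\response_1,A)-\annotatorbias(\prompt,\response_2,A)$, i.e. $P(\Delta < \truereward_{\prompt,\response_1}-\truereward_{\prompt,\response_2})$, so no flip is needed; had the derivation produced $P(-\Delta < \cdot)$ instead, one would simply use symmetry of the standard normal ($-\Delta\sim\mathcal{N}(0,1)$) to reach the same conclusion.

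There is essentially no real obstacle here — the statement is a one-line consequence of the three modeling assumptions, structurally identical to the logistic proposition with $\Phi$ in the role of the logistic CDF. For completeness I would remark that neither independence across annotators nor any i.i.d. structure is actually invoked: only the marginal law of the single difference $\Delta$ enters the computation, so the conclusion holds verbatim under any assumption that pins down $\annotatorbias(\prompt,\response_1,A)-\annotatorbias(\prompt,\response_2,A)\sim\mathcal{N}(0,1)$.
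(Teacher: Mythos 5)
Your proof is correct and takes essentially the same approach as the paper, which states this proposition (like its logistic counterpart) as a direct unwinding of the three modeling assumptions: condition on the annotator, rewrite the deterministic comparison as an event about the bias difference, then evaluate its probability under the assumed $\mathcal{N}(0,1)$ law to get $\Phi(\truereward_{\prompt,\response_1}-\truereward_{\prompt,\response_2})$. The one quibble is that literally rearranging the Deterministic Comparisons assumption gives the event $\{\annotatorbias(\prompt,\response_2,A)-\annotatorbias(\prompt,\response_1,A) < \truereward_{\prompt,\response_1}-\truereward_{\prompt,\response_2}\}$, so the symmetry-of-the-standard-normal step you mention as a fallback is actually needed rather than optional --- but since you explicitly cover that case, the argument goes through as written.
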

%\begin{remark}(Assumption Self-Consistency)
%    We denote that this Gaussian difference assumption is self-consistent, and one way to achieve it is by having biases being Gaussian.
%\end{remark}
To elaborate implications of those different assumptions and provide an alternative perspective to understand those assumptions, we have the following remark:

\begin{remark}[Assumption on Performance in a Game]
    Assuming the performance of players $A$ and $B$ in a game is a Gaussian distribution centered at $\mu_A, \mu_B$, and the value of performance determines which player wins in the game, we have
    \begin{equation}
    \small
    \label{eqn:bt_prob}
        P(A ~\mathrm{ wins }~ B) =P\left(u_a \ge u_b \right) = \frac{1}{2} + \frac{1}{2}\mathrm{erf}\left(\frac{\mu_A-\mu_B}{2\sigma}\right),\quad u_a\sim \mathcal{N}(\mu_A, \sigma^2), u_b\sim \mathcal{N}(\mu_B, \sigma^2).
    \end{equation}
    Alternatively, when assuming the performance of players $A$ and $B$ in a game is a Gumble distribution located at $\mu_A, \mu_B$ with scale parameter $b$, and the value of performance determines which player wins in the game, we have
    \begin{equation}
    \small
    \label{eqn:bt_prob_gumble}
        P(A ~\mathrm{ wins }~ B) =P\left(u_a \ge u_b\right) = \frac{1}{2} + \frac{1}{2}\mathrm{tanh}\left(\frac{\mu_A-\mu_B}{2b}\right) =\sigma\left(\frac{\mu_A - \mu_B}{b}\right), u_a\sim \mathcal{G}(\mu_A, b), u_b\sim \mathcal{G}(\mu_B, b).
    \end{equation}
\end{remark}
While the latter corresponsds to the BT model, the former is known to be the Thurstonian model~\citep{thurstone1927law}. In the BT model, the $\mathrm{tanh}(\cdot)$ rather than the error function $\mathrm{erf}(\cdot)$ is used, for the sake of a better empirical fit and mathematical convenience~\citep{elo1967proposed,glickman1995comprehensive,glickman1999rating}. In both models, we use a game-dependent notation of the variance terms $\sigma^2, b$ to characterize the player-agnostic performance variation. Those constants can be further absorbed in the utility values. 

It is worth noting that although we derived BT model assuming deterministic game, stochastic score from human annotators, it is not the only way to derive this model, one can derive the same model assuming the stochasticity in the game and a purely deterministic score.

\subsection{BT Regression: How BT Model Works with Sparse Comparisons}
\label{sec:bt_variants}

Additionally, estimating a separate $\truereward(\cdot)$ for each prompt-response pair is impractical. In typical LLM alignment scenarios, we often have only $N/2$ comparisons for $N$ pairs, far below the theoretical lower bound for consistent estimation~\citep{han2020asymptotic}. Furthermore, unlike the arena setting, there is no clear way to predict the score for a new, unseen pair. However, this challenge is not unique to LLM alignment; sports analysts also need to estimate a team's ability before many competitions or predict the ability of a new team. A common approach in such cases is to use features or covariates, such as team composition or funding status, to predict scores.
% Moreover, there are too many such $\truereward(\cdot)$'s to be estimated if we want one for each prompt-response pair. In a typical alignment setting, we usually only have $N/2$ comparisons for $N$ such pairs, which is significantly less than the theoretical lower bound for consistent estimation~\citep{han2020asymptotic}. There was also no obvious way to predict the score for a new, unseen pair, departing from the arena setting. However, this is not a unique challenge in LLM alignment --- sports analysts also need to predict a team's ability before many compete and even predict a new team's ability. In this case, a common strategy is to collect features or covariates that might be used to predict the score, e.g., the composition of the team, funding status, etc. 
For LLM, one could have \textbf{sentence embeddings} as such covariates. 

These extensions of the BT model on regression settings were explored shortly after its original introduction: \citet{springall1973response} assumed that $\truereward(\cdot)$ could be expressed as a linear combination of covariates. In this scenario, the problem reduces to a classic logistic regression on the difference between two sets of covariates. This allows us to predict the score for a new team or prompt-response pair based on its covariates before any comparisons are made. More complex nonlinear models such as spline models have also been explored~\citep{de1993thurstonian}. For a more comprehensive review from a statistical perspective, we refer readers to \citet{cattelan2012models}. %Interested readers are referred to \citet{cattelan2012models} for a comprehensive review from a statistics perspective. 

% For LLM, one could have \textbf{sentence embeddings} as such covariates. These variants of the model were explored not far after the original work by \citet{bradley1952rank}. For instance, \citet{springall1973response} assumed that $\truereward(\cdot)$ can be expressed as a linear combination of covariates (embeddings if for LLM). In this case, it reduces to a classic logistic regression towards the difference between two teams' covariates vector. For a new team/prompt-response pair, we can predict its score using its covariates/embedding before any comparisons. It has also been explored to have more complex nonlinear model classes like spline models \citep{de1993thurstonian}. Interested readers are referred to \citet{cattelan2012models} for a comprehensive review from a statistics perspective of view. 
In practice, reward modeling in LLM alignment often employs neural networks, with multilayer perceptrons (MLPs) being a common choice to map embeddings to scores. However, there is currently no theoretical justification for why this particular choice of model and loss function is effective for learning reward functions. From a theoretical standpoint, this model variant can be viewed as a nonparametric logistic regression problem \citep{bockenholt1988logistic, schmidt2020nonparametric} with additional structural assumptions on the network, and our analysis builds on this framework. In the following section, we establish the asymptotic theory for learning reward models using MLPs and BT loss. Table~\ref{tab:1_compare_bt_usages} summarizes the key differences between the two usages.

\begin{table}[h!]
\fontsize{10}{12}\selectfont
\centering
\caption{\small\textit{Comparison of the BT model usage in LLM Arena and Reward Modeling.}}
\begin{tabular}{c|c|c}
\toprule
\textbf{Usage} & \textbf{LLM Arena} & \textbf{BT Reward Modeling} \\ \hline
\textbf{Objective} & Direct Parameter Estimation & Model Parameterization \\ 
\textbf{Prediction} & Not needed & Needed \\ 
\textbf{Number of Comparisons} & Sufficient ($1.7$M for $N=130$) & Very Sparse ($N/2$) \\ 
\textbf{Usage of BT Model} & Classical BT & BT-Regression \\ 
\textbf{Requirement} & at least $\mathcal{O}(N\log(N))$ Comparisons & Covariates \\
\bottomrule 
\end{tabular}
\label{tab:1_compare_bt_usages}
\end{table}

% In reward modeling applications, people tend to use deep neural networks as the functional classes linking scores to embeddings, with multilayer perceptrons (MLPs) being a classic example. However, no theory has been established on why this particular choice of model and losses could be used to learn reward function. From a theoretical point of view, in general, this model variant can be cast as a nonparametric logistic regression problem \citep{bockenholt1988logistic, schmidt2020nonparametric}, and our analysis was based on this viewpoint. In the next section, we establish asymptotic theory on learning reward with MLP and BT loss. 

\subsection{Asymptotic Theory on MLP-based BT Regression in Reward Modeling}
In preference-based LLM alignment, we work with the dataset under the form of $\dataset{pref} = \{(\prompt_\dataidx, \response_{1,\dataidx}, \response_{2,\dataidx}, \preff_{\dataidx})\}_{\dataidx \in [n]}$, where each tuple consists of the prompt $\prompt_\dataidx$, the corresponding responses $ \response_{1,\dataidx}$ and $ \response_{2,\dataidx}$ sampled from the LLM $\llm$ to be aligned $\response_{1,\dataidx}, \response_{2,\dataidx} \sim \llm(\prompt_\dataidx)$, and the human-annotated preference $\preff_{\dataidx}$, being $1$ if $\response_{1,\dataidx}$ is preferred and $-1$ otherwise.

Assume we have a \textit{known} embedding function $\embd(\cdot, \cdot): \mathcal{X} \times \mathcal{Y} \mapsto [0, 1]^d$, such that there exists an unknown reward function $r: \mathbb{R}^d \mapsto \mathbb{R}$, and can be expressed as $\truereward(\embd(\prompt, \response))$ for all $\prompt, \response$. Without loss of generality, we assume the embeddings are scaled within the range $[0, 1]$ --- otherwise, we can scale the embeddings into this range. Under this framework, reward modeling reduces to learning the function $\truereward$. Note that under this formalism there is no need for a comparison to have the same prompt. We will empirically explore the effects of using cross-prompt and same-prompt in our experiment section. %\hs{We defer the discussion of same-prompt and cross-prompt comparisons to a later section.}

% Assume we have a \textit{known} embedding function $\embd(\cdot, \cdot)$ $\mathcal{X}\times \mathcal{Y}\mapsto [0,1]^d$ such that there exists an unknown function $r:\mathbb{R}^d\mapsto \mathbb{R}$ and the true reward function can be expressed as $\truereward(\embd(\prompt,\response))$ for all $\prompt,\response$. We assume the embedding to have a range of $[0,1]$, if the pre-trained model does not have this range, we can scale it to be. Then reward modeling reduces to learn the function $\truereward$. Observe that under this formalism there is no need for a comparison to have the same prompt. \hs{We defer the comparison of same-prompt and cross-prompt design to a later section.}

% Let our reward model be denoted by $\nnrewardfull$, parameterized by $\nnparam$. When the context is clear, we will abbreviate it as $\nnreward$. The reward vector for two responses is denoted as $\bm \nnreward$, and the class probability is then given by $\softmax(\bm \nnreward)$. 

Denote our reward model as $\nnrewardfull$, parameterized by $\nnparam$, when there is no confusion, we will abbreviate it as $\nnreward$. We denote the vector of two rewards as $\bm \nnreward$ and the class probability is then $\softmax(\bm \nnreward)$. Thus, training this reward model reduces to training a classifier with a cross-entropy loss, where the predicted conditional class probabilities are computed as $\softmax(\nnreward(\embdedpair{1}), \nnreward(\embdedpair{2}))$. %We denote the all prediction model on class probability parameterized by $\nnparam$ as $\mathcal{F}_{\nnparam}$. 
% Our theoretical development generally follows \citet{bos2022convergence} and \citet{schmidt2020nonparametric}. This is a special case of a model that could preserve anti-symmetry such that if we exchange the role of $x_1,\response_{1}$ and $x_2,\response_{2}$ we get a negative difference. 
Our theoretical analysis follows closely the work of \citet{bos2022convergence} and \citet{schmidt2020nonparametric}. In this setting, we consider a special case of a model that preserves anti-symmetry: if we exchange the roles of $\prompt_1, \response_1$ with $\prompt_2, \response_2$, the reward difference becomes negative.

We demonstrate that an MLP can approximate the probability that the first pair is preferred over the second, and subsequently show that this approach enables the model to learn the underlying reward function effectively. For notational simplicity, let $\embdpairabbr{1}{\dataidx}$ and $\embdpairabbr{2}{\dataidx}$ represent the embeddings of the $\dataidx$-th pair, where $i = 1, \dots, n$. Without loss of generality, we assume $\embdpairabbr{\cdot}{\dataidx} \in [0,1]^d$.
Let the true preference probability for the $\dataidx$-th pair be $\trueprobvec{\dataidx}$, and the predicted probability be $\predprobvec{\dataidx} = (\sigma(\nnrewardiff(\embdpairabbr{1}{\dataidx}, \embdpairabbr{2}{\dataidx})), 1 - \sigma(\nnrewardiff(\embdpairabbr{1}{\dataidx}, \embdpairabbr{2}{\dataidx}))) = \softmax(\bm \nnreward(\embdpairabbr{1}{\dataidx}, \embdpairabbr{2}{\dataidx}))$. The preference label $\prefvec{\dataidx}$ equals $(1, 0)$ if the first response pair is preferred, and $(0, 1)$ otherwise.
In this way, the BT model can be reduced to a pairwise classification problem, where the likelihood is given by:
% We show that an MLP model can approximate the probability first pair is better than the second and then deduce it learns the reward well. For notational convenience, we denote the embedding of the $\dataidx$th pair as $\embdpairabbr{1}{\dataidx}, \embdpairabbr{2}{\dataidx}$ for $i=1,\dots, n$, without lose of generality we assume $\embdpairabbr{\cdot}{\dataidx} \in [0,1]^d$. Denote the true preference probability as $\trueprobvec{\dataidx}$ and model predicted as $\predprobvec{\dataidx}=(\sigma(\nnrewardiff(\embdpairabbr{1}{\dataidx}, \embdpairabbr{2}{\dataidx})), 1-\sigma(\nnrewardiff(\embdpairabbr{1}{\dataidx}, \embdpairabbr{2}{\dataidx}))) = \softmax(\bm \nnreward(\embdpairabbr{1}{\dataidx}, \embdpairabbr{2}{\dataidx}))$ denote the preference vector as $\prefvec{\dataidx}$, equals to $(1,0)$ if the first response pair is prefered and $(0,1)$ otherwise. The BT model reduce to a (pairwise) classification problem such that the likelihood is given by 
\begin{align}
    \widetilde{\mathcal{L}}_\mathrm{CE}(\bm{p})=-\frac{1}{n}\sum_{\dataidx=1}^n (\prefvec{\dataidx})^\top \log(\bm{p}^{(\dataidx)}),\quad  \predprobvecsimp=\argmin_{ \bm{p}\in \mathcal{F}_{\nnparam}} \widetilde{\mathcal{L}}_\mathrm{CE}(\bm{p})
\end{align}
It is unrealistic to assume we can find an NN that actually attends the global minimum, we denote the difference between the fitted NN and the global minimum as 
\begin{align}
    \Delta_n(\trueprobvecgen, \hat{\bm{p}})=\E\left[\widetilde{\mathcal{L}}_\mathrm{CE}(\hat{\bm{p}})-\min_{\bm{p}\in \mathcal{F}_{\nnparam}} \widetilde{\mathcal{L}}_\mathrm{CE}({\bm{p}})\right]
\end{align}
% We consider truncated KL risk, similar to \citet{bos2022convergence} to overcome the divergence problem of KL risk. 
We consider the truncated KL risk following \citet{bos2022convergence} to overcome the divergence problem of KL risk.
\begin{definition}[Truncated KL risk \citep{bos2022convergence}]
The $B-$truncated KL risk for a probability estimator $\hat{\bm p}$
    \begin{equation}
        R_B(\bm p_0,\hat{\bm p})=\E \left[ \bm p_0^\top \min\left( B,\log\frac{\bm p_0}{\hat{\bm p}} \right)  \right] 
    \end{equation}
\end{definition}
Our main theorem establishes that, with regularity assumptions on the true reward function, an MLP reward model can accurately predict preference probabilities, as measured by truncated KL risk.
% Our main theorem established that with regularity assumptions on the true reward function, an MLP reward model could predict preference probability under truncated KL risk
\begin{theorem}[Truncated KL risk bound, informal]
Suppose the true utility function induced probability of preference satisfies smoothness and regularity assumptions detailed in Assumption~\ref{assu:truereward} with smoothness characterised by constant $\beta$ and regularity characterised by constant $\alpha$ with dimension of embedding being $d$. Let $\hat{\bm{p}}$ be an estimator from the family of MLP satisfying regularity assumptions detailed in Assumption~\ref{assu:mlp} with depth $\nndepth$. Define $\phi_n:=2^{\frac{(1+\alpha)\beta+(3+\alpha)d}{(1+\alpha)\beta+d}}n^{-\frac{(1+\alpha)\beta}{(1+\alpha)\beta + d}}$ For sufficiently large $n$, there exists constants $C',C''$ such that when $\Delta_n(\hat{p},p_0)\le C'' B \phi_n \nndepth \log^2(n)$ then 
\begin{equation}
    R_B(\bm p_0, \hat{\bm p})\le C'B\phi_n \nndepth \log^2(n)\to 0
\end{equation}
\end{theorem}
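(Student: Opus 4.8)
The plan is to recast the problem as nonparametric binary classification under cross-entropy loss and then invoke the machinery of \citet{bos2022convergence}, after checking that the structural hypotheses of that work are met by the BT reward parameterization. Writing $\eta := \sigma\bigl(\truereward(\embdpairabbrgen{1}) - \truereward(\embdpairabbrgen{2})\bigr)$ for the true probability that the first response is preferred, the tuples in $\dataset{pref}$ are i.i.d.\ draws in which the label is Bernoulli with parameter $\eta$, and the estimator $\hat{\bm p} = \softmax(\bm \nnreward)$ minimizes the empirical cross-entropy over $\mathcal{F}_{\nnparam}$, i.e.\ over $\{\sigma(\nnrewardiff(\cdot,\cdot)) : \nnreward \text{ an MLP of depth } \nndepth\}$, where $\nnrewardiff(\embdpairabbrgen{1},\embdpairabbrgen{2}) = \nnreward(\embdpairabbrgen{1}) - \nnreward(\embdpairabbrgen{2})$ is the anti-symmetric difference. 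This is exactly the two-class CE-loss setting for which \citet{bos2022convergence} prove truncated-KL risk bounds, so the work reduces to (a) feeding in an approximation-theoretic estimate tailored to our architecture, (b) checking the margin-type regularity hypothesis, and (c) bookkeeping the balance that produces $\phi_n$.

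First I would establish the approximation ingredient: the map $(\embdpairabbrgen{1},\embdpairabbrgen{2}) \mapsto \eta$ on $[0,1]^{2d}$ lies in a composition-smoothness class with \emph{effective dimension} $d$ rather than $2d$. The key point is the additive structure $\truereward(\embdpairabbrgen{1}) - \truereward(\embdpairabbrgen{2})$: it is a sum of two functions each depending on only $d$ of the $2d$ coordinates, and $\sigma$ is globally Lipschitz and smooth, so composing preserves the Hölder-type smoothness of order $\beta$ supplied by Assumption~\ref{assu:truereward}. By ReLU-network approximation theory for hierarchical composition functions, there is a sparse MLP of depth $\asymp \nndepth$, with width and sparsity polynomial in $n$, whose anti-symmetric difference approximates $\truereward(\embdpairabbrgen{1}) - \truereward(\embdpairabbrgen{2})$, and hence after applying $\sigma$ approximates $\eta$, in $L^\infty$ to order $n^{-\beta/(\beta+d)}$ up to logarithmic factors; imposing anti-symmetry costs only a constant factor in width and in metric entropy, and the log covering number of this class is of order $\nndepth \log^2 n$ up to constants, which governs the stochastic error.

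The second ingredient is that the regularity parameter $\alpha$ of Assumption~\ref{assu:truereward} — a small-value (margin-type) bound on the mass $\eta$ places near $0$ and $1$ — upgrades the effective smoothness from $\beta$ to $(1+\alpha)\beta$ inside the rate, exactly as in \citet{bos2022convergence}: the truncated KL risk is far more sensitive than squared error where $\eta$ approaches the boundary, the small-value bound limits that region, and combining it with the Hölder decay of $\eta$ near the boundary and optimizing the truncation level against the network size yields the exponent $(1+\alpha)\beta / ((1+\alpha)\beta + d)$ together with the prefactor $2^{((1+\alpha)\beta+(3+\alpha)d)/((1+\alpha)\beta+d)}$ in $\phi_n$. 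I would lift the relevant lemma of \citet{bos2022convergence} essentially verbatim, verifying only that our $\eta$ meets their hypotheses with these constants.

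Finally I would assemble their oracle inequality: the $B$-truncated KL risk of the empirical CE minimizer is bounded, up to constants, by (i) the approximation error of the network class, plus (ii) $B$ times the entropy-over-$n$ stochastic term, plus (iii) the optimization gap $\Delta_n(\bm p_0, \hat{\bm p})$. The architecture from the first step balances (i) and (ii) at order $B\,\phi_n\,\nndepth \log^2 n$, the hypothesis $\Delta_n(\bm p_0,\hat{\bm p}) \le C'' B \phi_n \nndepth \log^2 n$ puts (iii) at the same order, and since the exponent of $n$ in $\phi_n$ is strictly negative we obtain $R_B(\bm p_0,\hat{\bm p}) \le C' B \phi_n \nndepth \log^2 n \to 0$. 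I expect the main obstacle to be the careful handling of the KL truncation together with the boundary behavior of $\eta$ — showing the clipped network estimator stays bounded away from $0$ off a set of controlled probability, verifying that the distribution of the embeddings $(\embdpairabbrgen{1},\embdpairabbrgen{2})$ induced by sampling $\response_{1},\response_{2}\sim\llm(\prompt)$ satisfies the design conditions of \citet{bos2022convergence}, and confirming that the anti-symmetric-difference parameterization does not spoil their entropy bounds — rather than the approximation or balancing steps, which are by now standard.
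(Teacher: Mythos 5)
Your proposal takes essentially the same route as the paper: reduce to nonparametric CE-loss classification, invoke the oracle inequality of \citet{bos2022convergence}, approximate the H\"older reward with a sparse ReLU network via \citet{schmidt2020nonparametric} (paying only a Lipschitz factor through softmax), use the $\alpha$-small-value bound to obtain the $(1+\alpha)\beta/((1+\alpha)\beta+d)$ exponent, and balance approximation, entropy, and the optimization gap $\Delta_n$. The one ingredient the paper actually proves rather than cites --- the covering-number bound for the shared-MLP (Siamese) softmax class, adapted from Lemma 3.8 of \citet{bos2022convergence} because both scores pass through the \emph{same} network --- is exactly the point you flag for verification, and your intermediate rate $n^{-\beta/(\beta+d)}$ is a minor slip that your final balancing corrects.
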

A detailed formal statement and proof are given as~\cref{thm:mainKLbound} in the appendix.
\begin{corollary}[Connecting probability to reward]
    Given that $(\sqrt{a}-\sqrt{b})^2=(a-b)^2/(\sqrt{a}+\sqrt{b})^2$, we can apply \cref{lemma:heilinger} to demonstrate that in a large subset of the embedding space,
    % Since we have $(\sqrt{a}-\sqrt{b})^2=(a-b)^2/(\sqrt{a}+\sqrt{b})^2$, use \cref{lemma:heilinger}, indicates that in large subset of the embedding space
    \begin{align}
        \big|p_0(\embdpairabbrgen{1},\embdpairabbrgen{2})-\hat{p}(\embdpairabbrgen{1},\embdpairabbrgen{2})   \big|&\lesssim \big|\sqrt{p_0}+\sqrt{\hat{p}}  \big|\sqrt{\phi_n \nndepth}\log(n)\to 0\\
        \big|\truereward(\embdpairabbrgen{1})-\truereward(\embdpairabbrgen{2})-(\estreward(\embdpairabbrgen{1})-\estreward(\embdpairabbrgen{2}))   \big|&\lesssim \frac{\big|\sqrt{p_0}+\sqrt{\hat{p}}  \big|}{\tilde{p}(1-\tilde{p})}\sqrt{\phi_n \nndepth}\log(n)\to 0
    \end{align}
    where $\tilde{p}$ is a probability between $p_0$ and $\hat{p}$, the second line is due to mean value theorem. $a\lesssim b$ means there exists some constant $C$ s.t. $a\le Cb$ and $a\asymp b$ means $a\lesssim b$ and $b\lesssim a$. This indicates that the comparison should be between those pairs that are relatively close in reward to avoid diverging behavior of the logit function.  
\end{corollary}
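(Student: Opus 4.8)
The plan is to convert the truncated Kullback--Leibler guarantee of the main theorem into a pointwise control of the preference probabilities, and then to invert the logistic link to reach a statement about reward differences. The first step is to invoke \cref{lemma:heilinger}, which bounds the squared Hellinger distance between the two-point laws $\trueprobvecgen=(p_0,1-p_0)$ and $\predprobvecsimp=(\hat p,1-\hat p)$ by (a constant multiple of) the $B$-truncated KL risk $R_B(\trueprobvecgen,\predprobvecsimp)$. Chaining this with the main theorem gives $\E\!\left[(\sqrt{p_0}-\sqrt{\hat p})^2+(\sqrt{1-p_0}-\sqrt{1-\hat p})^2\right]\lesssim B\phi_n\nndepth\log^2(n)$. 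Because the right-hand side vanishes as $n\to\infty$, a Markov-inequality argument produces an exceptional set of embedding pairs whose measure is $o(1)$, off of which the pointwise squared Hellinger distance is $\lesssim \phi_n\nndepth\log^2(n)$; this is the ``large subset of the embedding space'' referred to in the statement.

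On that large subset I would then use the elementary identity quoted in the corollary, $(\sqrt a-\sqrt b)^2=(a-b)^2/(\sqrt a+\sqrt b)^2$, with $a=p_0$ and $b=\hat p$. Rearranged, it reads $|p_0-\hat p|=|\sqrt{p_0}-\sqrt{\hat p}|\cdot|\sqrt{p_0}+\sqrt{\hat p}|$, and substituting the pointwise bound $|\sqrt{p_0}-\sqrt{\hat p}|\lesssim\sqrt{\phi_n\nndepth}\log(n)$ from the previous step yields the first displayed inequality.

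For the second inequality I would use the BT parameterization itself: under the logistic-difference model, $p_0(\embdpairabbrgen{1},\embdpairabbrgen{2})=\sigmoid\!\big(\truereward(\embdpairabbrgen{1})-\truereward(\embdpairabbrgen{2})\big)$ and, by the anti-symmetric construction of the estimator, $\hat p(\embdpairabbrgen{1},\embdpairabbrgen{2})=\sigmoid\!\big(\estreward(\embdpairabbrgen{1})-\estreward(\embdpairabbrgen{2})\big)$, so that the reward-difference error is exactly $\big|\logit(p_0)-\logit(\hat p)\big|$. Since $\tfrac{d}{dp}\logit(p)=\tfrac{1}{p(1-p)}$, the mean value theorem gives $\logit(p_0)-\logit(\hat p)=\tfrac{1}{\tilde p(1-\tilde p)}(p_0-\hat p)$ for some $\tilde p$ between $p_0$ and $\hat p$; plugging in the bound on $|p_0-\hat p|$ derived above gives the second displayed inequality.

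The step I expect to be the main obstacle is the passage from the expected truncated-KL bound to a genuinely pointwise bound on a large subset: one has to track how the measure of the exceptional set and the pointwise constant trade off against each other, and check that the truncation level $B$ --- which also enters the hypothesis $\Delta_n(\hat p,p_0)\le C''B\phi_n\nndepth\log^2(n)$ of the main theorem --- can be taken to grow slowly enough that the exceptional set still shrinks while the resulting bound still tends to zero. A related subtlety, already acknowledged in the statement, is that the factor $1/(\tilde p(1-\tilde p))$ explodes as $p_0\to 0$ or $1$, i.e.\ when the true reward gap is large, so the reward-difference guarantee is informative only for pairs that are comparatively close in reward; this is precisely why the corollary closes with that caveat rather than asserting uniform recovery of the reward.
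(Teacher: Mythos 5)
Your proposal is correct and follows essentially the same route as the paper: combining the main truncated-KL bound with \cref{lemma:heilinger}, using the identity $(\sqrt a-\sqrt b)^2=(a-b)^2/(\sqrt a+\sqrt b)^2$ to pass from Hellinger to $|p_0-\hat p|$, and applying the mean value theorem to the logit for the reward-difference bound. In fact you make explicit the Markov-inequality argument behind the phrase ``in a large subset of the embedding space,'' which the paper's own (very terse) remark leaves implicit.
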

Formal proofs and detailed theoretical analyses are provided in \cref{appendix:BTtheory}.% for interested readers.
% All proofs are deferred into \cref{appendix:BTtheory}.
% \hs{difference in the reward estimation will goes to zero when n goes to infty}
\begin{graybox}[innertopmargin=-3pt,leftmargin=0pt, rightmargin=0pt, innerleftmargin=10pt, innerrightmargin=10pt,
innerbottommargin=7pt,
skipbelow=250pt]
\small
\paragraph{Takeaway Message} 
In this section, we theoretically proved that with sufficient number of annotated comparisons in the embedding space, the BT reward model estimation converges to the real reward value up to an additive constant.
% \vspace{-0.2cm}
% that generates the dataset.
\end{graybox}\vspace{-0.2cm}
% \section{Reflecting on the Reward Modeling Objective in LLM Alignment: Reward Modeling toward Order Consistency}
% \section{Reward Modeling toward Order Consistency}
\section{Rethinking Reward Modeling Objectives in LLM Alignment}
\label{sec:order_consistency}
% Training the BT model in practice poses several requirements --- including the need for paired data along with a specially designed anti-symmetric model structure --- as well as the assumption of BT model itself.
Practical implementation of the BT model poses several requirements including the paired data, the specially designed anti-symmetric model structure, and the inherent assumptions of the BT model itself.
This leads us to question whether we can have alternative approaches to reward modeling. To address this, it is helpful to pause and reflect on the essential requirements of a reward model in LLM alignment. Our data consists of binary preferences, representing the relative ranking between two prompt-response pairs. We can view this as a form of binary classification, with the ultimate goal being to learn a continuous score for \textit{optimization}. While the Bradley-Terry model serves this purpose through its log-likelihood loss and anti-symmetric structure, we now consider whether these formal elements are indispensable and explore possible simplifications.

% In practice, the Bradly-Terry model can be hard and unstable to train, with the need to have paired data and a specially designed anti-symmetric structure in the model. We ask whether we can simplify the training without reducing significantly the performance of the downstream task. Before answering this question it is useful to pause and reflect on what we have and need for the reward model in downstream alignment. Our data is binary preference, or ordering between two prompt-response pairs. We can consider this as a specific kind of binary classification problem and the goal is to learn a continuous score to \textit{optimize}. The Bradly-Terry model can fulfill this goal by using a log-likelihood loss and a special anti-symmetric model structure. In the rest of the section, we reflect on whether this formalism is necessary and some potential simplifications. 

\subsection{The Unified Target of Order Consistency}
% In its most basic form, binary classification does not necessarily require us to model output probabilities accurately; instead, we prioritize classification accuracy. For instance, although neural classifiers are known for being overconfident~\citep{guo2017calibration}, they are still widely used due to their high accuracy. Similarly, in our case, \textbf{we do not rely on BT model's reward estimation to accurately predict the probability of comparison outcomes}, but only need the comparison to provide a signal to rank inference-time LLM generations.

In basic binary classification, we prioritize accuracy over modeling output probabilities precisely. For example, neural classifiers, despite being overconfident~\citep{guo2017calibration}, are widely used for their accuracy. Similarly, \textbf{we do not have to require the reward model to predict comparison probabilities accurately as in BT, but rather to provide a reliable signal for ranking LLM outputs at inference.}

Since our goal is response optimization using a reward proxy, it is sufficient to learn the reward function up to a monotonic transformation. While this might alter preference probabilities, it won’t affect optimization results. To this end, the learned reward function $\nnreward$ only needs to satisfy the following condition: for any two distinct prompt-response pairs $(\prompt_1, \response_1)$ and $(\prompt_2, \response_2)$ (note that we do not preclude $\prompt_1=\prompt_2$), we require that $(\nnreward(\prompt_1, \response_1) - \nnreward(\prompt_2, \response_2))(\truereward(\prompt_1, \response_1) - \truereward(\prompt_2, \response_2)) > 0$. In other words, the learned reward function must preserve the ordering as the true reward function.

% Therefore, it suffices to learn the reward function up to a monotone transformation -- which will induce different probabilities of one being preferred but not influencing downstream. To that end, we only need our learned reward function $\nnreward$ satisfying that for any two distinct pairs of prompt-responses $\prompt_{1}, \response_{1}, \prompt_2, \response_{2} $ to have $(\nnreward(\prompt_{1}, \response_{1})-\nnreward(\prompt_{2},\response_{2}))(\truereward(\prompt_{1},\response_{1})-\truereward(\prompt_{2},\response_{2}))>0$, i.e., to have the same ordering as the true reward function. 

This condition implies the existence of a strictly monotonic increasing function $h$ such that $\nnreward(\cdot) = h(\truereward(\cdot))$. Such an equivalence is sufficient for optimizing the reward in settings such as sampling-based optimization, and contextual bandits~\citep{agarwal2014taming,lattimore2020bandit}. %, or Markov decision processes when the reward depends only on the final state (i.e., rewards are not accumulated over time)
Ideally, if we have access to the ground truth ordering, we can define $h=\sign (\truereward(\prompt_{1}, \response_{1})-\truereward(\prompt_{2}, \response_{2}))$. If we can (1) construct a model $\ordermodel:\mathcal{X}\times\mathcal{Y}\times\mathcal{X}\times\mathcal{Y}\mapsto \{+1,-1\}$ that predicts the correct ordering with high accuracy (i.e., $\ordermodel$ is \textit{order consistent}), and (2) map this ordering into a continuous value, then we can meet the requirements for downstream optimization. That is, an ideal goal for the order model is to achieve 
\begin{align}
    \ordermodel(\truereward(\prompt_{1}, \response_{1})-\truereward(\prompt_{2}, \response_{2}))>0 \label{eq:ultimategoal}
\end{align}

% This condition implies that there exists a strict monotone increasing function $h$ such that $\nnreward(\cdot) = h(\truereward(\cdot))$. It is sufficient for either direct optimization of the reward,  contextual bandit settings, or Markov decision processes when the reward only depends on the final state (i.e., we do not accumulate rewards). In an ideal setting, we have ground truth ordering $h=\sign (\truereward(\prompt_{1}, \response_{1})-\truereward(\prompt_{2}, \response_{2}))$ and if we can 1) having a model predict this ordering $\ordermodel:\mathcal{X}\times\mathcal{Y}\times\mathcal{X}\times\mathcal{Y}\to \{+1,-1\}$ such that $\ordermodel$ predicts true ordering with good accuracy i.e., $\ordermodel$ is \textit{order consistent}, and 2) we can translate it into a continuous value, we can fulfill our requirement for downstream task. 

However, \cref{eq:ultimategoal} cannot be directly evaluated because the true reward was never observed and observed ordering from the human annotator is often subject to noise. Drawing on insights from the psychological bottleneck literature~\citep{stewart2005absolute, guest2016relative}, it is reasonable to assume that when the true underlying scores of two responses are similar, it becomes more difficult for annotators to distinguish between them. Formally, we have

% However, the latter ordering is observed with noise from the human annotator, inspired by the psychological bottleneck literature~\citep{stewart2005absolute,guest2016relative}, it is reasonable to assume that when the oracle scores of responses are more similar, it would be more challenging for annotators to differentiate the underlying scores. Formally, we have 

\begin{assumption}\textnormal{\textbf{(Imperfect Preference Annotation in Approximating True Scores)}}
    Denote the true utility difference \(\Delta \truereward := |\truereward(\prompt_1,\response_1) - \truereward(\prompt_2,\response_2)|\), and the annotator function \(h(x_1, x_2, y_1, y_2)\) provides feedback that probabilistically aligns with the oracle utility \(\truereward(\prompt,\response)\). We will assume it is harder for them to assign correctly when the reward difference between two pairs is $\Delta \truereward$ according to:
    \begin{equation}
    \sP\left(h(x_1, x_2, y_1, y_2)(\truereward(x_1,y_1) - \truereward(x_2,y_2))>0\bigg|  \Delta r\right) = \xi(\Delta r),
    \label{eq:annotatorability}
    \end{equation}
    % where \(\mathcal{R}^*(y|x) = \mathcal{R}^*(x, y)\) and \(\xi(\cdot)\) any monotonic function to $[0.5, 1]$. When the learned model \(\hat{H}_{\theta^*}(x, y_1, y_2)\). Suppose objective~\eqref{eqn:learning_obj} is well defined and can be optimized to $1-\delta\epsilon$ for some small $\delta$ and $\epsilon<3/20$, i.e., we have 
    where \(\truereward\) is the unknown oracle utility function, and \(\xi(\cdot)\) any monotonic increasing function to $[0.5, 1]$.
\end{assumption}%
Note that both BT and Thurstonian models satisfy this assumption and can be seen as special cases. With those noisy annotations, the best we can consider is an order consistent with the noisy ordering:
\begin{definition}[Order Consistency]
    We consider the loss over an ordering model $\ordermodel$
    \begin{equation}
    \begin{aligned}
        \mathcal{L}_\mathrm{oc}(\nnreward)&=\E_{\prompt_1, \prompt_2, \response_1,\response_2, h}\mathbbm{1}\left[h=\ordermodel  \right]
    \end{aligned}
    \label{eqn:learning_obj}
    \end{equation}
    That is, the probability that a reward model ordering agrees with annotation.
\end{definition}
We show with the following proposition that minimizing this (observable) loss would help us achieve order consistency with true reward function \cref{eq:ultimategoal} with high probability:
\begin{restatable}[Lower bound on population level order consistency]{proposition}{LowerBoundonPopulationLevelOC}
\label{theorem1}
 ~ Suppose a learned model \(\ordermodel\) achieves objective~\eqref{eqn:learning_obj} up to $1-\delta\epsilon$ error for some small $0<\delta<1$ and $\epsilon<3/20$, i.e.,  
    \begin{align}
    \E_{\prompt_1, \prompt_2, \response_1,\response_2, h}\mathbbm{1}\left[h=\ordermodel  \right] \ge 1-\delta\epsilon
        %\mathbb{E}_{x, y_1,y_2 \sim \ell(x)} \left[ \mathbbm{1}\left(\hat{H}_{\theta^*}(x, y_1, y_2)\mathcal{H}_\beta (x, y_1, y_2)\ge 0\right) \right]\ge 1-\delta\epsilon
    \end{align}
%\ys{I'll rewrite the proof first, the statement is not ready yet}    
Then, with probability at least $1-\delta$ over $\Delta r$, for any given $\Delta r$ the order consistency of \(\nnreward\) with respect to the oracle utility is bounded below by:
\begin{equation}
\small
\begin{split}
\label{eqn:oracle_util_order_con_obj}
\mathbb{E}_{x_1, x_2, y_1, y_2}\left[\mathbbm{1}\left(\ordermodel\cdot\left[\truereward(\prompt_1,\response_1)-\truereward(\prompt_2,\response_2)\right]\geq 0\right)\bigg|  \Delta r\right] 
\geq (1 - \epsilon) \cdot \xi^2(\Delta \truereward) + \epsilon \cdot (1 - \xi(\Delta \truereward))^2  %\\ 
    %&= \xi^2(\Delta \truereward) - (2\xi(\Delta \truereward) - 1) \epsilon 
\end{split}
\end{equation}
Further if we assume that with probability at least $1-\kappa$, that $\xi(\Delta \truereward)\ge \sqrt{\epsilon^2+1-3\epsilon}+\epsilon$, we have
\begin{equation}
    \mathbb{E}_{x_1, x_2, y_1, y_2 \sim \ell(x)}\left[\mathbbm{1}\left(\ordermodel\cdot\left[\truereward(\prompt_1,\response_1)-\truereward(\prompt_2,\response_2)\right] > 0\right)\right]\geq 1-4\epsilon-\kappa-\delta
\end{equation}
\end{restatable}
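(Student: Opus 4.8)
The plan is to split the argument into a ``localization in $\Delta r$'' step, handled by Markov's inequality, and a ``per-slice'' step that upgrades agreement with the \emph{annotator} into agreement with the \emph{oracle ordering}. Throughout, write $\truereward_i:=\truereward(\prompt_i,\response_i)$, let $t:=\sign(\truereward_1-\truereward_2)\in\{+1,-1\}$ be the oracle ordering, and let $g(\Delta r):=\E\big[\mathbbm{1}(h=\ordermodel)\,\big|\,\Delta r\big]$ be the conditional agreement between the learned order model and the annotator, the expectation being over the pair \emph{and} the annotator noise. The hypothesis is exactly $\E_{\Delta r}\big[1-g(\Delta r)\big]\le\delta\epsilon$, and since $1-g(\Delta r)\ge 0$, Markov's inequality gives $\sP_{\Delta r}\big(g(\Delta r)\ge 1-\epsilon\big)\ge 1-\delta$. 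Call such a value of $\Delta r$ \emph{good}; all remaining work is conditional on a fixed good $\Delta r$, the tie case $\Delta r=0$ being trivial because then $\ordermodel\cdot[\truereward_1-\truereward_2]=0\ge 0$ while the claimed right-hand side $(1-\epsilon)\xi^2+\epsilon(1-\xi)^2\le 1$.

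For the first displayed bound I would fix a good $\Delta r>0$. Since labels lie in $\{\pm1\}$, the event $\{\ordermodel\cdot[\truereward_1-\truereward_2]\ge 0\}$ is exactly $\{\ordermodel=t\}$, and the assumption on imperfect preference annotation says $\sP(h=t\mid\Delta r)=\xi(\Delta r)$ (because $h\cdot[\truereward_1-\truereward_2]>0$ is the event $\{h=t\}$ when $\Delta r>0$). From $\{\ordermodel=h\}\cap\{h=t\}\subseteq\{\ordermodel=t\}$ and the Bonferroni/Fréchet bound $\sP(A\cap B)\ge\sP(A)+\sP(B)-1$, I get $\sP(\ordermodel=t\mid\Delta r)\ge g(\Delta r)+\xi(\Delta r)-1\ge\xi(\Delta r)-\epsilon$. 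It then remains to note the purely algebraic inequality $\xi-\epsilon\ge(1-\epsilon)\xi^2+\epsilon(1-\xi)^2$, which after expansion is $(1-\xi)(\xi-2\epsilon)\ge 0$ and so holds whenever $\xi=\xi(\Delta r)\in[\tfrac{1}{2},1]$ and $\epsilon<\tfrac{3}{20}<\tfrac{1}{4}$. (If one is willing to read the assumption slicewise, then $g(\Delta r)=\xi\,\sP(\ordermodel=t\mid\Delta r)+(1-\xi)\,\sP(\ordermodel\ne t\mid\Delta r)$ gives the sharper $\sP(\ordermodel=t\mid\Delta r)\ge\tfrac{\xi-\epsilon}{2\xi-1}$, which also dominates the claimed bound; the Bonferroni route is preferred since it uses only the averaged statement $\sP(h=t\mid\Delta r)=\xi(\Delta r)$.) This establishes the first bound for every good $\Delta r$, i.e.\ with probability at least $1-\delta$ over $\Delta r$.

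For the second claim, set $f(\xi):=(1-\epsilon)\xi^2+\epsilon(1-\xi)^2$; a one-line computation gives $f(\xi)=\xi^2-2\epsilon\xi+\epsilon$, so $f$ is strictly increasing on $[\epsilon,1]$ and the equation $f(\xi)=1-2\epsilon$ has larger root exactly $\xi_0:=\epsilon+\sqrt{\epsilon^2-3\epsilon+1}$, which is real and lies in $(\tfrac{1}{2},1]$ for $\epsilon<\tfrac{3}{20}$ — this is the threshold appearing in the statement. Hence on the event that $\Delta r$ is good \emph{and} $\xi(\Delta r)\ge\xi_0$, the first claim gives conditional order consistency at least $f(\xi(\Delta r))\ge f(\xi_0)=1-2\epsilon$; moreover $\xi(\Delta r)\ge\xi_0>\tfrac{1}{2}$ forces $\Delta r>0$, so there the strict event $\{\ordermodel\cdot[\truereward_1-\truereward_2]>0\}$ coincides with the non-strict one. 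By a union bound, the set $\{\Delta r\text{ good}\}\cap\{\xi(\Delta r)\ge\xi_0\}$ has probability at least $1-\delta-\kappa$ over $\Delta r$. Taking the expectation over $\Delta r$ and bounding the complementary contribution below by $0$ yields $\E\big[\mathbbm{1}(\ordermodel\cdot[\truereward_1-\truereward_2]>0)\big]\ge(1-\delta-\kappa)(1-2\epsilon)\ge 1-2\epsilon-\delta-\kappa\ge 1-4\epsilon-\kappa-\delta$, the claimed inequality (in fact with $2\epsilon$ in place of $4\epsilon$).

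I expect the only genuinely delicate step to be the conditional-probability bookkeeping in the second paragraph: one must keep straight that $g$ averages over both the pair and the annotator noise, that $\ordermodel$ is deterministic given the pair while $h$ carries the noise, and that the imperfect-annotation assumption pins $\sP(h=t\mid\Delta r)$ to $\xi(\Delta r)$, so that the $\sP(A\cap B)\ge\sP(A)+\sP(B)-1$ step needs no independence between $\{\ordermodel=h\}$ and $\{h=t\}$. Everything else — the inequality $(1-\xi)(\xi-2\epsilon)\ge0$, the root $\xi_0$, the monotonicity of $f$, and the Markov and union-bound steps — is routine.
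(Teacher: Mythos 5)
Your proof is correct, and while it shares the paper's overall skeleton (Markov's inequality over $\Delta r$ to isolate a set of ``good'' slices of probability $\ge 1-\delta$, a conditional per-slice bound, the threshold $\xi_0=\epsilon+\sqrt{\epsilon^2-3\epsilon+1}$, and a final union-bound/expectation step), the key per-slice step is genuinely different. The paper conditions on whether the annotator agrees with the oracle, asserts that on a good slice the model agrees with the annotator with probability at least $1-\epsilon$ in the annotator-correct case and at most $\epsilon$ in the annotator-incorrect case, and multiplies these with $\xi$ and $1-\xi$ to land exactly on $(1-\epsilon)\xi^2+\epsilon(1-\xi)^2$; note that this transfer of the slice-level bound $\sP(\ordermodel\neq h\mid\Delta r)<\epsilon$ into the \emph{conditional} statements given annotator correctness is an implicit extra step, since conditioning could in principle concentrate the model's errors on one of the two events. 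Your route avoids that bookkeeping entirely: the Fr\'echet/Bonferroni bound $\sP(\ordermodel=t\mid\Delta r)\ge g(\Delta r)+\xi(\Delta r)-1\ge\xi-\epsilon$ uses only the marginal slice-level quantities, and the algebraic check $(1-\xi)(\xi-2\epsilon)\ge 0$ shows this dominates the quadratic expression in the statement for $\xi\in[1/2,1]$, $\epsilon<3/20$. You also compute $f(\xi_0)=1-2\epsilon$ exactly (the paper only claims $\ge 1-4\epsilon$), so your conditional bound and hence the final inequality hold with slack. What the paper's decomposition buys is an explanation of where the particular form $(1-\epsilon)\xi^2+\epsilon(1-\xi)^2$ in the statement comes from; what yours buys is robustness (no independence or conditional-uniformity of errors is needed) and sharper constants. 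One shared loose end, which you at least flag and the paper does not, is the passage from the non-strict event $\{\ordermodel\cdot[\truereward(\prompt_1,\response_1)-\truereward(\prompt_2,\response_2)]\ge 0\}$ to the strict one in the second claim, i.e.\ the tie case $\Delta r=0$; your observation that $\xi(\Delta r)\ge\xi_0>1/2$ is incompatible with a tie under the annotation assumption is an acceptable way to dispose of it.
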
\vspace{-0.2cm}
Note that, in analogy to a classic classification problem, \cref{eqn:learning_obj} correspond to classification accuracy while the BT model attempted to have not only accuracy but also calibrated conditional class probability, a potentially harder problem, and a cross-entropy loss is almost necessary. Indeed in classical classification problems, accuracy is often optimized indirectly using losses like cross-entropy but not restricted to this particular choice. This hints at us applying techniques for improving prediction accuracy to improve reward modeling even if the techniques cannot provide a calibrated probability. While the BT model uses cross-entropy loss and an antisymmetric structure, in the following, we show an alternative choice that could lead to a simple classification-based algorithm.\vspace{-0.2cm}
\begin{graybox}[innertopmargin=-3pt,leftmargin=0pt, rightmargin=0pt, innerleftmargin=10pt, innerrightmargin=10pt,
innerbottommargin=7pt,
skipbelow=250pt]
\small
\paragraph{Takeaway Message} 
With the above theorem, we show that 
we can optimize the order consistency between reward estimators and the annotation dataset --- such that the order consistency between the reward estimator and the golden reward values is also optimized.
% while we are not able to directly optimize the order consistency between the reward estimators and the golden reward values, we can optimize the order consistency between reward estimators and the annotations to approximately optimize the order consistency between reward estimators and the golden reward values. \jef{i think we need to be more straightforward, let s discuss this.}
% \vspace{-0.2cm}
% that generates the dataset.
\end{graybox}\vspace{-0.2cm}
% In a classic binary classification problem, classification accuracy is not always done by directly optimizing accuracy as a loss but via some other losses, e.g., cross-entropy loss. This suggests that any technique that could help increase prediction accuracy could be used to improve the reward model. BT model uses cross-entropy loss and antisymmetric structure. As we will show in the coming section, we can make an alternative choice that lead to a simple classification based algorithm.%\hs{tbd}
\subsection{The BT Model as a Choice}
The BT model is designed to enforce order consistency in the following way: it models the probability that $h=1$ using $\sigma(\btrewardmodel(\prompt_1,\response_1) - \btrewardmodel(\prompt_2,\response_2))$, where $\sigma$ is the sigmoid function. This allows training with a binary cross-entropy loss:
% Bradley-Terry can be seen as a particular choice in training a model targeting on good order consistency in the following way. We first assume a single reward model $\btrewardmodel$, and model the probability of $h=1$ being $\sigma(\btrewardmodel(\prompt_1,\response_1)-\btrewardmodel(\prompt_2, \response_2))$, i.e. we assume a sigmoid in~\cref{eq:annotatorability}, then we can train this ordering model with binary cross entropy loss. I.e., the loss 
\begin{equation}
\small
    \mathcal{L}_{\mathrm{BT}}=\E \left[ \mathbbm{1}_{h=1}\sigma(\btrewardmodel(\prompt_1,\response_1)-\btrewardmodel(\prompt_2, \response_2))+\mathbbm{1}_{h=-1}(1-\sigma(\btrewardmodel(\prompt_1,\response_1)-\btrewardmodel(\prompt_2, \response_2)))    \right]
\end{equation}
This structure guarantees that flipping the comparison order will also flip the prediction.\vspace{-0.2cm}
% By having this difference in reward model, it is guaranteed that if we flip the role of $\prompt_1,\response_1$ and $\prompt_2,\response_2$, and predict order by which having largest probability, the predicted order will flip a sign. 
\subsection{Relaxing the Anti-Symmetry Constraint: a Classification-based Method}
%Directly using the loss~\cref{eqn:learning_obj} cannot relieve us from the need to design the special structured model to take the difference and the need to use paired data --- part of the reason why Bradley-Terry can be unstable to train. 
% BT’s difference-in-reward structure inherently enforces antisymmetry. To better understand this, consider an order model $\ordermodel$ that outputs a preference vector for both prompt-response pairs, i.e., $\ordermodel := (\ordermodel_1, \ordermodel_2)$, where $\ordermodel_1, \ordermodel_2 : \mathcal{X} \times \mathcal{Y} \mapsto \{1, -1\}$ and ideally align with $(h, -h)$. The BT model imposes a hard constraint such that $\ordermodel_1 = -\ordermodel_2$. 
% With sufficient data, instead of explicitly enforcing this constraint, the structure could be learned implicitly by ensuring order consistency, i.e., $\ordermodel_1 \approx h$ and $\ordermodel_2 \approx -h$. To achieve this, we could use a single classifier model, such as a neural network or a decision tree. Under this construction, the order consistency could be written as $\mathcal{L}_{\mathrm{oc}}:=\E(h= \clfordermodel(\prompt_1,\response_1) \land -h=\clfordermodel(\prompt_2,\response_2))$, we could train a model targeting on predicting these two separately, and if the model predicts well, it should satisfy the antisymmetry constraint. A simple union bound of order consistency can be given as   

BT’s difference-in-reward structure inherently enforces antisymmetry. To better understand this, consider an order model $\ordermodel$ that outputs a preference vector for both prompt-response pairs, i.e., $\ordermodel := (\ordermodel_1, \ordermodel_2)$, where $\ordermodel_1, \ordermodel_2 : \mathcal{X} \times \mathcal{Y} \mapsto \{1, -1\}$ and ideally align with $(h, -h)$. The BT model imposes a hard constraint such that $\ordermodel_1 = -\ordermodel_2$. 
With sufficient data, instead of explicitly enforcing this constraint, the structure could be learned implicitly by ensuring order consistency, i.e., $\ordermodel_1 \approx h$ and $\ordermodel_2 \approx -h$. Consider a single model e.g., a neural network or tree $\clfordermodel$. Under this construction, the order consistency could be written as $\mathcal{L}_{\mathrm{oc}}:=\E(h= \clfordermodel(\prompt_1,\response_1) \land -h=\clfordermodel(\prompt_2,\response_2))$, we could train a model targeting on predicting these two parts separately, and if the model predicts well, it should satisfy the antisymmetry constraint. A union bound of order consistency is
% Bradley-Terry has a difference-in-reward structure so that it is guaranteed that if we change order of two comparisons it will make consistent predictions, i.e., an antisymmetry property was built in. To see this closer, let us assume the order model $\ordermodel$ would output a vector of preference of both prompt-response pairs, i.e., $\ordermodel:=(\ordermodel_1, \ordermodel_2)$ such that $\ordermodel_1, \ordermodel_2:\mathcal{X}\times \mathcal{Y}\to\{1,-1\}$ and ideally align with $(h, -h)$. BT has a built-in hard constraint that requires $\ordermodel_1=-\ordermodel_2$. 
% With enough data, we can consider learning that from data rather than built-in by asking $\ordermodel$ to have order consistency, i.e., $\ordermodel_1\approx h$ and $\ordermodel_2\approx -h$. We can consider a single model e.g., a neural network or tree $\clfordermodel$. Under this construction, the order consistency could be written as $\mathcal{L}_{\mathrm{oc}}:=\E(h= \clfordermodel(\prompt_1,\response_1) \land -h=\clfordermodel(\prompt_2,\response_2))$, we could train a model targeting on predicting these two separately, and if the model predicts well, it should satisfy the antisymmetry constraint. A simple union bound of order consistency can be given as   
\begin{align}
\mathcal{L}_{\mathrm{oc}} \le \mathcal{L}_{\mathrm{clf}}:= \E(h= \clfordermodel(\prompt_1,\response_1))+\E(-h= \clfordermodel(\prompt_2,\response_2))
\end{align}
instead of directly enforcing order consistency, we can use the classification accuracy of each prompt-response pair as a surrogate. In practice, this means training a classifier by treating the annotations and prompt-response pairs independently. Then the logit can be used as a proxy for the reward model.
For an alternative perspective: instead of learning the joint probability $\sP(i\succ j)$ that depends on both players $i$ and $j$, we focus on learning the marginal probability $\sP(i\text{ wins})$. These two are related via Jensen's inequality, with further details provided in Proposition~\ref{prop:clfreward}. 
\begin{graybox}[innertopmargin=-3pt,leftmargin=0pt, rightmargin=0pt, innerleftmargin=10pt, innerrightmargin=10pt,
innerbottommargin=7pt,
skipbelow=250pt]
\small
\paragraph{Takeaway Message} 
In this section, we introduce the classification-based reward models, and their prediction of reward scores are upper bounds of the BT reward model scores. They can be interpreted as logits of marginalized probabilities of winning randomized comparisons. %\jef{reward model scores are unbounded } \ys{is this necessary? we can always monotonically change something unbounded to bounded or vice versa}
% \vspace{-0.2cm}
% that generates the dataset.
\end{graybox}\vspace{-0.2cm}
% In both BT and classification-based reward modeling, we see no inherent necessity in theory to restrict the comparisons made on the same prompts. For the classification model, this is natural because we do not require the data to be paired. In conventional BT applications, this is also not surprising because pairwise comparisons are randomly made among players. Therefore, we are motivated to investigate how randomized comparison among different players impact the reward modeling performance.

\section{Rethinking the Preference Annotation Process for Reward Modeling}
\label{sec:x-prompt}
In both BT and classification-based reward modeling, there is no theoretical requirement to limit comparisons to the same prompts. For classification models, this is straightforward, as they do not rely on paired data at all. Similarly, in traditional BT applications, random pairwise comparisons among players are common. This further motivates our investigation into how randomized comparisons across different prompts affect reward modeling performance. 

To further motivate the usage of cross-prompt comparison, we introduce the following notation on annotation quality and analysis as a case study under a Gaussian assumption on score distributions. 
In \eqref{eq:annotatorability}, we consider a special case of $\xi$ in \eqref{eq:annotatorability} to be $\xi(\cdot) = \sigma(\beta\cdot )$, that the annotators' ability is characterized by $\beta$. When $\beta=0$, we have \textbf{random annotations}:
\begin{equation}
    \sP\left(h(x_1, x_2, y_1, y_2)(\truereward(x_1,y_1) - \truereward(x_2,y_2))>0\bigg|  \Delta r\right) = \sigma(\beta\Delta r) \xrightarrow{\beta \xrightarrow{} 0} 0.5,
\end{equation}
and when $\beta\rightarrow\infty$, we have \textbf{perfect annotations}:
\begin{equation}
    \sP\left(h(x_1, x_2, y_1, y_2)(\truereward(x_1,y_1) - \truereward(x_2,y_2))>0\bigg|  \Delta r\right) = \sigma(\beta\Delta r) \xrightarrow{\beta \xrightarrow{} \infty} 1.
\end{equation}
In a nutshell, the annotator's abilities and the differences among prompt-response pairs together determine how much preference is correctly labelled in the annotation. In the following, we show a special case when two responses of a prompt $x$ are randomly sampled from a single LLM $\ell$. %For analysis,  In this special case, we have
\begin{restatable}[Annotation Quality under Gaussian Score]{example}{PropAvgQuality}
\label{prop:1}
    When data for pairwise annotation is generated through random sampling of two responses $y_1, y_2 \sim \ell(x)$, we further assume the utility of those two responses are sampled from a Gaussian distribution with variance $\sigma^2_x$, i.e., 
$y\sim\ell(x), r(x,y)\sim \mathcal{N}(\mu_x, \sigma_x^2 )$. Then the annotation quality $\mathcal{Q}_\mathrm{pair}(x)$ on such a prompt can be defined as the averaged annotation order consistency:
    \begin{equation}
    \begin{split}
         \mathcal{Q}_\mathrm{pair}(x) = \mathbb{E}_{y_1,y_2|x}[ \tau_x ]
         &= \mathbb{E}_{y_1,y_2|x}\left[ \sigma(\beta|r(x,y_1) - r(x,y_2)|) \right] %\\ 
         % &, 
    \end{split}
    \end{equation}
    % \ys{Be explicit on what this expectation is taking over, I assume you mean condition on $x$ and marginalize $y$}
    where $\tau_x= \sigma(\beta|r(x,y_1) - r(x,y_2)|)$ is a random variable (over $y_1, y_2$) and the probability density function of $\tau_x $ is 
    \begin{equation}
    \small
    \label{eqn:25-pdfofintegral}
        f_{\tau_x|x}(t) = \frac{1}{\sqrt{\pi \beta^2\sigma_x^2}} \exp\left(-\frac{\left(\log\left(\frac{t}{1-t}\right)\right)^2}{4\beta^2\sigma_x^2}\right) \cdot \frac{1}{t(1-t)}
    \end{equation}
\end{restatable}
The derivation is provided in Appendix~\ref{appdx:proof_prop1}. In this special case, it is easy to get some numerical results: when $\beta^2\sigma_x^2 = 1$, we have $\mathcal{Q}_{\mathrm{pair}}\approx 0.6749$, e.g., roughly $67.5\%$ of data are correctly labelled by annotators. 
Similarly, when $\beta^2\sigma_x^2 = 2$, $\mathcal{Q}_{\mathrm{pair}}\approx 0.7251$; when $\beta^2\sigma_x^2 = 4$, $\mathcal{Q}_{\mathrm{pair}}\approx 0.7781$; when $\beta^2\sigma_x^2 = 10$, $\mathcal{Q}_{\mathrm{pair}}\approx 0.8428$; 
This suggests that the effect of better annotators and the effect of response utility diversity are always coupled: in order to improve data quality, we may either improve the ability of annotators or further \textbf{diversify the generation utilities} --- as both of those parameters control the annotation quality. Next, we show that cross-prompt comparison in this example can be an effective practice to increase response utility diversity. And show that cross-prompt annotation can improve annotation quality. 

\paragraph{Cross-Prompt Annotation Improves Quality under Gaussian Score} 
When considering multiple prompts $x_i, i=1,2,...,N$, we denote the corresponding responses as $y_i$, and scores $r(x_i,y_i) \sim \mathcal{N}(\mu_{i},\sigma_i^2)$. In the following, we show that cross-prompt annotation can improve annotation quality. 
\begin{restatable}[Cross-Prompt Comparisons Increase Utility Diversity]{proposition}{CPCIUD}
\label{prop:diff2}
    % Let $x_{ik}\sim \mathcal{N}(\mu_k,\sigma_k^2)$, and $x_{jl}\sim \mathcal{N}(\mu_l,\sigma_l^2)$, with $k\ne l$, we have 
    ~ When \\data for pairwise annotation is generated through random sampling of two responses $y_1, y_2 \sim \ell(x)$, and the utility of those two responses are sampled from a Gaussian distribution with variance $\sigma^2_x$, i.e., $y\sim\ell(x), r_{x,y}\sim \mathcal{N}(\mu_x, \sigma_x^2 )$, when there are multiple prompts $x$, we have
\begin{equation}
    \mathbb{E}_x\mathbb{E}_{y_1,y_2|x} \left[|r_{x,y_1} - r_{x,y_2}| \right] \le \mathbb{E}_{x_1, x_2}\mathbb{E}_{y_1|x_1,y_2|x_2} \left[|r_{x_1,y_1} - r_{x_2,y_2}| \right]
\end{equation}
\end{restatable}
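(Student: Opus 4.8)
The plan is to reduce this to a clean statement about absolute differences of Gaussians and then exploit convexity. First I would set up notation: for the same-prompt side, fix $x$ and note that $r_{x,y_1}-r_{x,y_2}$ is a difference of two i.i.d. $\mathcal{N}(\mu_x,\sigma_x^2)$ variables, hence distributed as $\mathcal{N}(0, 2\sigma_x^2)$; therefore $\mathbb{E}_{y_1,y_2|x}|r_{x,y_1}-r_{x,y_2}| = 2\sigma_x/\sqrt{\pi}$ (using $\mathbb{E}|Z| = \sqrt{2/\pi}\,\varsigma$ for $Z\sim\mathcal{N}(0,\varsigma^2)$ with $\varsigma^2 = 2\sigma_x^2$). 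Taking the outer expectation over $x$ gives the left-hand side as $\frac{2}{\sqrt{\pi}}\,\mathbb{E}_x[\sigma_x]$. For the cross-prompt side, $r_{x_1,y_1}-r_{x_2,y_2}$ with $x_1,x_2$ drawn independently is, conditional on $(x_1,x_2)$, distributed as $\mathcal{N}(\mu_{x_1}-\mu_{x_2},\,\sigma_{x_1}^2+\sigma_{x_2}^2)$, so $\mathbb{E}_{y_1|x_1,y_2|x_2}|r_{x_1,y_1}-r_{x_2,y_2}|$ equals the mean absolute value of that non-central Gaussian.

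The key step is then a pointwise inequality: for a $\mathcal{N}(\mu,\varsigma^2)$ variable, $\mathbb{E}|W| \ge \sqrt{2/\pi}\,\varsigma$, i.e. shifting the mean away from zero only increases the mean absolute value. This follows because $g(\mu) := \mathbb{E}|\,\varsigma Z + \mu\,|$ is convex in $\mu$ (an expectation of convex functions of $\mu$) and symmetric, hence minimized at $\mu=0$. Applying this with $\varsigma^2 = \sigma_{x_1}^2+\sigma_{x_2}^2$ gives, conditional on $(x_1,x_2)$,
\begin{equation}
\mathbb{E}_{y_1,y_2}|r_{x_1,y_1}-r_{x_2,y_2}| \ge \frac{2}{\sqrt{\pi}}\cdot\frac{\sqrt{\sigma_{x_1}^2+\sigma_{x_2}^2}}{\sqrt{2}} = \sqrt{\tfrac{2}{\pi}}\,\sqrt{\sigma_{x_1}^2+\sigma_{x_2}^2}.
\end{equation}
It remains to take expectations over the independent pair $(x_1,x_2)$ and compare with $\frac{2}{\sqrt{\pi}}\,\mathbb{E}_x[\sigma_x]$. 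Writing $S := \sigma_x$ as a random variable and letting $S_1, S_2$ be i.i.d. copies, the target inequality becomes $\mathbb{E}[\sqrt{S_1^2+S_2^2}] \ge \sqrt{2}\,\mathbb{E}[S]$. This I would prove by noting $\sqrt{S_1^2+S_2^2}\ge \tfrac{1}{\sqrt{2}}(S_1+S_2)$ pointwise (equivalently $2(S_1^2+S_2^2)\ge (S_1+S_2)^2$, i.e. $(S_1-S_2)^2\ge 0$), and then taking expectations: $\mathbb{E}[\sqrt{S_1^2+S_2^2}] \ge \tfrac{1}{\sqrt{2}}(\mathbb{E}[S_1]+\mathbb{E}[S_2]) = \sqrt{2}\,\mathbb{E}[S]$. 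Chaining the two bounds yields the claim.

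The main obstacle is not any single calculation — each step is elementary — but rather being careful about what is random and in what order expectations are taken: the variance parameter $\sigma_x$ is itself a function of the random prompt, and the cross-prompt expectation is over two independent draws of the prompt, so one must keep the conditioning structure straight to legitimately pull the $\frac{2}{\sqrt\pi}$ normal-absolute-moment identity inside. A secondary subtlety is that the shift term $\mu_{x_1}-\mu_{x_2}$ only helps us (it is discarded via the convexity bound), so the inequality is in fact slack unless all $\mu_x$ are equal and all $\sigma_x$ are equal; it may be worth remarking that the gain is strict whenever the $\sigma_x$ vary or the $\mu_x$ differ, which is precisely the ``utility diversity'' point the proposition is meant to make.
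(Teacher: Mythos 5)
Your proposal is correct and follows essentially the same route as the paper's proof in Appendix~\ref{appdx:proof_prop_diff}: condition on the prompt(s), reduce both sides to absolute moments of Gaussians, argue that the mean shift $\mu_{x_1}-\mu_{x_2}$ can only increase the cross-prompt term, and finish with the scalar inequality $\sqrt{2(\sigma_{x_1}^2+\sigma_{x_2}^2)}\ge \sigma_{x_1}+\sigma_{x_2}$ (your pointwise bound $\sqrt{S_1^2+S_2^2}\ge (S_1+S_2)/\sqrt{2}$ is exactly the paper's $2(1+t^2)\ge (1+t)^2$ with $t=\sigma_{x_1}/\sigma_{x_2}$). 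The differences are in how two steps are justified, and both of your choices tighten the writeup rather than change the idea: you discard the mean shift via convexity and symmetry of $\mu\mapsto\mathbb{E}\left[|\varsigma Z+\mu|\right]$ instead of invoking monotonicity of the explicit folded-normal (erf) formula, and, more importantly, you symmetrize over the two prompts before comparing, via $\mathbb{E}\left[\sqrt{S_1^2+S_2^2}\right]\ge\sqrt{2}\,\mathbb{E}[S]$ for i.i.d.\ copies $S_1,S_2$ of $\sigma_x$. That symmetrization is genuinely needed: the paper's displayed pointwise ratio, which compares the cross-prompt expectation for the pair $(\sigma_k,\sigma_l)$ against the single same-prompt expectation $2\sigma_k/\sqrt{\pi}$, is false when $\sigma_l<\sigma_k$; only the comparison against the average of the two same-prompt terms, $(\sigma_k+\sigma_l)/\sqrt{\pi}$ --- which the paper's $t=\sigma_k/\sigma_l$ remark implicitly uses --- holds, and your version makes this averaging explicit and correct. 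So the skeleton is the same, but your argument is the cleaner and more careful rendering of it; your closing remark on strictness (slack unless $\sigma_x$ is a.s.\ constant and $\mu_x$ is a.s.\ constant) is also accurate.
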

The proof is provided in Appendix~\ref{appdx:proof_prop_diff}. This gives us an intuitive result that, in expectation, the reward differences between cross-prompt comparisons are larger than the reward differences between prompt-response pairs sharing the same prompt. 

More generally, cross-prompt comparisons improve data quality when the utility distribution of different randomly sampled responses given a single prompt is unimodal and symmetric (e.g., Gaussian).
\begin{restatable}[Cross-Prompt Annotation Improves Annotation Quality]{theorem}{TheoremCrossPrompt}
\label{thm:annotation_quality_theorem}
    ~ When \\ data for pairwise annotation is generated through random sampling of two responses $y_1, y_2 \sim \ell(x)$, and the utility of those two responses are sampled from a location-scale family with probability density function $g_x(x)=f((x-\mu_x)/\sigma_x)$ for $f$ being unimodal and symmetric to 0. For any $\xi:\R_{+} \to [1/2,1]$, first order differentiable, monotone increasing and concave, we have
\begin{equation}
\begin{split}
    \mathbb{E}_x[\mathcal{Q}_\mathrm{pair}(x)] &=
    \mathbb{E}_x\mathbb{E}_{y_1,y_2|x} \left[\xi(|r_{x,y_1} - r_{x,y_2}|) \right] \\
    &\le \mathbb{E}_{x_1, x_2}\mathbb{E}_{y_1|x_1,y_2|x_2} \left[\xi(|r_{x_1,y_1} - r_{x_2,y_2}|) \right] := \mathbb{E}_{x_1,x_2}[\mathcal{Q}_\mathrm{cross-prompt}(x_1,x_2)].
\end{split}
\end{equation}
\end{restatable}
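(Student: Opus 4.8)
The plan is to reduce the theorem to a one-dimensional statement about the random variable $Z := r_{x,y_1} - r_{x,y_2}$ (same-prompt differences) versus $W := r_{x_1,y_1} - r_{x_2,y_2}$ (cross-prompt differences), and then invoke a convex-ordering / Jensen-type argument. First I would observe that, conditional on $x$ (or on $(x_1,x_2)$), each $r_{x,y}$ is drawn from the location-scale family with density $f((\cdot-\mu_x)/\sigma_x)$, so $Z \mid x$ has the distribution of $\sigma_x(S_1 - S_2)$ where $S_1, S_2$ are i.i.d.\ with density $f$; since $f$ is symmetric about $0$ and unimodal, the difference $S_1 - S_2$ is symmetric about $0$ and unimodal as well. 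Similarly $W \mid (x_1,x_2)$ has the distribution of a shift by $(\mu_{x_1}-\mu_{x_2})$ of $\sigma_{x_1}S_1 - \sigma_{x_2}S_2$. The goal inequality then reads $\mathbb{E}\,\xi(|Z|) \le \mathbb{E}\,\xi(|W|)$ where the outer expectations are over the prompt draws.

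The key step is a pointwise (in the prompt variables) comparison: I would show that for fixed scale(s) the cross-prompt difference is "more spread out'' than the same-prompt difference in the convex order, and that $\xi(|\cdot|)$ is — despite the absolute value — a function whose expectation is monotone under this spreading. Concretely, write $g(t) := \xi(|t|)$. Because $\xi$ is concave and increasing on $\R_+$ with $\xi \ge 1/2$, the function $g$ is symmetric, and on each half-line it is concave; one checks that $t \mapsto \mathbb{E}[g(t + V)]$ for a symmetric unimodal $V$ is itself symmetric and nonincreasing in $|t|$ — this is the classical fact that convolving with a symmetric unimodal kernel and then integrating against a symmetric bell-shaped weight behaves monotonically (Anderson's lemma flavor). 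Using this, averaging $g(Z)$ over the symmetric law of the within-prompt difference and comparing to $g(W)$, where $W$ has the same centered symmetric shape but with an extra independent mean shift $(\mu_{x_1}-\mu_{x_2})$ and possibly larger scale, gives the inequality after taking expectations over the independent prompt draws $x_1, x_2$ versus the single draw $x$. The special case already proved as Proposition~\ref{prop:diff2} (with $\xi(t)=t$) and Example~\ref{prop:1} serve as sanity checks of the mechanism.

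I expect the main obstacle to be handling the absolute value cleanly: $\xi$ is only assumed concave on $\R_+$, so $g(t)=\xi(|t|)$ is \emph{not} globally concave (it has a kink at $0$), which blocks a naive Jensen application to $\mathbb{E}\,\xi(|W|) \ge \xi(|\mathbb{E} W|)$-type bounds. The clean route is therefore to integrate out one layer of symmetric noise first: condition on the scales, represent both $|Z|$ and $|W|$ via the same i.i.d.\ shape variables, and prove the monotonicity of $s \mapsto \mathbb{E}[\xi(|sS_1 - sS_2|)]$ in $s$ together with the monotonicity of $\mu \mapsto \mathbb{E}[\xi(|U+\mu|)]$ in $|\mu|$ (increasing), where $U$ is symmetric unimodal. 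The second monotonicity is where symmetry-plus-unimodality of $f$ (hence of the difference) is essential — without unimodality one can construct $\xi$ and $f$ violating it. Once these two lemmas are in place, the theorem follows by first replacing the within-prompt pair's common scale/mean by a mixture matching the cross-prompt marginals and then applying the shift-monotonicity, all under $\mathbb{E}_{x_1,x_2}$ using independence of the two prompt draws.
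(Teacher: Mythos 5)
Your overall plan shares the paper's skeleton --- condition on the prompt draws, prove a shift-monotonicity lemma for symmetric unimodal noise (this is exactly \cref{lemma:expectabsbound}, which the paper proves by differentiating in the shift and using unimodality), and then reduce the theorem to a comparison of scaled differences --- but the central step of your sketch has a genuine gap. You assert a \emph{pointwise} (conditional on the prompt variables) comparison: that the cross-prompt difference is ``more spread out'' than the same-prompt difference in the convex order, and that $\E\,\xi(|\cdot|)$ is monotone under this spreading. Both halves fail. Conditionally on $(x_1,x_2)$ with $\mu_{x_1}=\mu_{x_2}$ and $\sigma_{x_2}\ll\sigma_{x_1}$, the cross-prompt difference $\sigma_{x_1}S_1-\sigma_{x_2}S_2$ is \emph{less} dispersed than the same-prompt difference at $x_1$, namely $\sigma_{x_1}(S_1-S_2)$, so no conditional dominance against a single prompt can hold; and even where a convex order did hold it would give nothing for $g=\xi(|\cdot|)$, which is not convex (concave on each half-line, kink at $0$), as you yourself note. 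The inequality only becomes true after you symmetrize the same-prompt side over the two independent prompt draws, writing $\E_x[\cdot]=\tfrac12\E_{x_1}[\cdot]+\tfrac12\E_{x_2}[\cdot]$, and then prove, for each fixed pair of prompts, that $\tfrac12\E\,\xi(\sqrt{2}\sigma_{x_1}|z|)+\tfrac12\E\,\xi(\sqrt{2}\sigma_{x_2}|z|)\le\E\,\xi\bigl(\bigl|\mu_{x_1}-\mu_{x_2}+\sqrt{\sigma_{x_1}^2+\sigma_{x_2}^2}\,z\bigr|\bigr)$. This is the paper's \cref{lemma:34} (with the averaging step being \cref{lemma:35}), and it is precisely where concavity of $\xi$ enters: Jensen merges the two scales into $(\sigma_{x_1}+\sigma_{x_2})/\sqrt{2}$, then $(\sigma_{x_1}+\sigma_{x_2})/\sqrt{2}\le\sqrt{\sigma_{x_1}^2+\sigma_{x_2}^2}$ and the shift lemma finish. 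Your two stated lemmas --- shift monotonicity and the (trivial) monotonicity in a \emph{common} scale $s$ --- cannot bridge the unequal-scale comparison, and although you list concavity as a hypothesis and correctly observe that it blocks a naive Jensen bound, your sketch never identifies where concavity is actually used; that is the missing idea.

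Two smaller points. First, there is an internal direction slip: you first claim $t\mapsto\E[g(t+V)]$ is nonincreasing in $|t|$ (``Anderson's lemma flavor'') and later, correctly, that it is increasing; for $g=\xi(|\cdot|)$, which is symmetric and increasing in $|t|$, the needed (and true) direction is increasing, as in \cref{lemma:expectabsbound}. Second, both your sketch and the paper's \cref{lemma:34} tacitly use that within- and cross-prompt differences stay in the same shape family with scales combining in quadrature (exact for Gaussian-type families, as in \cref{prop:diff2}); for a general unimodal symmetric $f$ this representation requires additional justification, so if you want genuine extra generality you would need a different argument for that step rather than the convex-order shortcut.
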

In the above equation, $\mathcal{Q}_\mathrm{cross-prompt}(x_1,x_2)$ is defined slightly different from $\mathcal{Q}_\mathrm{pair}(x)$ which only takes one prompt as its input. We can understand $\mathcal{Q}_\mathrm{pair}(x)$ as a special case of $\mathcal{Q}_\mathrm{cross-prompt}(x_1,x_2)$ when $x_1 = x_2 = x$. 
Theorem~\ref{thm:annotation_quality_theorem} highlights that cross-prompt comparisons improve annotation quality for a broad class of utility distributions and $\xi$. The proof is provided in Appendix~\ref{appdx:proof_prop_data_quality}.
\begin{graybox}[innertopmargin=-3pt,leftmargin=0pt, rightmargin=0pt, innerleftmargin=10pt, innerrightmargin=10pt,
innerbottommargin=7pt,
skipbelow=250pt]
\small
\paragraph{Takeaway Message} 
In this section, we highlighted the theoretical superiority of using cross-prompt comparisons in preference annotations. In expectation, cross-prompt comparisons can improve annotation quality since they increase the expected differences between prompt-response pairs. %\jef{need to discuss the definition here of Q crossprompt}
% \vspace{-0.2cm}
% that generates the dataset.
\end{graybox}\vspace{-0.2cm}

\section{Related Work}
\label{sec:related_work}
\subsection{The Bradley-Terry Model in RLHF}
Hark back to the seminal paper on RLHF~\citep{christiano2017deep}, the motivation for using the Bradley-Terry-\textit{type} model is to understand it as a specialization of the Luce-Shephard choice rule~\citep{luce1959individual,shepard1957stimulus}. In a simplified version of the two-option setting, the Luce-Shephard choice rule says the probability of choosing a particular option from a set of alternatives is proportional to the utility assigned to that option. 
The application of such a rule is well aligned with the Bradley-Terry model~\citep{bradley1952rank} where pairwise comparisons between players can be used to estimate their ability (utility) scores, where a game is inherently stochastic while the ability of a player is fixed. In later work of alignment from human feedback~\citep{stiennon2020learning,ouyang2022training,rafailov2023direct}, such a model is directly applied and has achieved great success in improving the quality of LLMs in various tasks when direct evaluation is impossible.

The challenges of using BT models in RLHF have been discussed in the literature from different perspectives~\citep{azar2023general,tang2024generalized,zhao2023slic}: \cite{munos2023nash} points that the BT model can not capture non-transitive preferences, and maximizing the corresponding Elo score can be a different objective as compared to the objective of optimizing preferences. 
\cite{azar2023general} points out that using the BT modelization in the direct preference optimization methods~\citep{rafailov2023direct} could lead to problematic overfitting when sampled preferences are deterministic. Instead of putting conditions on the data's property and discussing counterexamples when BT models can certainly fail, in our work, we revisited the foundations of the BT models in reward modeling and focused on explaining under what assumptions could it be correct. Our work provides positive results of the BT models through formal justifications. Furthermore, we introduce the order consistency objective and the classification-based reward modeling methods as more flexible and compatible alternatives.

\subsection{Bradley-Terry Model for Parameter Estimation and Prediction}
The most classic Bradley-Terry model suitable for arena setting has been extensively studied theoretically started by \citet{bradley1952rank} themselves. \citet{ford1957solution} established identifiability and asymptotic theory. More recently \citet{wu2022asymptotic} compared asymptotic theory under different identifiability assumptions. \citet{simons1999asymptotics} studied Bradley-Terry when the number of players diverged under the assumption that the number of competitions each player played also diverged. In contrast \citet{han2020asymptotic} studied the setting when comparisons are sparse and proposed a consistent procedure needs only $O(N\log^3(N))$ comparisons. On the prediction side, using features to predict ability was explored soon after \citet{bradley1952rank}, \citet{springall1973response} assumed ability was a linear combination of features while \citet{de1993thurstonian} used spline functions. \citet{bockenholt1988logistic} cast Bradley-Terry as a special logistic regression problem and opened investigation from this len. \citet{chen2012estimation} developed theory for a general class of nonparametric models including logistic regression. On the deep learning side \citet{schmidt2020nonparametric} studied asymptotic theory on deep neural networks for nonparametric regression and a follow-up paper \citet{bos2022convergence} studied nonparametric logistic regression using deep neural net.  However, these theories cannot be directly applied to the Bradley-Terry model because we need a unique reward model for all prompt-response pairs and the probability is not arbitrary but soft max from passing two pairs to a \textit{single} network. Studies like \citet{bos2022convergence} could be seen as two pairs passing two arbitrary neural networks rather than one.

\subsection{Non-Pairwise Data and Cross-Prompt Comparisons in RLHF}
Alignment from non-pairwise data~\citep{liu2022learning,sun2024inverse,ethayarajh2024kto} and cross-prompt comparisons~\citep{yin2024relative} were explored in the literature. 
The KTO~\citep{ethayarajh2024kto} is rooted in prospect theory~\citep{tversky1992advances} that risk-averse human prospective theories assign different values to losses and gains. And RPO~\citep{yin2024relative}
proposed to reflect the complex nature of human learning in alignment through involving response comparisons among both identical and similar questions. On one hand, RPO supports our insight into making cross-prompt comparisons in practice by drawing inspiration from human learning. On the other hand, the motivations and implementations between our work and RPO are both different: in our work, the cross-prompt annotation practice is motivated by a \textit{``why not"} observation --- given the fact that there is no specialty of considering single-prompt comparisons in our order-consistency theory and the BT model for predictions. Besides different motivations, there are several significant differences between RPO and our cross-prompt comparison: RPO does not study the annotation efficiency, or what is the optimal way of doing annotation under a budget, yet our focus is on studying the cross-prompt annotation as an alternative way of doing annotations.
Moreover, RPO considers a strategic re-weighting method based on embedding space similarity among responses, leading to a direct alignment method. All of those works are direct alignment methods~\citep{zheng2024toward,liu2024provably,zhao2023slic,azar2023general,ethayarajh2024kto}, which do not explicitly build reward models as their intermediate steps. One potential drawback of those methods without explicit reward models is that it will be challenging or impossible to conduct inference-time optimization~\citep{liu2023statistical,sun2023reinforcement,sun2improving} --- especially when compared with our embedding-based light-weighted reward models. And recent work has demonstrated the superiority of two-stage methods over direct alignment methods~\citep{ivison2024unpacking}.

\subsection{Representation Learning in Reward Modeling}
In our work, we separate the reward model learning from the representation learning task and focus only on the reward modeling part to isolate the source of gains better. Recent advances in generative reward models highlight that generation tasks can be used to regularize the learned embeddings and improve reward modeling performance~\citep{yang2024regularizing}. Also in \citet{zhang2024generative}, the generative verifiers construct reward predictions through next-token prediction to maximally leverage the ability of LLM as token generators to improve their evaluation ability.
While our research on the reward models is orthogonal to this line of research, future exploration on combining either embedding learning or the generative ability of LLMs with different order consistency objectives could be highly promising directions to enjoy the improvements from both sides.

\section{Experiments}
\label{sec:experiments}
\vspace{-0.1cm}
% \subsection{Comparing Robustness (worst-case performance) and Performance (golden-reward correlation) \textcolor{blue}{Mostly Finished}}
\paragraph{Objectives of Experiments}
In this section, we present empirical results to validate our key insights and methodological contributions. We begin by elaborating on our high-level design motivations for experiments. We aim to address the following questions:%\vspace{-0.1cm}
\begin{enumerate}[leftmargin=*,nosep] 
\item How effective are different learning objectives given by the order consistency framework? Specifically, how does the performance of a classification-based model compare to the widely used BT model in reward modeling? (Section~\ref{sec:exp_sec1}) 
\item How do various reward modeling methods perform as annotation quality and availability changes? (Section~\ref{sec:exp_sec2}) 
\item How good is the cross-prompt comparison in annotations? What is essential in determining whether this annotation design is beneficial or not in practice when it is available? (Section~\ref{sec:exp_sec3}) 
\vspace{-0.1cm}
\end{enumerate}
% In this section, we present results in empirical studies to verify our key insights and methodological contributions. We begin by elaborating on our high-level design motivations of experiments. Our experiments aim at providing empirical evidence to answer the following questions:
% \begin{enumerate}[leftmargin=*,nosep]
%     \item How good are different learning objectives motivated by the order consistency framework? Specifically, how well is classification model as compared to the widely used BT model in reward modeling? We answer those questions in Section~\ref{sec:exp_sec1}.
%     \item How the performance of different reward modeling choices evolves when annotation quality and annotation availability changes? We answer those questions in Section~\ref{sec:exp_sec2}.
%     \item How good is cross-prompt comparison in annotations? What is essential in determining whether this annotation design is beneficial or not in practice when it is available? We answer those questions in Section~\ref{sec:exp_sec3}.
% \end{enumerate}
\paragraph{Key Design Principles: Reproducibility, Controllability, and Computational Efficiency}
Our experiments are designed with three key desiderata in mind: high reproducibility, controllability to enable diverse ablation studies and computational efficiency. We prioritized these principles in all aspects of our experimental setup:
%We highlight the three desiderata in determining our experiment design: it should be highly reproducible, controllable in enabling diverse ablation studies, and computationally efficient. Our efforts in experimental design should always consider those three desiderata. 
\begin{itemize}[leftmargin=*,nosep]
    \item \textbf{Base Models.} We conducted experiments using three open-source LLMs at different scales: Gemma2b, Gemma7b, and LLaMA3-8b~\citep{team2024gemma, meta2024introducing}, alongside their SFT-ed versions (Gemma2b-SFT, Gemma7b-SFT, LLaMA3-8b-SFT), following the setup in \citet{stiennon2020learning}. Details are provided in Appendix~\ref{appdx:experiment_details}. 
    \item \textbf{Annotations.} To ensure controllability in generation and annotation, and closely simulate human annotation processes, we followed approaches from \citet{gao2023scaling, liu2023statistical, tran2024snorkel, dong2024rlhf} to utilize open-source golden reward models as annotators to maintain affordability for the community.
    % To be able to control the generation and annotation, and maximally approximate the human annotators' labeling process and while remain affordable to the open-source community, we follow \citet{gao2023scaling,liu2023statistical,tran2024snorkel,dong2024rlhf} to experiment with open-sourced golden reward model annotations.
    \item \textbf{Datasets.} We used the \texttt{Anthropic-Harmless} and \texttt{Anthropic-Helpful} datasets~\citep{bai2022training}, as these are extensively studied in the context of reward modeling, and open-source golden reward models are available~\citep{yang2024rewards, dong2023raft, dong2024rlhf}.
    % We use the \texttt{Anthropic-Harmless} and \texttt{Anthropic-Helpful} datasets~\citep{bai2022training} for our experiments as reward modeling has been extensively studied on those two datasets and corresponding open-sourced golden reward models are available~\citep{yang2024rewards,dong2023raft,dong2024rlhf}. 
    \item \textbf{Evaluation of Learned Reward Models.} We evaluated the effectiveness of different reward models using Best-of-N (BoN) sampling following \citet{gao2023scaling}. This choice was driven by three key considerations:
    \begin{itemize}[leftmargin=*]
    \item (1) Performance: Empirical studies show BoN achieves better performance than PPO \citep{dong2023raft, yuan2023rrhf, gao2023scaling, coste2023reward}.
    \item (2) Stability and Reduced Engineering Overhead: BoN requires no hyperparameter tuning and is more stable than PPO, leading to more consistent and interpretable results~\citep{ivison2024unpacking, xu2024dpo}.
    \item (3) Computational Efficiency and Reproducibility: BoN's reusability across N generations during test time makes it more computationally efficient compared to policy-gradient optimizations~\citep{li2023remax, stiennon2020learning}. In contrast, using PPO~\citep{schulman2017proximal} for our 12,000 experimental setups would be computationally prohibitive since each setup requires distinct LLM fine-tuning.
    \end{itemize}
    % \textbf{(1) Good Performance}: it is shown empirically BoN achieves better performance than PPO~\citep{dong2023raft,yuan2023rrhf,gao2023scaling,coste2023reward}.
    % \textbf{(2) Stability and Isolating Engineering Efforts}: BoN is free-from hyper-parameter tunning and is much more stable than PPO~\citep{ivison2024unpacking,xu2024dpo}, hence BoN can deliver less noisy and more conclusive results.
    % \textbf{(3) Computational Efficiency in Extensive Comparisons and Reproducibility}: Because of the fact that the N generations in test time of BoN is highly reusable, BoN can be much more computationally-efficient than alternative policy-gradient based optimizations~\citep{li2023remax,stiennon2020learning}. As comparison, it will be extremely expensive and impractical to run PPO~\citep{schulman2017proximal} under our 12,000 different experiment setups as each of those setups requires a unique LLM fine-tuning stage.
\end{itemize}
For more details on experiment setup, please refer to Appendix~\ref{appdx:experiment_details}. \vspace{-0.3cm}

\subsection{Comparing The Bradley-Terry and Classification Objectives}
\label{sec:exp_sec1}
\begin{figure}[h!]
    \centering
    \includegraphics[width=0.49\linewidth]{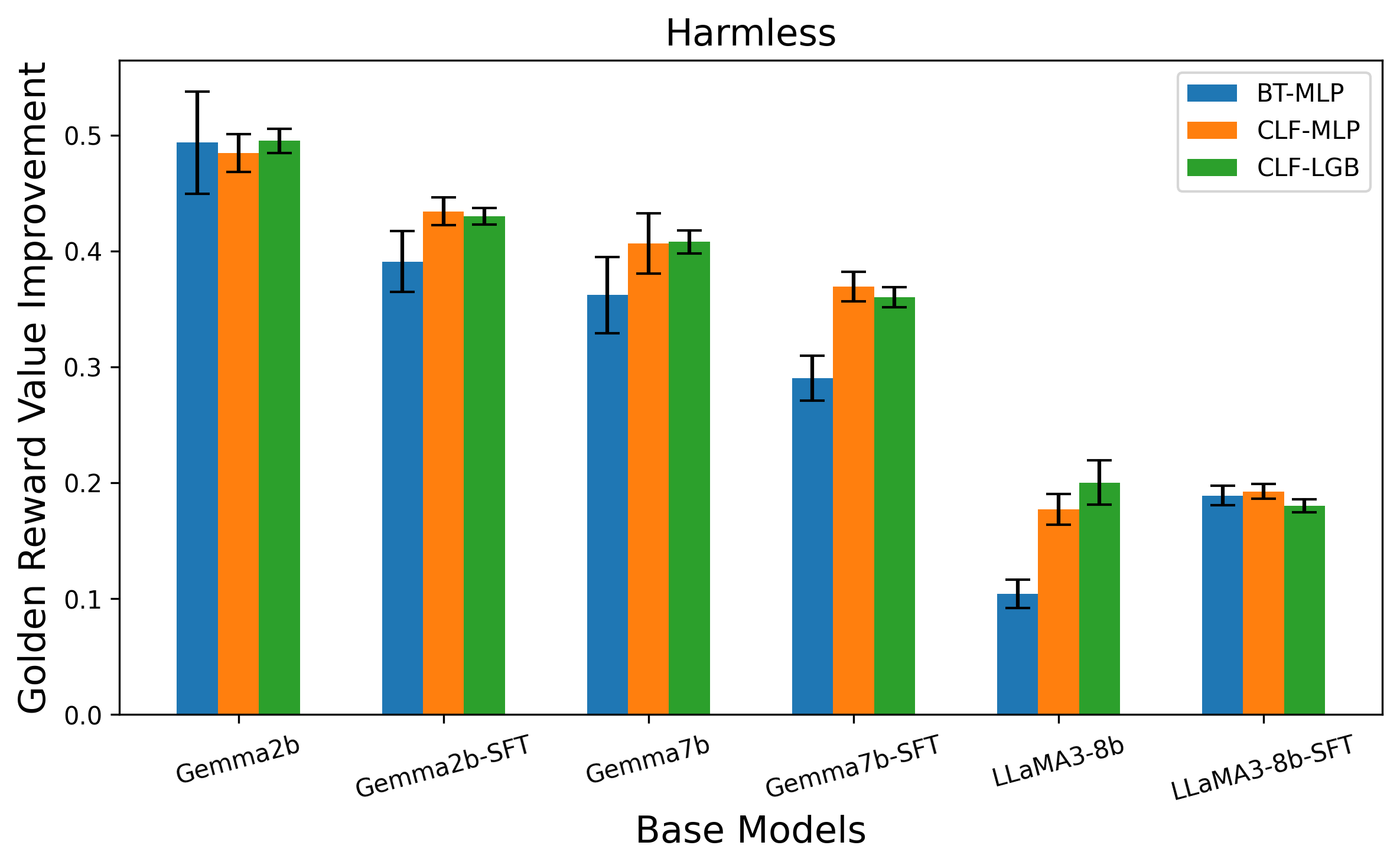}
    \includegraphics[width=0.49\linewidth]{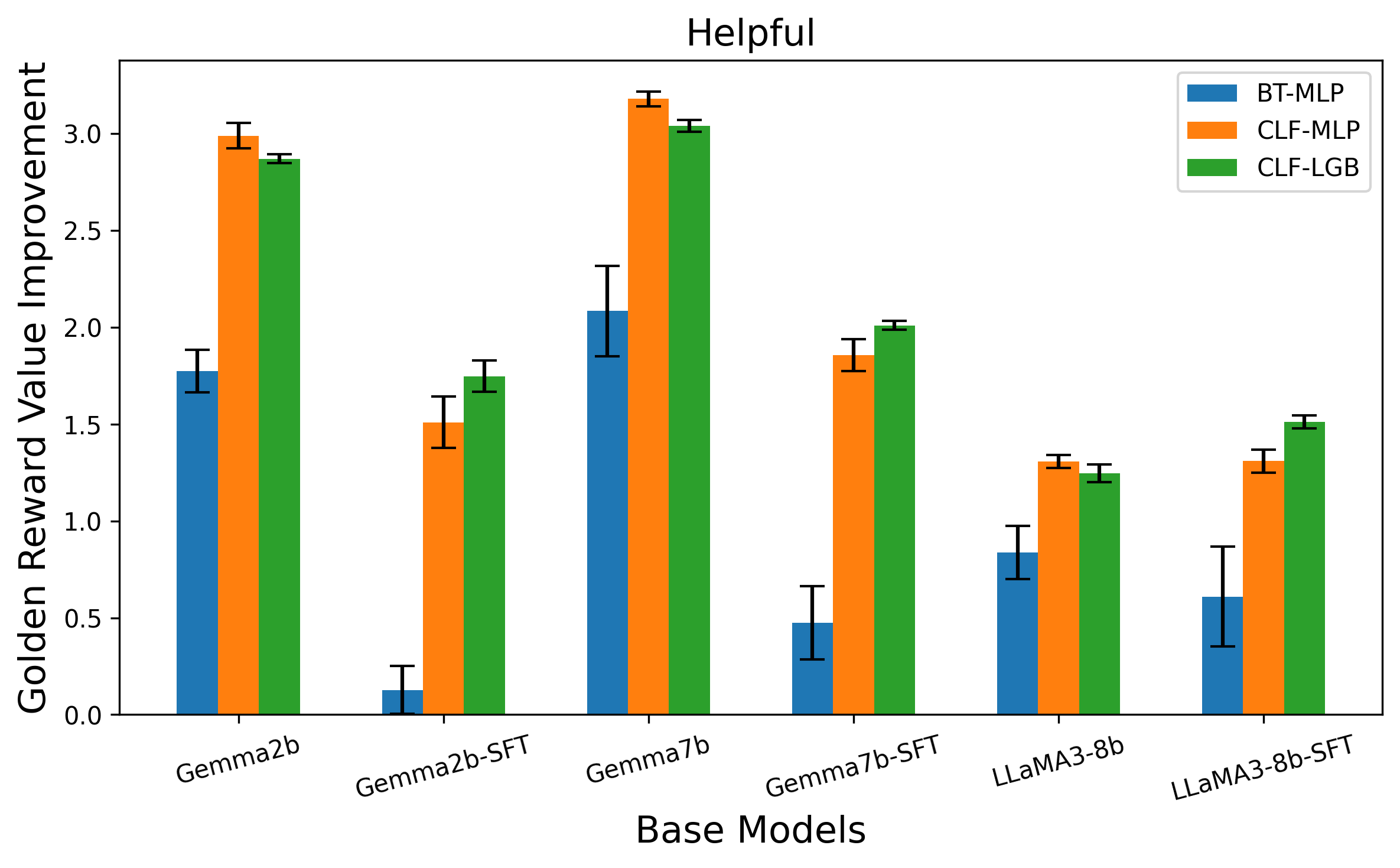}
    \caption{\small \textit{Comparison between BT and Classification reward models.} In general, the classification reward models achieve better performance than the BT reward models, with the added flexibility of using off-the-shelf classifiers beyond MLPs. Error bars are given by 5 runs with different seeds.}
    \vspace{-0.2cm}
    \label{fig:experiment1}
\end{figure}%
\paragraph{Experiment Setups.} In this section, we compare reward models trained with different learning objectives --- the BT model and the Classification model --- as well as their different implementations. For the BT model, the Siamese structure~\citep{bromley1993signature} is required, making MLP the only viable backbone implementation (denoted in the plots as \textbf{BT-MLP}). For a fair and clear comparison to BT and to isolate the source of gains from implementation, we implement the classification model with both MLP (denoted in the plots as \textbf{CLF-MLP}) and the LightGBM~\citep{ke2017lightgbm} (denoted in the plots as \textbf{CLF-LGB}) given its wide success in machine learning applications~\citep{bentejac2021comparative} and especially its successful application in embedding-based reward modeling~\citep{sun2023query}.
We evaluate those reward models using BoN with N=500, reporting improvements over base models to provide a clear comparison of their performance.
\paragraph{Results and Takeaways.} 
Figure~\ref{fig:experiment1} presents the results on both datasets. The x-axis shows different base models, and the y-axis shows the \textbf{improvement on golden reward values} on test prompts using BoN (i.e., the relative performance gains achieved through the reward models).
The results indicate that classification-based reward models not only perform better than the BT reward models but also offer greater flexibility by allowing the use of diverse off-the-shelf machine learning algorithms. This makes them a competent alternative to BT models in reward modeling.
% Results are provided in Figure~\ref{fig:experiment1}. In both datasets (the two panels), we use x-axis to show different base models, and y-axis to present the \textbf{improvement on golden reward values} on test prompts using BoN (i.e., the relative performance gain achieved through the reward model).
% We can conclude from the results that the classification-based reward models, while keeping its high flexibility of leveraging different off-the-shelf machine learning algorithms, are well-performing alternatives to the BT reward models.
% \paragraph{Takeaways.} Classification-based reward models are highly flexible and well-performing alternatives to the BT reward models.
\subsection{How Annotation Qualities and Quantities Affect Performance}
\label{sec:exp_sec2}
\begin{figure}[h!]
    \centering
    \vspace{-0.3cm}
    \includegraphics[width=1.0\linewidth]{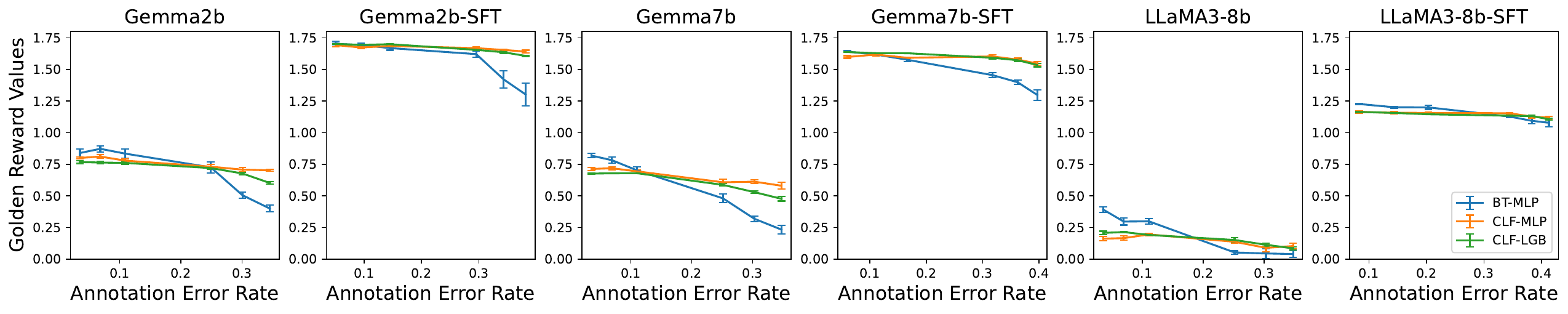}
    \includegraphics[width=1.0\linewidth]{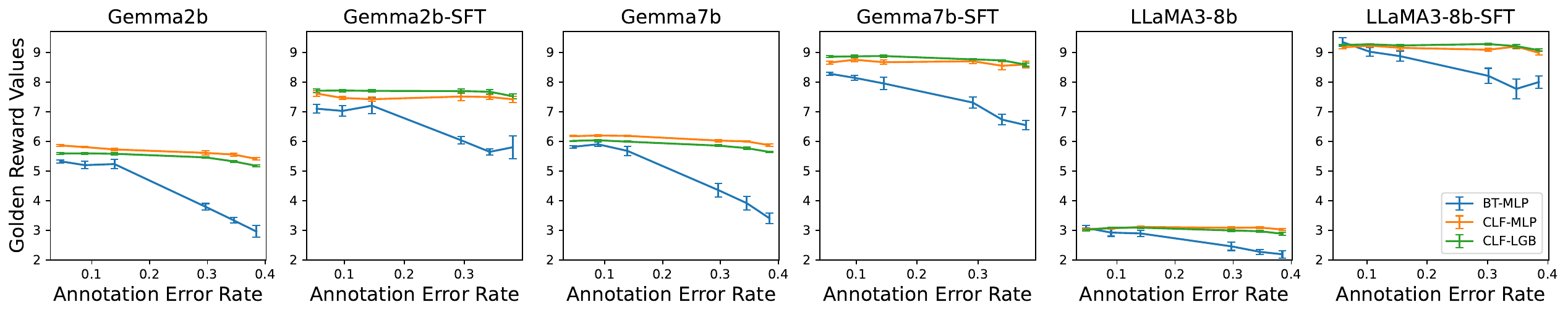}
    \vspace{-0.6cm}
    \caption{\small \textit{Changing the annotation quality. Dataset: Harmless, Helpful.}}\vspace{-0.4cm}
    \label{fig:experiment_2_in_main}
\end{figure}
\begin{figure}[h!]
    \centering
    \vspace{-0.2cm}
    \includegraphics[width=1.0\linewidth]{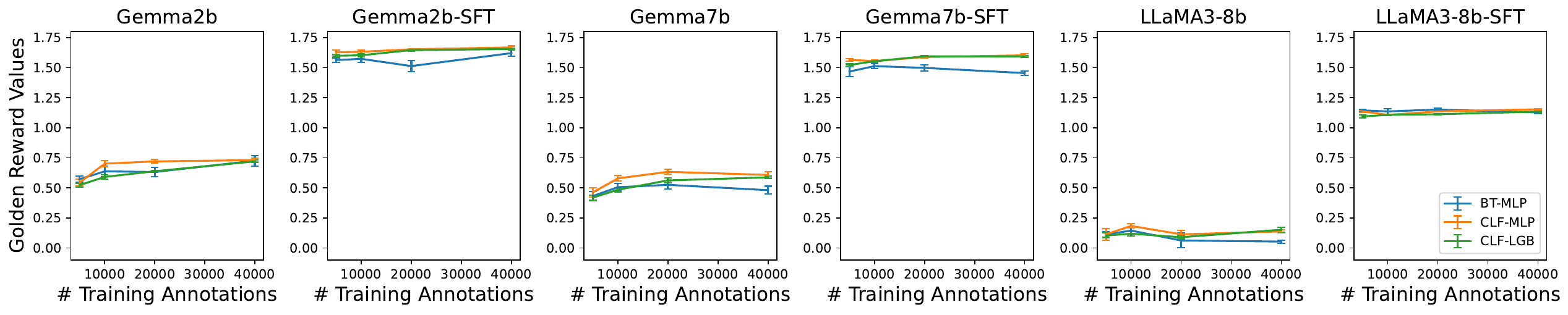}
    \includegraphics[width=1.0\linewidth]{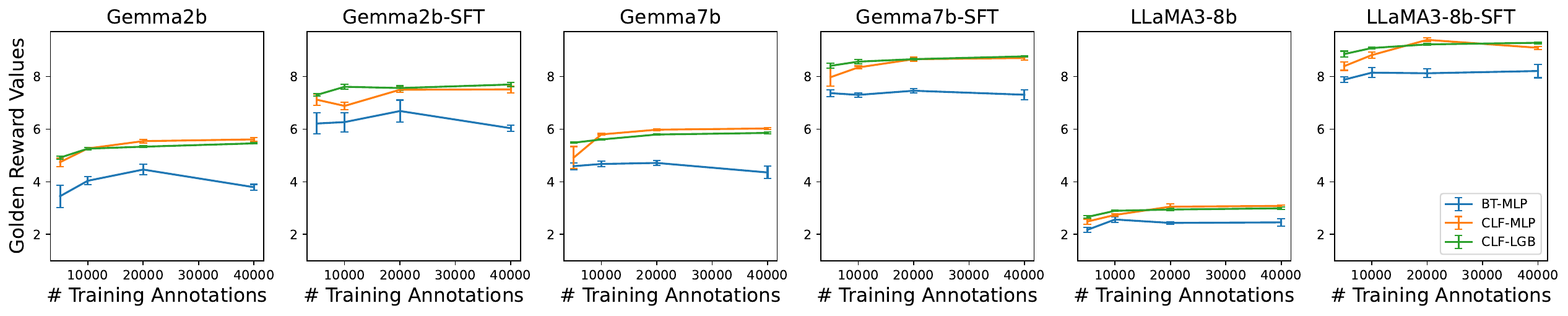}
    \vspace{-0.6cm}
    \caption{\small \textit{Changing the annotation quantity. Dataset: Harmless, Helpful.}}\vspace{-0.3cm}
    \label{fig:experiment_2size_in_main}
\end{figure}
\paragraph{Experiment Setups.} In this section, we systematically vary both the quality and quantity of annotations to empirically assess the performance of different reward models under diverse conditions. For annotation quality, we use the sigmoid instantiation of \eqref{eq:annotatorability}, i.e., $\xi(\Delta r) = \sigma(\beta \Delta r)$, and vary the annotation quality parameter $\beta$ over the range $[0.5, 0.7, 1.0, 3.0, 5.0, 10.0]$, using a fixed set of $40000$ annotations for training. These $\beta$ values correspond to annotation error rates ranging from $5\%$ to $38\%$, which aligns with realistic human annotations~\citep{zheng2023secrets,coste2023reward,dubois2024alpacafarm}. To explore the impact of annotation quantity, we experiment with datasets containing $[5000, 10000, 20000, 40000]$ annotations, while holding $\beta=1$ constant. Additional results cross-sweeping those two set-ups can be found in Appendix~\ref{appdx:additional_results}.%, please refer to Figure~\ref{fig:harmless_noise_over_size} to Figure~\ref{fig:helpful_size_over_noise} in Appendix~\ref{appdx:additional_results}.
\paragraph{Results and Takeaways.} 
Figure~\ref{fig:experiment_2_in_main} presents the results of varying annotation quality. From the plots, it is evident that as annotation error rates increase, the classification models exhibit greater robustness compared to the BT models, experiencing smaller performance drops. On the other hand, the BT models outperform classification models when annotation quality is high (i.e., less than $10\%$ wrong labels).
Figure~\ref{fig:experiment_2size_in_main} shows the results of varying annotation quantities. We can conclude from the results that the classification models consistently outperform BT models, not only delivering superior performance with the same number of annotations but also demonstrating more consistent improvements as the number of annotations increases.
% Results on changing annotation qualities are provided in Figure~\ref{fig:experiment_2_in_main}. We can conclude from the plots that when the annotation error rates increase, the classification models tend to be more robust than the BT models, by dropping less performance. On the other hand, the BT model in general performs better than the classification models when annotation qualities are high (i.e., less than $10\%$ wrong labels).  
% The results on changing annotation quantities are provided in Figure~\ref{fig:experiment_2size_in_main}, we find from the results that the classification models are in general better than the BT models, not only in terms of performance under the same amount of annotations but also by achieving more consistent improvement when more annotations are available. 
% \paragraph{} The classification-based reward models are more robust than the BT reward models under noisy annotations, and consistently outperform the BT reward model under different annotation availabilities.

\subsection{LLM Alignment with Preference Annotations Between Different Prompts}
\label{sec:exp_sec3}
\paragraph{Experiment Setups.} 
In this section, we examine how cross-prompt comparisons, as opposed to annotations on the same prompt, affect the performance of different reward models. Specifically, for each training prompt, two responses are randomly generated by the LLMs and then presented to annotators (the golden reward model) for preference labeling. In the standard reward modeling setup, the annotators label response pairs generated from the same prompt. In the cross-prompt annotation setup (denoted by the postfix \textbf{X-Prompt} in the legends), we randomly select two prompt-response pairs from the dataset for comparison and annotation. We present the results with $\beta=1$ in the preference annotation processes and $40000$ pairs of annotations in our main text as the most common setting. We provide full results over other experimental configurations in Appendix~\ref{appdx:additional_results}.
% In this section, we study how making cross-prompt comparisons rather than making annotations on the same prompts may affect the effectiveness of different reward modeling choices. 
% We consider $2$ generations randomly generated by the LLMs on each training prompt for annotators to label. 
% In the normal reward modeling setup, each of those response pairs generated with the same prompts are given to annotators to label their preferences. In the cross-prompt annotation setup (denoted by post-fix \textbf{X-Prompt} in legends), we randomly choose $2$ prompt-response pairs from the dataset, and send them to annotators to label. Again, we present the results with $\beta=1$ in the preference annotation processes, and $40000$ pairs of annotations as the most realistic settings. We provide other setups in Appendix~\ref{appdx:additional_results}.
\begin{figure}[h!]
    \centering
    \vspace{-0.2cm}
    \includegraphics[width=0.49\linewidth]{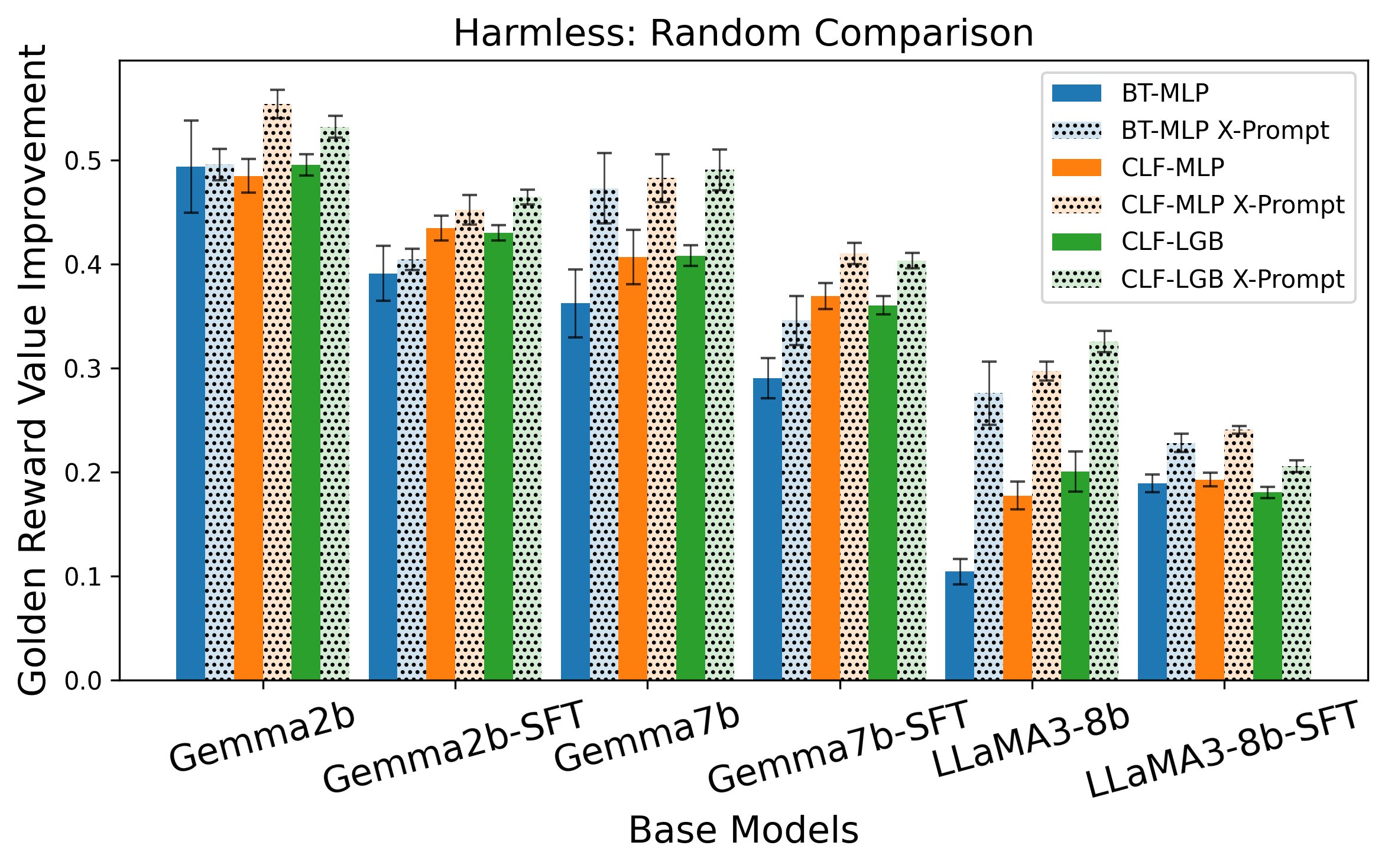}
    \includegraphics[width=0.49\linewidth]{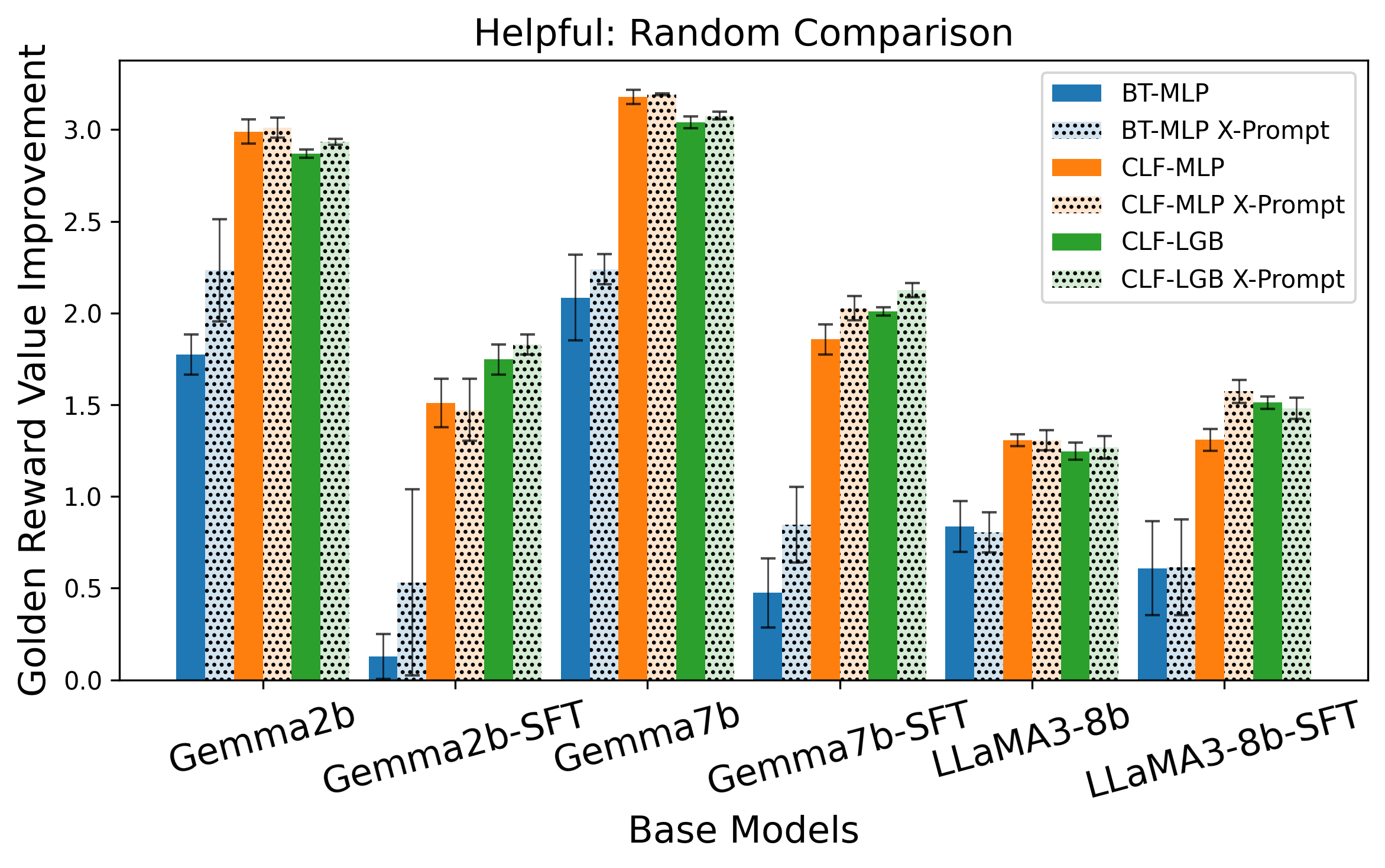}
    \vspace{-0.2cm}
    \caption{\small \textit{Results comparing cross-prompt comparison based annotations.} Preference annotations on cross-prompt comparisons outperform same-prompt comparisons.}\vspace{-0.5cm}
    \label{fig:experiment3_performance_part1}
\end{figure}
\paragraph{Results and Takeaways.} Figure~\ref{fig:experiment3_performance_part1} shows the results across $2$ datasets, $6$ base models, and $3$ different reward modeling methods. 
Shaded plots represent results from cross-prompt comparisons. The y-axis measures golden reward value improvements achieved through BoN sampling using the respective reward models. From these results, it is clear that cross-prompt comparisons outperform same-prompt comparisons, offering substantial improvements in reward model performance.
% We use shaded plots to denote the results from cross-prompt comparisons. The y-axis reports golden reward value improvements achieved through BoN sampling using the corresponding reward models. 
% We can conclude from the results that cross-prompt comparisons outperform normal comparisons (i.e., the same-prompt, multiple response comparisons) in reward modeling. 

\begin{figure}[h!]
    \centering
    \vspace{-0.2cm}
    \includegraphics[width=0.49\linewidth]{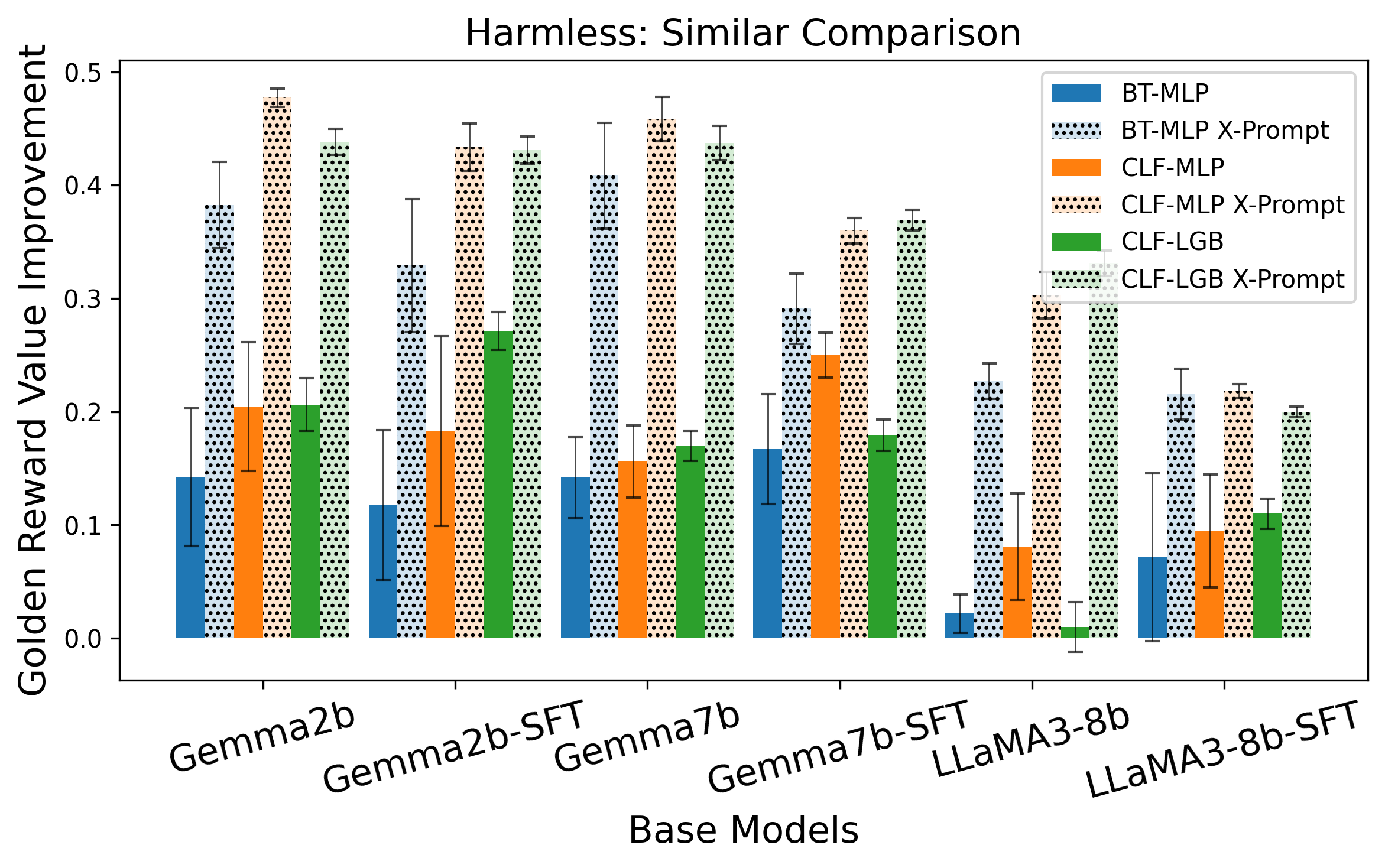}
    \includegraphics[width=0.49\linewidth]{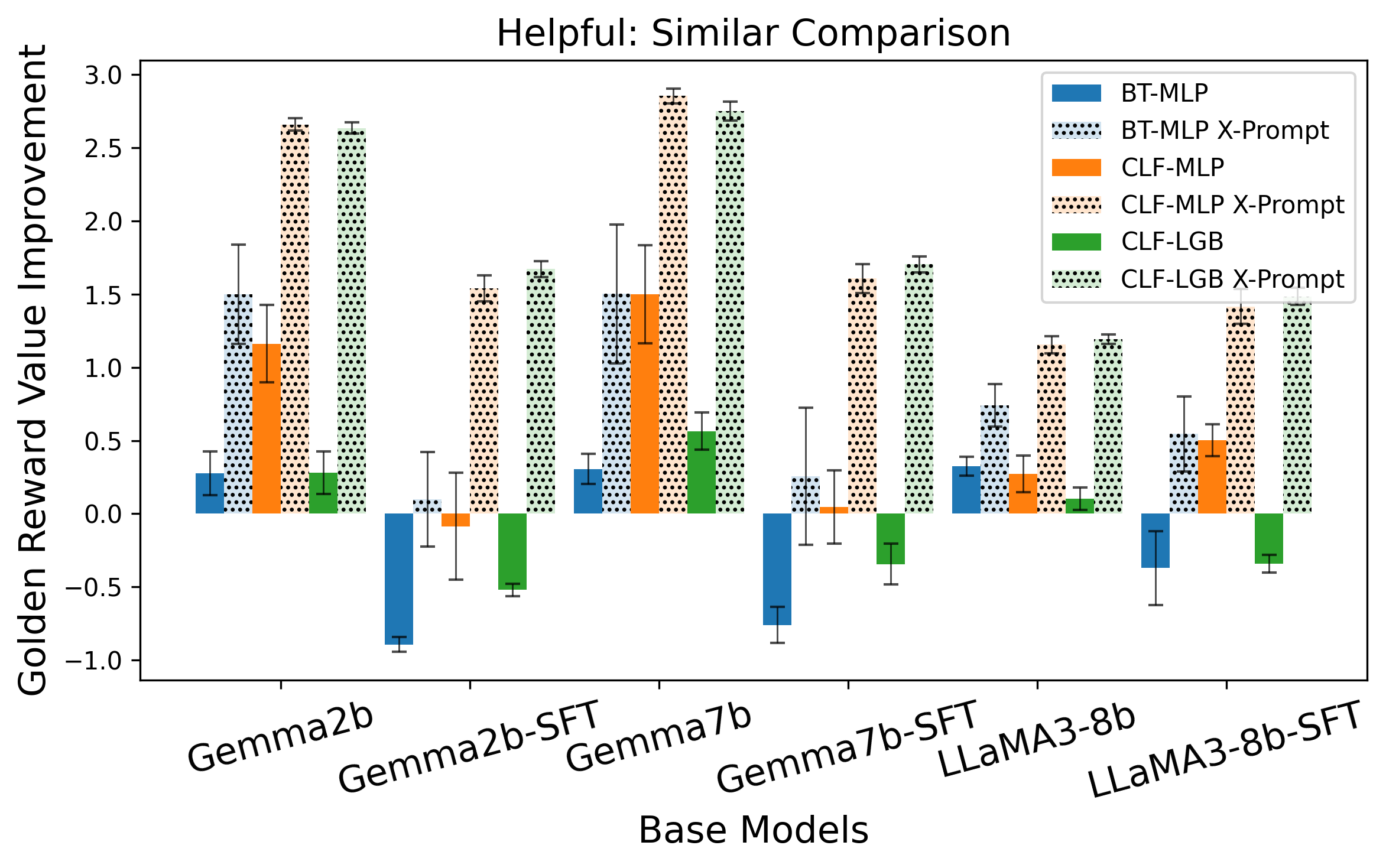}
    \includegraphics[width=0.49\linewidth]{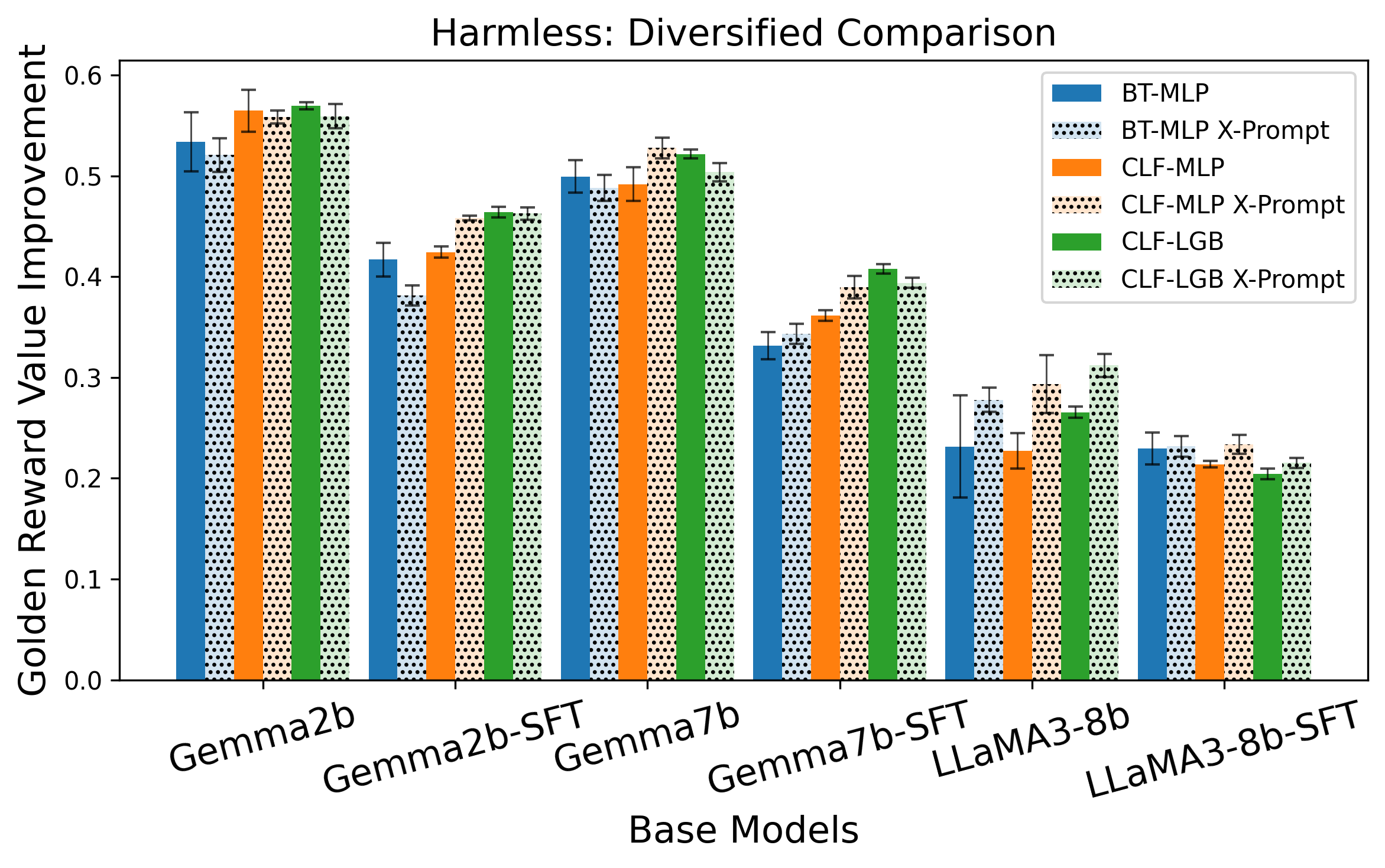}
    \includegraphics[width=0.49\linewidth]{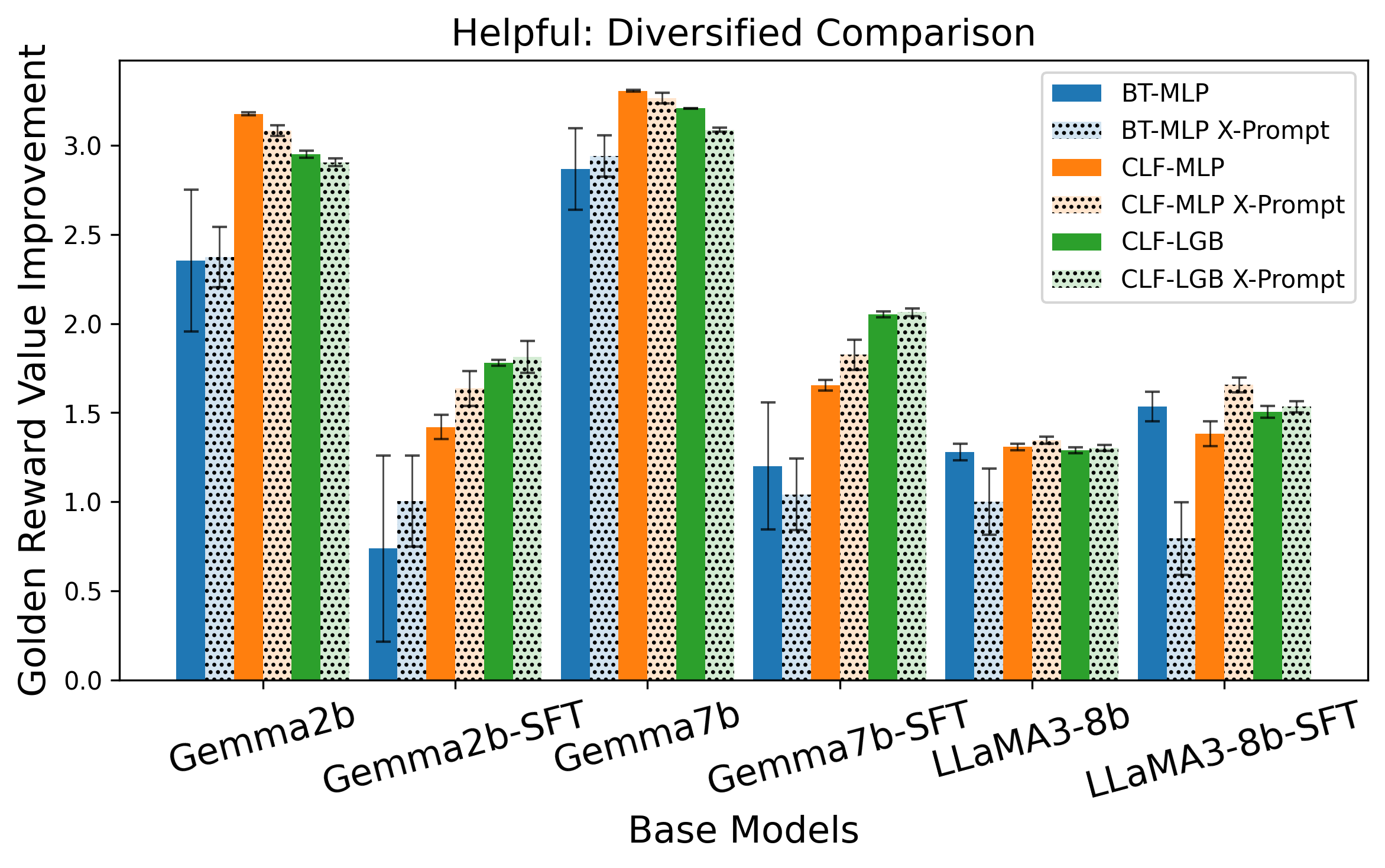}
    \vspace{-0.2cm}
    \caption{\small \textit{Results comparing cross-prompt comparison-based annotations on synthetically generated similar or diversified comparison pairs.} Cross-prompt comparison significantly improves the performance of reward modeling with same-prompt response pairs lacking diversity. Error bars are from 5 runs with different seeds.}\vspace{-0.4cm}
    \label{fig:experiment3_performance_part2}
\end{figure}
\paragraph{Further Investigation.} Intuitively, the cross-prompt comparison can improve the quality of annotation since it increases the diversity in generation.% We give an analytical case study in Appendix~\ref{appdx:gaussian_case_study} and numerical results on average data quality. 
To further explore this, we introduce two \textit{synthetic setups}\footnote{It is worth noting that none of those two synthetic setups is realizable in practice without a reward model. And the randomly sampled pairs are the only realistic setups for annotations.} designed to analyze the source of gains. In the \textbf{Similar Comparison} setup, response pairs generated for a single prompt exhibit similar quality and lack diversity. In the \textbf{Diversified Comparison} setup, response pairs for a single prompt are of different qualities. Specifically, we use a golden reward model to score $10$ generated responses per prompt and select the two middle-ranked responses to simulate the \textbf{Similar Comparison} setup. Whereas in the \textbf{Diversified Comparison} setting, we use the highest and lowest-scored responses to construct the response pair for each prompt. 

% In order to investigate how cross-prompt comparison works, we further introduce two \textit{synthetic setups}\footnote{It is worth noting that none of those two synthetic setups is realizable in practice without a reward model.} to enable analyzing the source of gains. 
% Specifically, we consider the \textbf{Similar Comparison} setting where response pairs for a single prompt are of similar qualities and lack diversity, and the \textbf{Diversified Comparison} setting where response pairs for a single prompt are of different qualities. 
% To generate response pairs in the \textbf{Similar Comparison} setting, we first use the golden reward models to annotate $10$ generated responses for each prompt, and then select the $2$ middle-scored responses as the response pair of this prompt. 
% Whereas in the \textbf{Diversified Comparison} setting, we use the highest-scored and lowest-scored responses to construct the response pair for each prompt. 

\paragraph{Results and Takeaways.} 
Figure~\ref{fig:experiment3_performance_part2} shows the results of those two synthetic setups. We can conclude that cross-prompt comparisons are essential when responses for a single prompt lack diversity. In the \textbf{Similar Comparison} setting, the conventional same-prompt comparisons often fail to produce informative reward models that can improve golden reward values. In contrast, cross-prompt annotations substantially enhance performance in such cases. In the \textbf{Diversified Comparison} setting, the need for cross-prompt comparisons diminishes as response diversity increases, though they do not have an evident negative impact on reward modeling performance. 
Comparing results across both settings, as well as results in Figure~\ref{fig:experiment3_performance_part1}, we find the superiority of cross-prompt comparison is reflected in its general capability regardless of response diversities --- the performance achieved by cross-prompt annotations is more stable and has lower dependence of the response diversities.
% Overall, the general applicability of cross-prompt comparisons, regardless of response diversity, is reflected in their stable and consistent performance.

% Figure~\ref{fig:experiment3_performance_part2} shows the results we collected on those two synthetic setups. We can conclude from the plots that cross-prompt comparisons are crucial when the responses for prompts lack diversity. In the \textbf{Similar Comparison} settings, making same-prompt comparison can be infeasible to build informative reward models that can be used to improve the golden reward value. As comparison, the cross-prompt annotations significantly improves reward modeling performance in those cases.
% On the other hand, the results in the \textbf{Diversified Comparison} shows when the response diversity increases, cross-prompt comparisons will not be necessary --- but such comparison does not hurt reward modeling as well in many cases.
% Comparing results across both settings, and results from Figure~\ref{fig:experiment3_performance_part1}, we find the superiority of cross-prompt comparison lies in its general applicability regardless of response diversities --- the performance achieved by cross-prompt annotations are stable and have low dependence of the response diversities.

\begin{figure}[h!]
    \centering
    \vspace{-0.cm}
    \includegraphics[width=0.47\linewidth]{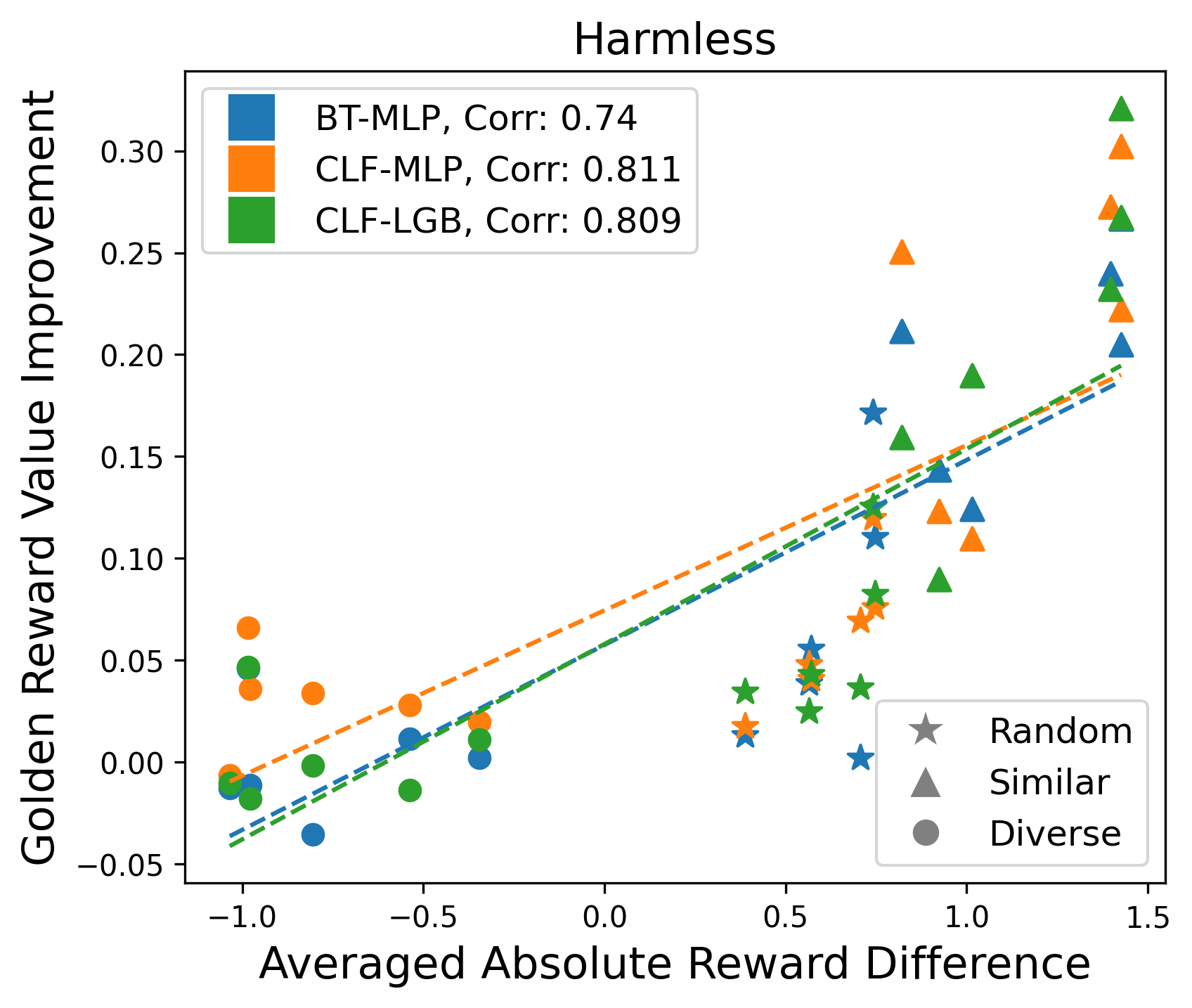}
    \includegraphics[width=0.46\linewidth]{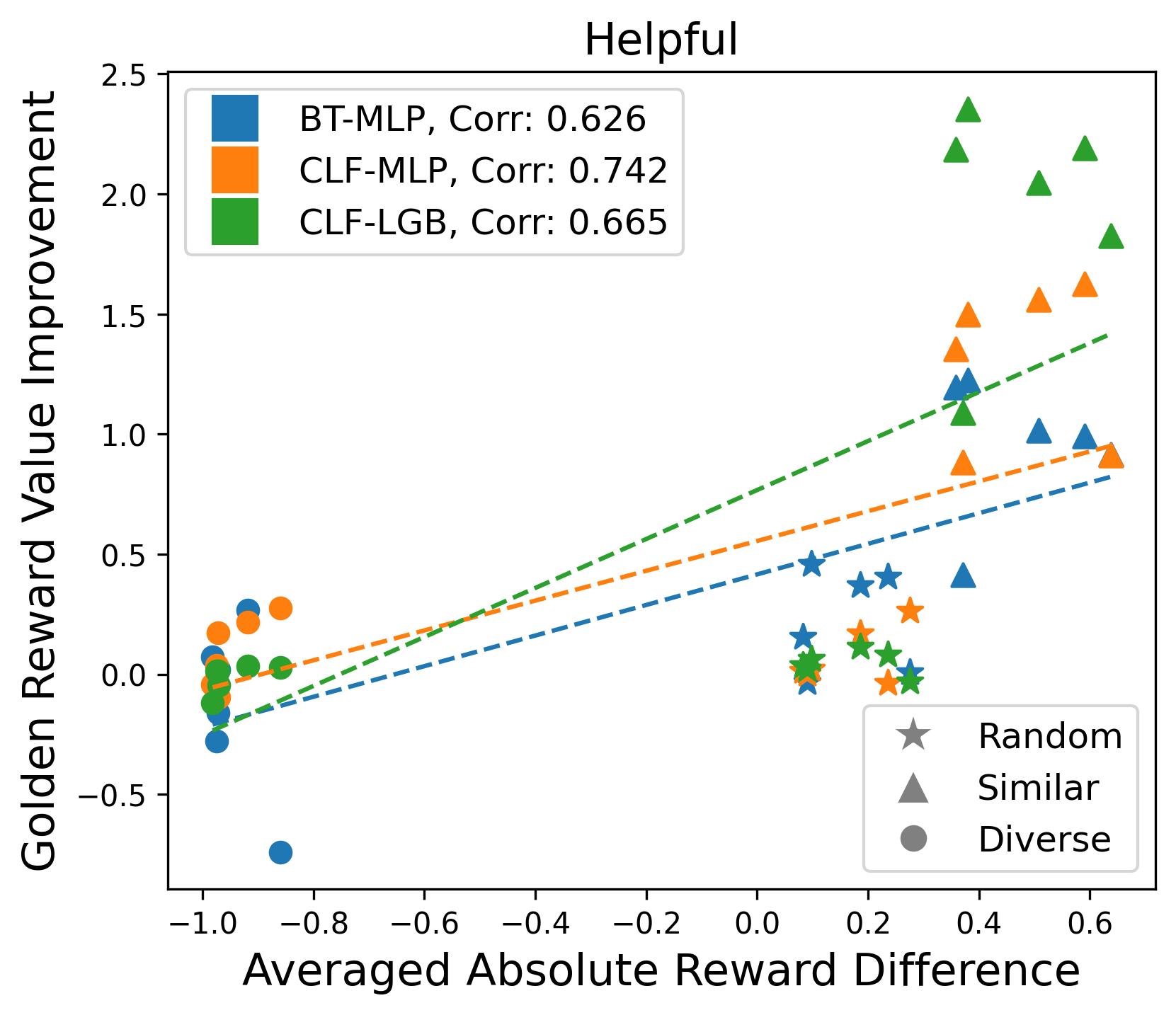}
    \caption{\small \textit{Comparing the averaged absolute difference in scores in pairwise annotations (x-axis) and improvements achieved by using cross-prompt annotations (y-axis).} The two variables are highly correlated.}\vspace{-0.2cm}
    \label{fig:experiment3_correlation}
\end{figure}
Finally, we examine the relationship between the \textit{average absolute score differences in pairwise annotations} (x-axis) across three setups (i.e., \textbf{Random} for randomly select two responses for each pair, \textbf{Similar} and \textbf{Diverse} for the two synthetic settings) and the corresponding \textit{performance improvements from cross-prompt annotations} (y-axis). Scatter plots with linear fits are shown in Figure~\ref{fig:experiment3_correlation}. The strong correlation indicates that cross-prompt annotations are most beneficial when same-prompt responses lack diversity. Importantly, we find the averaged reward differences between pairwise data in the synthetic setting of \textbf{Similar} cases and the \textbf{Random} settings are similar, this implies that in practice, when randomly selecting two responses for a single prompt to be annotated, those pairs are likely facing the challenge of high response similarity. And this is the case when cross-prompt annotation can be applied to improve performance.
\paragraph{Conclusive Remark on Experiments}
Our experimental results highlight the advantages of classification-based reward models over traditional BT models, particularly in terms of flexibility and robustness to annotation quality and quantity. While BT models perform better under high-quality annotations, classification models demonstrate superior overall performance and resilience to increasing annotation error rates. Additionally, our empirical studies on cross-prompt comparisons show it significantly improves reward modeling, especially when responses to the same prompt lack diversity. Through synthetic experiments, we further demonstrate that the challenge of limited diversity is likely to occur in practice, providing additional justification for exploring this annotation method in future research.
\newpage
\clearpage
\startcontents[apx]
\printcontents[apx]{section}{1}{\section*{Appendix: Table of Contents}}

\newpage
\appendix

% \section{Alternative assumptions lead to BT model}

\section{Asymptotic Theory on BT Reward Modeling}
\label{appendix:BTtheory}

%\hs{have a good representation / compare similar representations}
%As discussed in the previous section, in RM settings number of pairwise comparisons is likely not enough to identify the utility value for all prompt-response pairs. Further, the end goal of RM is not to get the utility value but to be able to fine-tune an LLM to optimize this utility. Thus we not only need an estimation of the utility for already observed pairs, but also a method to \textit{predict} utility to new prompt-response pairs. This prediction is not necessary in LLM arena. People often use a pretrained embedding model to predict utility function. 
Recall that we have a \textit{known} embedding function $\embd(\cdot, \cdot)$ $\mathcal{X}\times \mathcal{Y}\to [0,1]^d$ such that there exists an unknown function $g:\mathbb{R}^d\to \mathbb{R}$ and the true utility function $r(y,\prompt)=g(\embd(y,\prompt))$ for all $y,\prompt$. We assume the embedding to have a range of $[0,1]$, if the pretrained model do not have this range, we can scale it to be. Then reward modeling reduce to learn the function $g$. Observe that under this formalism there is no need for a comparison to have the same prompt.% -- as we will see in simplistic case having the same prompt can have a disadvantage. 

\paragraph{Analyzing BT reward modeling as non-parametric logistic regression with anti-symmetric structure}
%\ys{Game plan: try use techniques in \citet{yara2024nonparametric} show this specific Siamese model \citep{melekhov2016siamese} would predict. \citet{huang2001consistent} this might be used for imperfect embedding, or assume the reward model is relatively smooth and error could be controlled} 

Denote our reward model as $\nnrewardfull$, parameterized by $\nnparam$, when there is no confusion, we will abbreviate it as $\nnreward$. The reward difference between two pairs of prompts and responses $(x_1,\response_{1})$ $(x_2,\response_{2})$ is then $\nnrewardiff(\embdedpair{1}, \embdedpair{2}):=\nnreward(\embdedpair{1})-\nnreward(\embdedpair{2})$. We could have $x_1=x_2$, i.e., having matched prompt but it is not necessary. And the predicted probability that $(x_1,\response_{1})$ better than $(x_2,\response_{2})$ is then $\sigma(\nnrewardiff(\embdedpair{1}, \embdedpair{2}))$ with $\sigma$ being sigmoid function. This is the same as having a softmax over two rewards, i.e., $\sigma(\nnrewardiff(\embdedpair{1}, \embdedpair{2})) = \softmax(\nnreward(\embdedpair{1}), \nnreward(\embdedpair{2}))$. The second viewpoint is easier in applying theoretical analysis techniques developed in literature \citep{schmidt2020nonparametric, bos2022convergence}. To that end, we denote the vector of two rewards as $\bm \nnreward$ and the class probability is then $\softmax(\bm \nnreward)$ Training such reward model reduce to train a classifier with cross entropy loss whose predicted conditional class probability being $\softmax(\nnreward(\embdedpair{1}), \nnreward(\embdedpair{2}))$. %We denote the all prediction model on class probability parameterized by $\nnparam$ as $\mathcal{F}_{\nnparam}$. 
Our theoretical development generally follow \citet{bos2022convergence}. This is a special case of an MLP that could preserve symmetry such that if we exchange the role of $x_1,\response_{1}$ and $x_2,\response_{2}$ we get a negative difference. By showing this particular logit-Siamese architecture could approximate the true class probability, we can deduce that a BT reward model could approximate true reward. 

%\ys{A limitation here: actually if the true activation function is unknown and nonlinear this anti-symmetry structure can actually cause some non-trivial trouble, but I would argue this is not what we are targeting in this paper and we can think about it later. }

%\ys{I'll put all the math here for now, for sure we will need to move them around afterward}
For notational convenience, we denote the embedding of the $\dataidx$th pair as $\embdpairabbr{1}{\dataidx}, \embdpairabbr{2}{\dataidx}$ for $i=1,\dots, n$, without lose of generality we assume $\embdpairabbr{\cdot}{\dataidx} \in [0,1]^d$. Denote the true preference probability as $\trueprobvec{\dataidx}$ and model predicted as $\predprobvec{\dataidx}=(\sigma(\nnrewardiff(\embdpairabbr{1}{\dataidx}, \embdpairabbr{2}{\dataidx})), 1-\sigma(\nnrewardiff(\embdpairabbr{1}{\dataidx}, \embdpairabbr{2}{\dataidx}))) = \softmax(\bm \nnreward(\embdpairabbr{1}{\dataidx}, \embdpairabbr{2}{\dataidx}))$ denote the preference vector as $\prefvec{\dataidx}$, equals to $(1,0)$ if the first response pair is prefered and $(0,1)$ otherwise. The BT model is reduced to a (pairwise) classification problem such that the likelihood is given by 
\begin{align*}
    \widetilde{\mathcal{L}}_\mathrm{CE}(\bm{p})=-\frac{1}{n}\sum_{\dataidx=1}^n (\prefvec{\dataidx})^\top \log(\bm{p}^{(\dataidx)}),\quad  \predprobvecsimp=\argmin_{ \bm{p}\in \mathcal{F}_{\nnparam}} \widetilde{\mathcal{L}}_\mathrm{CE}(\bm{p})
\end{align*}
It is unrealistic to assume we can find an NN that actually attends the global minimum, we denote the difference between the fitted NN and the global minimum as 
\begin{align*}
    \Delta_n(\trueprobvecgen, \hat{\bm{p}})=\E\left[\widetilde{\mathcal{L}}_\mathrm{CE}(\hat{\bm{p}})-\min_{\bm{p}\in \mathcal{F}_{\nnparam}} \widetilde{\mathcal{L}}_\mathrm{CE}({\bm{p}})\right]
\end{align*}

We consider truncated KL risk, similar to \citet{bos2022convergence} to overcome the divergence problem of KL risk. 

\begin{definition}[Truncated KL risk \citep{bos2022convergence}]
The $B-$truncated KL risk for a probability estimator $\hat{\bm p}$
    \begin{equation}
        R_B(\bm p_0,\hat{\bm p})=\E \left[ \bm p_0^\top \min\left( B,\log\frac{\bm p_0}{\hat{\bm p}} \right)  \right] 
    \end{equation}
\end{definition}

We consider MLP with ReLU activations for $\mathcal{F}_{\nnparam}$, depends on depth $\nndepth$ and width vector $\nnwidthvec=(\nnwidthplain_{0},\dots, \nnwidthplain_{\nndepth})$ i.e., 
\begin{align*}
    \mathcal{F}(\nndepth, \nnwidthvec)=\left\{ f: \R^{\nnwidthplain_{0}}\to \R, \bm \prompt\to f(\bm \prompt)=W_{\nndepth}\ReLU_{\bm v_{\nndepth}} W_{\nndepth-1}\dots W_1\ReLU_{\bm v_1}W_0 \bm \prompt  \right\}
\end{align*}
where $\ReLU_{\bm v}(\bm \prompt)=\max(\bm \prompt-\bm v,0)$ is the ReLU activation with bias $v$. 
\begin{assumption}[MLP reward model]
\label{assu:mlp}
We will further assume the network parameters having norm restriction and sparsity, a common assumption in studying MLP for classification problems \citep{yara2024nonparametric, bos2022convergence}. That is, in this work we consider networks from the family
\begin{equation}
    \mathcal{F}(\nndepth, \nnwidthvec,s):=\left\{    f: f\in \mathcal{F}(\nndepth, \nnwidthvec) , \max_{j=0,\dots, \nndepth}\max(||W_j||_\infty, |\bm v_j|_\infty) \le 1  , \sum_{j=0}^{\nndepth} (||W_j||_0+|\bm v_j|_0)\le s\right\}
    \label{eq:mlparch}
\end{equation}
\end{assumption}
Another useful function class in theory is the softmax output version of the reward model, i.e., consider
\begin{align*}
    \mathcal{F}_{\sigma}(\nndepth, \nnwidthvec,s):=\left\{\rewardvec(\embdpairabbrgen{1}, \embdpairabbrgen{2}):   \rewardvec=\softmax(f(\embdpairabbrgen{1}), f(\embdpairabbrgen{2})), f\in \mathcal{F}(\nndepth, \nnwidthvec,s)\right\}
\end{align*}

Next we assume the probability of preference is not too close to 0 or 1, in the form of a small value bound 
\begin{definition}[Small value bound by~\citet{bos2022convergence}]
    Let $\alpha\ge 0$ and $\mathcal{H}$ is a function class we say $\mathcal{H}$ is $\alpha-$small value bounded ($\alpha-$SVB) if there exists a constant $C>0$ s.t. for all probability estimated $\bm{p}=(p_0,p_1)\in \mathcal{H}$ it holds that 
    \begin{equation}
        \mathbb{P}(p_{k}(\embdpairabbrgen{1}, \embdpairabbrgen{2})\le t)\le Ct^\alpha,\quad \text{for all }t\in (0,1]\text{ and all }k=0,1
    \end{equation}
    which indices that our reward function and design of comparisons should have a tail behavior that we do not tend to compare pairs having very different reward function. We denote this family as
    \begin{equation}
    \end{equation}
\end{definition}
\begin{definition}[H\"older smooth function]
For $\beta>0$ and $D\subset \R^d$, the ball of $\beta-$H\"older functions with radius $Q$ is defined as 
\begin{equation}
    C^\beta(D,Q):=\left\{f:D\to \R:\sum_{\bm\gamma:||\bm\gamma||_1<\beta}||\partial^\gamma f||_\infty+\sum_{\bm\gamma:||\bm\gamma||_1=\lfloor\beta\rfloor} \sup_{\bm \prompt\ne \bm y\in D} \frac{|\partial^\gamma f(\bm \prompt)-\partial^\gamma f(\bm y)|}{||\bm \prompt-\bm y||_{\infty}^{\beta-\lfloor\beta\rfloor}}\le Q     \right\}
\end{equation}
\end{definition}

\begin{assumption}[Class of true utility function]
\label{assu:truereward}
    We assume that the true reward functions are $\beta-$H\"older and the induced probability by softmax is $\alpha-$SVB. I.e., we consider the function class
    \begin{equation}
        \mathcal{G}_\alpha(\beta, Q, C)= C^\beta(D,Q)\cap \mathcal{S}(\alpha,C)
    \end{equation}
\end{assumption}
Note that this is nonempty since constant $g$ satisfy the requirement for any $\beta>0, \alpha>0$ with $Q>1/2$ and $C\ge 2^\alpha$. 

\begin{theorem}\textnormal{\textbf{(Theorem 5 of \citet{schmidt2020nonparametric}, approximating H\"older smooth functions)}}
\label{thm:approxholder}
    For every function $f\in C^\beta(D,Q)$ and every $M>\max((\beta+1)^\beta,(Q+1)^{\beta/d}e^\beta )$ there exist a neural network $H\in \mathcal{F}(L,\bm m, s)$ with $L=3\lceil \log_2(M)(d/\beta+1)(1+\lceil\log_2(\max(d,\beta))\rceil)$, $\bm m = (d, 6(d+\lceil\beta\rceil)\lfloor M^{d/\beta}\rfloor, \dots, 1)$ and $s\le 423(d+\beta+1)^{3+d}M^{d/\beta}\log_2(M)(d/\beta+1)$ such that 
    \begin{align*}
        ||H-f||_\infty \le \frac{C_{Q,\beta,d}}{M}
    \end{align*}
\end{theorem}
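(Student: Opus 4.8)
This is exactly Theorem~5 of \citet{schmidt2020nonparametric}, so in the paper it suffices to cite it; here I sketch how one reconstructs the proof, which follows the now-standard ``local Taylor expansion glued by a ReLU partition of unity'' template.

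First I would build the arithmetic primitives. Using the classical sawtooth / tooth-function construction, composing $m$ triangular waves gives a depth-$O(m)$, bounded-width ReLU network approximating $x\mapsto x^2$ on $[0,1]$ with error $2^{-2m}$; choosing $m\asymp\log_2 M$ yields accuracy $1/M$. Polarization $xy=\tfrac14\big((x+y)^2-(x-y)^2\big)$ then gives a multiplication network of the same depth and accuracy, and arranging such multiplications in a balanced binary tree produces, for each multi-index $\gamma$ with $\|\gamma\|_1\le\lfloor\beta\rfloor$, a network computing the monomial $\bm x\mapsto\bm x^{\gamma}$ with depth $O(\log_2 M\cdot\log_2\max(d,\beta))$; affine combinations then give arbitrary degree-$\lfloor\beta\rfloor$ polynomials.

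Second, I would localize. Partition $[0,1]^d$ into a regular grid of $\asymp M^{d/\beta}$ cubes of side $\asymp M^{-1/\beta}$; on the cube centered at a grid point $\bm a$, the $\beta$-H\"older bound makes $f$ within $C_{Q,\beta,d}\,(\text{side})^{\beta}\lesssim C_{Q,\beta,d}/M$ of its order-$\lfloor\beta\rfloor$ Taylor polynomial $P_{\bm a}$. One then constructs, from compositions of $\psi$, coordinatewise tent functions whose tensor product $\phi_{\bm a}$ is a ReLU-computable ``hat'' localized at $\bm a$ with $\sum_{\bm a}\phi_{\bm a}\equiv1$, and sets $H:=\sum_{\bm a}\widehat{\phi_{\bm a}P_{\bm a}}$, each product formed by the multiplication network and the sum taken in the output layer. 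The error of $H$ is then bounded by the Taylor remainder plus the accumulated network error from the finitely many multiplications, both $\lesssim 1/M$; the condition $M>\max((\beta+1)^\beta,(Q+1)^{\beta/d}e^\beta)$ is precisely what guarantees the Taylor remainder and the multiplication error are absorbed into the single constant $C_{Q,\beta,d}$.

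Finally, I would track the sizes: the depth equals the primitive depth times a fixed number of composition stages, giving the stated $L=3\lceil\log_2 M\,(d/\beta+1)(1+\lceil\log_2\max(d,\beta)\rceil)\rceil$; the hidden layers run the $\asymp M^{d/\beta}$ localization branches in parallel, each of width $O(d+\lceil\beta\rceil)$, hence $\bm m=(d,6(d+\lceil\beta\rceil)\lfloor M^{d/\beta}\rfloor,\dots,1)$; and counting active weights over all branches gives $s\le 423(d+\beta+1)^{3+d}M^{d/\beta}\log_2 M\,(d/\beta+1)$. The conceptual steps are routine by now; the genuinely delicate part --- and the reason I would defer to \citet{schmidt2020nonparametric} rather than redo it --- is the exact constant bookkeeping through every composition, so that depth, width, and sparsity come out to these precise expressions with no stray factor polynomial in $M$ or exponential in $d$ creeping into the final $1/M$ bound.
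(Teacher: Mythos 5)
Your proposal is correct and matches the paper exactly: the paper does not prove this statement but simply imports it as Theorem~5 of \citet{schmidt2020nonparametric}, which is precisely the citation you defer to. Your sketch of the underlying construction (square/multiplication primitives, local Taylor polynomials on a grid of side $\asymp M^{-1/\beta}$ glued by a ReLU partition of unity, then tracking depth, width, and sparsity) faithfully summarizes how that cited result is obtained, so no gap remains.
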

\begin{remark}
    Note that since $\softmax$ with two output is Lipchetz with constant 2, we have $L_\infty$ distance between the softmax output being bounded by $2C_{Q,\beta,d}/M$ by applying the above theorem. 
\end{remark}

\begin{theorem}[Oracle inequality, Theorem 3.5 of \citet{bos2022convergence}]
\label{thm:oracle}
    Let $\mathcal{F}$ be a class of conditional class probabilities and $\hat{p}$ be any estimator taking value in $\mathcal{F}$, If $B\ge 2$ and $\mathcal{N}_n=\mathcal{N}(\delta, \log(\mathcal{F}), d_\tau(\cdot,\cdot))\ge 3$ for $\tau=\log(C_n e^{-B}/n)$, then
    \begin{equation}
    \begin{aligned}
        R_B(p_0,\hat{p})&\le(1+\epsilon)\left(\inf_{p\in \mathcal{F}}+\Delta_n(p_0,\hat{p})+3\delta\right)\\
        +&\frac{(1+\epsilon)^2}{\epsilon} \frac{68B\log(\mathcal{N}_n)+272B+3C_n(\log(n/C_n)+B)}{n}
    \end{aligned}
    \end{equation}
    for all $\delta, \epsilon\in(0,1]$ and $0<C_n\le n/e$.
\end{theorem}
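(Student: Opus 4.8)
The plan is to read this as a truncated Kullback--Leibler oracle inequality for a near-minimizer of the cross-entropy risk and to follow the template of \citet{bos2022convergence}, itself built on \citet{schmidt2020nonparametric}. The starting point is the \emph{basic inequality}: because $\hat{\bm p}$ is a $\Delta_n$-approximate minimizer of $\widetilde{\mathcal{L}}_\mathrm{CE}$ over $\mathcal{F}$, for every competitor $\bm p\in\mathcal{F}$ one has $\widetilde{\mathcal{L}}_\mathrm{CE}(\hat{\bm p})\le \widetilde{\mathcal{L}}_\mathrm{CE}(\bm p)+\Delta_n$ in expectation. Subtracting the same functional evaluated at the truth $\bm p_0$ and rewriting every term as a log-likelihood ratio turns the left-hand side into an empirical version of a KL-type discrepancy between $\hat{\bm p}$ and $\bm p_0$, and the right-hand side into the approximation error $\inf_{\bm p\in\mathcal{F}}\mathrm{KL}(\bm p_0,\bm p)$, the optimization slack $\Delta_n$, and a centered empirical-process remainder $(\E_n-\E)$ applied to the log-ratio functionals.

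In the second step I would replace the genuine log-ratio $\log(\bm p_0/\hat{\bm p})$ by its truncation at level $B$ --- which is precisely why $R_B$ rather than the ordinary KL risk is the object of the bound. Truncation makes each summand bounded by a multiple of $B$ and its conditional variance bounded by a multiple of $B$ times the risk itself, which is exactly the Bernstein regime, so a Bernstein-type concentration inequality controls the empirical process uniformly over $\mathcal{F}$ at the price of a complexity term $\log\mathcal{N}_n$, where $\mathcal{N}_n=\mathcal{N}(\delta,\log(\mathcal{F}),d_\tau)$ is the covering number of the log-class under $d_\tau$ with $\tau=\log(C_n e^{-B}/n)$. The particular choice of $\tau$ is engineered so that a $\delta$-net in $d_\tau$ simultaneously controls the discretization bias and the contribution of the rare events on which the log-ratio is large; the latter is what produces the $C_n(\log(n/C_n)+B)/n$ summand, and the restriction $0<C_n\le n/e$ keeps that term well defined. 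I would then collect the pieces and split the variance contribution against the bias contribution via the elementary inequality $2ab\le \epsilon a^2+\epsilon^{-1}b^2$, producing the $(1+\epsilon)$ factor in front of the approximation-plus-optimization error and the $(1+\epsilon)^2/\epsilon$ factor in front of the complexity term, which is the stated inequality for all $\delta,\epsilon\in(0,1]$.

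The main obstacle is this uniform empirical-process control: one must verify that after truncation the log-likelihood-ratio increments satisfy the Bernstein variance--boundedness condition with constants linear in $B$, and that chaining/netting in the $d_\tau$ metric rather than the sup-norm genuinely suffices, which requires careful bookkeeping of how the large-ratio events are absorbed into the $C_n$-term. Since the statement is quoted verbatim from \citet[Theorem~3.5]{bos2022convergence}, in the paper this step is imported rather than reproven; what remains downstream is only to check that the sparse norm-bounded MLP class $\mathcal{F}_{\sigma}(\nndepth,\nnwidthvec,s)$ of Assumption~\ref{assu:mlp}, together with the $\beta$-H\"older and $\alpha$-SVB hypotheses of Assumption~\ref{assu:truereward}, supplies both the covering-number estimate entering $\log\mathcal{N}_n$ (through the parameter count $s$) and the approximation error $\inf_{\bm p\in\mathcal{F}}\mathrm{KL}(\bm p_0,\bm p)$ (through Theorem~\ref{thm:approxholder} and the remark following it), so that optimizing $\delta$ and $C_n$ in $n$ turns this oracle inequality into the rate $\phi_n\nndepth\log^2(n)$ of the main theorem.
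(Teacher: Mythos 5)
The paper itself contains no proof of this statement: it is imported verbatim as Theorem~3.5 of \citet{bos2022convergence} and used as a black box in the proof of Theorem~\ref{thm:mainKLbound}, and your proposal correctly recognizes this while sketching essentially the argument of the cited source (basic inequality for a $\Delta_n$-approximate minimizer, truncation at level $B$, Bernstein-type concentration over a $\delta$-net in $d_\tau$, and the $\epsilon$-split yielding the $(1+\epsilon)$ and $(1+\epsilon)^2/\epsilon$ factors). The only minor imprecision is that the approximation term in the oracle inequality is the truncated risk $\inf_{p\in\mathcal{F}}R_B(p_0,p)$ (the displayed statement has a typographical omission there) rather than the untruncated KL divergence you mention, but this does not affect your overall account.
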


\begin{lemma}[Adapted lemma 3.8 of \citet{bos2022convergence}]%
\label{lemma:covering}
    ~~~~~~~~~~~~~~~~~~~\quad\quad~\\
    Let $V=\prod_{\llm=0}^{\nndepth+1}(\nnwidthplain_{\llm}+1)$, then for every $\delta>0$
    \begin{equation}
        \mathcal{N}(\delta, \log(\mathcal{F}_{\sigma}(\nndepth, \nnwidthvec,s)), ||\cdot||_\infty)\le(8\delta^{-1}(\nndepth+1)V^2)^{s+1}
    \end{equation}
    and 
    \begin{equation}
        \log \mathcal{N}(\delta, \log(\mathcal{F}_{\sigma}(\nndepth, \nnwidthvec,s)), ||\cdot||_\infty)\le(s+1)\log(2^{2\nndepth+7}\delta^{-1}(\nndepth+1)d^2s^\nndepth)
    \end{equation}
    Substitute to the first bound and take log yield the second line. 
\end{lemma}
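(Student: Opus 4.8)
The plan is to reduce the statement to the classical covering-number estimate for sparse ReLU networks with uniformly bounded weights, due to \citet{schmidt2020nonparametric} and used by \citet{bos2022convergence}, and to absorb the softmax-and-log transformation through a Lipschitz argument. The only feature that distinguishes this lemma from Lemma~3.8 of \citet{bos2022convergence} is that here the conditional class probability is $\softmax(f(\embdpairabbrgen{1}), f(\embdpairabbrgen{2}))$, i.e.\ the softmax of two evaluations of a \emph{single} scalar network $f \in \mathcal{F}(\nndepth, \nnwidthvec, s)$ at two different inputs, rather than the softmax of one $K$-output network; the key point will be that this weight-sharing (``logit-Siamese'') structure does not inflate the effective parameter count, so that the exponent stays $s+1$ and the covering number on the right-hand side is that of the scalar class, not of a product of two classes.

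First I would record the Lipschitz property of the map $v = (v_1, v_2)\mapsto \log\softmax(v) = \big(-\log(1+e^{-(v_1-v_2)}),\,-\log(1+e^{-(v_2-v_1)})\big)$: each coordinate depends on $v$ only through $v_1 - v_2$ and is $1$-Lipschitz in it, since its derivative is a value of $\sigma$ and hence lies in $(0,1)$ --- the same observation already used after \cref{thm:approxholder}. Consequently, for any $f, g \in \mathcal{F}(\nndepth, \nnwidthvec, s)$ with $||f - g||_\infty \le \eta$, at every point $(\embdpairabbrgen{1}, \embdpairabbrgen{2})$ one has
\begin{equation*}
||\log\softmax(f(\embdpairabbrgen{1}), f(\embdpairabbrgen{2})) - \log\softmax(g(\embdpairabbrgen{1}), g(\embdpairabbrgen{2}))||_\infty \le |f(\embdpairabbrgen{1}) - g(\embdpairabbrgen{1})| + |f(\embdpairabbrgen{2}) - g(\embdpairabbrgen{2})| \le 2\eta ,
\end{equation*}
so that any $\eta$-net of $\mathcal{F}(\nndepth, \nnwidthvec, s)$ in $||\cdot||_\infty$ pushes forward to a $2\eta$-net of $\log(\mathcal{F}_{\sigma}(\nndepth, \nnwidthvec, s))$, giving $\mathcal{N}(\delta, \log(\mathcal{F}_{\sigma}(\nndepth, \nnwidthvec, s)), ||\cdot||_\infty) \le \mathcal{N}(\delta/2, \mathcal{F}(\nndepth, \nnwidthvec, s), ||\cdot||_\infty)$.

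Next I would invoke the standard entropy bound for the sparse network class with weights bounded by one, of the form $\mathcal{N}(\eta, \mathcal{F}(\nndepth, \nnwidthvec, s), ||\cdot||_\infty) \le \big(c_0\,\eta^{-1}(\nndepth+1)V^2\big)^{s+1}$ with $V = \prod_{\llm=0}^{\nndepth+1}(\nnwidthplain_\llm + 1)$ and $c_0 = 4$. This follows from the usual construction (\citet{schmidt2020nonparametric}, and the proof of Lemma~3.8 in \citet{bos2022convergence}): discretize each of the at most $s$ active weights and biases on a grid of spacing of order $\eta/((\nndepth+1)V)$, propagate the perturbation through the $\nndepth$ layers using $||W_j||_\infty \le 1$, and count the grid points together with the possible supports of the active parameters. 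Taking $\eta = \delta/2$ and combining with the previous display yields the first claimed inequality,
\begin{equation*}
\mathcal{N}\big(\delta, \log(\mathcal{F}_{\sigma}(\nndepth, \nnwidthvec, s)), ||\cdot||_\infty\big) \le \big(8\delta^{-1}(\nndepth+1)V^2\big)^{s+1} .
\end{equation*}
The second inequality is then a matter of taking logarithms and bounding $V$ crudely: since $\nnwidthplain_0 = d$, the output width is $O(1)$, and the sparsity budget $\sum_j(||W_j||_0 + |\bm v_j|_0) \le s$ forces $\sum_{\llm}\nnwidthplain_\llm \lesssim s$ and hence $\prod_{\llm=1}^{\nndepth}(\nnwidthplain_\llm+1)$ to be at most a power of $s$ of order $\nndepth$, one obtains $V^2 \le 2^{O(\nndepth)} d^2 s^{\nndepth}$ with the numerical constants absorbed into the $2^{2\nndepth+7}$ factor, so that $\log \mathcal{N}(\delta, \log(\mathcal{F}_{\sigma}(\nndepth, \nnwidthvec, s)), ||\cdot||_\infty) \le (s+1)\log\big(2^{2\nndepth+7}\delta^{-1}(\nndepth+1)d^2 s^{\nndepth}\big)$.

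The main obstacle I anticipate is not any individual estimate but making the reduction genuinely clean: verifying that composing with $\log\softmax$ keeps everything in sup-norm despite the vector-valued output, and --- more essentially --- arguing that replacing a single $K$-output network by the shared pair $(f(\embdpairabbrgen{1}), f(\embdpairabbrgen{2}))$ leaves the covering number controlled by the same $s+1$ exponent rather than doubling it. This weight-sharing bookkeeping is exactly the adaptation of \citet{bos2022convergence} that the lemma asserts, and it is what distinguishes our setting from the ``two independent networks'' picture noted in the related-work discussion; everything else (the grid-counting for $\mathcal{F}(\nndepth, \nnwidthvec, s)$ and the final logarithmic simplification) is routine.
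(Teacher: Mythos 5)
Your proposal is correct and follows essentially the same route as the paper's proof: reduce the covering of $\log(\mathcal{F}_{\sigma}(\nndepth,\nnwidthvec,s))$ to that of the single shared network class $\mathcal{F}(\nndepth,\nnwidthvec,s)$ via the $2$-Lipschitz property of the log-softmax of the pair (which the paper imports as Proposition C.6 of \citet{bos2022convergence} and you derive directly from $1$-Lipschitzness in $v_1-v_2$), then apply the Schmidt--Hieber covering bound with exponent $s+1$ and bound $V$ through the sparsity budget. The only differences are bookkeeping in the radii/constants (the paper uses a $\delta/4$-cover converted to interior centers where you take $\delta/2$ with a larger base constant), which lead to the same final bounds.
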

Note that, although the proof largely carries follow \citet{bos2022convergence} we cannot directly apply lemma 3.8 of \citet{bos2022convergence} in our setting because before softmax layer the two one-dimensional scores came from the \textit{same} MLP with identical activation. This is only a subset of the family \citet{bos2022convergence} considered in their work.

\begin{proof}
    For $g\in \log(\mathcal{F}_{\sigma}(\nndepth, \nnwidthvec,s) $, there exists an $f_g\in \mathcal{F}(\nndepth, \nnwidthvec,s)$ such that $g(\bm x_1,\bm x_2)=\log(\softmax(f_g(\bm x_1), f_g(\bm x_2)))$. 

    By Lemma 5 of \citet{schmidt2020nonparametric}, we have 
    \begin{equation}
        \mathcal{N}(\delta/4,\mathcal{F}(\nndepth, \nnwidthvec,s), ||\cdot||_\infty)\le (8\delta^{-1}(L+1)V^2)
    \end{equation}
    Let $\delta>0$, denote $\{f_j\}_{j=1}^{\mathcal{N}}$ the centroid of a minimum $\delta/4$ cover of $\mathcal{F}(\nndepth, \nnwidthvec,s)$, thus for each $\ f_j\in \mathcal{F}(\nndepth, \nnwidthvec,s)$, there exists a $\hat f_j$ s.t., $\hat f_j$'s are interior of a $\delta/2$-cover of $\mathcal{F}(\nndepth, \nnwidthvec,s)$ for a $g\in \log(\mathcal{F}_{\sigma}(\nndepth, \nnwidthvec,s)$, by covering property there exists a $j$ s.t. $||f_g-\hat{f}_j||_\infty\le \delta/2$, by proposition C.6 of \citet{bos2022convergence}, we will abbrivate $f_g(x_i)$ as $f_g^i$
    \begin{equation}
    \begin{aligned}
        &||g-\log(\softmax(\hat{f}_j^1, \hat{f}_j^2))||_\infty = ||\log(\softmax(f_g^1, f_g^2))-\log(\softmax(\hat{f}_j^1, \hat{f}_j^2))||_\infty\\
        \le & 2||[f_g^1, f_g^2]-[\hat{f}_j^1, \hat{f}_j^2]||_\infty\le 2\max_i ||f_g^i-\hat f_j^i||\le \delta
    \end{aligned}
    \end{equation}
    Since $g$ arbitrary, we have $\log(\softmax(\hat f_j^1, \hat f_j^2))$ is an internal $\delta$ cover of $\log(\mathcal{F}_{\sigma}(\nndepth, \nnwidthvec,s)$, hence
    \begin{align}
        \mathcal{N}(\delta, \log(\mathcal{F}_{\sigma}(\nndepth, \nnwidthvec,s)), ||\cdot||_\infty)\le \mathcal{N}(\delta/4, \mathcal{F}(\nndepth, \nnwidthvec,s), ||\cdot||_\infty)\le 8\delta^{-1}(\nndepth+1)V^2
    \end{align}
    The second bound is by having $m_0=d$, $m_{\nndepth+1}=1$ (since we have scalar reward) and removing inactive nodes we have by proposition A1 of \citet{bos2022convergence} $V\le ds^\nndepth2^{\nndepth+2}$
\end{proof}
\begin{remark}
Readers might suspect this is a direct result of \citet{bos2022convergence} by first concatenating two embeddings and training an MLP with this joint embedding to predict softmax score. While this proposal will satisfy the requirements for the theory, it does not provide a way to generate a reward for one embedding that does not depend on another embedding and might not be antisymmetric. Theoretically, the rate depends on embdding dimension $d$ rather than the concatenated dimension $2d$ as directly using results from \citet{bos2022convergence}.
\end{remark}

\begin{theorem}[Truncated KL risk bound]
\label{thm:mainKLbound}
Suppose the true utility function induced probability of preference is in $\mathcal{G}_\alpha(\beta,Q,C)$ for $\alpha\in[0,1]$ and $\phi_n=2^{\frac{(1+\alpha)\beta+(3+\alpha)d}{(1+\alpha)\beta+d}}n^{-\frac{(1+\alpha)\beta}{(1+\alpha)\beta + d}}$. Let $\hat{\bm{p}}$ be an estimator from family $\mathcal{F}_\sigma(L,\nnwidthvec, s)$ satisfying 1) $A(d,\beta)\log_2(n)\le \nndepth \lesssim n\phi_n$ for some suitable constant $A(d,\beta)$, 2) $\min_i m_i \gtrsim n\phi_n$ and 3) $s\asymp n\phi_n \log(n)$. For sufficiently large $n$, there exists constants $C',C''$ such that when $\Delta_n(\hat{p},p_0)\le C'' B \phi_n \nndepth \log^2(n)$ then 
\begin{equation}
    R_B(\bm p_0, \hat{\bm p})\le C'B\phi_n \nndepth \log^2(n)
\end{equation}
where $a\lesssim b$ means there exists some constant $C$ s.t. $a\le Cb$ and $a\asymp b$ means $a\lesssim b$ and $b\lesssim a$. 
\end{theorem}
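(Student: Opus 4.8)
The plan is to derive \cref{thm:mainKLbound} from a standard bias--variance decomposition, using the oracle inequality \cref{thm:oracle} as the backbone, \cref{thm:approxholder} to control the bias, and \cref{lemma:covering} to control the variance. First I would exhibit a good competitor inside $\mathcal{F}_\sigma(\nndepth,\nnwidthvec,s)$: since the true reward $g$ is $\beta$-Hölder by \cref{assu:truereward}, \cref{thm:approxholder} provides, for any admissible size parameter $M$, a network $H_M$ with $\|H_M-g\|_\infty\lesssim 1/M$, of depth $\asymp\log M$, hidden widths of order $M^{d/\beta}$, and sparsity $\asymp M^{d/\beta}\log M$; applying $\softmax$ to $(H_M(\embdpairabbrgen{1}),H_M(\embdpairabbrgen{2}))$ and using the Lipschitz remark after \cref{thm:approxholder} gives an element $\bar{\bm p}_M$ of the softmax class with $\|\bar{\bm p}_M-\bm p_0\|_\infty\lesssim 1/M$. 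I would stress that $\bar{\bm p}_M$ has precisely the ``single shared MLP, then softmax'' form, so it genuinely lives in the restricted family for which \cref{lemma:covering} --- rather than a direct appeal to \citet{bos2022convergence} --- was established, and it is antisymmetric by construction.

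Next I would convert this sup-norm control into a bound on the truncated-KL approximation error $\inf_{\bm p\in\mathcal{F}_\sigma}R_B(\bm p_0,\bm p)\le R_B(\bm p_0,\bar{\bm p}_M)$. Writing $\gamma_M\asymp 1/M$ and splitting the expectation defining $R_B$ over the event $\{p_{0,k}\le\gamma_M\}$ --- which by the $\alpha$-SVB part of \cref{assu:truereward} has probability $\le C\gamma_M^\alpha$ and on which the integrand is capped by $B$ --- and its complement, on which $\log(p_{0,k}/\bar p_{M,k})\lesssim\gamma_M/p_{0,k}$ is safely bounded, one recovers the computation behind Lemma~3.7 of \citet{bos2022convergence} and obtains $R_B(\bm p_0,\bar{\bm p}_M)\lesssim B\,M^{-(1+\alpha)}$ up to logarithmic factors. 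I expect this to be the main obstacle: it is the only place where the truncation level $B$ and the regularity exponent $\alpha$ enter, it is exactly what keeps the otherwise possibly divergent KL risk finite, and it is what produces the exponent $1+\alpha$ that appears throughout $\phi_n$.

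It then remains to assemble the pieces. By \cref{lemma:covering} the entropy of the softmax class is $\log\mathcal{N}(\delta,\log\mathcal{F}_\sigma(\nndepth,\nnwidthvec,s),\|\cdot\|_\infty)\lesssim s\,\nndepth\,\log(s/\delta)$; choosing $\delta$ polynomially small in $n$ (as permitted by \cref{thm:oracle}, with $C_n$ a constant) makes $\log(s/\delta)\asymp\log n$, so that the stochastic term of \cref{thm:oracle} is $\lesssim B\,s\,\nndepth\,\log n/n$. Feeding the approximation bound of the previous paragraph and a fixed $\epsilon$ (say $\epsilon=1$) into \cref{thm:oracle} gives
\[
R_B(\bm p_0,\hat{\bm p})\ \lesssim\ B\,M^{-(1+\alpha)}\log M\ +\ \Delta_n(\bm p_0,\hat{\bm p})\ +\ \frac{B\,s\,\nndepth\,\log n}{n}\ +\ (\text{lower order}),
\]
where $M$ is the size of the approximating network, which must fit inside the given $\mathcal{F}(\nndepth,\nnwidthvec,s)$.

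Finally I would calibrate $M$. Conditions 1)--3) in the statement are tuned so that $M\asymp n^{\beta/((1+\alpha)\beta+d)}$ is the admissible choice: with it, $M^{d/\beta}\asymp n\phi_n$, hence $s\asymp n\phi_n\log n$, $\min_i m_i\gtrsim n\phi_n$, and $\nndepth$ may be taken $\asymp\log n\ge A(d,\beta)\log_2 n$ for large $n$, so $H_M$ indeed lies in $\mathcal{F}(\nndepth,\nnwidthvec,s)$. With this calibration $M^{-(1+\alpha)}\asymp\phi_n$ (the explicit prefactor $2^{(\cdots)}$ in the definition of $\phi_n$ being there precisely to absorb the approximation constants), the stochastic term becomes $\asymp B\phi_n\nndepth\log^2 n$ and dominates the bias, and the hypothesis $\Delta_n(\hat p,p_0)\le C''B\phi_n\nndepth\log^2 n$ keeps the optimisation error of the same order; collecting constants then yields $R_B(\bm p_0,\hat{\bm p})\le C'B\phi_n\nndepth\log^2 n$, which moreover tends to $0$ since $\phi_n\nndepth\log^2 n\to 0$.
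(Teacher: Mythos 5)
Your proposal is correct and follows essentially the same route as the paper's proof: oracle inequality (\cref{thm:oracle}) with $\delta=n^{-1}$ and constant $\epsilon, C_n$, the shared-MLP covering bound of \cref{lemma:covering} for the stochastic term, the H\"older approximation of \cref{thm:approxholder} pushed through the $2$-Lipschitz softmax for the competitor, and the same calibration $M\asymp n^{\beta/((1+\alpha)\beta+d)}$ so that $M^{d/\beta}\asymp n\phi_n$ and $M^{-(1+\alpha)}\asymp\phi_n$. The only cosmetic difference is that you sketch the $\alpha$-SVB small-value splitting for the truncated-KL approximation error by hand, whereas the paper invokes Theorem~3.2 of \citet{bos2022convergence} for that step; the underlying computation is the same.
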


\begin{proof}
    We apply oracle bound~\cref{thm:oracle}. Take $\delta=n^{-1}$ and $\epsilon=C_n=1$, using the fact that $d_\tau$ is upper bounded by sup-norm. Then apply \cref{lemma:covering}, we have 
    \begin{equation}
        \begin{aligned}
            R_B(\bm p, \hat{\bm p})&\le 2\left( \inf_{\bm p} R_B(\bm p_0, \bm p) + \Delta_n(\bm p_0, \hat{\bm p}+\frac{3}{n})  \right) \\
            &~~+ 4\cdot \frac{68B(s+1)\log(2^{2\nndepth+7}\delta^{-1}(\nndepth+1)d^2s^\nndepth)+272B+3(\log(n)+B)}{n}
        \end{aligned}
        \label{eq:oracleboundhere}
    \end{equation}
    We pick $M=\lfloor c2^{\frac{(2+\alpha)\beta}{(1+\alpha)\beta+d}}n^{\frac{\beta}{(1+\alpha)\beta+d}}  \rfloor$ for small small $c$, with large enough $n$, we apply \cref{thm:approxholder}, its softmax transformed version denote as $\tilde{p}$ is in $\mathcal{F}_\sigma(\nndepth, \nnwidthvec, s)$ with $L=$ and maximim width bounded by $\lesssim  M^{d/\beta}=c^{d/\beta} n\phi_n$, and similarly we have $s\lesssim c^{d/\beta}M^{d/\beta}\log_2(M)=c^{d/\beta} n\phi_n \log_2(M)$. Whenever we have $A(d,\beta)\log_2(n)\le L\lesssim n\phi_n$ we have the maximum width is $\gtrsim n\phi_n$ and $s\asymp n\phi_n\log(n)$. Observe that the softmax output network satisfy Theorem 3.2 of \citet{bos2022convergence}, we have with $C_1=4(4+2C_{Q,\beta,d})$. The 2 before $C_{Q,\beta,d}$, different to the exact statement of Theorem 3.2 of \citet{bos2022convergence} is because our $C_{Q,\beta,d}$ is before softmax layer and since softmax layer is Lipchetz of constant 2. We further have $C_1+1\le 4(5+2C_{Q,\beta,d})$, we have 
    \begin{align*}
        \inf_{\bm p\in \mathcal{F}_\sigma} R(\bm p_0, \bm p)\le 8C2^{3+\alpha} \frac{(5+2C_{Q,\beta,d})^3}{M^{1+\alpha}}\left( 1+\frac{I_{\alpha<1}}{1-\alpha} +\log(M) \right) \lesssim \phi_n\log(n)
    \end{align*}
    Together with oracle bound~\cref{eq:oracleboundhere} and $s\asymp n\phi_n \log(n)$, the statement follows. 
\end{proof}

\begin{lemma}[Lemma 3.4 of \citet{bos2022convergence}]
\label{lemma:heilinger}
    For any $B\ge 2$, $P,Q$ being probability measure on the same measure space, we have 
    \begin{equation}
        H^2(P,Q)\le \frac{1}{2}KL_B(P,Q)
    \end{equation}
    where for discrete probabilities
    \begin{align*}
        H^2(P,Q)=\sum_{j}(\sqrt{P_j}-\sqrt{Q_j})^2
    \end{align*}
\end{lemma}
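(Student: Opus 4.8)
The plan is to reduce the bound to a single pointwise estimate on the (truncated) log-likelihood ratio and then integrate. Using $\sum_j P_j = \sum_j Q_j = 1$ I would first rewrite the Hellinger functional as $\sum_j(\sqrt{P_j}-\sqrt{Q_j})^2 = 2\bigl(1-\sum_j\sqrt{P_jQ_j}\bigr) = 2\sum_j\bigl(P_j-\sqrt{P_jQ_j}\bigr)$. In the standard normalization $H^2 = 1-\sum_j\sqrt{P_jQ_j} = \tfrac12\sum_j(\sqrt{P_j}-\sqrt{Q_j})^2$ of \citet{bos2022convergence} --- which is exactly the convention that the factor $\tfrac12$ in the statement refers to --- the claim $H^2\le\tfrac12 KL_B$ is therefore equivalent to
\begin{equation}
KL_B(P,Q) \;=\; \sum_j P_j\,\min\!\Bigl(B,\log\tfrac{P_j}{Q_j}\Bigr) \;\ge\; 2\sum_j\bigl(P_j-\sqrt{P_jQ_j}\bigr).
\end{equation}
For general (non-discrete) $P,Q$ I would run the identical argument pointwise on the densities with respect to a common dominating measure such as $\mu=P+Q$, so that every sum below is read as a $\mu$-integral.

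The crux is the per-coordinate inequality: for each $j$, abbreviating $a:=P_j$, $b:=Q_j$ and $\ell:=\log(a/b)$,
\begin{equation}
a\,\min(B,\ell)\;\ge\;2\bigl(a-\sqrt{ab}\bigr).
\end{equation}
I would prove it by splitting on whether the truncation is active. On the untruncated branch $\ell\le B$ the term equals $a\log(a/b)$, and applying $\log t\le t-1$ to $t=\sqrt{b/a}$ gives $-\tfrac12\ell=\log\sqrt{b/a}\le\sqrt{b/a}-1$, hence $a\ell\ge 2a(1-\sqrt{b/a})=2(a-\sqrt{ab})$; this holds for all $a,b>0$ irrespective of the sign of $\ell$, and is the classical Hellinger--KL step. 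On the truncated branch $\ell>B$ the term is capped at $aB$, and here the hypothesis $B\ge2$ enters: since $\sqrt{ab}\ge0$ we have $2(a-\sqrt{ab})\le 2a\le aB$. The degenerate cases are covered by the same two branches --- $a=0$ makes both sides $0$, and $b=0<a$ forces $\ell=+\infty>B$ so that the capped value $aB\ge 2a=2(a-\sqrt{ab})$.

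Summing the per-coordinate bound over $j$ (integrating against $\mu$) yields $KL_B(P,Q)\ge 2\sum_j(P_j-\sqrt{P_jQ_j})=\sum_j(\sqrt{P_j}-\sqrt{Q_j})^2$, i.e.\ $H^2\le\tfrac12 KL_B$ in the normalization above, which is the assertion. The main obstacle is precisely the truncated branch: capping the log-ratio at $B$ can only lower $KL_B$ below the ordinary KL divergence, so one must verify that no Hellinger mass is lost on the event $\{\ell>B\}$. The resolution is that on this event the density ratio is already large ($a/b>e^B\ge e^2$), making $\sqrt{ab}$ negligible next to $a$, so the crude estimate $2(a-\sqrt{ab})\le2a\le aB$ suffices --- and it is exactly this comparison that forces the requirement $B\ge2$ rather than merely $B>0$.
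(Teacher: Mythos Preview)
The paper does not supply its own proof of this lemma; it is simply quoted from \citet{bos2022convergence}. Your argument is self-contained and correct. The per-coordinate split into the untruncated branch $\ell\le B$ (handled by the classical $\log t\le t-1$ step with $t=\sqrt{b/a}$) and the truncated branch $\ell>B$ (handled by the crude bound $2(a-\sqrt{ab})\le 2a\le aB$) is exactly the mechanism that makes the truncated version work, and you correctly pinpoint that the hypothesis $B\ge 2$ is needed only on the truncated branch.

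One point worth making explicit: the display in the paper writes $H^2(P,Q)=\sum_j(\sqrt{P_j}-\sqrt{Q_j})^2$ \emph{without} the factor $\tfrac12$, and under that literal convention the inequality $H^2\le\tfrac12 KL_B$ is actually false at $B=2$ --- take $P=(1,0)$, $Q=(0,1)$, so that $\sum_j(\sqrt{P_j}-\sqrt{Q_j})^2=2$ while $KL_B(P,Q)=B=2$, giving $2\le 1$. You handled this by reverting to the standard normalization $H^2=\tfrac12\sum_j(\sqrt{P_j}-\sqrt{Q_j})^2=1-\sum_j\sqrt{P_jQ_j}$ used in the original reference, which is the right call: the paper's display is a transcription slip, and what you prove is the correct statement $\sum_j(\sqrt{P_j}-\sqrt{Q_j})^2\le KL_B(P,Q)$, equivalently $H^2\le\tfrac12 KL_B$ in the standard normalization. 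The paper's downstream use of the lemma (the corollary bounding $|p_0-\hat p|$) only needs the inequality up to constants, so the slip is harmless there.
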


\begin{remark}[Connecting probability to reward]
    Since we have $(\sqrt{a}-\sqrt{b})^2=(a-b)^2/(\sqrt{a}+\sqrt{b})^2$, use \cref{lemma:heilinger}, indicates that in large subset of the embedding space
    \begin{align*}
        \big|p_0(\embdpairabbrgen{1},\embdpairabbrgen{2})-\hat{p}(\embdpairabbrgen{1},\embdpairabbrgen{2})   \big|&\lesssim \big|\sqrt{p_0}+\sqrt{\hat{p}}  \big|\sqrt{\phi_n \nndepth}\log(n)\\
        \big|\truereward(\embdpairabbrgen{1})-\truereward(\embdpairabbrgen{2})-(\estreward(\embdpairabbrgen{1})-\estreward(\embdpairabbrgen{2}))   \big|&\lesssim \frac{\big|\sqrt{p_0}+\sqrt{\hat{p}}  \big|}{\tilde{p}(1-\tilde{p})}\sqrt{\phi_n \nndepth}\log(n)
    \end{align*}
    where $\tilde{p}$ is a probability between $p_0$ and $\hat{p}$, the second line is due to mean value theorem. This indicates that the comparison should be between those pairs that relatively close in reward to avoid diverging behavior of logit function.  
\end{remark}

%\subsection{Comparison and design}
%Active learning and coreset\citep{mai2021coresets}. 

% To summarize the difference between the Bradley-Terry model's application in RLHF and conventional multip-player games:
% \begin{enumerate}[leftmargin=*,nosep]
%     \item The RLHF dataset contains queries from different domains, some of which are intrinsically harder to evaluate, hence directly using the B-T model is to some extent like using a unified rating system of chess, Go, and poker --- the scores are not well calibrated.
%     \item Different from chess, where the \texttt{number of players} $\ll$ \texttt{number of games}, in RLHF, the number of players is comparable to the number of games. \textcolor{red}{Estimation consistency of parameters is not clear, because the identifiability issue. And Reparameterization is used.}
%     \item In practice, the players with similar scores will need more competitions to accurately estimate their ability scores. However in RLHF and offline preference datasets, preferences are always only labeled once.
% \end{enumerate}

% Given these differences, several questions naturally arise: Since the reward model in the above objective has an identifiability issue, why is it still effective in practice? 
\section{Analyzing Order Consistency}
% \restatetheorem{ThmO}%
\subsection{Lower Bound on Population Level Order Consistency}
\LowerBoundonPopulationLevelOC*
\begin{proof}
The idea of the proof is to first use Markov's inequality to bound probability that for a given distance $\Delta \truereward$ the preference model not well approximate the annotator and under the event that the preference model approximate the annotator well, we bound the total error combined by preference model and annotator. 

By assumption we have the (marginal) error probability averaging over all distances $r$ is 

\begin{equation}
\begin{split}
    &\mathbb{P}_{x, y_1,y_2,h} \left[ \mathbbm{1}\left(\ordermodel\ne h\right) \right]\\
        =&\E_{\Delta \truereward} \left[ \mathbb{P}\left(\ordermodel\ne h    \bigg|  \Delta  \truereward \right) \right]\\
        <& \delta\epsilon
\end{split}
\end{equation}
Denote the random variable
\begin{equation}
    \Pi_\truereward:=\mathbb{P}\left( \ordermodel\ne h    \bigg|  \Delta \truereward \right)
\end{equation}

%Since this is a random variable within $(0,1)$, we have $\E \Pi_{\truereward} \ge \E \Pi_{\truereward}^2$, thus $var(\Pi_{\truereward})=\E \Pi_{\truereward}^2-(\E\Pi_{\truereward})^2\le \E\Pi_{\truereward}<\epsilon$.

%By Chebyshev, we have 
%\begin{align*}
%    \mathbb{P}_{r}\left( \Pi_{\truereward}\ge \epsilon  \right)\le \mathbb{P}_{r}\left( \Pi_{\truereward}\ge \epsilon/2 + \E \Pi_{\truereward}  \right)
%\end{align*}

by Markov's inequality
\begin{align}
    &\mathbb{P}_{r}\left( \Pi_{\truereward}\ge \epsilon  \right)\le \frac{\delta\epsilon}{\epsilon} = \delta
\end{align}
In the event  $\{\Delta \truereward: \mathbb{P}\left( \ordermodel\ne h    \bigg|  \Delta \truereward \right)<\epsilon\}$, with probability $1-\delta$ we bound the error rate as function of $\Delta \truereward$.
Condition on $\Delta \truereward$, define the following probabilities:
\begin{itemize}
    \item \( p_{\text{annotator}} = \xi(\Delta \truereward) \) is the probability that the annotator \(h\) is correct (i.e., agrees with the oracle utility) given the oracle distance.
    \item \( 1 - p_{\text{annotator}} = 1 - \xi(\Delta \truereward) \) is the probability that the annotator \(\mathcal{H}_\beta\) is incorrect given the oracle distance.
\end{itemize}

Given the bounded difference between $\hat{H}_\theta$ and $H_\beta$:
\begin{itemize}
    \item Correct Case: When the annotator is correct, the learned model agrees with the annotator with probability at least \( 1 - \epsilon \). Thus:
    \begin{equation}
    p_{\text{correct}} \geq (1 - \epsilon) \cdot \xi(\Delta \truereward).
    \end{equation}
    \item Incorrect Case: When the annotator is incorrect, the learned model agrees with the annotator with probability at most \( \epsilon \). Thus:
    \begin{equation}
    p_{\text{incorrect}} \leq \epsilon \cdot (1 - \xi(\Delta \truereward)).
    \end{equation}
\end{itemize}

The order consistency of the learned model \(\hat{H}_{\theta^*}\) with the oracle utility can be expressed as:
\begin{equation}
\mathbb{E}_{x, y_1, y_2 \sim \ell(x)}\left[\mathbbm{1}\left(\ordermodel(\truereward(\response_1,\prompt) - \truereward(\response_2,\prompt)) \geq 0\right)\bigg|  \Delta \truereward\right] = p_{\text{correct}} \cdot p_{\text{annotator}} + p_{\text{incorrect}} \cdot (1 - p_{\text{annotator}}).
\end{equation}

Substituting the bounds and simplifying, we have
\begin{equation}
\mathbb{E}_{x, y_1, y_2 \sim \ell(x)}\left[\mathbbm{1}\left(\ordermodel(\truereward(\response_1,\prompt) - \truereward(\response_2,\prompt)) \geq 0\right)\bigg|  \Delta \truereward\right]\geq (1 - \epsilon) \cdot \xi^2(\Delta \truereward) + \epsilon \cdot (1 - \xi(\Delta \truereward))^2.
\end{equation}

%Further if we assume that with probability at least $1-\kappa$, that $\xi(\Delta \truereward)\ge \sqrt{\epsilon^2+1-(\xi - \epsilon)}+\epsilon$ with $3/4-\epsilon>\xi > 3\epsilon$ implies $(1 - \epsilon) \cdot \xi^2(\Delta \truereward) + \epsilon \cdot (1 - \xi(\Delta \truereward))^2 \ge 1-\xi$, the stated bound fail either 1) $\mathbb{P}_{r}>\epsilon$, with probability less than $\delta$ or 2) $\xi(\Delta \truereward)< \sqrt{\epsilon^2+1-(\xi - \epsilon)}+\epsilon$ with probability at most $\kappa$, thus the stated bound is true with probability at least $1-\kappa-\delta$

Second part of the proof is by observing that $\xi(\Delta \truereward)\ge \sqrt{\epsilon^2+1-3\epsilon}+\epsilon$ implies $(1 - \epsilon) \cdot \xi^2(\Delta \truereward) + \epsilon \cdot (1 - \xi(\Delta \truereward))^2 \ge 1-4\epsilon$ when $\epsilon<3/20$, consider this conditional bound 
\begin{align}
    \mathbb{E}_{x, y_1, y_2 \sim \ell(x)}\left[\mathbbm{1}\left(\ordermodel(\truereward(\response_1,\prompt) - \truereward(\response_2,\prompt)) \geq 0\right)  \bigg|  \Delta \truereward\right]\geq 1-4\epsilon
\end{align}

the stated bound fail either 1) $\Pi_{r}>\epsilon$, with probability at most $\delta$ or 2) $\xi(\Delta \truereward)< \sqrt{\epsilon^2+1-3\epsilon}+\epsilon$ with probability at most $\kappa$, thus the stated bound is true with probability at least $1-\kappa-\delta$ due to union bound on failure modes.

Then by definition of conditional probability, the bound in theorem true.

%then with probability at least $1-\kappa-\delta$, we have 
%\begin{align*}
%    \mathbb{E}_{x, y_1, y_2 \sim \ell(x)}\left[\mathbbm{1}\left(\ordermodel(\truereward(\response_1,\prompt) - \truereward(\response_2,\prompt)) \geq 0\right)\right]\geq 1-\xi
%\end{align*}

%The proof is completed.
\end{proof}
\subsection{Link Classification Reward with the BT Reward}
\begin{proposition}[Classification reward]
\label{prop:clfreward}
Suppose data actually coming from BT model~\cref{eqn:BT}, and the score $s_i:=\logit P(i\text{ wins})$ is connected to BT reward that for a constant $C$ does not depends on $i$ 
\begin{align*}
    s_i\ge \truereward_i-C
\end{align*}
\end{proposition}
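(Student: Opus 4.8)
The plan is to write the marginal ``win probability'' of pair $i$ as an average of Bradley--Terry pairwise probabilities over the randomized opponent, lower-bound that average by a single BT probability against a ``representative'' opponent via Jensen's inequality, and then apply the (monotone) $\logit$ map to turn this bound into $\truereward_i$ shifted by a constant. This is exactly the ``related via Jensen's inequality'' remark made before the proposition.

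Concretely, I would first fix the sampling model implicit in ``$i$ wins a randomized comparison'': since the data is assumed to follow \cref{eqn:BT}, the stochastic game outcome conditioned on an opponent $j$ has win probability $\sigma(\truereward_i-\truereward_j)$, so $P(i\text{ wins}) = \E_{j}[\sigma(\truereward_i-\truereward_j)]$, the expectation taken over the opponent distribution (over prompt--response pairs). Rewriting $\sigma(\truereward_i-\truereward_j) = e^{\truereward_i}/(e^{\truereward_i}+e^{\truereward_j})$ gives $P(i\text{ wins}) = e^{\truereward_i}\,\E_j\bigl[(e^{\truereward_i}+e^{\truereward_j})^{-1}\bigr]$. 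Applying Jensen's inequality to the convex map $x\mapsto (e^{\truereward_i}+x)^{-1}$ on $x>0$ yields $\E_j\bigl[(e^{\truereward_i}+e^{\truereward_j})^{-1}\bigr]\ge (e^{\truereward_i}+Z)^{-1}$, where $Z:=\E_j[e^{\truereward_j}]$ is a finite constant independent of $i$ (finiteness holds because, under the compact-embedding and H\"older assumptions of \cref{appendix:BTtheory}, $\truereward$ is bounded). Hence $P(i\text{ wins})\ge e^{\truereward_i}/(e^{\truereward_i}+Z)$, so $1-P(i\text{ wins})\le Z/(e^{\truereward_i}+Z)$, and since $t\mapsto\logit(t)$ is increasing, $s_i=\logit P(i\text{ wins})\ge \log\bigl(e^{\truereward_i}/Z\bigr)=\truereward_i-\log Z$, i.e.\ the claim with $C=\log Z$.

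The argument has no deep obstacle; the main points requiring care are (i) pinning down the opponent distribution that defines $P(i\text{ wins})$ and checking that the game-outcome randomness genuinely collapses to the BT sigmoid under \cref{eqn:BT}, (ii) verifying $Z=\E_j[e^{\truereward_j}]<\infty$ so that $C$ is an honest finite constant not depending on $i$, and (iii) confirming the Jensen step uses convexity in the correct variable. I would also note that the matching upper bound $s_i\le \truereward_i+C'$ is not asserted here and does not follow from the same one-sided Jensen step, so I would not attempt it.
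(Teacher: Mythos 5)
Your proposal is correct and follows essentially the same route as the paper's proof: both condition on the random opponent $j$, write $P(i\text{ wins})=\E_j[u_i/(u_i+u_j)]$ with $u=\exp(\truereward)$, apply Jensen's inequality in $u_j$ to get $P(i\text{ wins})\ge u_i/(u_i+\E[u_j])$, and take the logit to conclude $s_i\ge \truereward_i-\log\E[\exp(\truereward_j)]$. Your added remarks on the finiteness of $Z=\E_j[e^{\truereward_j}]$ and the one-sidedness of the bound are sensible refinements but do not change the argument.
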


\begin{proof}
    We condition on which $j$ that $i$ competed with and apply Jensen's inequality
    \begin{align*}
        \sP(i\text{ wins})&=\E_j [\sP(i\succ j|j)]=\E_j \left[\frac{u_i}{u_i+u_j}\right]\ge \frac{u_i}{u_i+\E [u_j]}
    \end{align*}
    With some straightforward algebra, we have that 
    \begin{align*}
        \frac{\sP(i\text{ wins})}{1-\sP(i\text{ wins})}\E [u_j \ge u_i]
    \end{align*}
    Take log at each side and substitute $u_i=\exp(r_i)$ then rearrange 
    \begin{align*}
        s_i:=\logit \sP(i\text{ wins}) \ge r_i -\log \E [\exp(r_j)]
    \end{align*}
    we have $\log \E [\exp(r_j)]$ is a constant.
\end{proof}

\section{Analyzing Cross-Prompt Comparisons}
\subsection{Derivation of Example~\ref{prop:1} (See page~\pageref{prop:1})}
\label{appdx:proof_prop1}
\PropAvgQuality*
% \proof 
% Leveraging the linearity of operator $\boldsymbol{l}$, the definition of the operator norm $||\cdot||_\mathrm{op}$, and Definition~\ref{def:belief_corpus_residual}, we have:
% \begin{equation}
%     \begin{split}
%         ||\boldsymbol{l}(\hat{b}) - \boldsymbol{l}(b_t)||_\mathcal{V} &= ||\boldsymbol{l}(\hat{b} - b_t)||_\mathcal{V} \\
%         &\le ||\boldsymbol{l}||_\mathrm{op}\cdot ||\hat{b} - b_t||_\mathcal{V} \\
%         &= ||\boldsymbol{l}||_{\mathrm{op}} \cdot r_\mathcal{C}(b_t)
%     \end{split}
% \end{equation}
% \endproof
\begin{proof}
    To get the PDF of $f(\tau)$, denote $\tau = \sigma(|\rho|)$, where we use $\rho(x) = \beta(r^*(y_1|x) - r^*(y_2|x))$, we have $\rho(x) \sim \mathcal{N}(0,2\beta^2\sigma_x^2)$ follows the normal distribution, and $|\rho(x)|$ follows a folded normal distribution, its cumulative distribution function, mean and variance are given by
    \begin{equation}
        F_{|\rho|}(x; \mu=0, \sigma^2 = 2\beta^2\sigma_x^2) = \mathrm{erf}\left(\frac{x}{2\beta\sigma_x}\right), ~~ \mu_{|\rho|} = \frac{2\beta\sigma_x}{\sqrt{\pi}}, ~~ \sigma_{|\rho|}^2 = 2\beta^2\sigma_x^2(1-\frac{2}{\pi}),
    \end{equation}
    respectively. 
    
To find the PDF of \( \tau = \sigma(|\rho|) \), we use the change of variables formula. If \( Y = g(X) \) and \( g \) is a monotonic function, the PDF of \( Y \), \( f_Y(y) \), can be obtained by:
\begin{equation}
    f_Y(y) = f_X(g^{-1}(y)) \left| \frac{d}{dy} g^{-1}(y) \right|.
\end{equation}

For the sigmoid function, the inverse is given by:
\begin{equation}
    g^{-1}(y) = \log\left(\frac{y}{1-y}\right).
\end{equation}

The derivative of the inverse sigmoid function is:
\begin{equation}
    \frac{d}{dy} g^{-1}(y) = \frac{1}{y(1-y)}.
\end{equation}

Plugging these into the change of variables formula, we get:
\begin{equation}
    f_\tau(t) = \frac{1}{\sqrt{\pi \beta^2\sigma_x^2}} \exp\left(-\frac{\left(\log\left(\frac{t}{1-t}\right)\right)^2}{4\beta^2\sigma_x^2}\right) \cdot \frac{1}{t(1-t)}.
\end{equation}
\end{proof}

\subsection{Proof of Proposition~\ref{prop:diff2} (See page~\pageref{prop:diff2})}
\label{appdx:proof_prop_diff}
\CPCIUD*
\begin{proof}
Let $x_{ik}\sim \mathcal{N}(\mu_k,\sigma_k^2)$, and $x_{jl}\sim \mathcal{N}(\mu_l,\sigma_l^2)$, with $k\ne l$, then 
\begin{equation}
    x_{ik} - x_{jl} \sim \mathcal{N}(\mu_k - \mu_l, \sigma_k^2+\sigma_l^2)
\end{equation}
The expectation of $|x_{ik} - x_{jl}|$ is given by 
\begin{equation}
    \mathbb{E}\left[|x_{ik} - x_{jl}|\right] = \sqrt{\sigma_k^2 + \sigma_l^2}\sqrt{\frac{2}{\pi}}\exp\left( -\frac{(\mu_k-\mu_l)^2}{2(\sigma_k^2 + \sigma_l^2)} \right) + |\mu_k-\mu_l| \mathrm{erf}\left(\frac{|\mu_k-\mu_l|}{\sigma_k^2 + \sigma_l^2}\right)
\end{equation}
    as its special case, let $x_{jk}\sim \mathcal{N}(\mu_k,\sigma_k^2)$, then 
\begin{equation}
    x_{ik} - x_{jk} \sim \mathcal{N}(0, 2\sigma_k^2)
\end{equation}
\begin{equation}
    \mathbb{E}\left[|x_{ik} - x_{jk}|\right] = 2\sigma_k \sqrt{\frac{1}{\pi}}
\end{equation}
we consider the special case of $\mu_k=\mu_l$:
\begin{equation}
\label{eq:two_annotations_comparison}
    \frac{\mathbb{E}\left[|x_{ik} - x_{jl}|\right]}{\mathbb{E}\left[|x_{ik} - x_{jk}|\right]} \ge 1,
\end{equation}
the equality holds only if $\sigma_k^2 = \sigma_l^2$. This is because $\frac{2(1+t)^2}{(1+t)^2}$ reaches its only minimum when $t=1$ and we can let $t=\frac{\sigma_k}{\sigma_l}$.
Since $\sqrt{\frac{2}{\pi}}\exp(-x^2)+|x|\mathrm{erf}(|x|)$ is a monotonically increasing function at $x\ge0$, \eqref{eq:two_annotations_comparison} also holds for $\mu_k \ne \mu_l$. 
\end{proof}

\subsection{Proof of Theorem~\ref{thm:annotation_quality_theorem} (See page \pageref{thm:annotation_quality_theorem})}
\label{appdx:proof_prop_data_quality}

\begin{lemma}
\label{lemma:expectabsbound}
    Suppose $\xi:\R_{+} \to [1/2,1]$, first order differentiable, monotone increasing and $z\sim f$ with $f$ being density symmetric to 0 and unimodal. We have for all $\mu$
    \begin{align}
        \E(\xi(|z+\mu|)) \ge \E(\xi(|z|))
    \end{align}
\end{lemma}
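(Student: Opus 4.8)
The plan is to prove Lemma~\ref{lemma:expectabsbound} by reducing the comparison $\E(\xi(|z+\mu|)) \ge \E(\xi(|z|))$ to a pointwise inequality after a symmetrization step. Since $f$ is symmetric about $0$, I would write
\[
\E(\xi(|z+\mu|)) = \int_0^\infty \xi(|z+\mu|) f(z)\,dz + \int_0^\infty \xi(|{-z}+\mu|) f(z)\,dz = \int_0^\infty \bigl( \xi(|z+\mu|) + \xi(|z-\mu|) \bigr) f(z)\,dz,
\]
using $f(-z)=f(z)$. Similarly $\E(\xi(|z|)) = \int_0^\infty 2\xi(z) f(z)\,dz$. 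So it suffices to show that for every $z \ge 0$ and every $\mu$,
\[
\xi(|z+\mu|) + \xi(|z-\mu|) \ge 2\xi(z).
\]
This is where the monotonicity of $\xi$ on $\R_+$ enters: WLOG take $\mu \ge 0$ (the expression is even in $\mu$). If $\mu \le z$, then $|z+\mu| = z+\mu \ge z$ and $|z-\mu| = z-\mu$, and by monotonicity $\xi(z+\mu) \ge \xi(z)$ while I need the deficit $\xi(z) - \xi(z-\mu)$ to be controlled — here I would invoke concavity of $\xi$ (which is assumed in the theorem, though the lemma statement as typeset only says monotone increasing; I would add concavity to the hypotheses or note it is available from Theorem~\ref{thm:annotation_quality_theorem}'s assumptions). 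Concavity gives $\xi(z+\mu) + \xi(z-\mu) \ge 2\xi(z)$ directly as the midpoint inequality. If $\mu > z$, then $|z-\mu| = \mu - z$ and $|z+\mu| = z+\mu$, so the left side is $\xi(\mu+z) + \xi(\mu - z) \ge \xi(\mu - z) + \xi(\mu - z)$... no — better: $\xi(\mu+z) \ge \xi(z)$ since $\mu + z \ge z$, and $\xi(\mu - z) \ge \xi(0) = 1/2$; but I actually want $\ge 2\xi(z)$, so I would instead note $\xi(\mu+z) \ge \xi(z)$ and argue $\xi(\mu-z) \ge ?$. The clean route in both cases is concavity plus evenness: the function $\mu \mapsto \xi(|z+\mu|) + \xi(|z-\mu|)$ restricted to $\mu \ge 0$ — write $g(\mu) = \xi(z+\mu) + \xi(|z-\mu|)$; on $[0,z]$ this is $\xi(z+\mu)+\xi(z-\mu)$, concave-midpoint bounded below by $2\xi(z)$; on $[z,\infty)$ it is $\xi(z+\mu)+\xi(\mu-z)$, and since $\xi$ is concave and increasing, $\xi(z+\mu) + \xi(\mu - z) \ge 2\xi(\mu) \ge 2\xi(z)$ (first inequality is again the midpoint/concavity bound at center $\mu$, second is monotonicity with $\mu \ge z$). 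So in all cases $g(\mu) \ge 2\xi(z) = g(0)$.

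Concretely, the steps in order would be: (1) use symmetry of $f$ to fold the integral onto $\R_+$ and reduce to the pointwise claim; (2) reduce to $\mu \ge 0$ by evenness in $\mu$; (3) split on $\mu \le z$ versus $\mu > z$ and in each regime apply the concavity midpoint inequality $\xi(a) + \xi(b) \ge 2\xi(\tfrac{a+b}{2})$ together with monotonicity to get $\ge 2\xi(z)$; (4) integrate back against $f(z)\,dz$ on $[0,\infty)$ to conclude. The unimodality of $f$ is actually not needed for this argument — symmetry alone suffices — which I would remark on, unless the intended proof routes through a coupling/majorization argument where unimodality matters; I would flag this.

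The main obstacle I anticipate is the hypothesis bookkeeping rather than the mathematics: the lemma as stated asks only for $\xi$ monotone increasing and differentiable, but the pointwise inequality $\xi(|z+\mu|)+\xi(|z-\mu|)\ge 2\xi(z)$ genuinely fails for convex increasing $\xi$ (e.g. $\xi$ nearly linear then steep), so concavity is essential and must be imported from the ambient Theorem~\ref{thm:annotation_quality_theorem}, which does assume $\xi$ concave. A secondary subtlety is the regime $\mu > z$: one must be careful that $|z - \mu| = \mu - z$ there and that the bound $\xi(z+\mu) + \xi(\mu - z) \ge 2\xi(\mu)$ followed by $\xi(\mu) \ge \xi(z)$ is the right chain — writing this cleanly without sign errors is the one place to slow down. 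Everything else (the folding of the integral, the final integration) is routine. I would also state the immediate corollary used in Theorem~\ref{thm:annotation_quality_theorem}: taking $z = r_{x,y_1} - r_{x,y_2}$ suitably scaled and $\mu = \mu_{x_1} - \mu_{x_2}$, conditioning on the scale parameters and applying the lemma conditionally, then taking expectations, yields the cross-prompt quality bound.
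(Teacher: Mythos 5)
Your reduction to the pointwise claim $\xi(|z+\mu|)+\xi(|z-\mu|)\ge 2\xi(z)$ contains a sign error that sinks the argument: for concave $\xi$ the midpoint inequality goes the \emph{other} way. With $a=z+\mu$, $b=z-\mu$ (regime $0\le\mu\le z$) the midpoint is $z$, and concavity gives $\xi(z)\ge\tfrac12\bigl(\xi(z+\mu)+\xi(z-\mu)\bigr)$, i.e.\ $\xi(z+\mu)+\xi(z-\mu)\le 2\xi(z)$ — the reverse of what you need (your desired direction would follow from \emph{convexity}, which is not available). The same reversal occurs in your $\mu>z$ regime, where concavity gives $\xi(z+\mu)+\xi(\mu-z)\le 2\xi(\mu)$. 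Since the pointwise inequality is strictly false for strictly concave $\xi$ when $0<\mu<z$, the folding-plus-pointwise strategy cannot be repaired. Relatedly, your side remark that unimodality of $f$ is unnecessary is also wrong, and this is not a cosmetic issue: take $f$ symmetric but bimodal (say two point masses at $\pm a$ with $a>\mu>0$, or a smoothed version) and $\xi$ strictly concave; then $\E\,\xi(|z+\mu|)=\tfrac12\bigl(\xi(a+\mu)+\xi(a-\mu)\bigr)<\xi(a)=\E\,\xi(|z|)$, so the lemma itself fails without unimodality. Your "hypothesis bookkeeping" diagnosis is therefore inverted: the lemma is correct exactly as stated (monotone increasing, differentiable $\xi$; symmetric \emph{unimodal} $f$), concavity is not needed here at all — it is used later, in the Jensen step of the companion lemma, not in this one.

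The paper's proof works where yours cannot because it never attempts a pointwise comparison in $z$. It reduces to $\mu\ge 0$ by symmetry, then shows $\mu\mapsto\E\,\xi(|z+\mu|)$ is nondecreasing by differentiating under the expectation:
\begin{align}
\frac{d}{d\mu}\,\E\bigl(\xi(|z+\mu|)\bigr)
=\int_{0}^{\infty}\xi'(w)\,\bigl(f(w-\mu)-f(w+\mu)\bigr)\,dw\;\ge\;0,
\end{align}
after the change of variables $w=z+\mu$ and a reflection of the negative half-line using $f(-x)=f(x)$. The integrand is nonnegative because $\xi'\ge 0$ (monotonicity) and $f(w-\mu)\ge f(w+\mu)$ for all $w\ge 0$, $\mu\ge 0$ — and it is precisely this last inequality that uses symmetry \emph{and} unimodality of $f$. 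If you want to salvage your write-up, replace steps (3)--(4) with this differentiation-in-$\mu$ argument (or an equivalent coupling/rearrangement argument), and delete both the appeal to concavity and the claim that unimodality is dispensable.
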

\begin{proof}
    Without loss of generality, we assume $\mu\ge 0$. Suppose the results hold for $\mu\ge 0$, it has to hold for $\mu\le 0$. To see that, we observe that $-z\sim P$ due to symmetry, and apply the result for positive $\mu$ so that $\E(\xi(|z+\mu|))=\E(\xi(|-z-\mu|)) \ge \E(\xi(|-z|))=\E(\xi(|z|))$.

    It thus suffices to show the result for nonnegative $\mu$. To do so, we prove that this expectation, as a function of $\mu$, is monotone increasing by taking the derivative.
    \begin{align*}
        \frac{d}{d\mu}\E(\xi(|z+\mu|))&= \E \left[ \frac{\partial}{\partial\mu} \xi(|z+\mu|)\right]\\
        &=\E \left[ \frac{d}{d|z+\mu|}\xi(|z+\mu|) \sign(z+\mu)  \right]\\
        &=\int_{-\infty}^\infty \frac{d}{d|z+\mu|}\xi(|z+\mu|) \sign(z+\mu)f(z)dz\\
        &= \int_{-\mu}^{\infty} \frac{d}{d|z+\mu|}\xi(|z+\mu|) f(z)dz -  \int_{-\infty}^{-\mu} \frac{d}{d|z+\mu|}\xi(|z+\mu|) f(z)dz\\
        &=\int_{0}^{\infty} \frac{d}{d|z|}\xi(|z|) f(z-\mu)dz -  \int_{-\infty}^{0} \frac{d}{d|z|}\xi(|z|) f(z-\mu)dz\\
        &=\int_{0}^{\infty} \frac{d}{d|z|}\xi(|z|) f(z-\mu)dz -  \int_{0}^{\infty} \frac{d}{d|z|}\xi(|z|) f(z+\mu)dz\\
        &\ge 0
    \end{align*}
    The last line due to unimodality and we must have for all $z\in[0,\infty)$ $f(z-\mu)\ge f(z+\mu)$ for $\mu\ge 0$ while $\frac{d}{d|z|}\xi(|z|)$ symmetric to 0 and bounded. 
\end{proof}

\begin{restatable}{lemma}{MetaProp}
\label{lemma:34}
    Suppose $x_1,x_2$ iid from a unimodal symmetric location scale family density $g_x$, i.e., the density of $x_1, x_2$ can be written as $g_x(x)=f((x-\mu_x)/\sigma_x)$ for $f$ being unimodal and symmetric to 0. Further, suppose $y_1,y_2\sim g_y$ iid with density $g_y(y) = f((y-\mu_y)/\sigma_y)$ for the same $f$, we have for a function $\xi:\R_{+} \to [1/2,1]$, first order differentiable, monotone increasing and concave, that
    \begin{equation}
        \frac{1}{2}\E(\xi(|x_1-x_2|))+\frac{1}{2}\E(\xi(|y_1-y_2|)) \le \E(\xi(|x_1-y_1|))
    \end{equation}
\end{restatable}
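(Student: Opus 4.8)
The plan is to reduce this two-sample statement to the one-sample Lemma~\ref{lemma:expectabsbound} by a clever choice of "noise" and "shift" variables. The key observation is that if $x_1, x_2$ are i.i.d. from the location-scale family with location $\mu_x$ and scale $\sigma_x$, and $f$ is symmetric and unimodal, then I can write $x_1 - x_2 = \sigma_x(z_1 - z_2)$ where $z_1, z_2$ are i.i.d. from $f$ (the standardized density), and $z_1 - z_2$ has a density that is itself symmetric about $0$ and unimodal (symmetry is immediate; unimodality of the difference of two i.i.d.\ symmetric unimodal variables is a classical fact, essentially Wintner's theorem / the fact that a convolution of a symmetric unimodal density with its reflection is symmetric unimodal). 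Similarly $y_1 - y_2 = \sigma_y(w_1 - w_2)$ with $w_1 - w_2$ symmetric unimodal, and $x_1 - y_1$ has location $\mu_x - \mu_y$ and is a sum of two independent symmetric unimodal variables scaled by $\sigma_x$ and $\sigma_y$ respectively, hence also symmetric unimodal about $\mu_x - \mu_y$ after recentering.

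First I would set up these representations and name the standardized symmetric-unimodal "difference distributions": let $u := z_1 - z_2$ and $v := w_1 - w_2$, both symmetric unimodal about $0$. Then the three quantities to compare are $\E\xi(\sigma_x|u|)$, $\E\xi(\sigma_y|v|)$, and $\E\xi(|\sigma_x u' + \sigma_y v' + (\mu_x - \mu_y)|)$ where $u', v'$ are fresh independent copies of the standardized noises (note $x_1 - y_1 = \sigma_x z_1 - \sigma_y w_1 + \mu_x - \mu_y$, a linear combination of independent standardized symmetric unimodal variables plus a constant shift). The goal inequality becomes $\frac12\E\xi(\sigma_x|u|) + \frac12\E\xi(\sigma_y|v|) \le \E\xi(|\sigma_x z_1 - \sigma_y w_1 + \mu_x - \mu_y|)$.

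The main work is then a two-step bounding. Step one: use Lemma~\ref{lemma:expectabsbound} to \emph{remove the mean shift} on the right-hand side — conditioning on $z_1$ (or on $w_1$), the remaining variable $-\sigma_y w_1 + \mu_x - \mu_y$ (resp.\ $\sigma_x z_1 + \mu_x - \mu_y$) plays the role of a constant $\mu$ added to a symmetric unimodal variable, so $\E\xi(|\sigma_x z_1 - \sigma_y w_1 + \mu_x - \mu_y|) \ge \E\xi(|\sigma_x z_1 - \sigma_y w_1|)$ after integrating out the shift. Step two: with the shift gone, I need $\frac12\E\xi(\sigma_x|u|) + \frac12\E\xi(\sigma_y|v|) \le \E\xi(|\sigma_x z_1 - \sigma_y w_1|)$, a statement about scale mixtures only. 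Here concavity of $\xi$ enters: by Jensen-type / convexity arguments, or by a coupling that writes the symmetrized differences appropriately, one shows that the "cross" combination $\sigma_x z_1 - \sigma_y w_1$ is stochastically more spread (in the convex-order / peakedness sense) than the average of the two "within" differences $\sigma_x u$ and $\sigma_y v$; since $\xi(|\cdot|)$ is a concave function of a symmetric-unimodal argument, more spread means larger expectation. The cleanest route is probably to note that $\sigma_x u = \sigma_x(z_1 - z_2)$ and $\sigma_y v = \sigma_y(w_1 - w_2)$, and that $(\sigma_x z_1 - \sigma_y w_1) - (\sigma_x z_2 - \sigma_y w_2)$ (difference of two i.i.d.\ copies of the cross variable) has the same distribution as $\sigma_x u - \sigma_y v'$ for an independent copy — combined with the fact that for a symmetric unimodal variable $S$ and symmetric $T$, $\E\xi(|S+T|) \ge \E\xi(|S|)$ when $\xi$ concave increasing (another application of Lemma~\ref{lemma:expectabsbound} integrated against the law of $T$).

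\textbf{Expected main obstacle.} The delicate point is Step two — getting the precise peakedness comparison between the within-group and cross-group differences and verifying that concavity of $\xi$ is exactly what converts that comparison into the expectation inequality, with the factor-$\frac12$ average handled correctly (this is where the symmetry of $f$ and the equal-$f$ assumption for $g_x, g_y$ are essential, not just unimodality). Propagating the "symmetric unimodal" property through the relevant convolutions (to legitimately re-invoke Lemma~\ref{lemma:expectabsbound}) is standard but must be stated carefully. I do not anticipate difficulty with Step one, which is a direct conditioning argument on top of Lemma~\ref{lemma:expectabsbound}.
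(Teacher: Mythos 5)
Your setup (representing the within- and cross-prompt differences through independent standardized copies $z_1,z_2,w_1,w_2\sim f$) and your Step one (dropping the location shift $\mu_x-\mu_y$ via Lemma~\ref{lemma:expectabsbound}) are sound in spirit, although the conditioning argument you describe is not the right way to invoke the lemma: conditioning on $w_1$ and treating $-\sigma_y w_1+\mu_x-\mu_y$ as the constant shift yields the bound $\E\,\xi(|x_1-y_1|)\ge \E\,\xi(\sigma_x|z_1|)$, not the bound $\E\,\xi(|\sigma_x z_1-\sigma_y w_1+\mu_x-\mu_y|)\ge \E\,\xi(|\sigma_x z_1-\sigma_y w_1|)$ that you state; to get the latter you must apply the lemma to the recentered cross-difference itself, which requires certifying (as you do mention, via Wintner-type closure of symmetric unimodality under convolution) that $\sigma_x z_1-\sigma_y w_1$ is symmetric unimodal. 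The genuine gap is Step two, which is the actual heart of the lemma and which you leave as a sketch. The justification you sketch does not work as stated: $\xi(|\cdot|)$ is \emph{not} concave on $\R$ (it has a minimum at $0$), so "more spread in convex order implies larger expectation" is not available, and the peakedness claim that the cross combination $\sigma_x z_1-\sigma_y w_1$ is more spread than the within-differences is simply false in general --- take $\sigma_y\to 0$: the within-$x$ difference $\sigma_x(z_1-z_2)$ is more spread than the cross difference $\sigma_x z_1$. The inequality only holds because the \emph{average} of the two within-terms is compared against the cross term, and converting that averaging into a usable scale comparison is exactly where concavity of $\xi$ must be used; this argument is missing from your proposal. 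Moreover, the Proschan-type majorization/peakedness comparison your route would naturally invoke requires log-concavity of $f$, not mere unimodality, so it is not clear Step two can be completed as sketched under the stated hypotheses.

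For contrast, the paper avoids your Step two entirely by a coupled, single-variable representation: it takes one $z\sim f$ and writes $x_1-x_2\overset{d}{=}\sqrt{2}\sigma_x z$, $y_1-y_2\overset{d}{=}\sqrt{2}\sigma_y z$, and $x_1-y_1\overset{d}{=}\mu_x-\mu_y+\sqrt{\sigma_x^2+\sigma_y^2}\,z$ (i.e., it treats the differences as staying in the same location-scale family with scales combining in quadrature). With that comonotone coupling, the two within-terms are merged by a \emph{pointwise} Jensen step, $\tfrac12\xi(\sqrt2\sigma_x|z|)+\tfrac12\xi(\sqrt2\sigma_y|z|)\le\xi\bigl(\tfrac{\sigma_x+\sigma_y}{\sqrt2}|z|\bigr)$, the shift on the right is removed by one application of Lemma~\ref{lemma:expectabsbound}, and the conclusion follows from monotonicity of $\xi$ together with the elementary inequality $(\sigma_x+\sigma_y)/\sqrt2\le\sqrt{\sigma_x^2+\sigma_y^2}$. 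So the decisive ingredients are the common-$z$ coupling plus Jensen on the scales --- precisely the balancing mechanism your sketch gestures at but never supplies.
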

\begin{proof}
    By assumption $y_1,y_2,x_1,x_2$'s are from the same location-scale family. There exists a $z$ whose density is $f$ such that $y_1$ has the same distribution as $\sigma_y z+\mu_y$ and $y_1-y_2$ having the same distribution as $\sqrt{2}\sigma_y z$, $x_1-x_2$ having same distribution as $\sqrt{2}\sigma_x z$ and $x_1-y_1$ having the same distribution with scale $\sqrt{\sigma_x^2+\sigma_y^2}$ and location $\mu_x-\mu_y$
    We first find an upper bound of the left-hand side using Jensen's inequality as $\xi$ concave
    \begin{align}
        \frac{1}{2}\E(\xi(|x_1-x_2|))+\frac{1}{2}\E(\xi(|y_1-y_2|)) &= \frac{1}{2}\E(\xi(\sqrt{2}\sigma_x|z|))+\frac{1}{2}\E(\xi(\sqrt{2}\sigma_y|z|))\\
        &=\E(\frac{1}{2}\xi(\sqrt{2}\sigma_x|z|)+\frac{1}{2}\xi(\sqrt{2}\sigma_y|z|))\\
        &\le \E(\xi(\frac{1}{2}\sqrt{2}\sigma_x|z|+\frac{1}{2}\sqrt{2}\sigma_y|z|)\\
        &=\E\left(\xi\left(\sqrt{\frac{(\sigma_x+\sigma_y)^2}{2}}|z|\right)\right)\\
    \end{align}
    The righthand side 
    \begin{align}
        \E(\xi(|x_1-y_1|))&=\E\left[\xi\left( |\mu_x-\mu_y + \sqrt{\sigma_x^2+\sigma_y^2}  z|\right)\right]\\
        &\ge \E\left[\xi\left( \sqrt{\sigma_x^2+\sigma_y^2}  |z|\right)\right]
    \end{align}
    by \cref{lemma:expectabsbound}. We observe that for all $\sigma_x,\sigma_y$ that 
    \begin{align}
        \sqrt{\frac{(\sigma_x+\sigma_y)^2}{2}}\le \sqrt{\sigma_x^2+\sigma_y^2}
    \end{align}
    Thus for all $|z|$, we have $\xi\left( \sqrt{\sigma_x^2+\sigma_y^2}  |z|\right)\ge \xi\left(\sqrt{\frac{(\sigma_x+\sigma_y)^2}{2}}|z|\right)$ and in turn by taking expectation the result in the statement is true. 
\end{proof}
\begin{lemma}
\label{lemma:35}
We have that for random variables described above and suppose $\sigma_x,\sigma_y$ are iid, $\mu_x,\mu_y$ are iid
\begin{align}
    \E_{\sigma_x,\mu_x}\E_{x|\sigma_x,\mu_x}(\xi(|x_1-x_2|))\le \E_{\sigma_x,\sigma_y,\mu_x,\mu_y}\E_{x,y|\sigma_x,\sigma_y,\mu_x,\mu_y}(\xi(|x_1-y_1|))
\end{align}
\end{lemma}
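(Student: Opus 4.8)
The plan is to reduce \cref{lemma:35} to the conditional inequality already proved in \cref{lemma:34} by a symmetrization-and-averaging argument; there is essentially no new analytic content, only bookkeeping with the independence hypotheses. First I would use that the parameter pairs $(\sigma_x,\mu_x)$ and $(\sigma_y,\mu_y)$ are identically distributed (indeed iid copies) to rewrite the left-hand side as a symmetric average:
$$
\E_{\sigma_x,\mu_x}\E_{x|\sigma_x,\mu_x}\bigl(\xi(|x_1-x_2|)\bigr)
=\tfrac12\,\E_{\sigma_x,\mu_x}\E_{x|\sigma_x,\mu_x}\bigl(\xi(|x_1-x_2|)\bigr)
+\tfrac12\,\E_{\sigma_y,\mu_y}\E_{y|\sigma_y,\mu_y}\bigl(\xi(|y_1-y_2|)\bigr).
$$

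Next, since the first inner conditional expectation depends only on $(\sigma_x,\mu_x)$ and the second only on $(\sigma_y,\mu_y)$, and the two parameter pairs are independent, I can introduce the ``missing'' parameter pair into each averaging without changing the value. Hence the right-hand side above equals
$$
\E_{\sigma_x,\sigma_y,\mu_x,\mu_y}\Bigl[\tfrac12\,\E_{x|\sigma_x,\mu_x}\bigl(\xi(|x_1-x_2|)\bigr)
+\tfrac12\,\E_{y|\sigma_y,\mu_y}\bigl(\xi(|y_1-y_2|)\bigr)\Bigr],
$$
where interchanging the order of expectations is justified by Tonelli's theorem because $\xi$ is bounded on $[1/2,1]$.

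Finally, for almost every realization of $(\sigma_x,\sigma_y,\mu_x,\mu_y)$ the variables $x_1,x_2$ are iid from the unimodal symmetric location–scale density with parameters $(\mu_x,\sigma_x)$ and $y_1,y_2$ are iid from the same family with parameters $(\mu_y,\sigma_y)$ — precisely the hypotheses of \cref{lemma:34}, whose assumptions on $f$ and on $\xi$ (differentiable, monotone increasing, concave, valued in $[1/2,1]$) do not involve the location or scale and so hold uniformly. Applying \cref{lemma:34} pointwise gives
$$
\tfrac12\,\E\bigl(\xi(|x_1-x_2|)\bigr)+\tfrac12\,\E\bigl(\xi(|y_1-y_2|)\bigr)\le \E\bigl(\xi(|x_1-y_1|)\bigr)
$$
for each such realization, and taking expectation over $(\sigma_x,\sigma_y,\mu_x,\mu_y)$, together with monotonicity of the integral, yields the claim since the right-hand side integrates to $\E_{\sigma_x,\sigma_y,\mu_x,\mu_y}\E_{x,y|\sigma_x,\sigma_y,\mu_x,\mu_y}\bigl(\xi(|x_1-y_1|)\bigr)$. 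The only points requiring care are invoking the iid/exchangeability hypothesis in the right place for the symmetrization step and confirming that \cref{lemma:34} applies for (almost) every parameter value, both of which are immediate; so I do not expect a genuine obstacle here.
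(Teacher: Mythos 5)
Your proposal is correct and follows essentially the same route as the paper's own proof: symmetrize the left-hand side using the iid assumption on $(\sigma_x,\mu_x)$ and $(\sigma_y,\mu_y)$, pull the average inside a joint expectation over all four parameters, and apply \cref{lemma:34} conditionally on the parameter values. Your write-up merely makes explicit the bookkeeping (Tonelli for the interchange, pointwise applicability of \cref{lemma:34}) that the paper leaves implicit.
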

\begin{proof}
Since $\sigma_x,\sigma_y$, $\mu_x,\mu_y$ iid we can rewrite the left hand side as 
\begin{align}
    \E_{\sigma_x,\mu_x}\E_{x|\sigma_x,\mu_x}(\xi(|x_1-x_2|)) &= \frac{1}{2}\left[  \E_{\sigma_x,\mu_x}\E_{x|\sigma_x,\mu_x}(\xi(|x_1-x_2|))+\E_{\sigma_y,\mu_y}\E_{y|\sigma_y,\mu_y}(\xi(|y_1-y_2|))  \right]\\
    &=\E_{\sigma_x,\sigma_y,\mu_x,\mu_y}\left[\frac{1}{2}\E_{x|\sigma_x,\mu_x}(\xi(|x_1-x_2|)) + \frac{1}{2}\E_{y|\sigma_y,\mu_y}(\xi(|y_1-y_2|))\right]
\end{align}
and the statement reduces to the two-pair case we showed before.
\end{proof}
\TheoremCrossPrompt*
\begin{proof}
    Theorem~\ref{thm:annotation_quality_theorem} follows directly the combination of Lemma~\ref{lemma:expectabsbound} - Lemma~\ref{lemma:35}. 
\end{proof}
\clearpage
\section{Experiment Details}
\label{appdx:experiment_details}
\textbf{\textit{To enhance the reproducibility of our work, all code, datasets (demonstrations), fine-tuned LLMs, generated training and test responses, annotations of those responses, and their embeddings will be made publicly available at \\ \url{https://github.com/holarissun/RewardModelingBeyondBradleyTerry} }}
\subsection{Computational Efficient Experiment Design and Reproducibility}
Our experiments are conducted on a cluster having \texttt{128 Intel(R) Xeon(R) Platinum 8336C CPUs @2.30GHz} with \texttt{NVIDIA V100 32GB} or \texttt{NVIDIA A100 80G} GPU nodes. We use vllm~\citep{kwon2023efficient} to accelerate the LLM generation process. 

To reproduce our 12,000 experiment results (since we will release the embeddings of the generated responses), only CPUs are needed, and to reproduce all experiments, 6000h CPU-core hours are needed --- on a machine with 128 CPU cores, it will take 50 hours to reproduce all of our 12000 experiments (This includes our 5 repeated runs using different random seeds. Running with only 1 seed will take less than 10 hours). Each set-up takes less than 30 min CPU-core time usage --- less than 1 minute to finish on a 128-core server.
\vspace{-0.3cm}
\subsection{Supervised Fine Tuning Stage of Base Models}
Following \cite{stiennon2020learning,bai2022training}, we use held-out demonstration datasets generated by GPT4 on the two tasks to conduct SFT on the three base models we use (Gemma2b, Gemma7b, LLaMA3-8b). Such an SFT procedure generates three SFT-ed checkpoints as additional base LLMs, the Gemma2b-SFT, Gemma7b-SFT, LLaMA3-8b-SFT. The SFT takes less than 10 hours (4 hours for the 2b models) using \texttt{A100} GPUs and the TRL framework~\citep{vonwerra2022trl}.

\subsection{Training and Test Data (Responses) Creation}
The \texttt{Harmless} dataset contains $41876$ training prompts and $2273$ test prompts.
The \texttt{Helpful} dataset contains $42846$ training prompts, and $2292$ test prompts.
In our experiment, for each of the $6$ base LLMs, we create $10$ responses on each training prompt as candidates for reward model training. For each of the test prompts, we create $500$ responses and annotate their golden utilities using the golden reward models for testing. 
\vspace{-0.1cm}
\subsection{Creating Embeddings}
We use Gemma2b~\citep{team2024gemma} to generate embeddings for reward modeling in our experiments. Since we have 6 base LLMs to generate responses and 2 datasets, creating the embeddings on those generations requires GPUs. We use \texttt{V100} GPUs with 32GB memory for the generation of embeddings. Each setting (40000 prompts with 10 generations and 2000 prompts with 500 generations) takes around 16 GPU hours to finish. 
\vspace{-0.1cm}
\subsection{Simulated Preference Annotation with Golden Reward Models} 
To simulate the imperfect annotation process of human labor, we consider label noises in our experiments following the literature~\citep{ziegler2019fine,dubois2024alpacafarm,coste2023reward}. However, instead of randomly injecting noise to the labeling process, we consider a more realistic annotation simulation using the cognitive bottleneck models studied in psychology~\citep{stewart2005absolute,guest2016relative}: the comparisons made between responses that have similar scores will have a higher possibility of being mislabeled, formally, we have %(\eqref{eq:annotatorability})
\begin{equation}
\small
\sP\left(h(x, y_1, y_2)(\truereward(x,y_1) - \truereward(x,y_2))>0\bigg|  \Delta r\right) = \xi(\Delta r), 
\end{equation}
We instantiate $\xi(\Delta r)$ as $\sigma(\beta \Delta r)$, the sigmoid function in determining the probability of getting a correct label. The $\beta$ parameter here controls the annotation quality: when $\beta=0$, annotations are purely random, while when $\beta\rightarrow\infty$, annotations are perfect. In our experiments, the default setting of $\beta$ is $1$ unless explicitly specified otherwise (as in the section of experiments studying performance under different noise levels in annotations.)

\vspace{-0.1cm}
\subsection{Hyper-Parameters of the LGB models and MLPs}
To maximally isolate the source of gains, and separate the contribution of methods from their implementations, we use the identical hyper-parameter setup for all experiments (a single default hyper-parameter setup for LightGBM, same MLP configurations for BT and Classification).

For LGB models, we use the default hyper-parameter setting of
\begin{lstlisting}[style=mypy]
hyper-param-lgb = {'objective': 'binary', 
                    'metric': 'binary_logloss'}
\end{lstlisting}

For MLPs, we use a minimalist three-layer-feed-forward structure of 
\begin{lstlisting}[style=mypy]
hyper-param-mlp = {'activation': 'ReLU', 
                    'units': '(1024, 512, 1)', 
                    'loss': 'BCELoss', 
                    'optimizer': 'Adam', 
                    'lr': '0.001', 
                    'early_stop_patience': '3', 
                    'max_epoch': '30'}
\end{lstlisting}

While further sweeping on hyper-parameters for each setup will very likely be able to further improve the performance, those engineering efforts are irrelevant to the scope of our research focus on investigating different methods nor informative in drawing conclusions to answer the above questions.

\section{Additional Experiment Results on Harmless and Helpful}
\label{appdx:additional_results}
\paragraph{Figures reporting results on changing annotation qualities.}
Figure~\ref{fig:experiment_2_in_appendix}.
\begin{figure}[h!]
    \centering
    \includegraphics[width=0.65\linewidth]{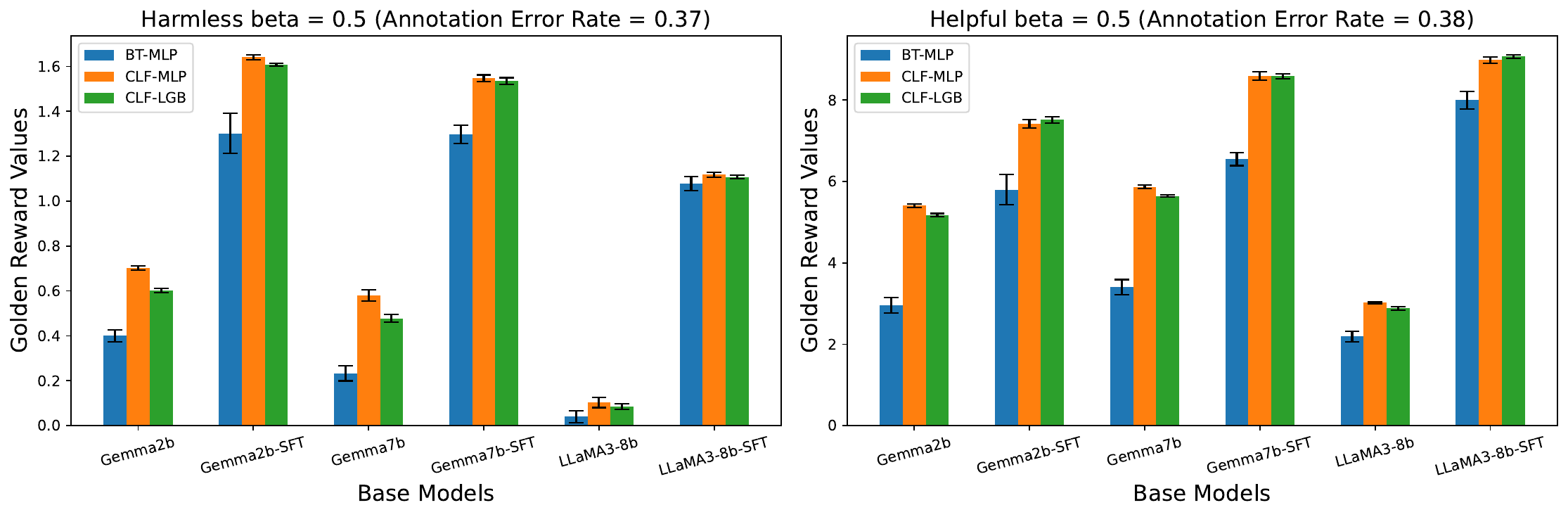}
    \includegraphics[width=0.65\linewidth]{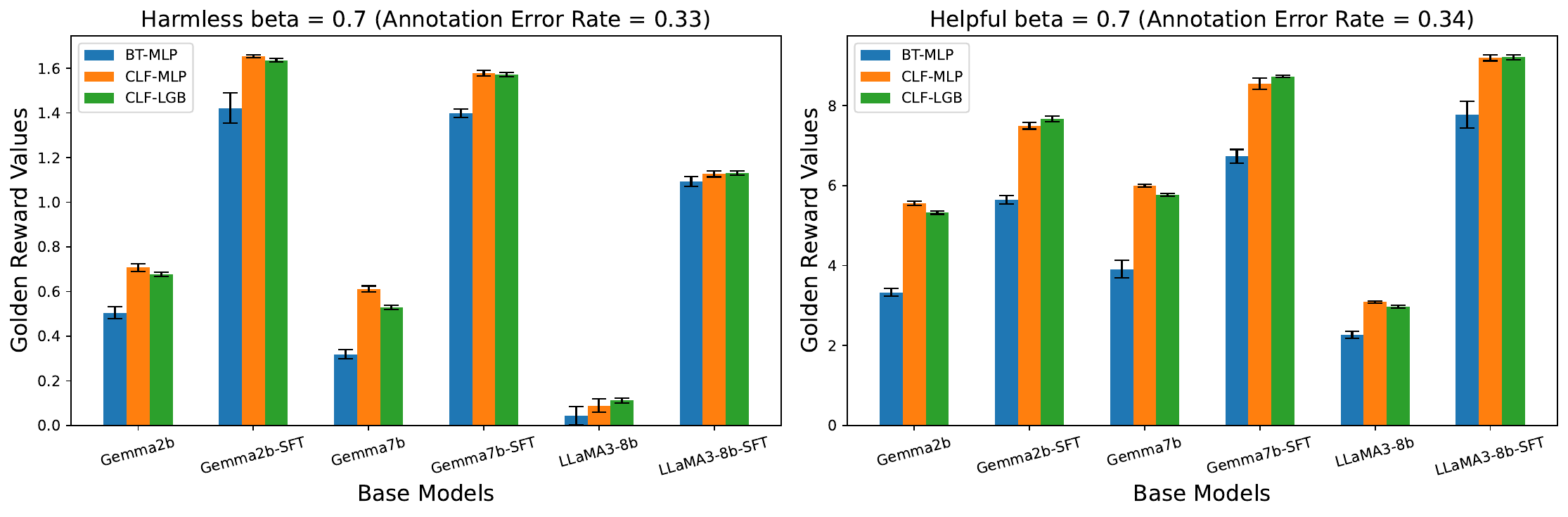}
    \includegraphics[width=0.65\linewidth]{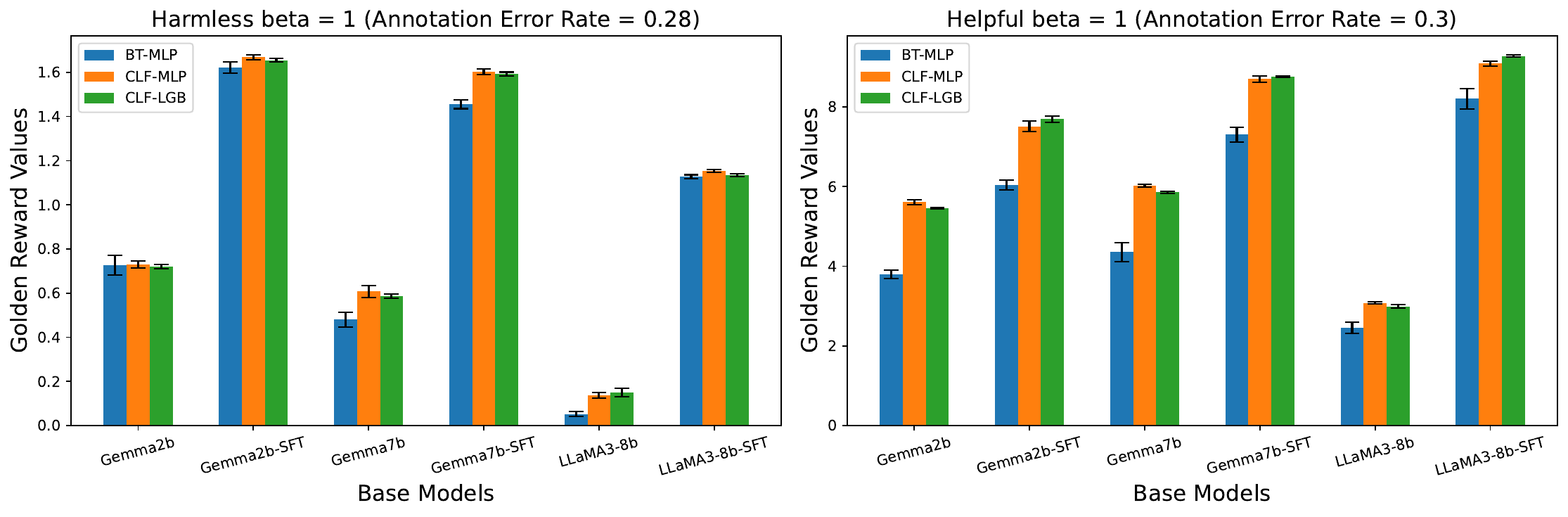}
    \includegraphics[width=0.65\linewidth]{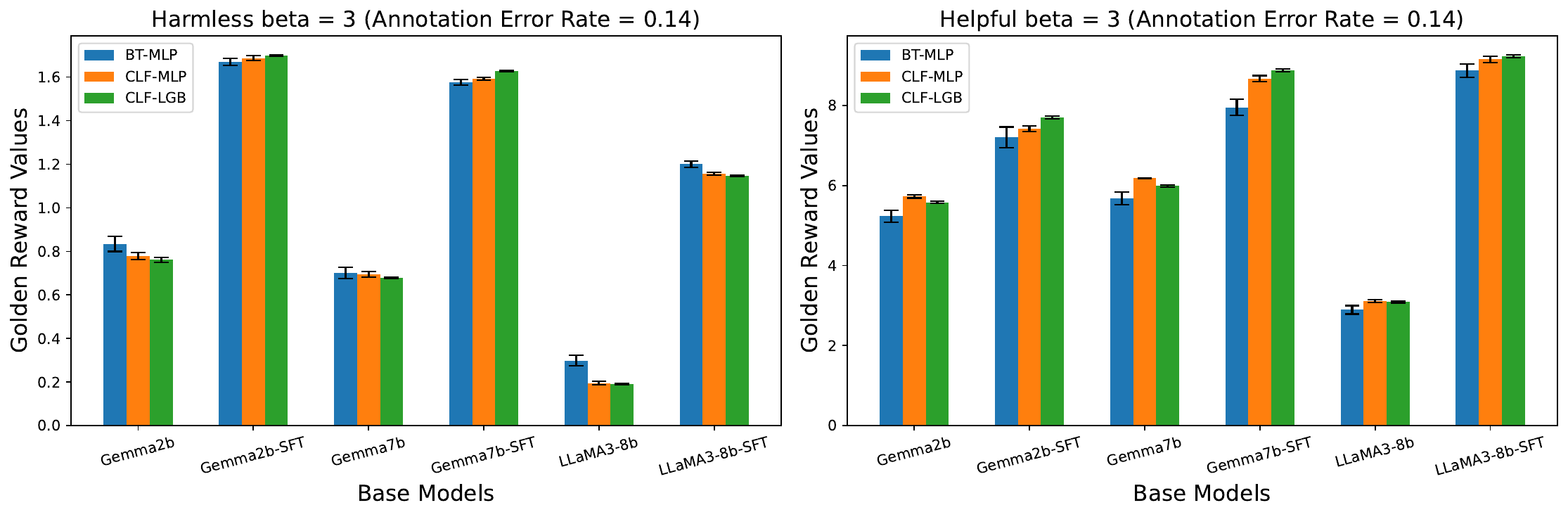}
    \includegraphics[width=0.65\linewidth]{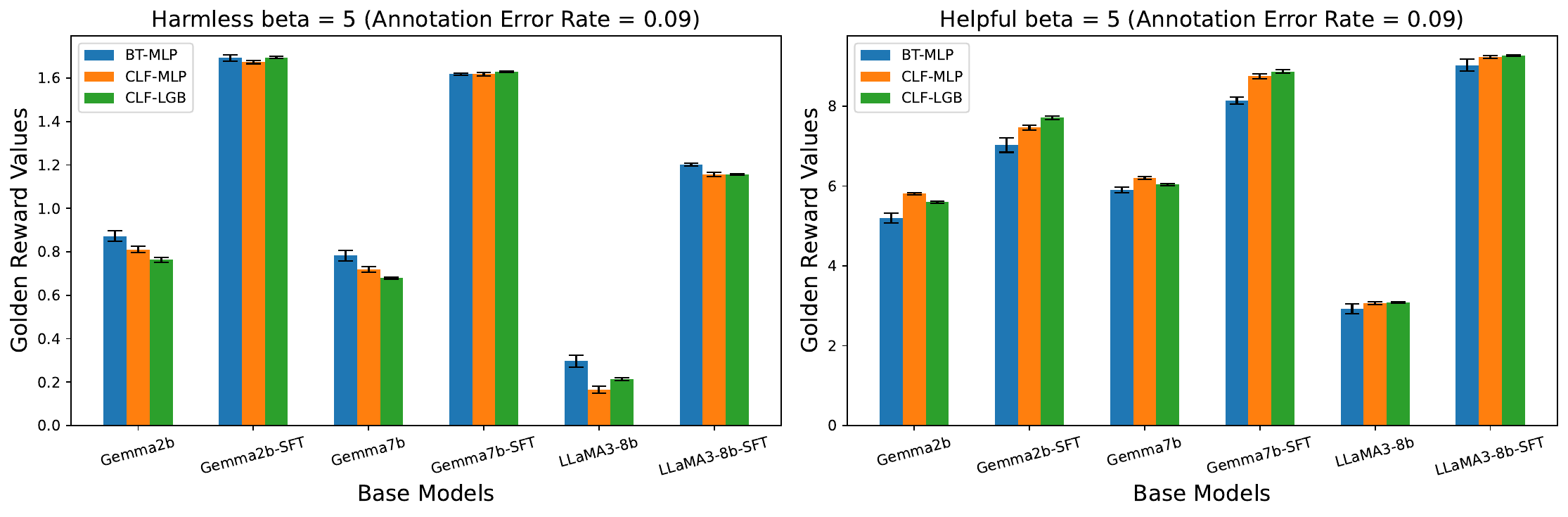}
    \includegraphics[width=0.65\linewidth]{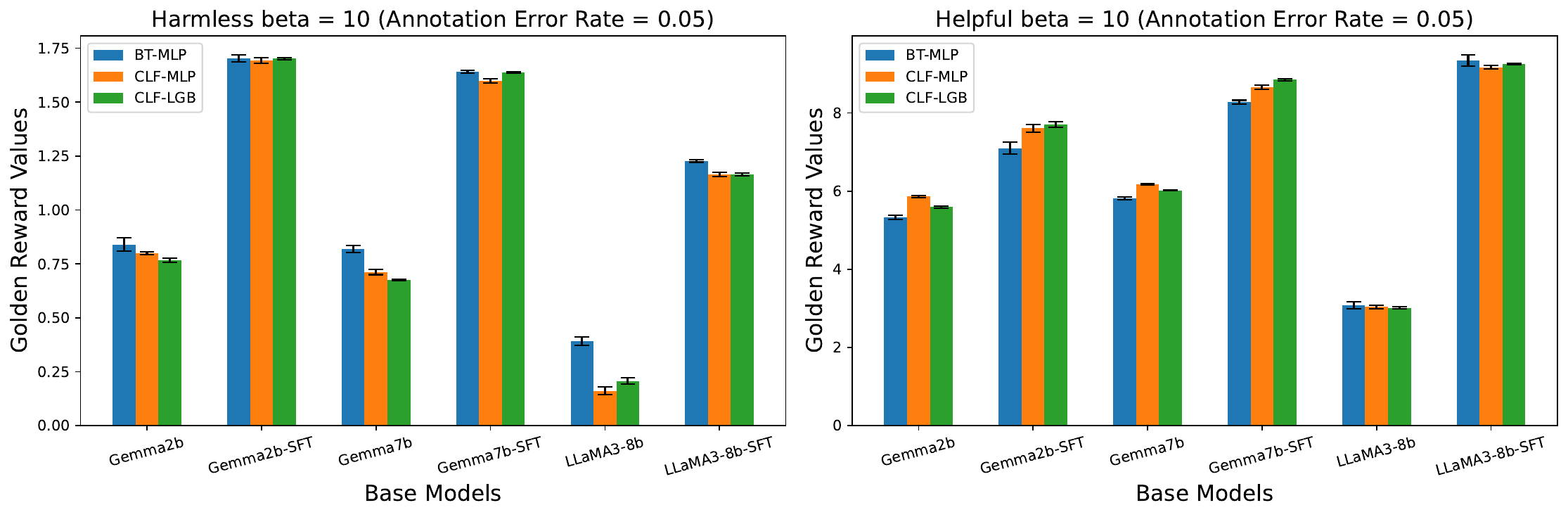}
    \caption{Experiment results in bar plots on changing the annotation quality. Error bars are given by 5 runs by changing seeds.}
    \label{fig:experiment_2_in_appendix}
\end{figure}

\paragraph{Figures reporting results on changing annotation qualities under different annotation availability.} Figure~\ref{fig:harmless_noise_over_size} and Figure~\ref{fig:helpful_noise_over_size}.
\begin{figure}
    \centering
    \begin{subfigure}{1.0\linewidth}
        \centering
        \includegraphics[width=\linewidth]{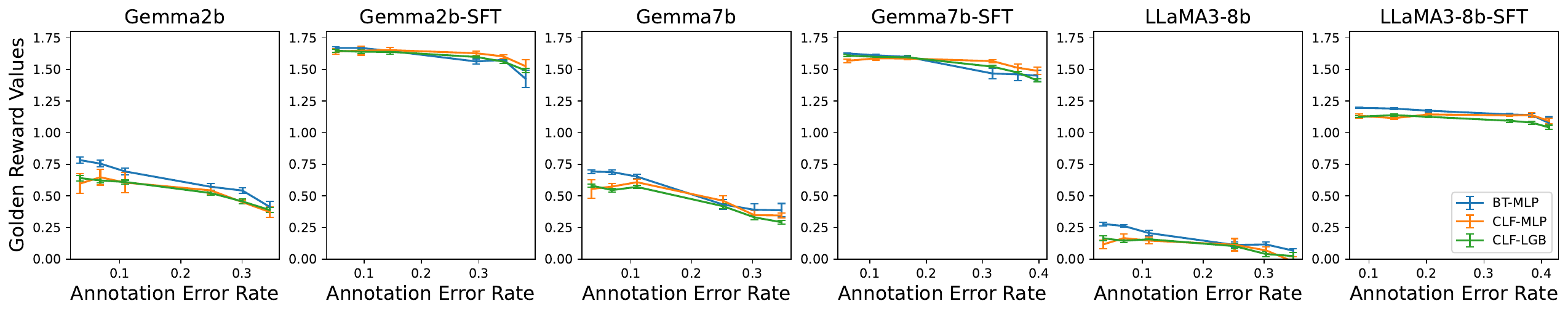}
        \caption{5000 annotations}
    \end{subfigure}
    \\
    \begin{subfigure}{1.0\linewidth}
        \centering
        \includegraphics[width=\linewidth]{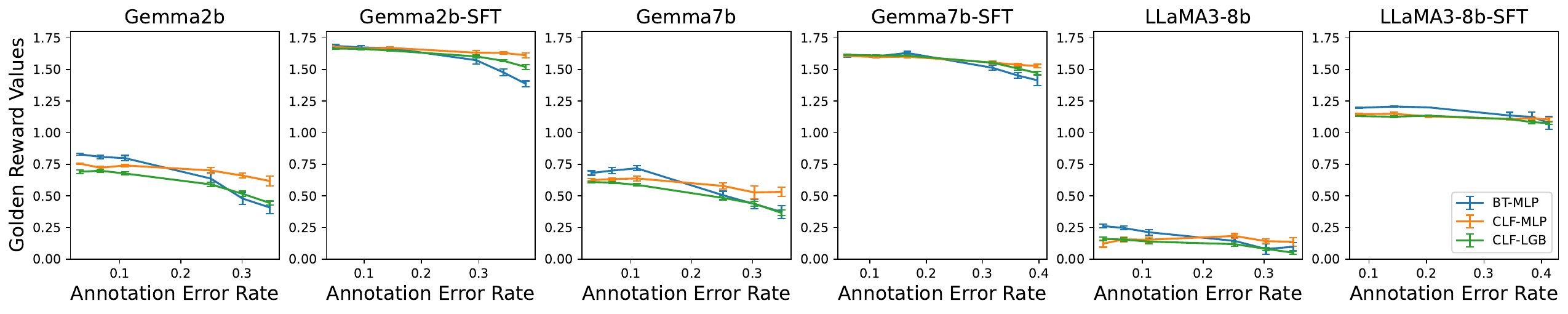}
        \caption{10000 annotations}
    \end{subfigure}
    \\
    \begin{subfigure}{1.0\linewidth}
        \centering
        \includegraphics[width=\linewidth]{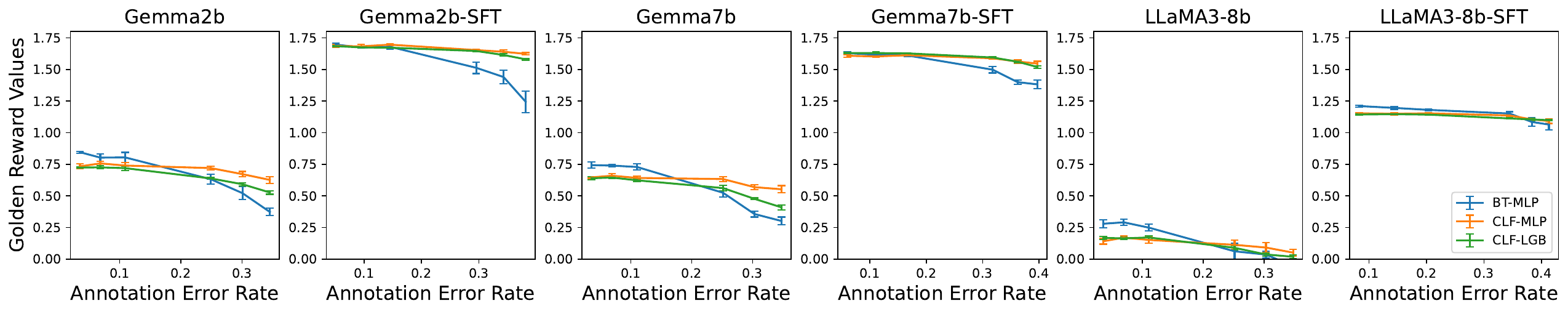}
        \caption{20000 annotations}
    \end{subfigure}
    \\
    \begin{subfigure}{1.0\linewidth}
        \centering
        \includegraphics[width=\linewidth]{figs0923/harmless_40000_experiment2_maintext.pdf}
        \caption{40000 annotations}
    \end{subfigure}
    \caption{Harmless Dataset: additional results on changing annotation quality under different annotation availability. Error bars are given by 5 runs by changing seeds.}
    \label{fig:harmless_noise_over_size}
\end{figure}

\begin{figure}
    \centering
    \begin{subfigure}{1.0\linewidth}
        \centering
        \includegraphics[width=\linewidth]{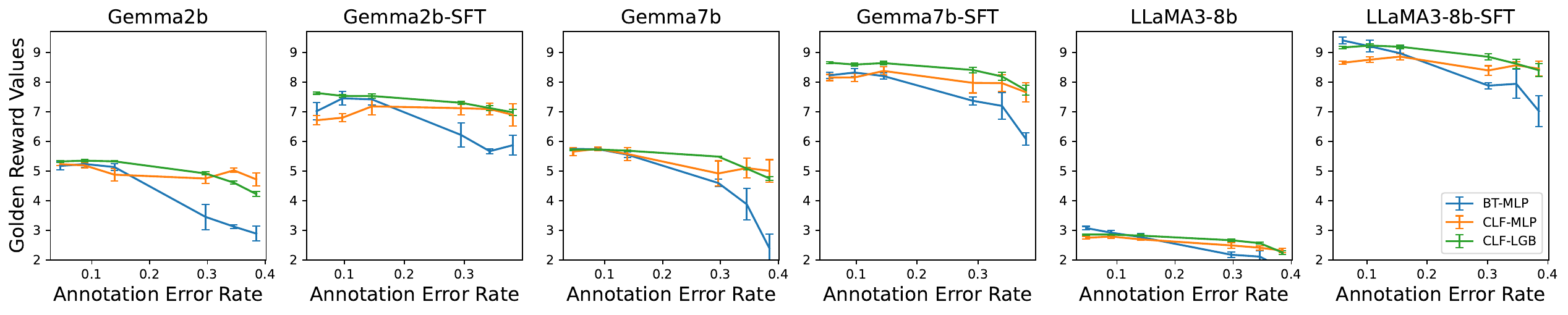}
        \caption{5000 annotations}
    \end{subfigure}
    \\
    \begin{subfigure}{1.0\linewidth}
        \centering
        \includegraphics[width=\linewidth]{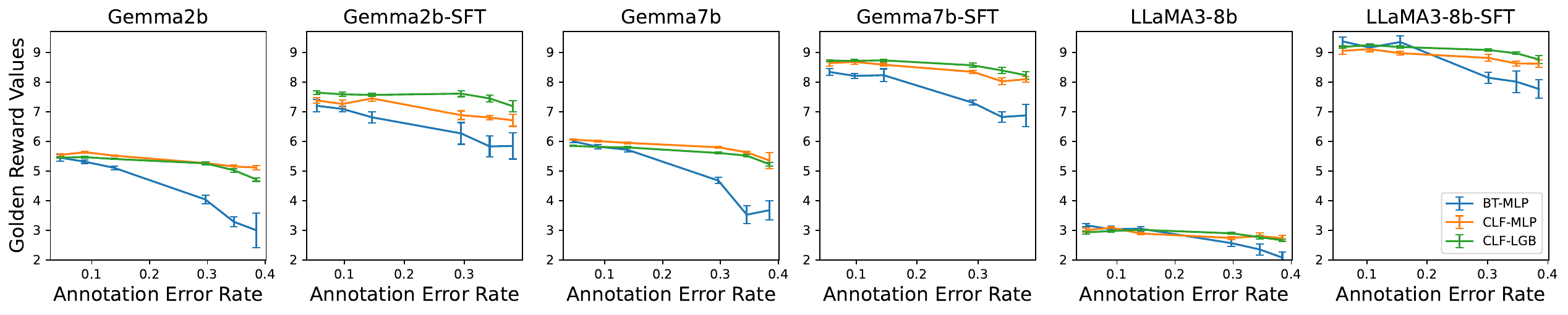}
        \caption{10000 annotations}
    \end{subfigure}
    \\
    \begin{subfigure}{1.0\linewidth}
        \centering
        \includegraphics[width=\linewidth]{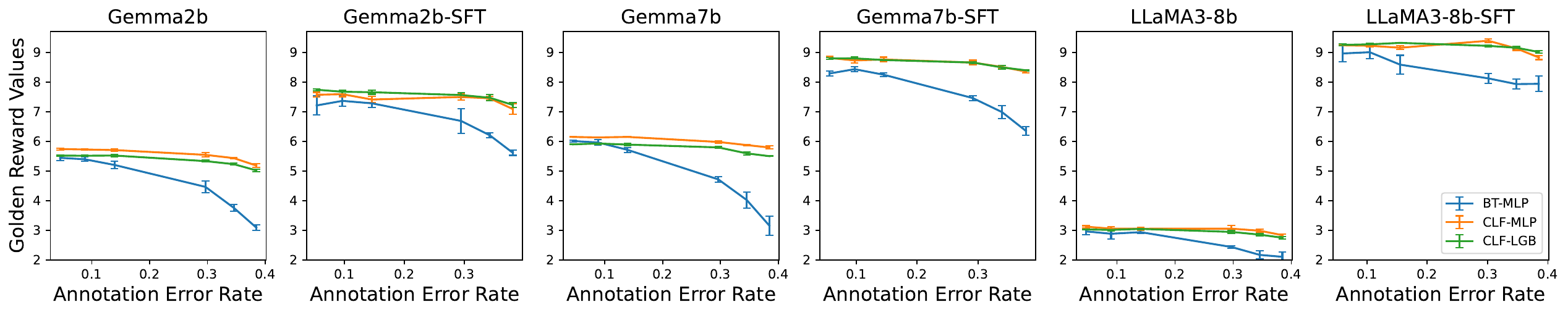}
        \caption{20000 annotations}
    \end{subfigure}
    \\
    \begin{subfigure}{1.0\linewidth}
        \centering
        \includegraphics[width=\linewidth]{figs0923/helpful_40000_experiment2_maintext.pdf}
        \caption{40000 annotations}
    \end{subfigure}
    \caption{Helpful Dataset: additional results on changing annotation quality under different annotation availability. Error bars are given by 5 runs by changing seeds.}
    \label{fig:helpful_noise_over_size}
\end{figure}

\paragraph{Figures reporting results on changing annotation availability under different annotation qualities.} Figure~\ref{fig:harmless_size_over_noise} and Figure~\ref{fig:helpful_size_over_noise}.

\begin{figure}
    \centering
    \begin{subfigure}{0.8\linewidth}
        \centering
        \includegraphics[width=\linewidth]{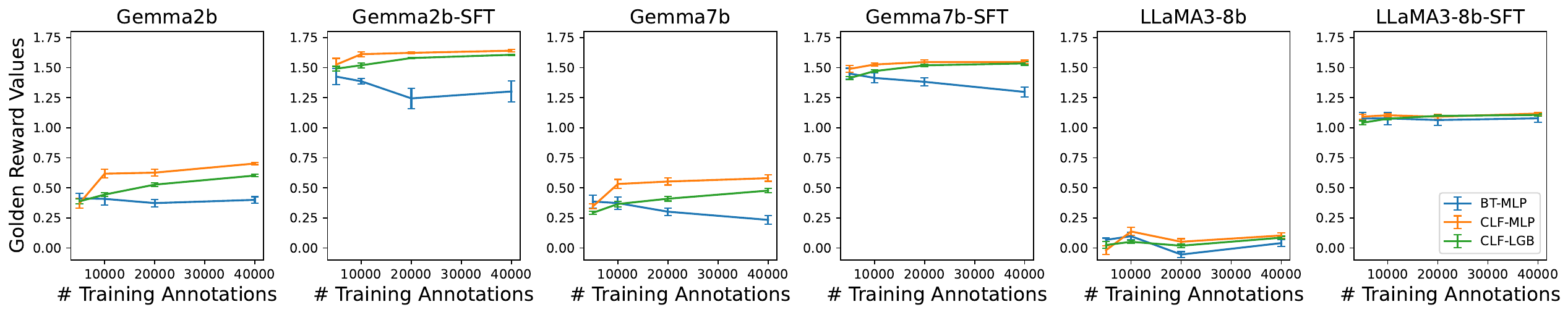}
        \caption{$\beta=0.5$}
    \end{subfigure}
    \\
    \begin{subfigure}{0.8\linewidth}
        \centering
        \includegraphics[width=\linewidth]{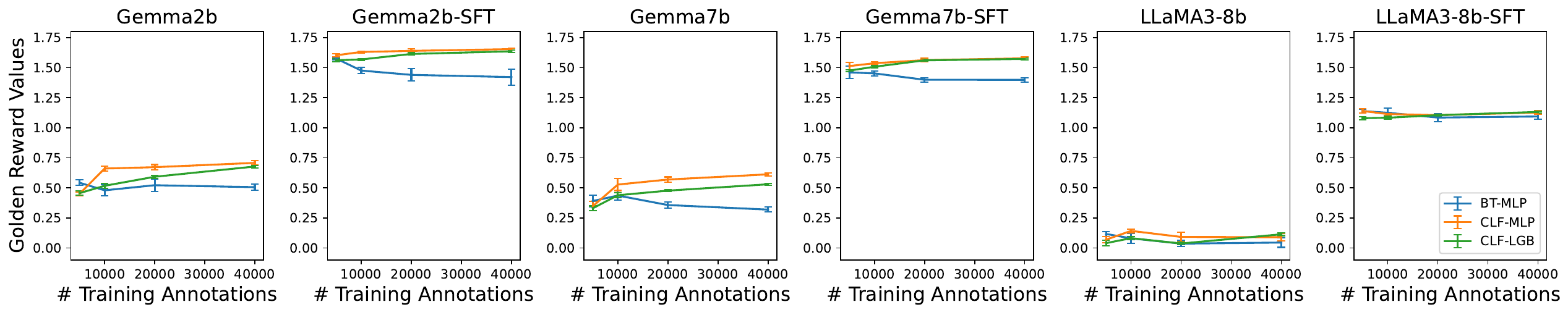}
        \caption{$\beta=0.7$}
    \end{subfigure}
    \\
    \begin{subfigure}{0.8\linewidth}
        \centering
        \includegraphics[width=\linewidth]{figs0923/size_harmless_1_0_experiment2_maintext.pdf}
        \caption{$\beta=1.0$}
    \end{subfigure}
    \\
    \begin{subfigure}{0.8\linewidth}
        \centering
        \includegraphics[width=\linewidth]{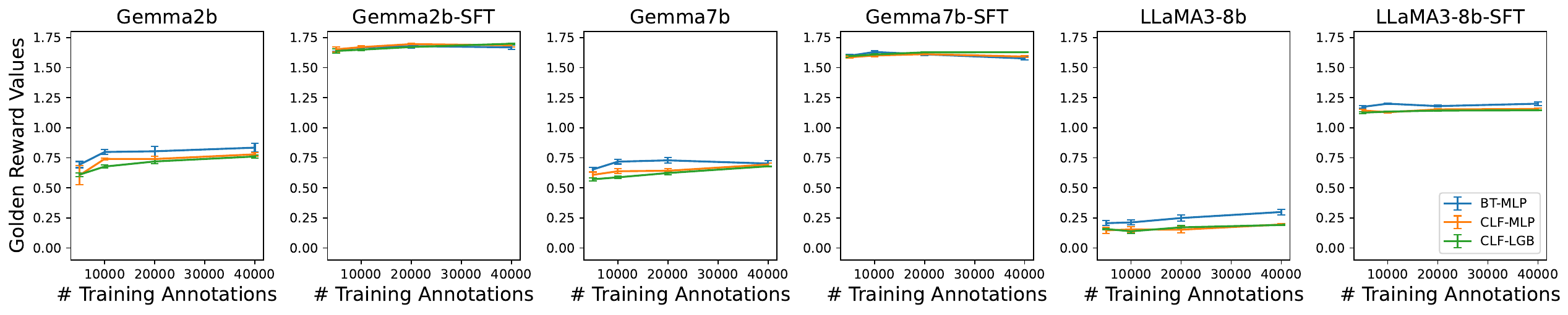}
        \caption{$\beta=3.0$}
    \end{subfigure}
    \\
    \begin{subfigure}{0.8\linewidth}
        \centering
        \includegraphics[width=\linewidth]{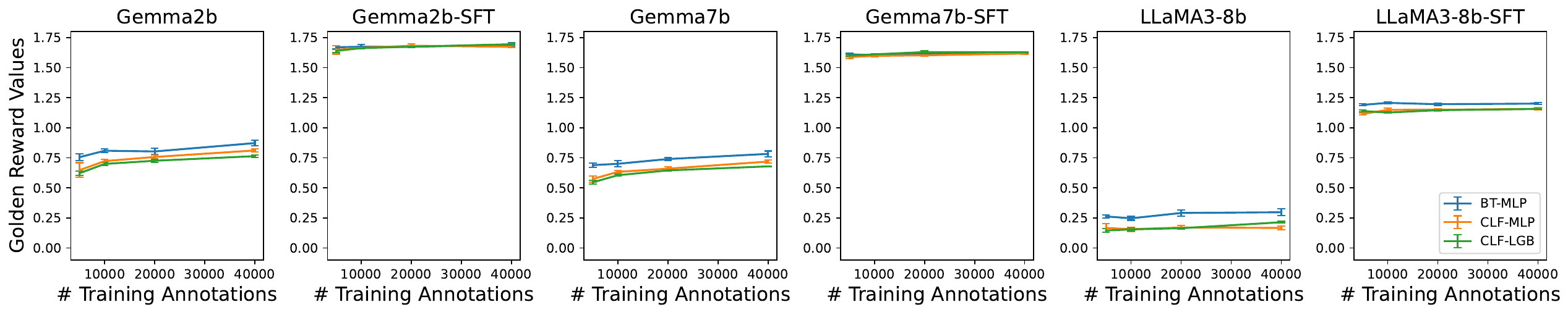}
        \caption{$\beta=5.0$}
    \end{subfigure}
    \\
    \begin{subfigure}{0.8\linewidth}
        \centering
        \includegraphics[width=\linewidth]{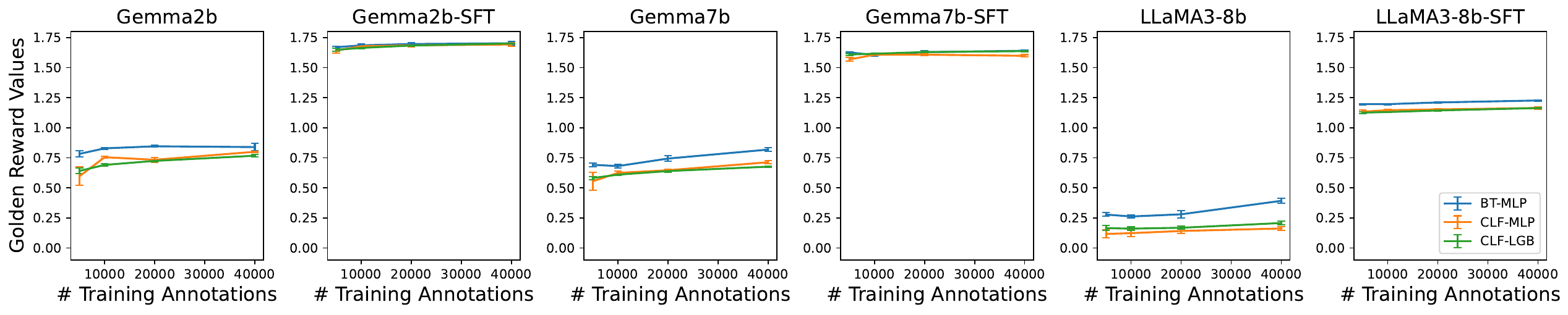}
        \caption{$\beta=10.0$}
    \end{subfigure}
    \caption{Harmless Dataset: additional results on changing annotation availability under different annotation quality. Error bars are given by 5 runs by changing seeds.}
    \label{fig:harmless_size_over_noise}
\end{figure}

\begin{figure}
    \centering
    \begin{subfigure}{0.8\linewidth}
        \centering
        \includegraphics[width=\linewidth]{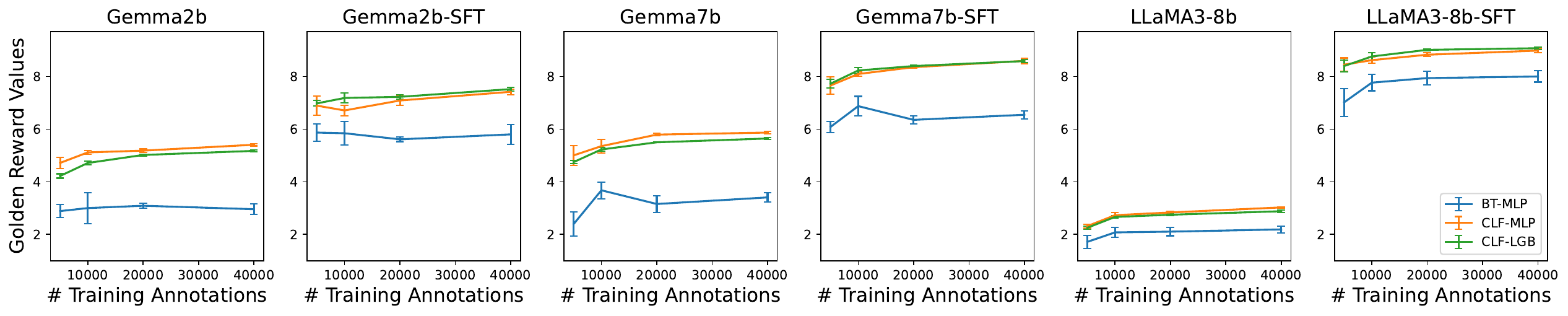}
        \caption{$\beta=0.5$}
    \end{subfigure}
    \\
    \begin{subfigure}{0.8\linewidth}
        \centering
        \includegraphics[width=\linewidth]{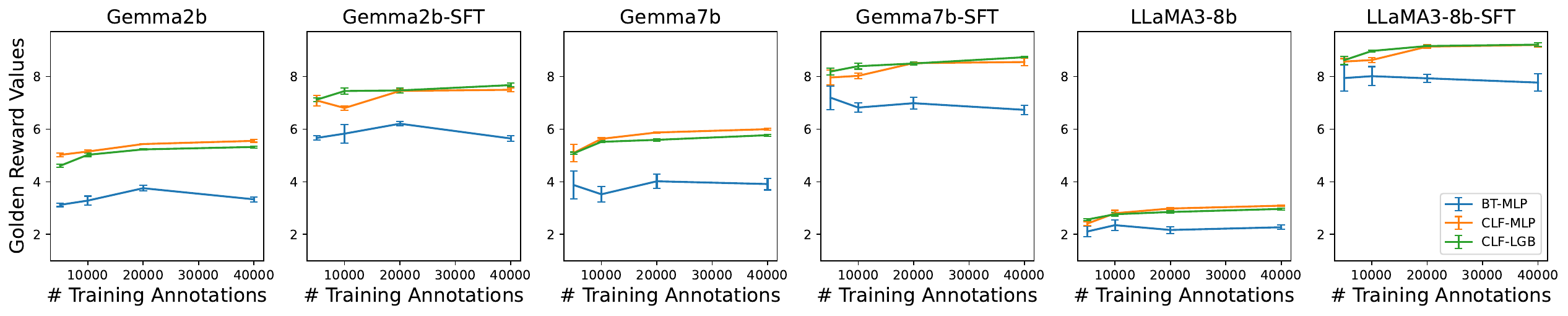}
        \caption{$\beta=0.7$}
    \end{subfigure}
    \\
    \begin{subfigure}{0.8\linewidth}
        \centering
        \includegraphics[width=\linewidth]{figs0923/size_helpful_1_0_experiment2_maintext.pdf}
        \caption{$\beta=1.0$}
    \end{subfigure}
    \\
    \begin{subfigure}{0.8\linewidth}
        \centering
        \includegraphics[width=\linewidth]{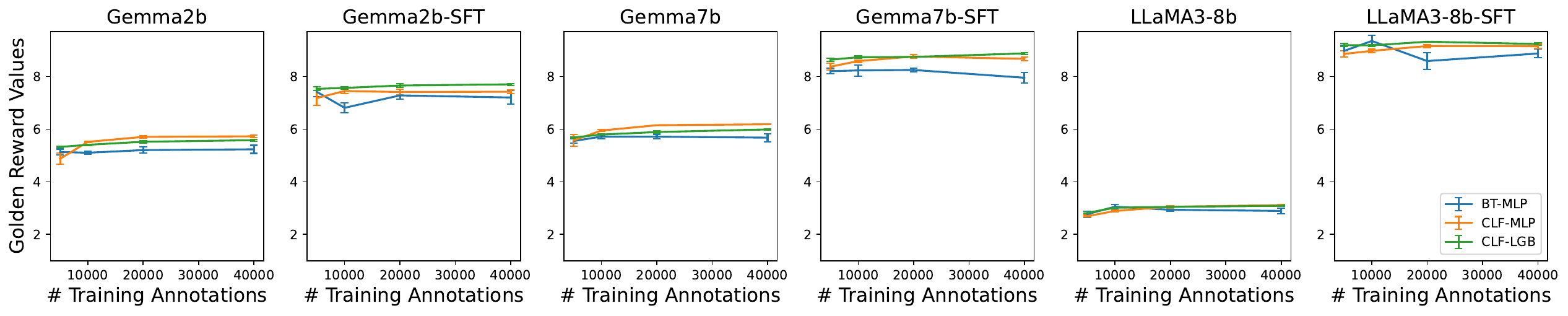}
        \caption{$\beta=3.0$}
    \end{subfigure}
    \\
    \begin{subfigure}{0.8\linewidth}
        \centering
        \includegraphics[width=\linewidth]{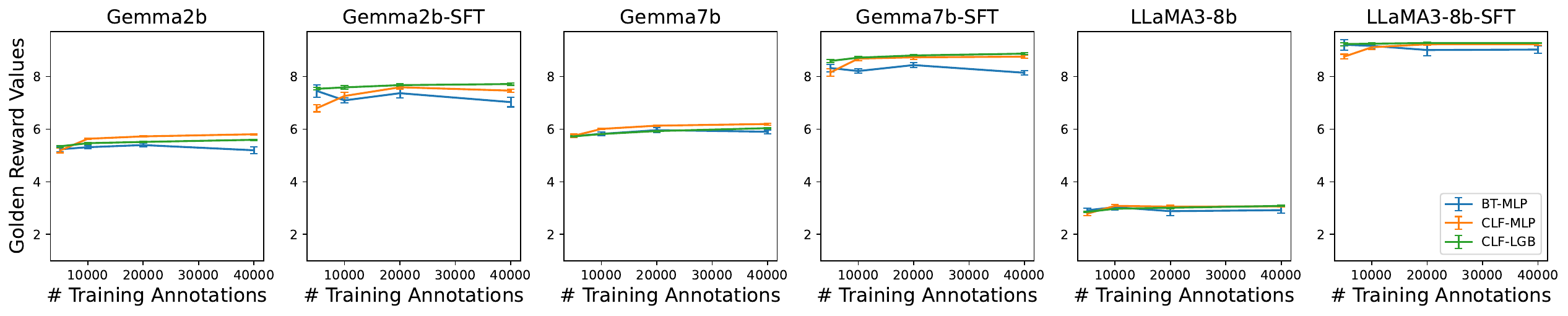}
        \caption{$\beta=5.0$}
    \end{subfigure}
    \\
    \begin{subfigure}{0.8\linewidth}
        \centering
        \includegraphics[width=\linewidth]{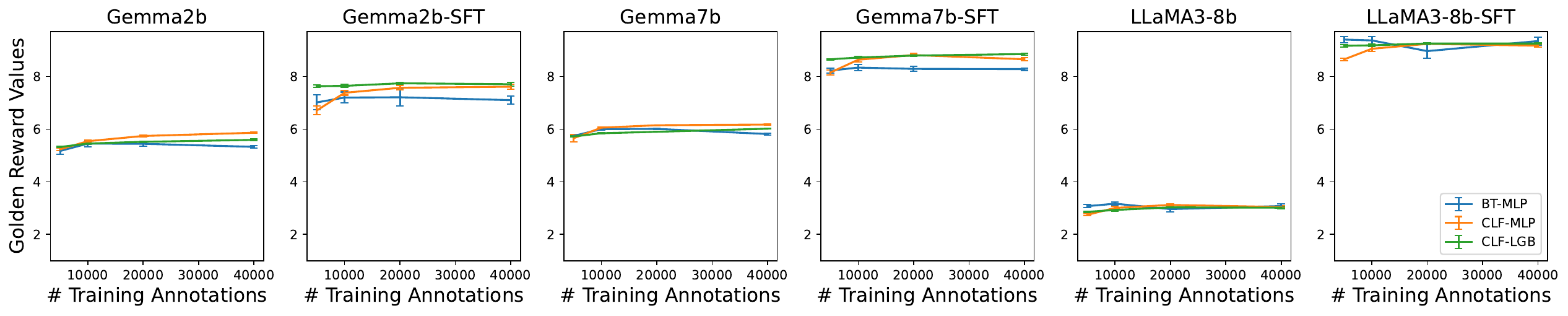}
        \caption{$\beta=10.0$}
    \end{subfigure}
    \caption{Helpful Dataset: additional results on changing annotation availability under different annotation quality. Error bars are given by 5 runs by changing seeds.}
    \label{fig:helpful_size_over_noise}
\end{figure}

\paragraph{Figures reporting cross-prompt annotation results on changing annotation availability and annotation quality.} Figure~\ref{fig:exp3_appdx_noise1} to Figure~\ref{fig:exp3_appdx_noise4}

\begin{figure}[h!]
    \centering
    \includegraphics[width=0.49\linewidth]{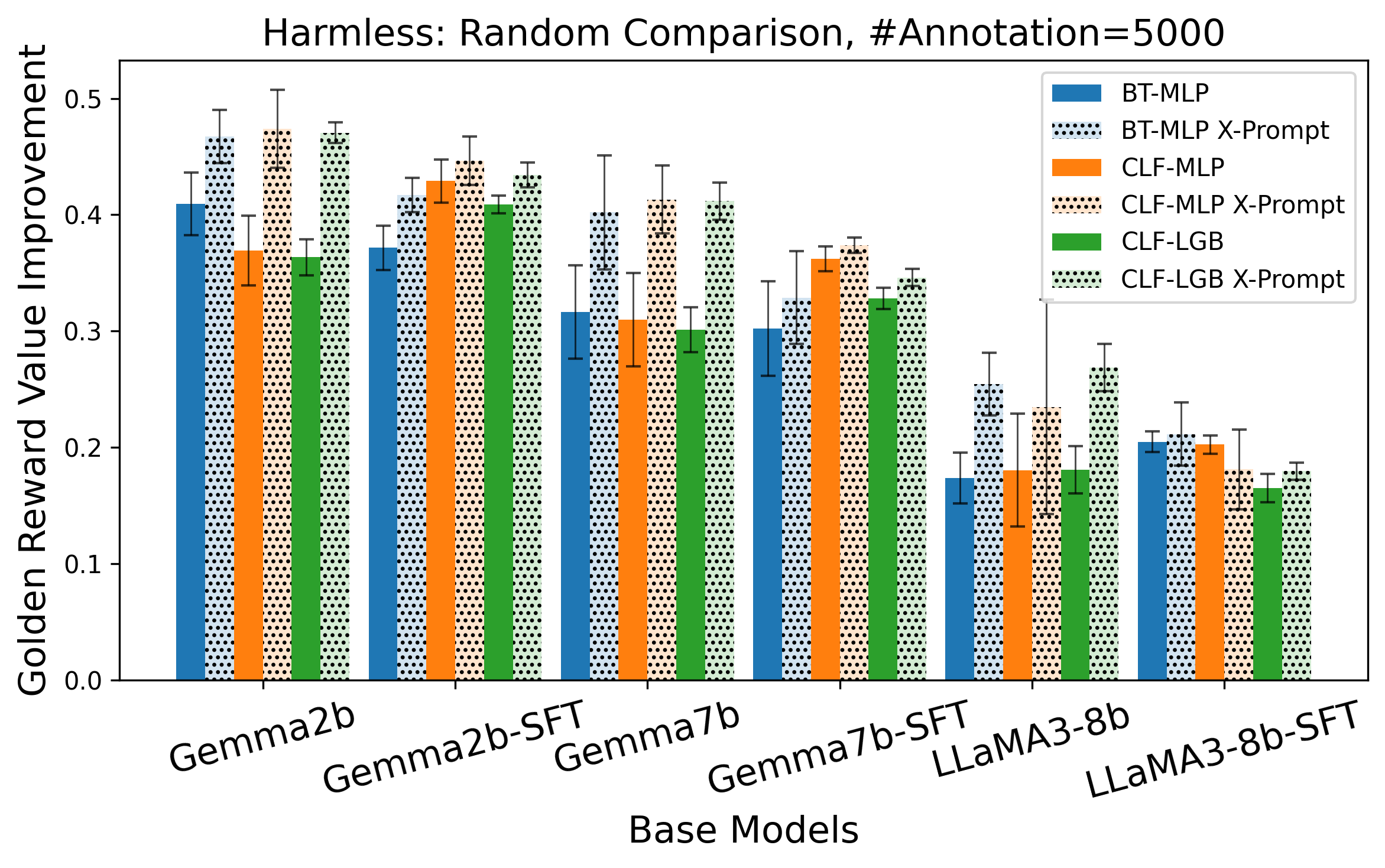}
    \includegraphics[width=0.49\linewidth]{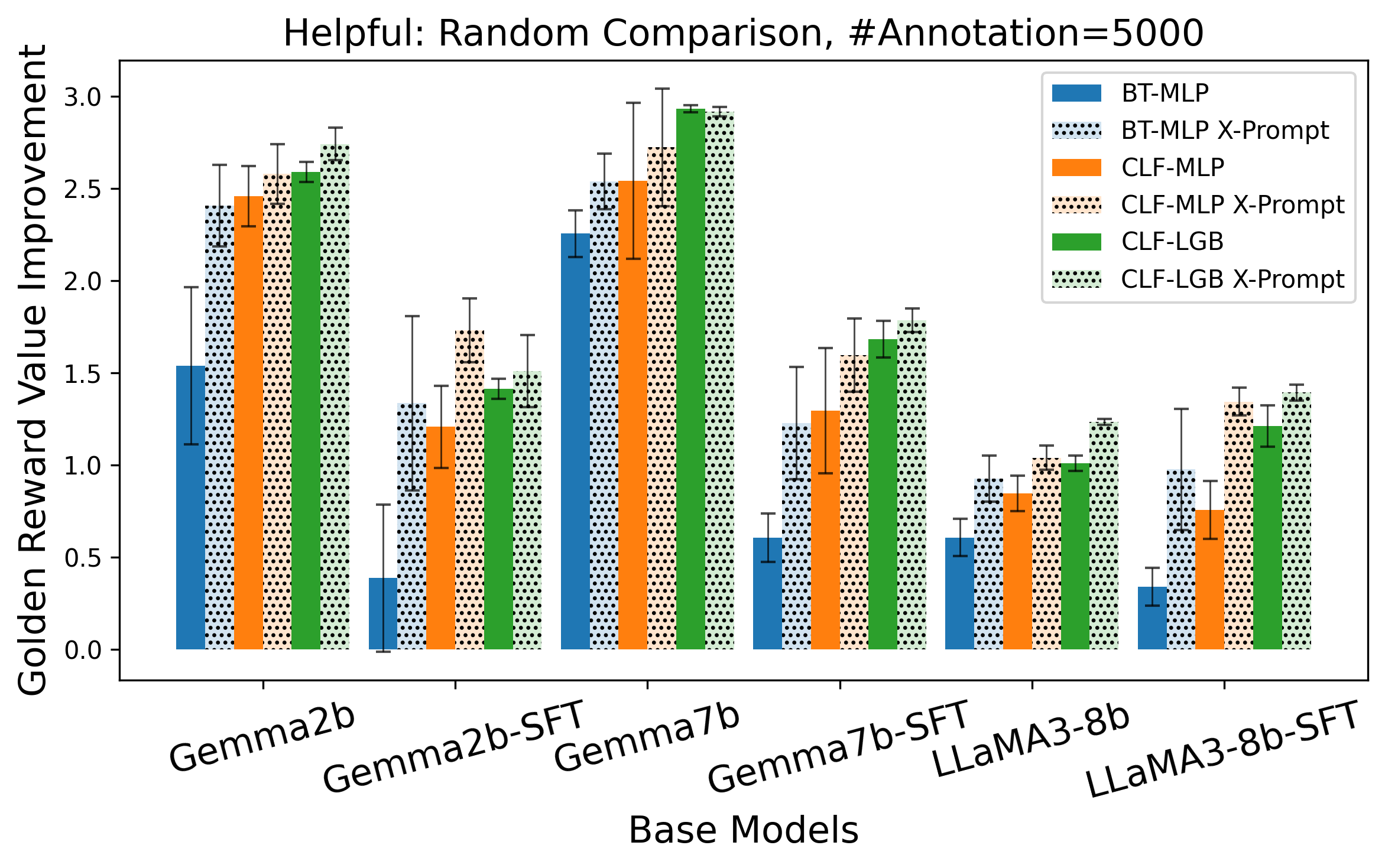}
    \includegraphics[width=0.49\linewidth]{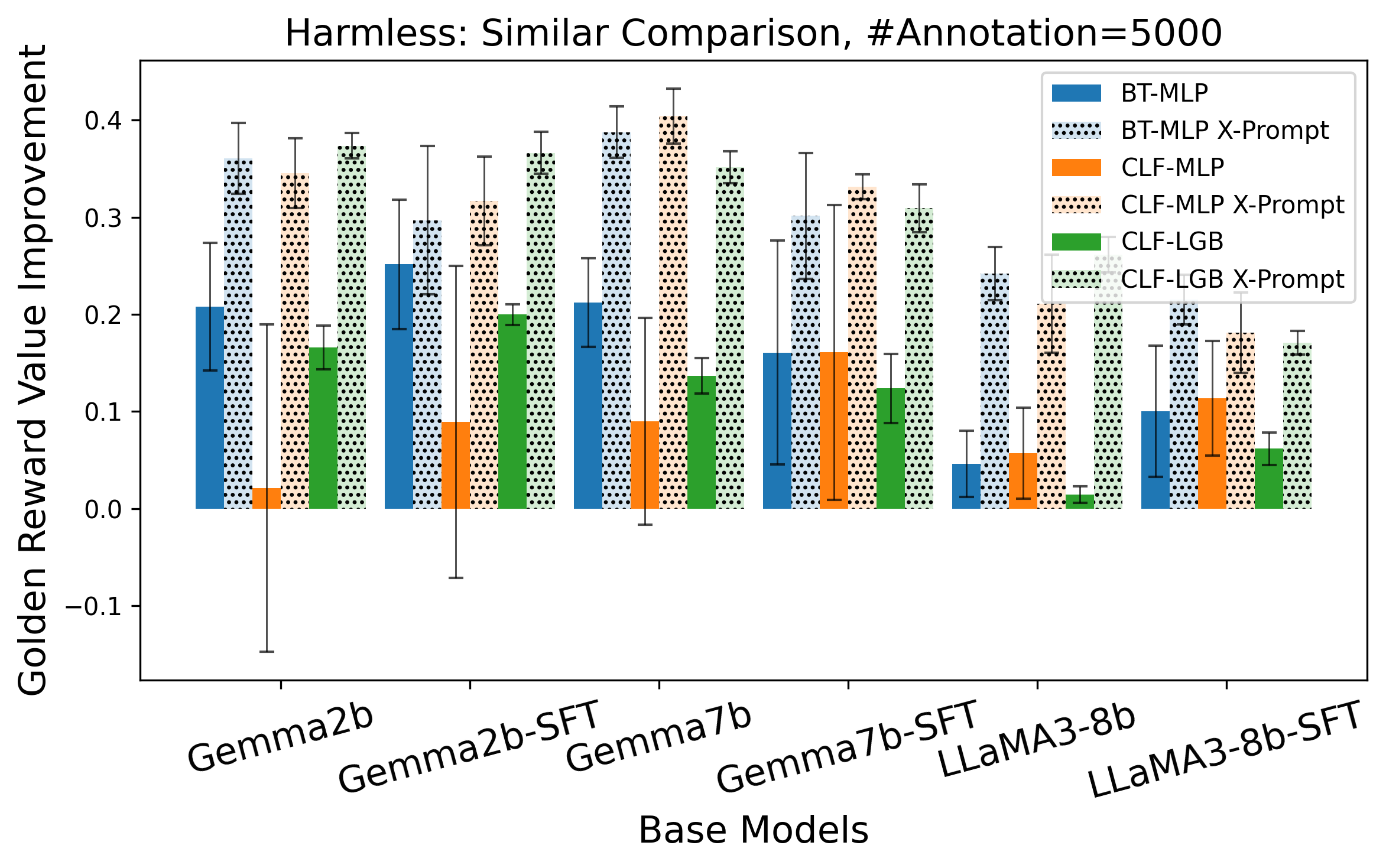}
    \includegraphics[width=0.49\linewidth]{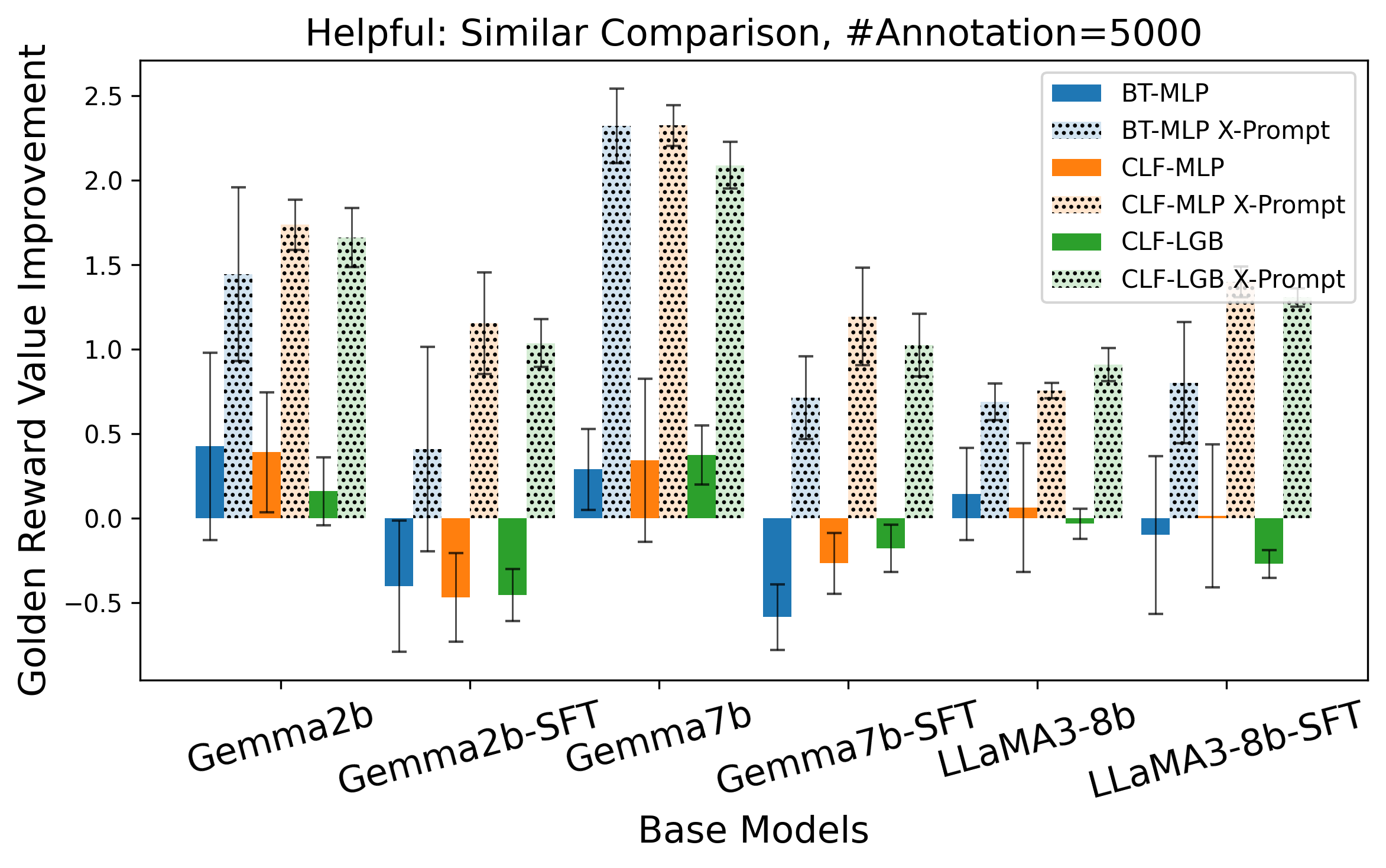}
    \includegraphics[width=0.49\linewidth]{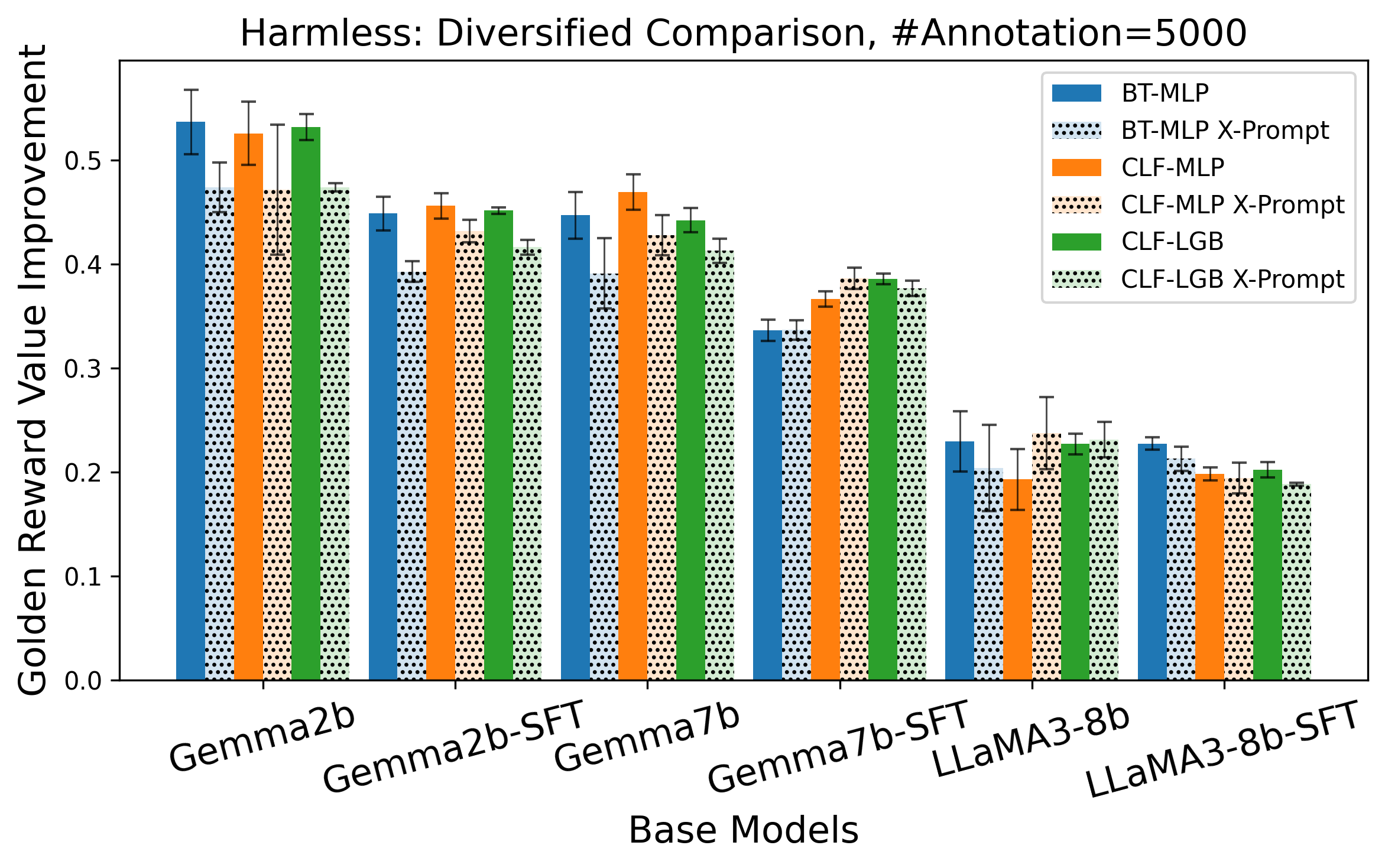}
    \includegraphics[width=0.49\linewidth]{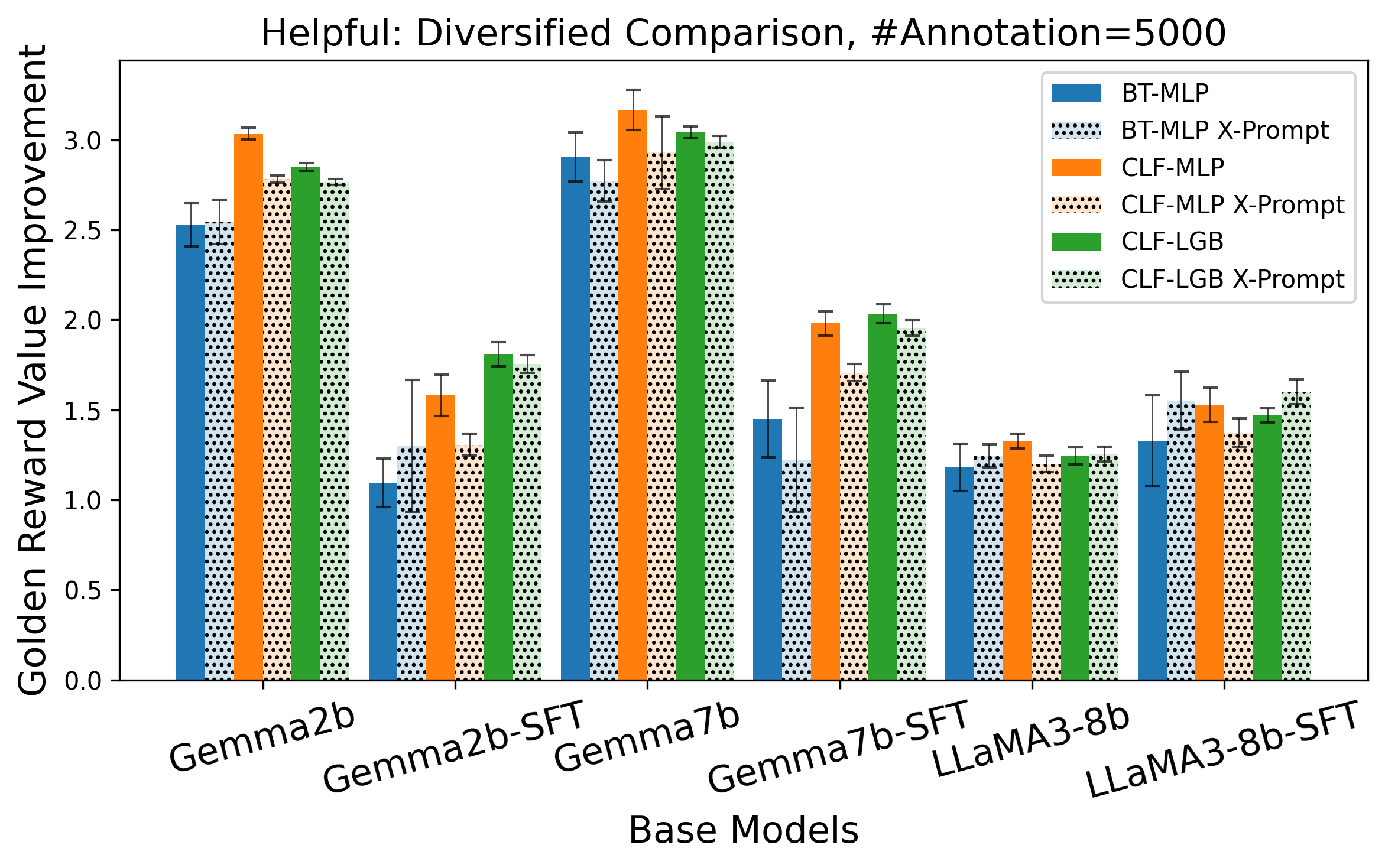}
    \caption{Results on cross-prompt comparisons, with $5000$ annotations, $\beta=1$. Error bars are given by 5 runs by changing seeds.}
    \label{fig:exp3_appdx_noise1}
\end{figure}

\begin{figure}[h!]
    \centering
    \includegraphics[width=0.49\linewidth]{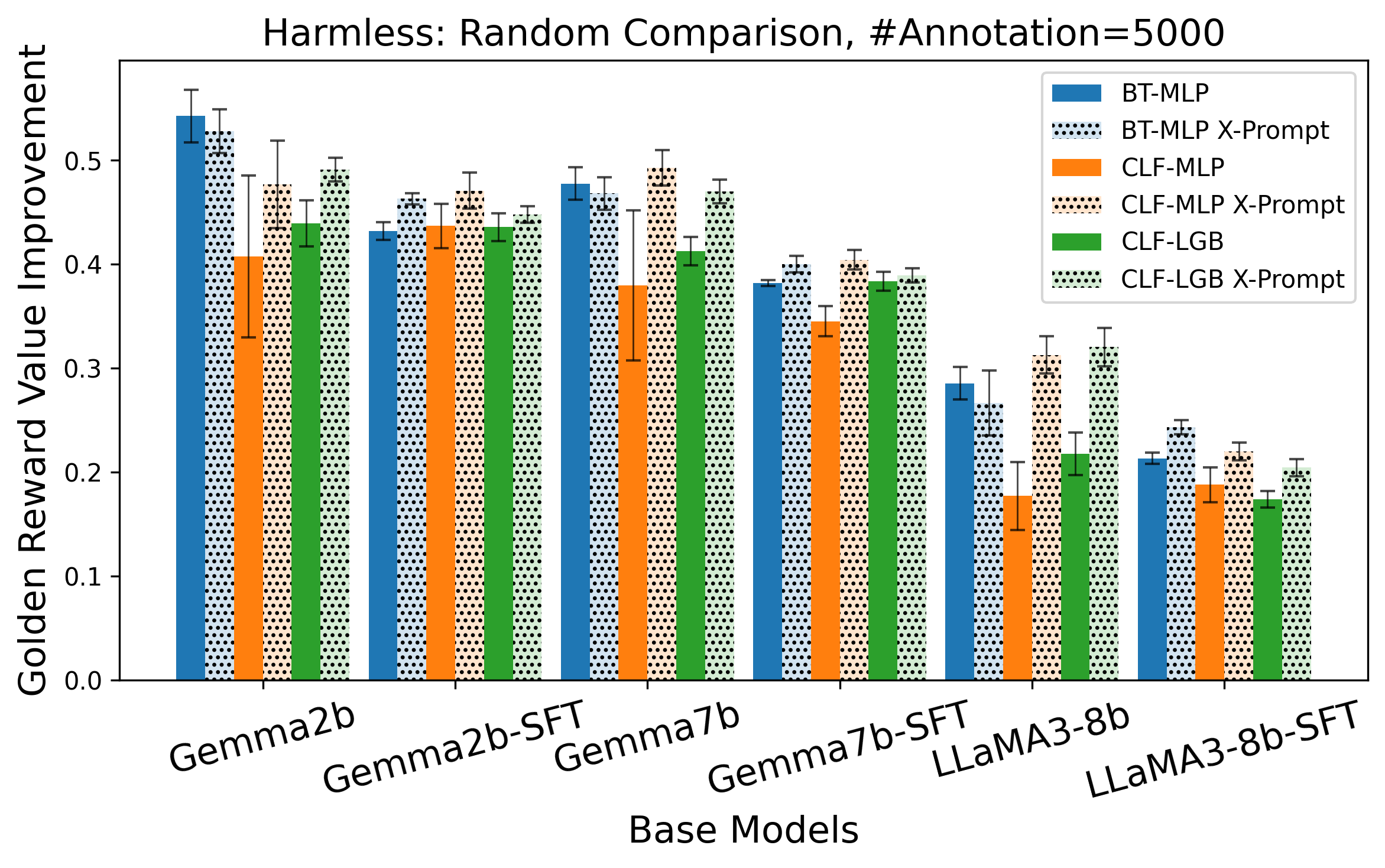}
    \includegraphics[width=0.49\linewidth]{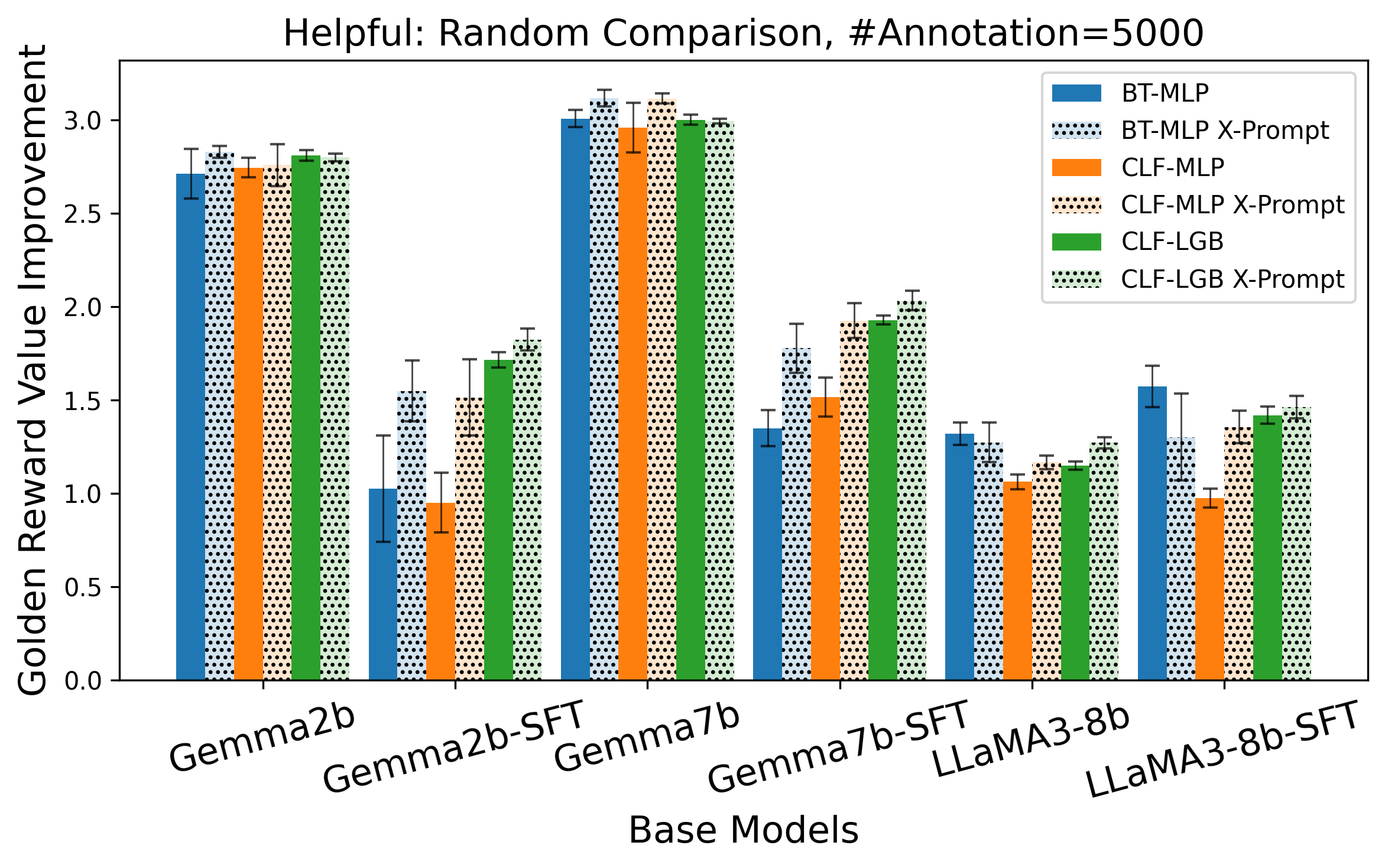}
    \includegraphics[width=0.49\linewidth]{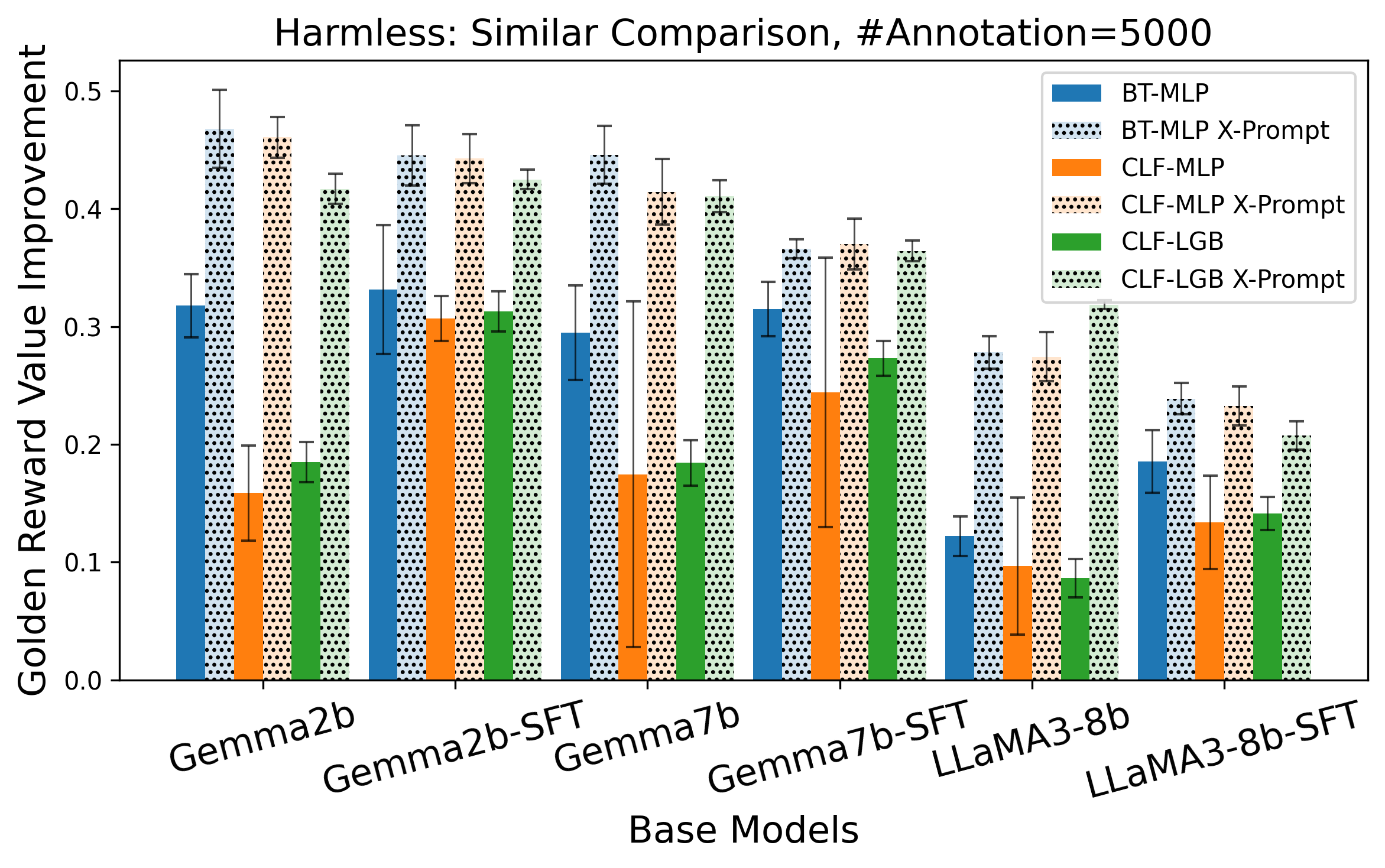}
    \includegraphics[width=0.49\linewidth]{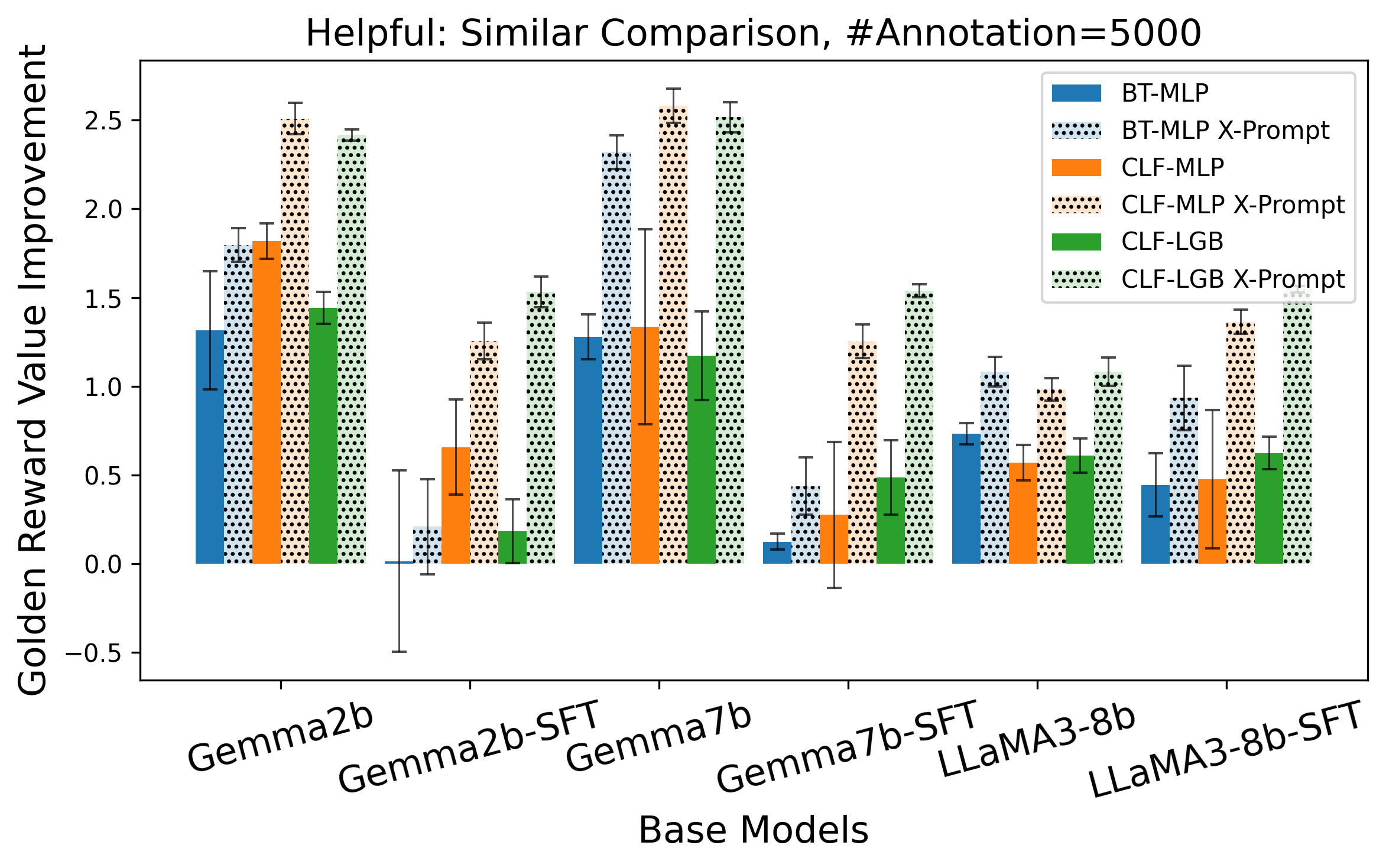}
    \includegraphics[width=0.49\linewidth]{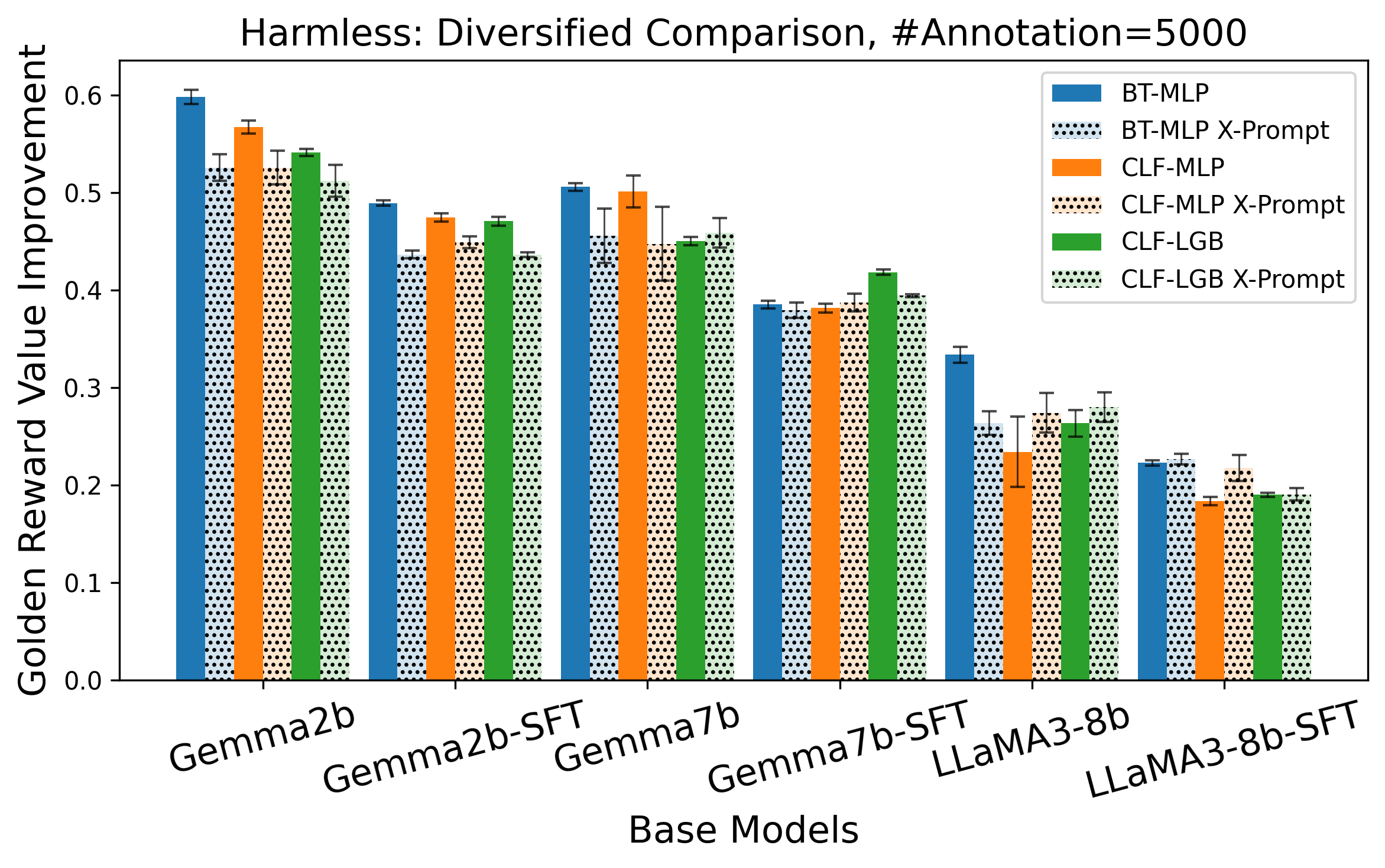}
    \includegraphics[width=0.49\linewidth]{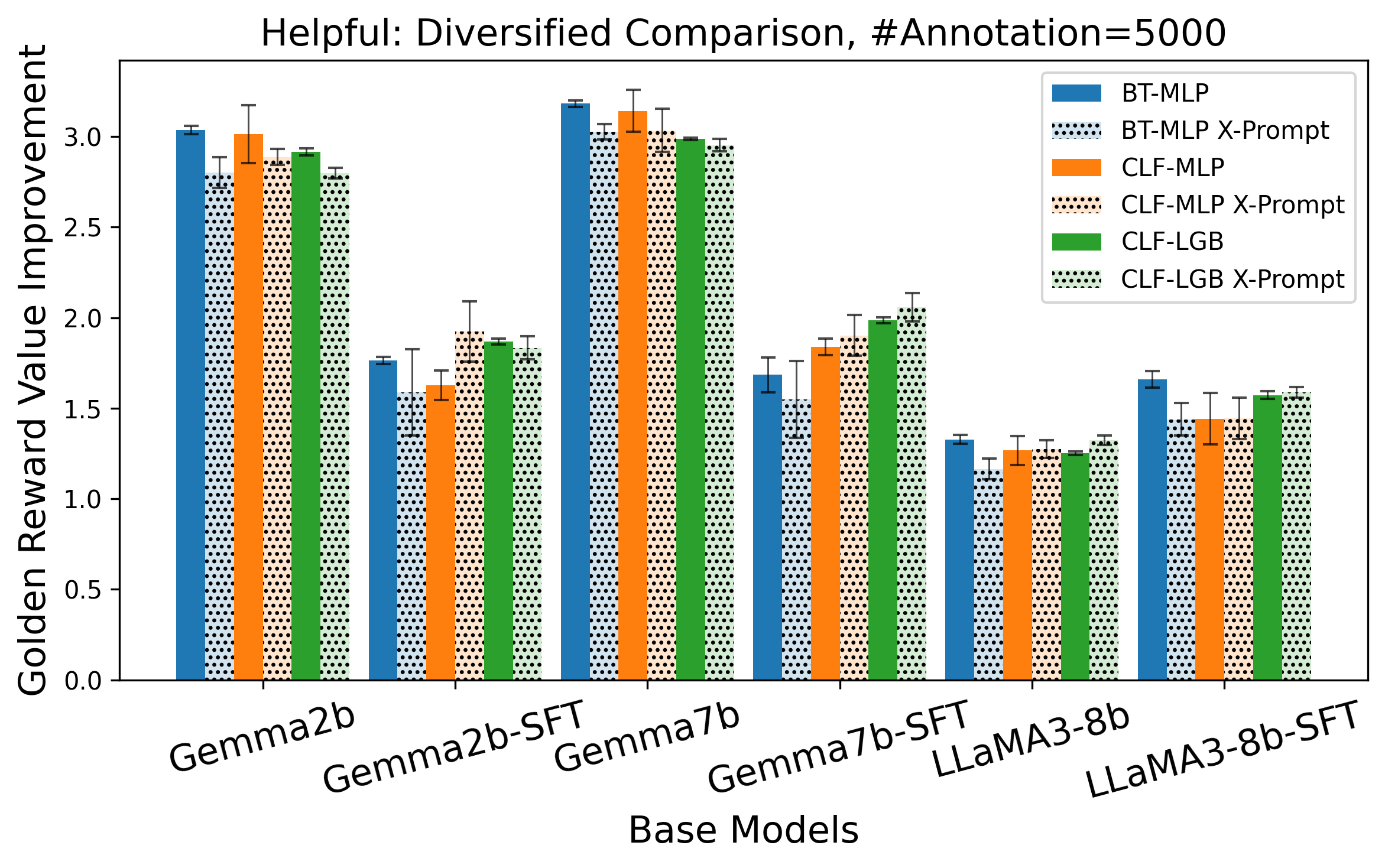}
    \caption{Results on cross-prompt comparisons, with $5000$ annotations, $\beta=10$. Error bars are given by 5 runs by changing seeds.}
    \label{fig:exp3_appdx_noise2}
\end{figure}

\begin{figure}[h!]
    \centering
    \includegraphics[width=0.49\linewidth]{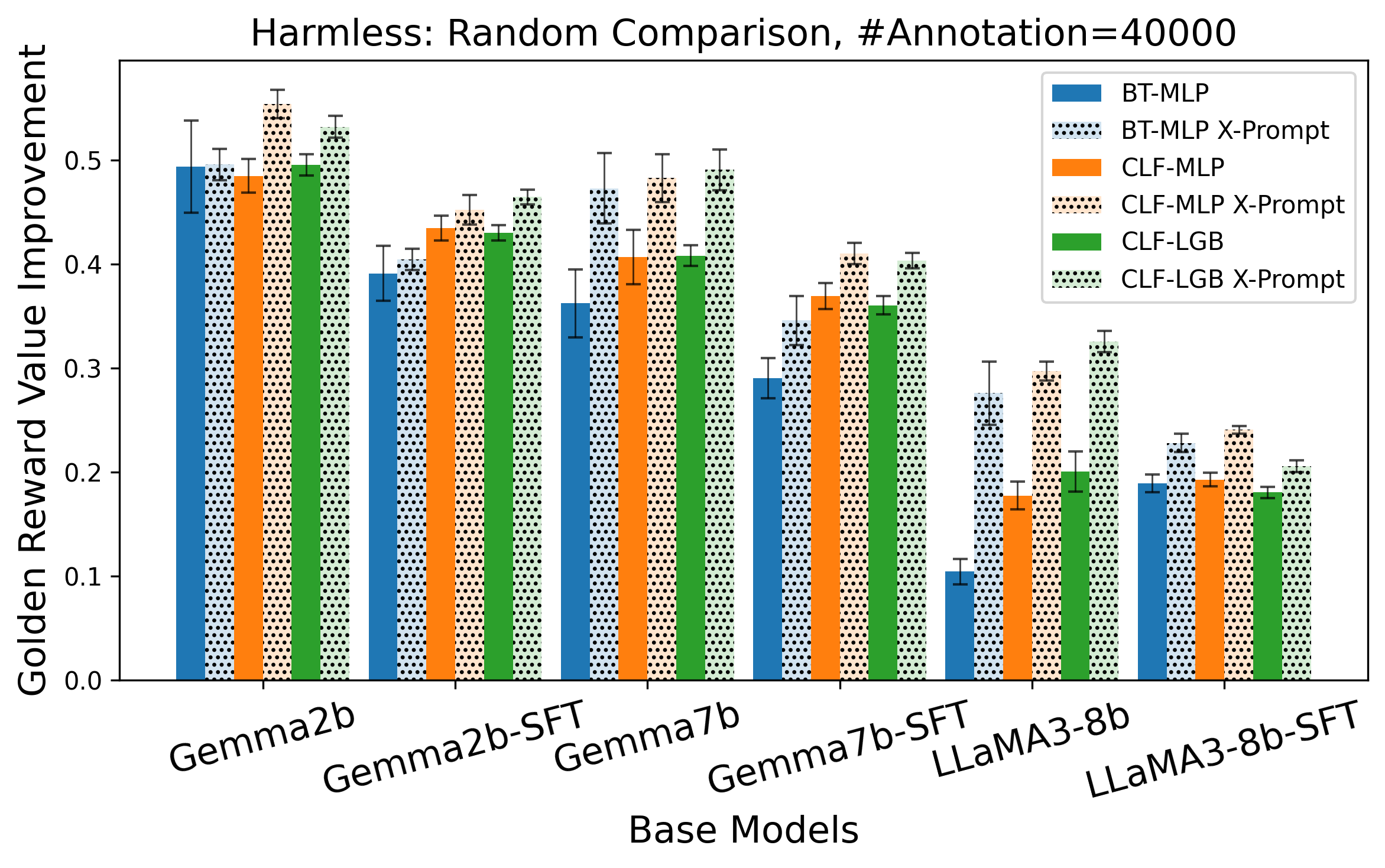}
    \includegraphics[width=0.49\linewidth]{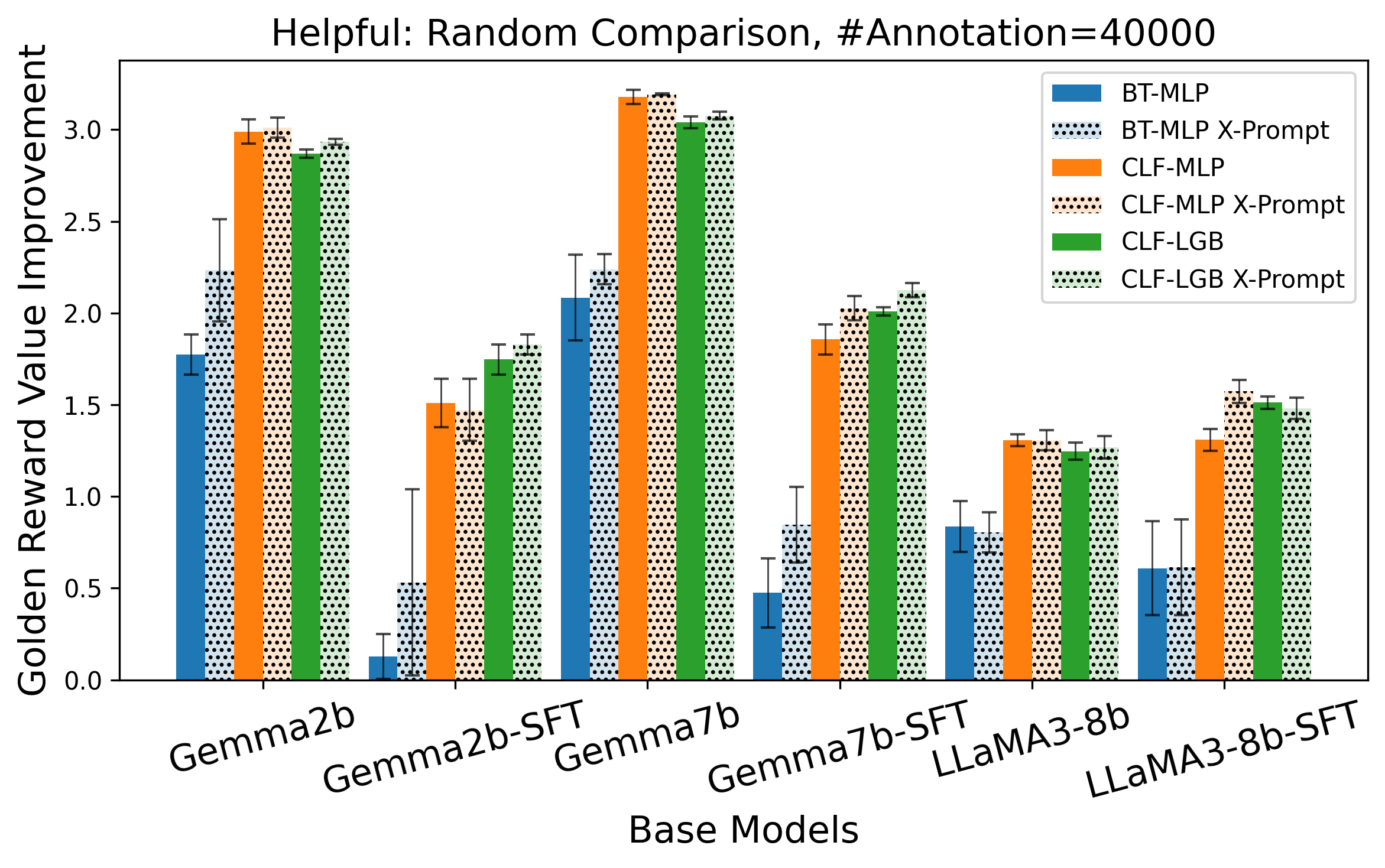}
    \includegraphics[width=0.49\linewidth]{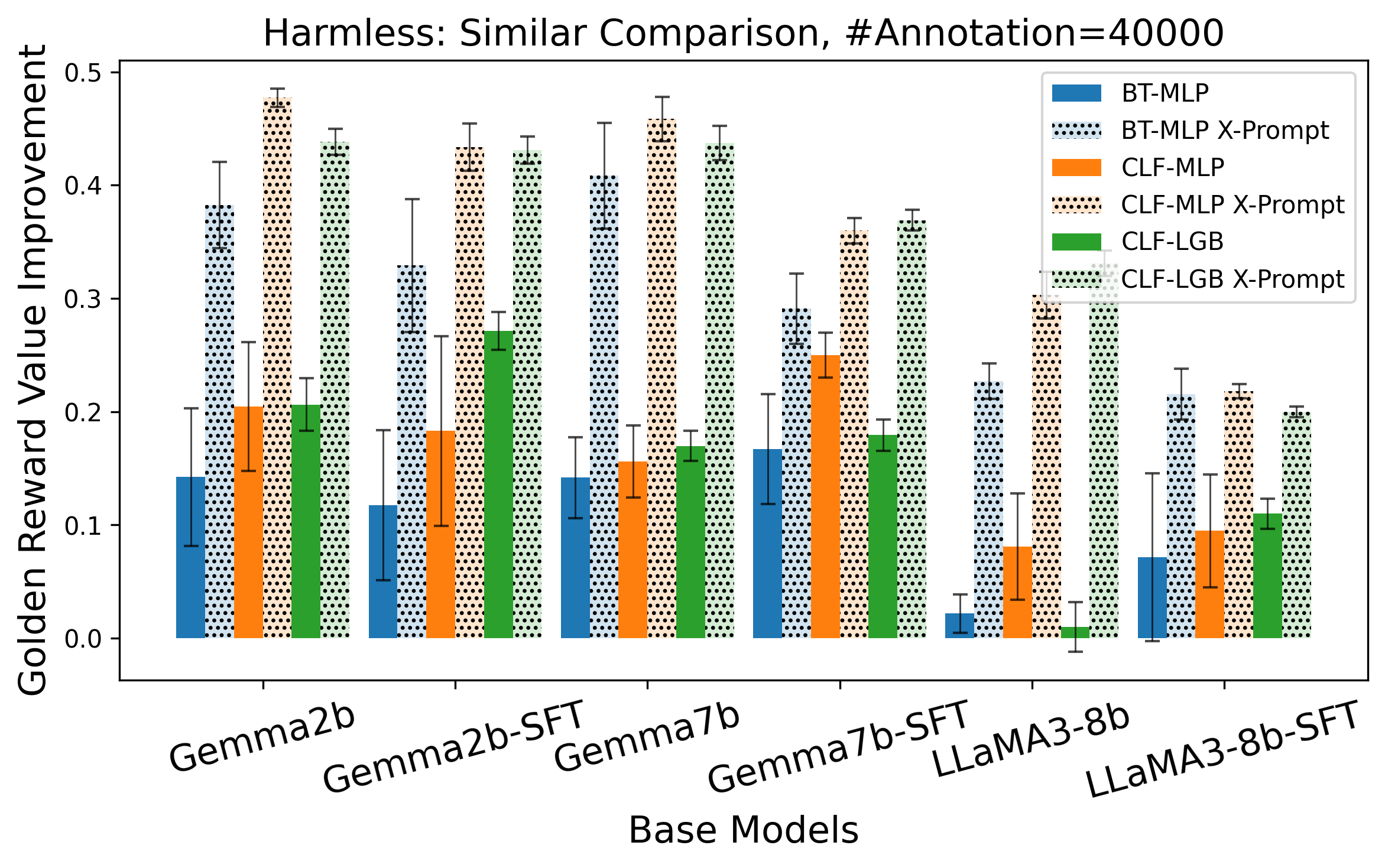}
    \includegraphics[width=0.49\linewidth]{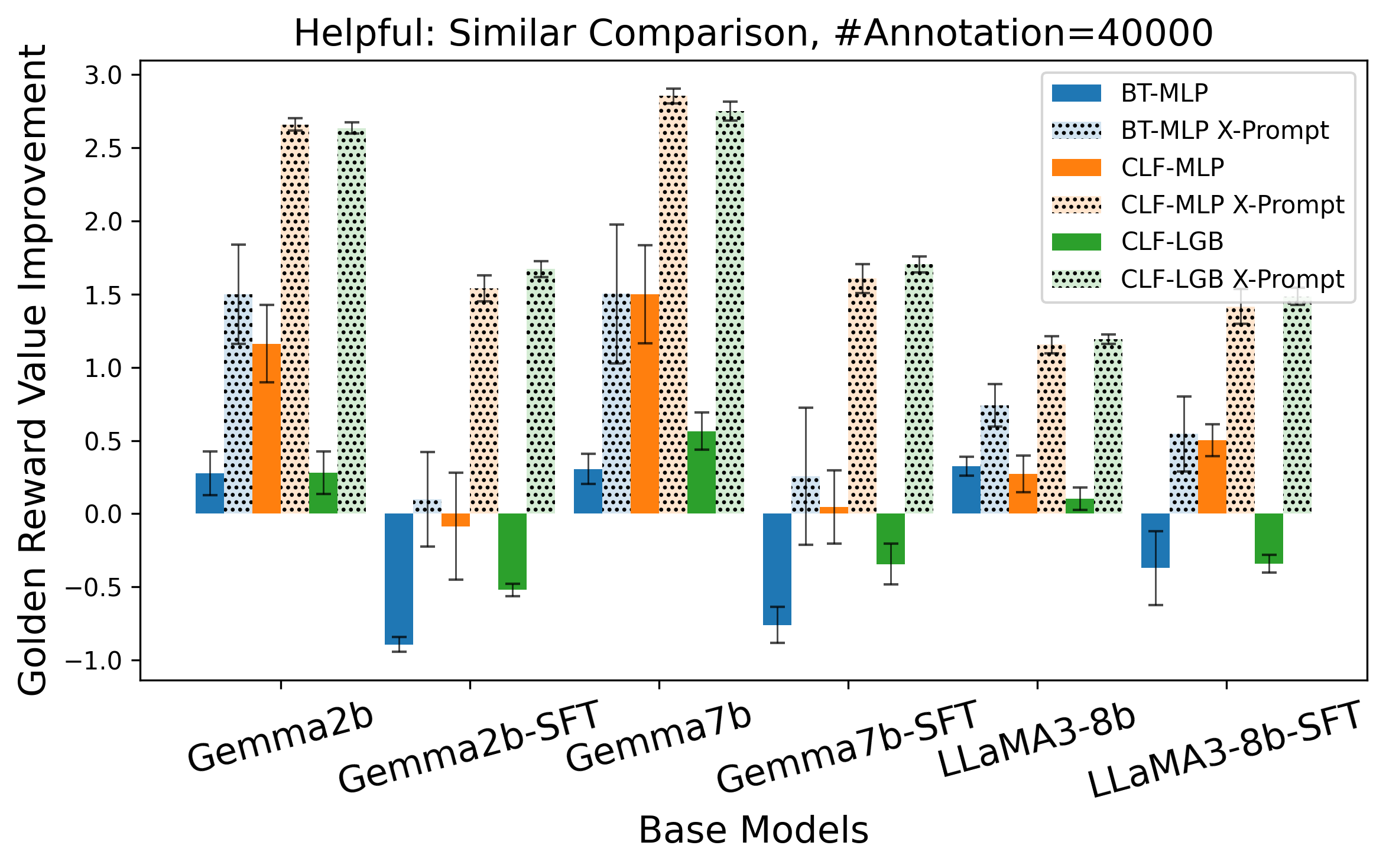}
    \includegraphics[width=0.49\linewidth]{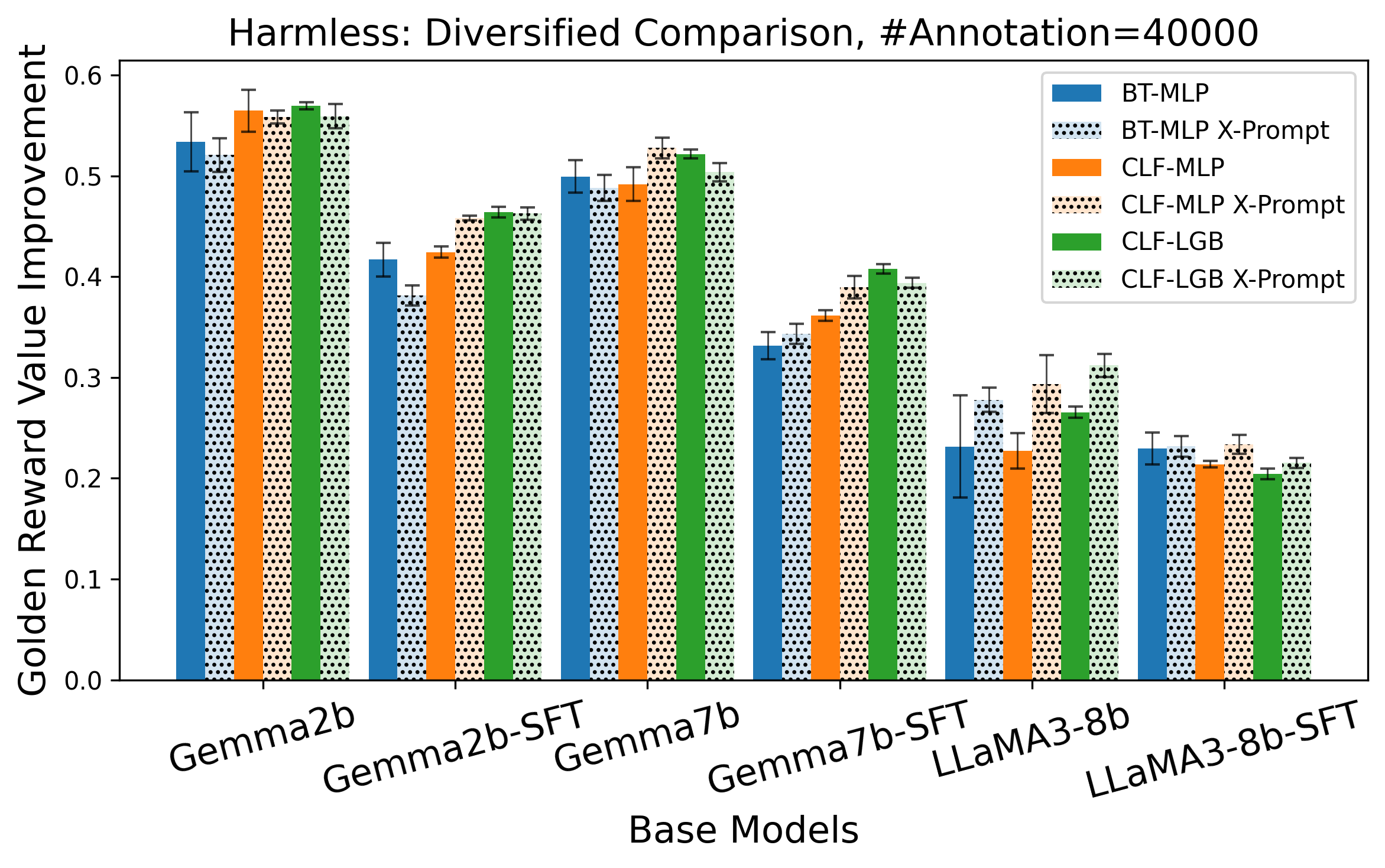}
    \includegraphics[width=0.49\linewidth]{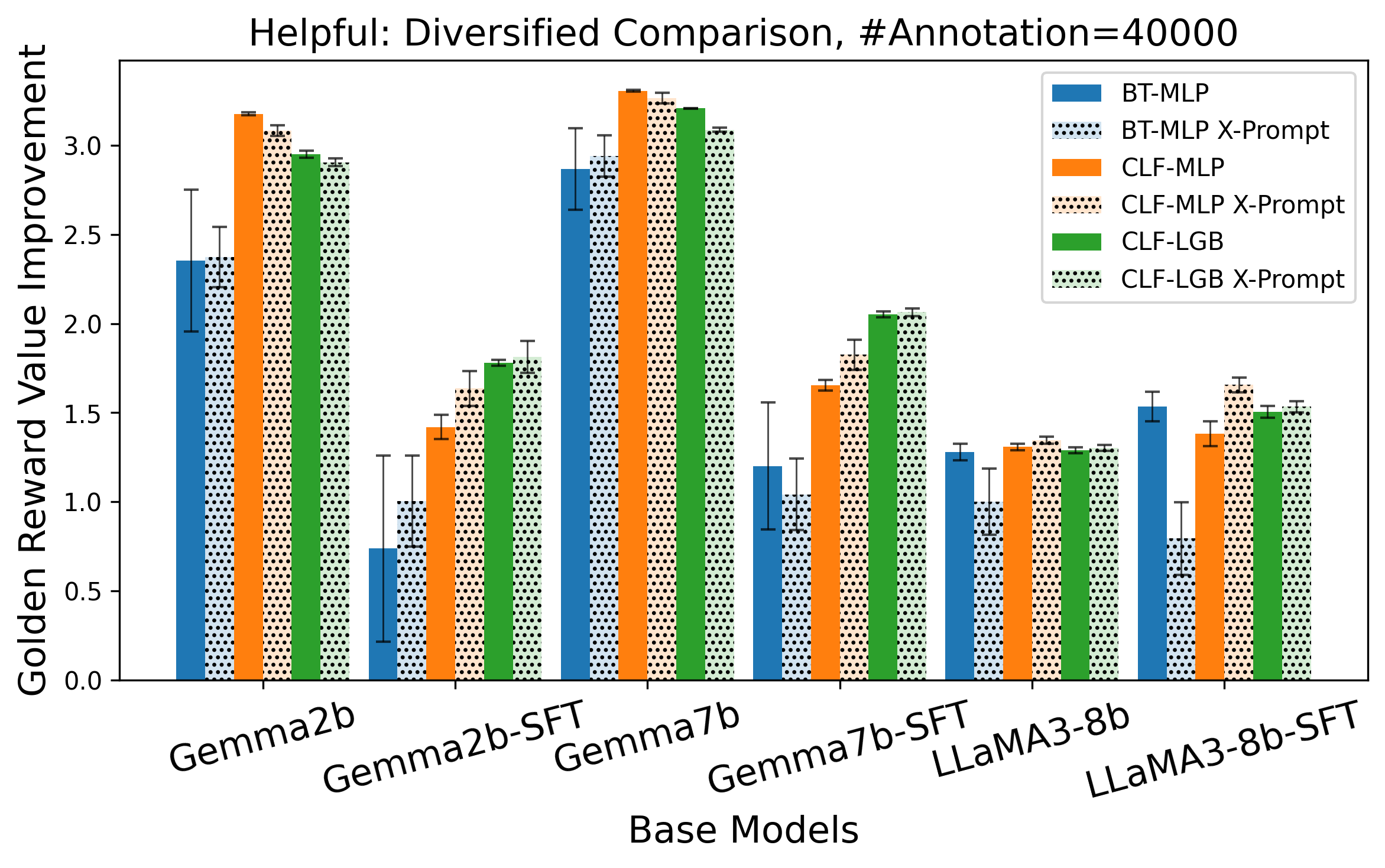}
    \caption{Results on cross-prompt comparisons, with $40000$ annotations, $\beta=1$. Error bars are given by 5 runs by changing seeds.}
    \label{fig:exp3_appdx_noise3}
\end{figure}

\begin{figure}[h!]
    \centering
    \includegraphics[width=0.49\linewidth]{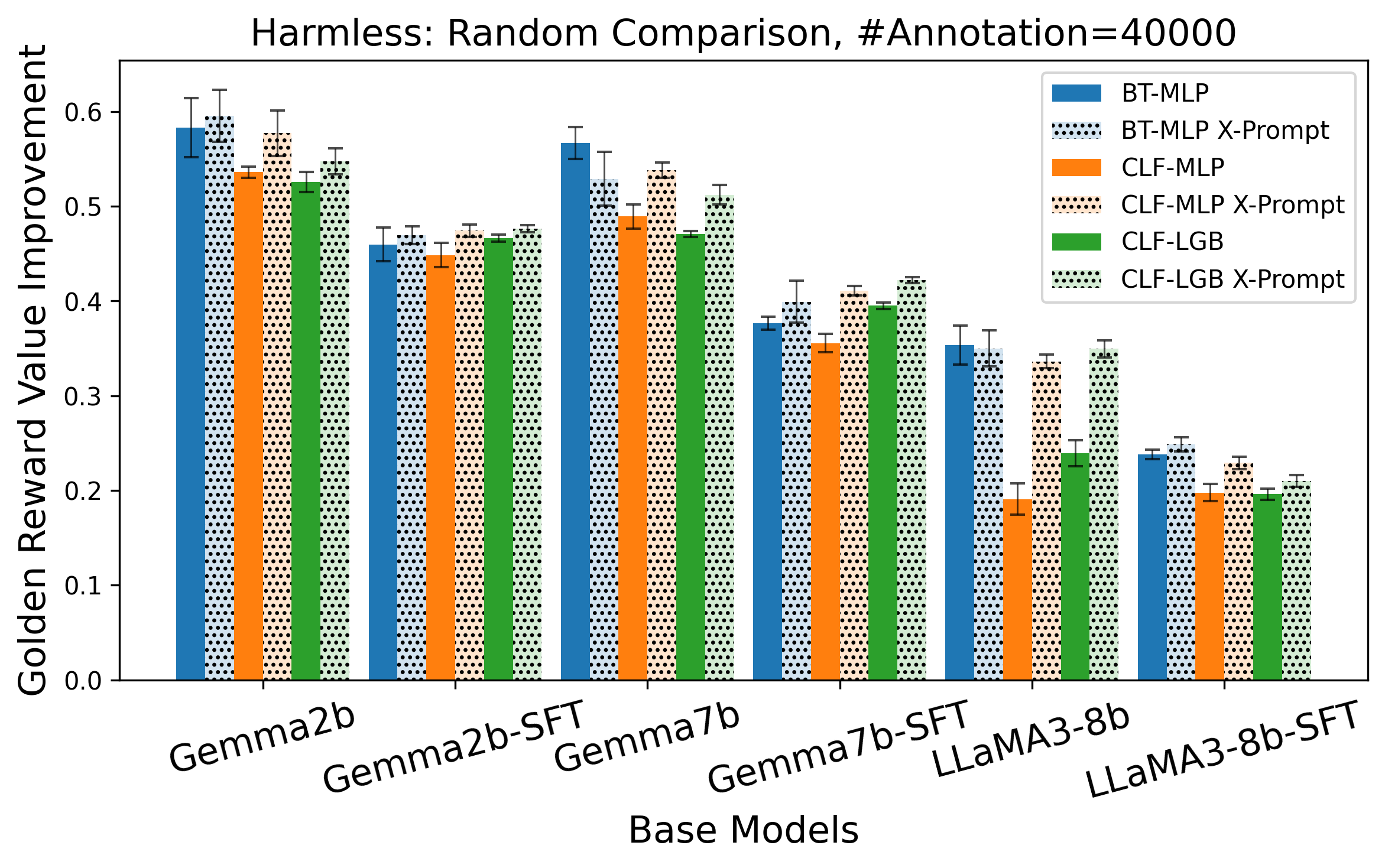}
    \includegraphics[width=0.49\linewidth]{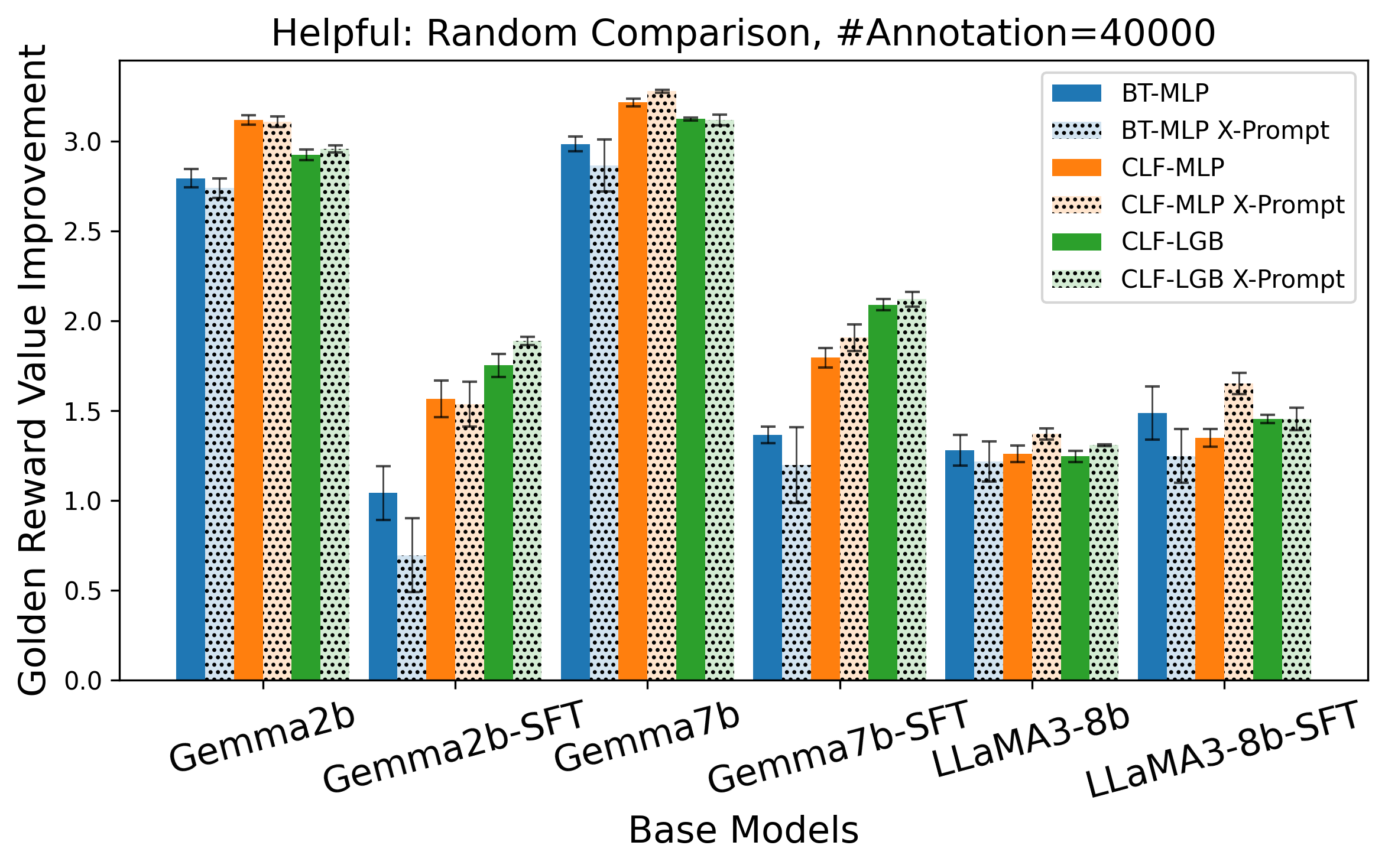}
    \includegraphics[width=0.49\linewidth]{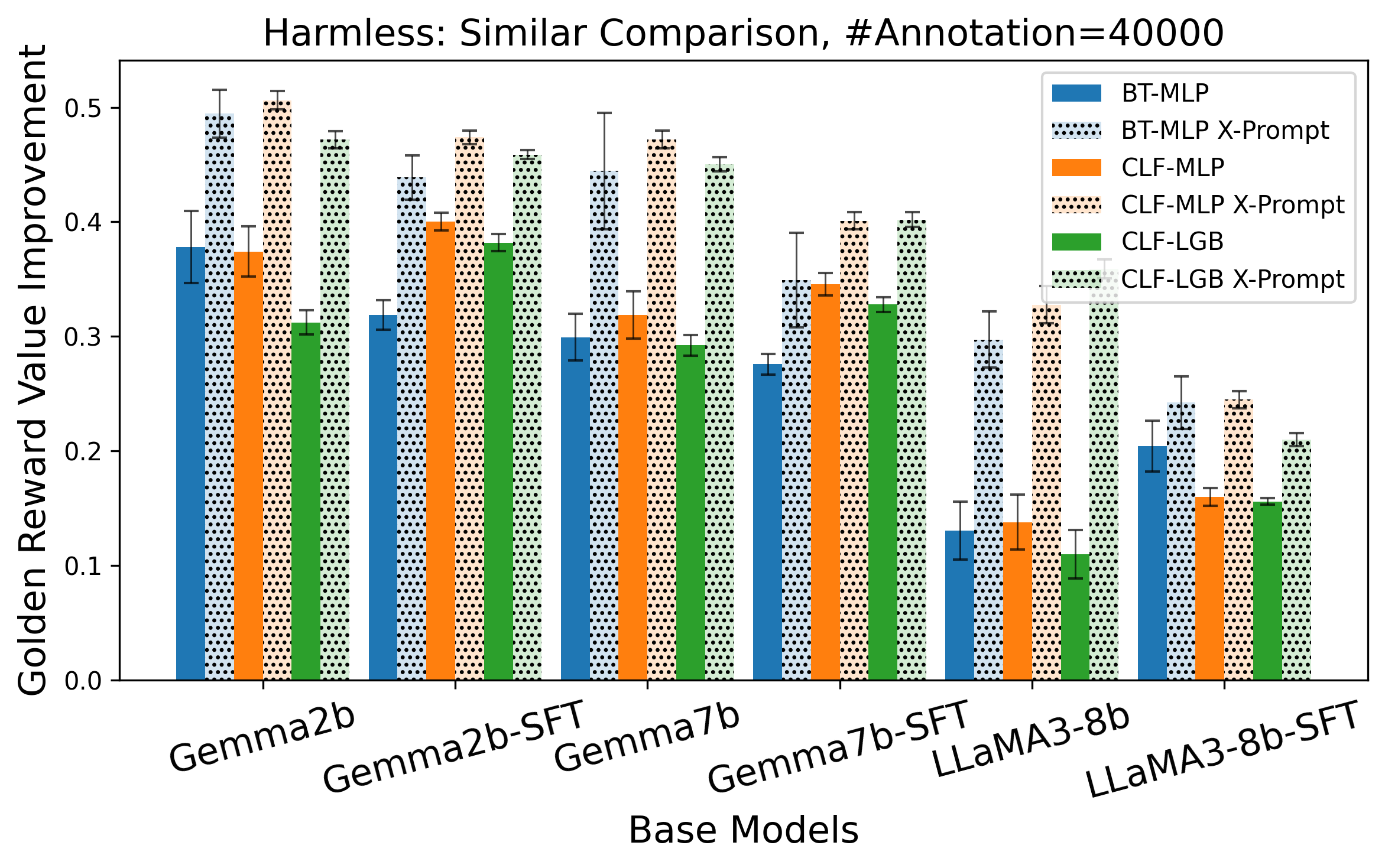}
    \includegraphics[width=0.49\linewidth]{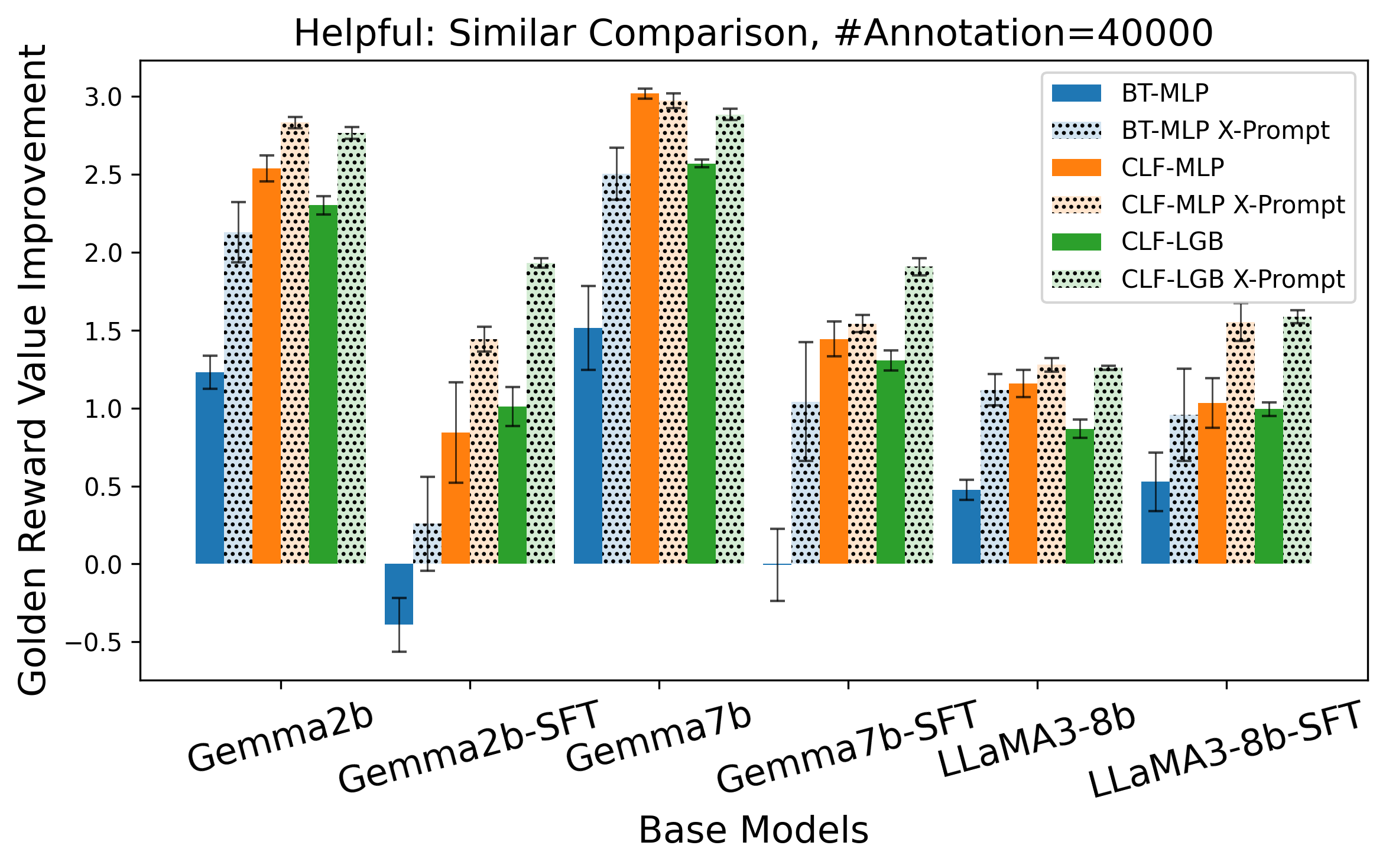}
    \includegraphics[width=0.49\linewidth]{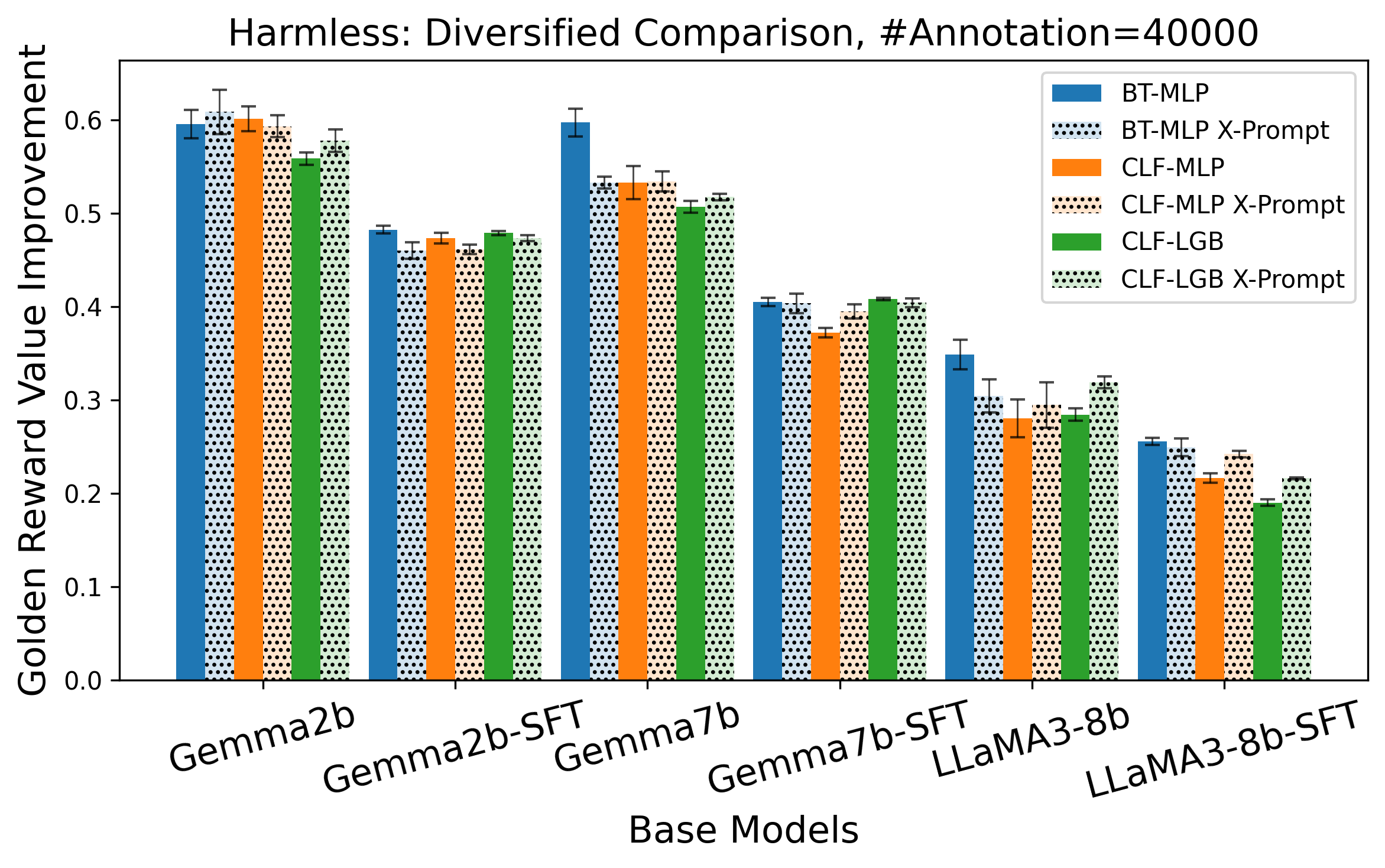}
    \includegraphics[width=0.49\linewidth]{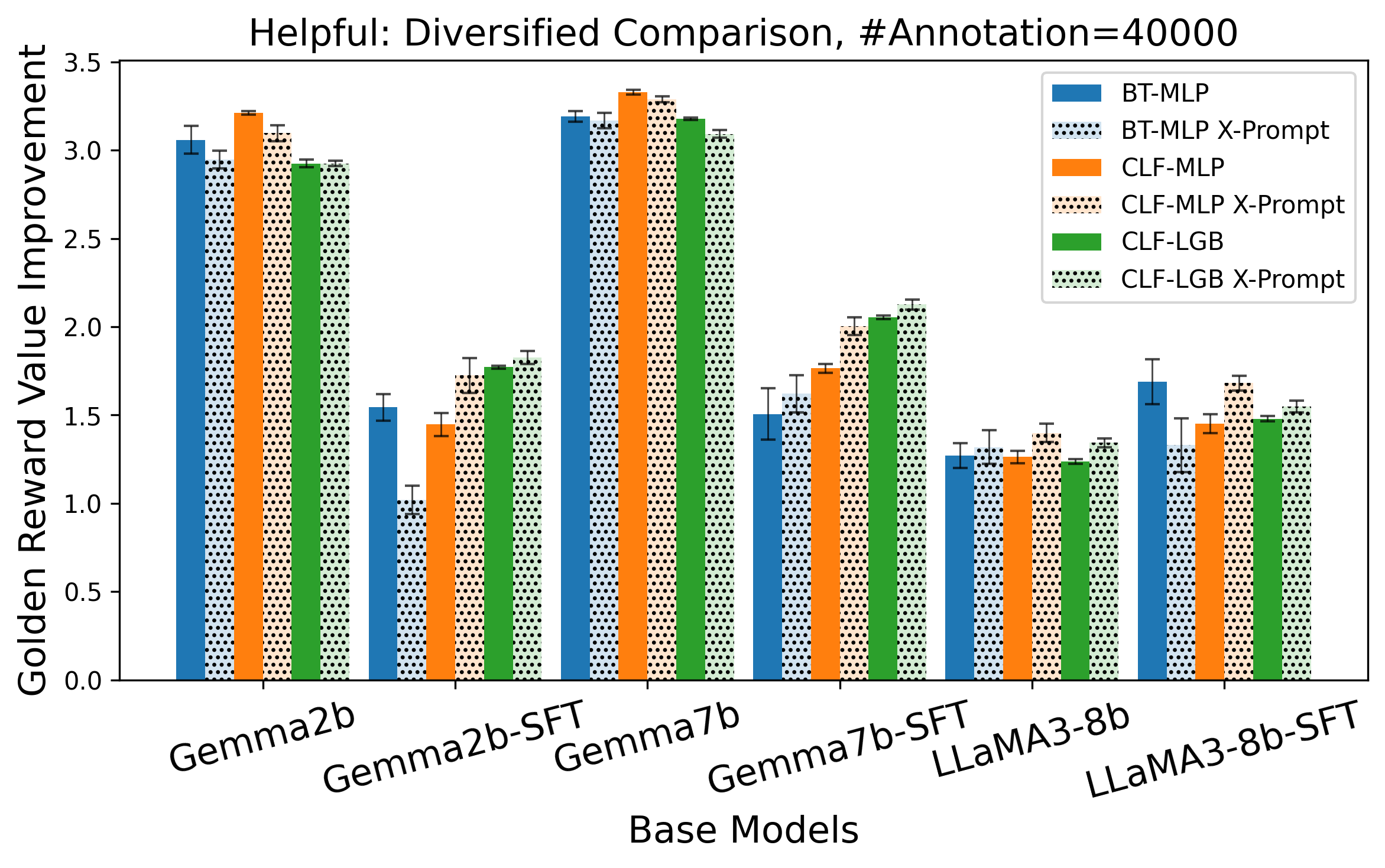}
    \caption{Results on cross-prompt comparisons, with $40000$ annotations, $\beta=10$. Error bars are given by 5 runs by changing seeds.}
    \label{fig:exp3_appdx_noise4}
\end{figure}

\clearpage
\section{Additional Results on Helpsteer and UltraFeedback}
We additionally experiment with the Helpsteer dataset~\citep{wang2023helpsteer} and UltraFeedback dataset~\citep{cui2023ultrafeedback}. On both tasks, we use the same open-sourced golden reward model as we used for the helpfulness evaluation~\citep{dong2023raft}.

\begin{figure}[h!]
    \centering
    \includegraphics[width=0.49\linewidth]{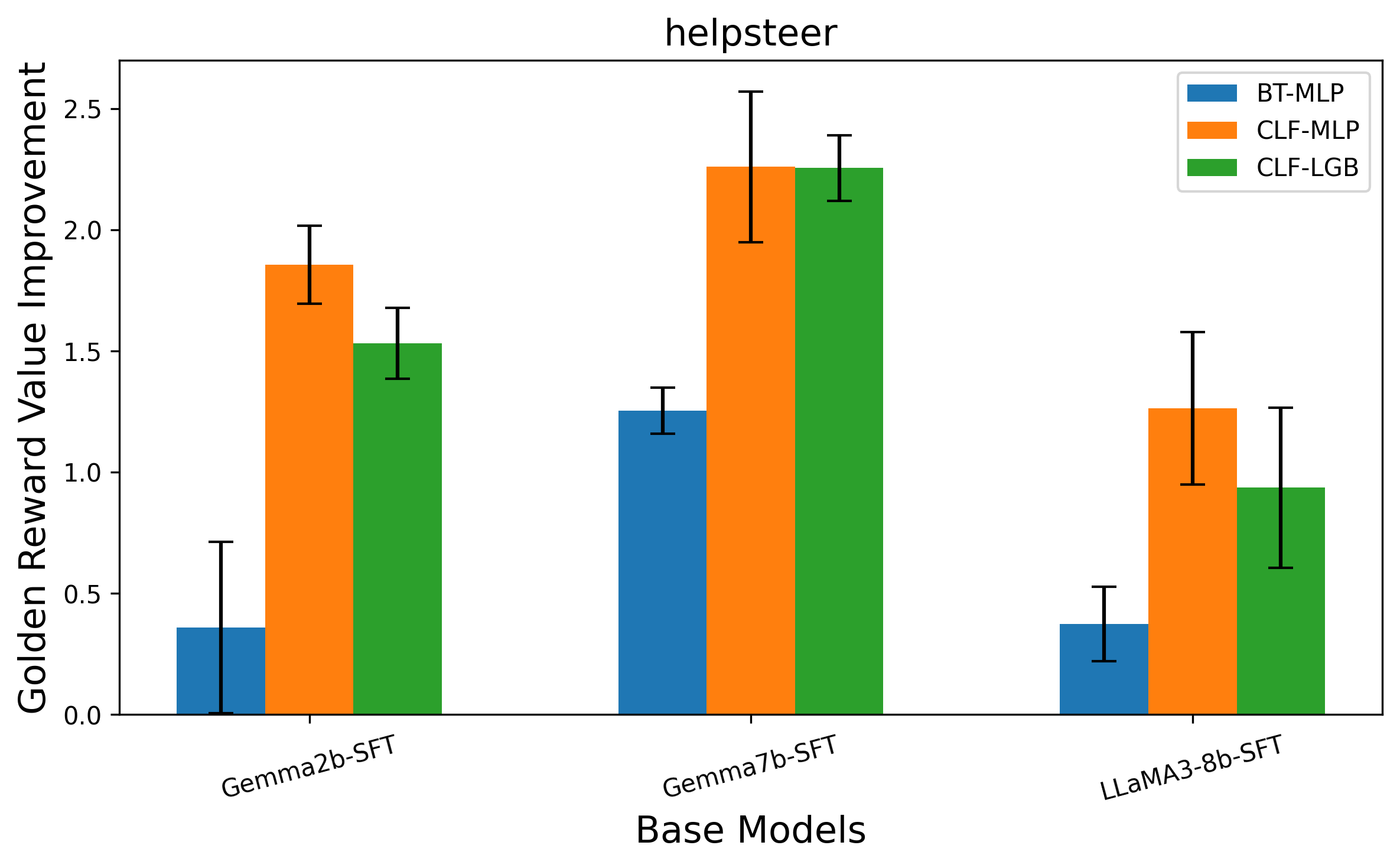}
    \includegraphics[width=0.49\linewidth]{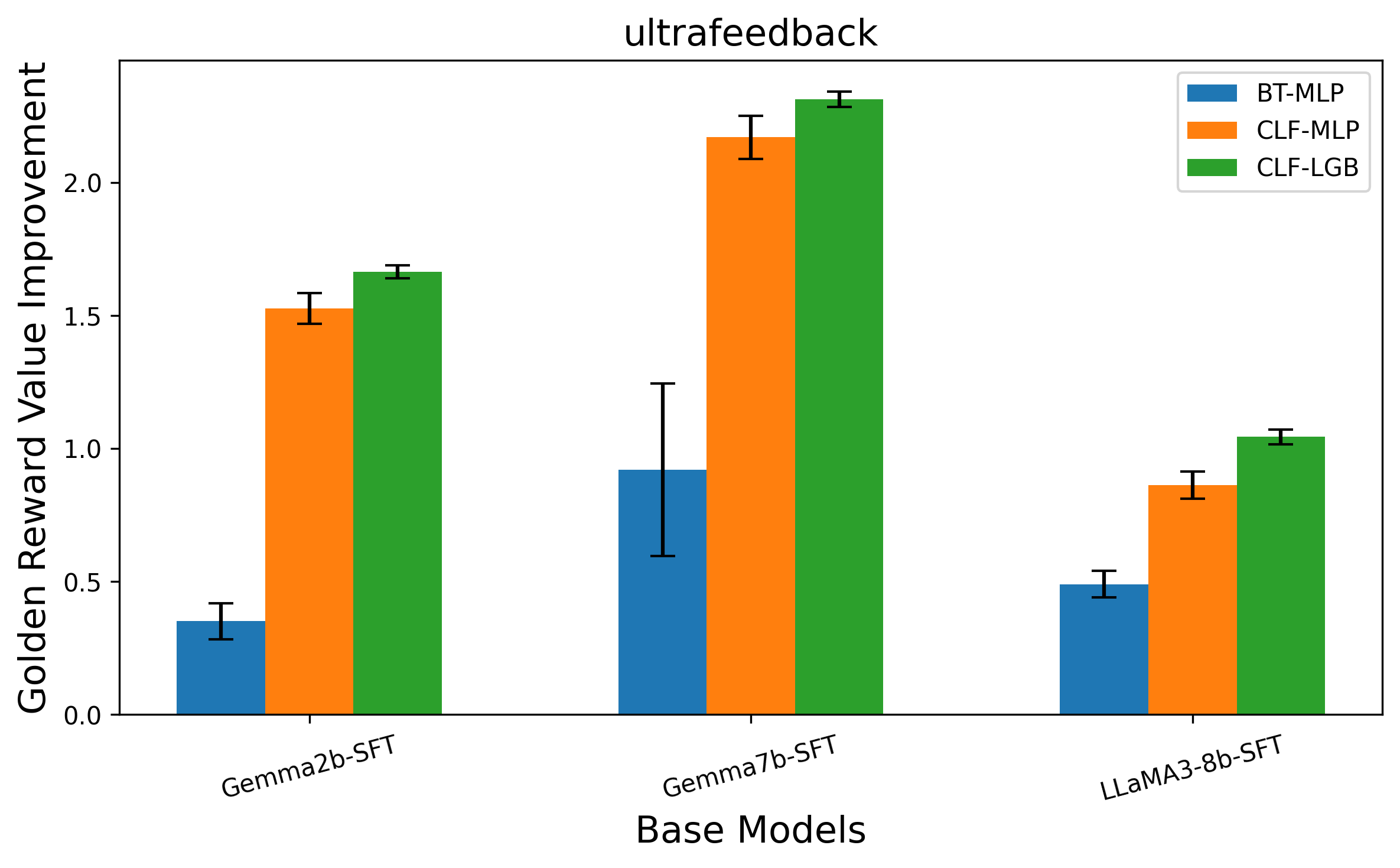}
    \caption{\small \textit{Comparison between BT and Classification reward models.} In general, the classification reward models achieve better performance than the BT reward models, with the added flexibility of using off-the-shelf classifiers beyond MLPs. Error bars are given by 5 runs with different seeds.}
    \vspace{-0.2cm}
    \label{fig:experiment1-more_data}
\end{figure}%

\begin{figure}[h!]
    \centering
    \vspace{-0.3cm}
    \includegraphics[width=0.9\linewidth]{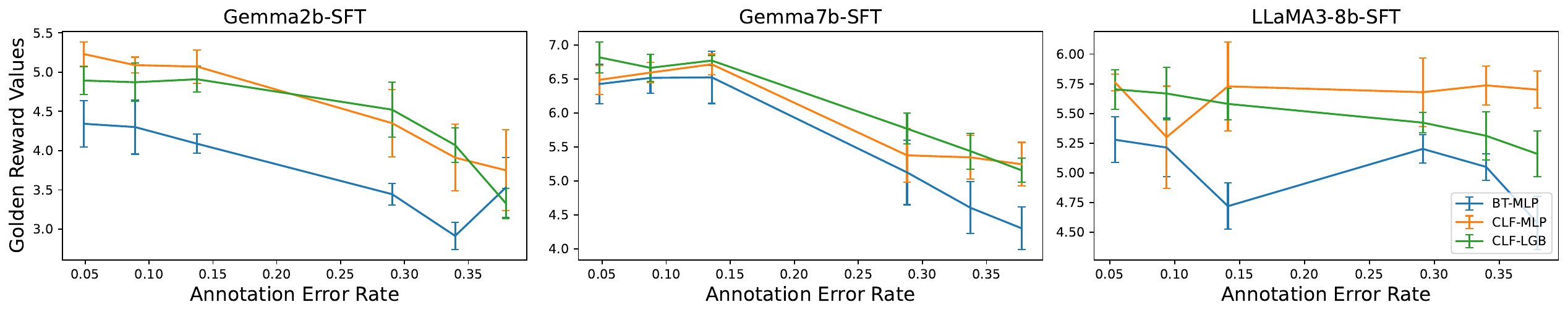}
    \includegraphics[width=0.9\linewidth]{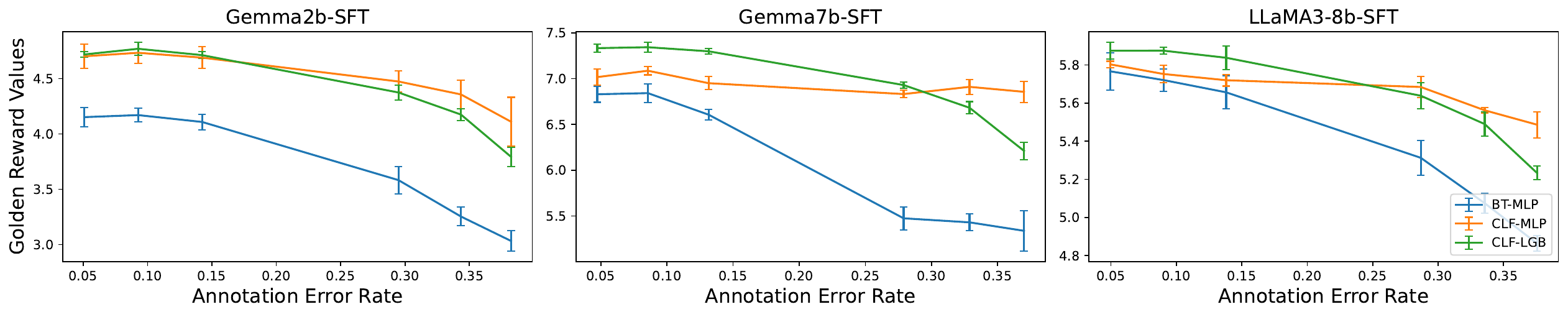}
    \vspace{-0.2cm}
    \caption{\small \textit{Changing the annotation quality. Dataset: Helpsteer, UltraFeedback.}}\vspace{-0.4cm}
    \label{fig:experiment_2_in_main_more_data}
\end{figure}
\begin{figure}[h!]
    \centering
    \vspace{-0.2cm}
    \includegraphics[width=0.9\linewidth]{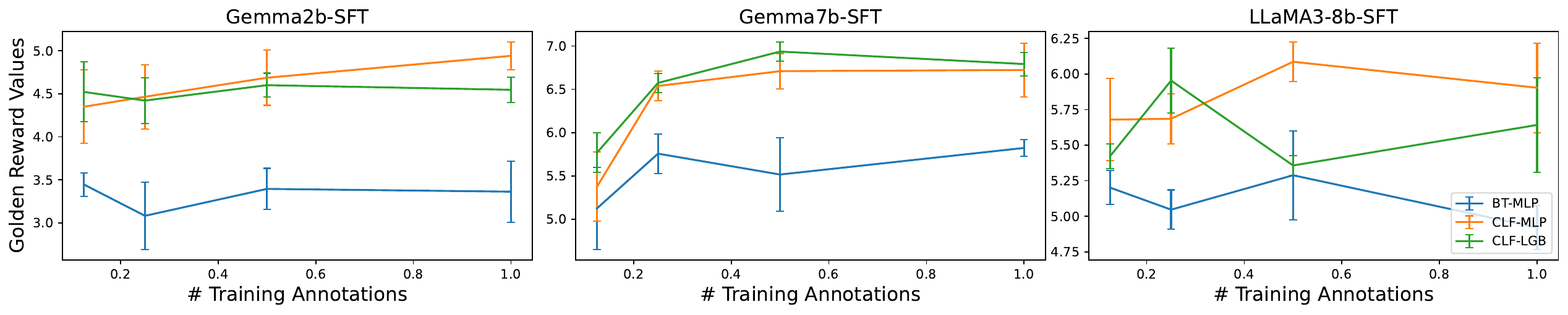}
    \includegraphics[width=0.9\linewidth]{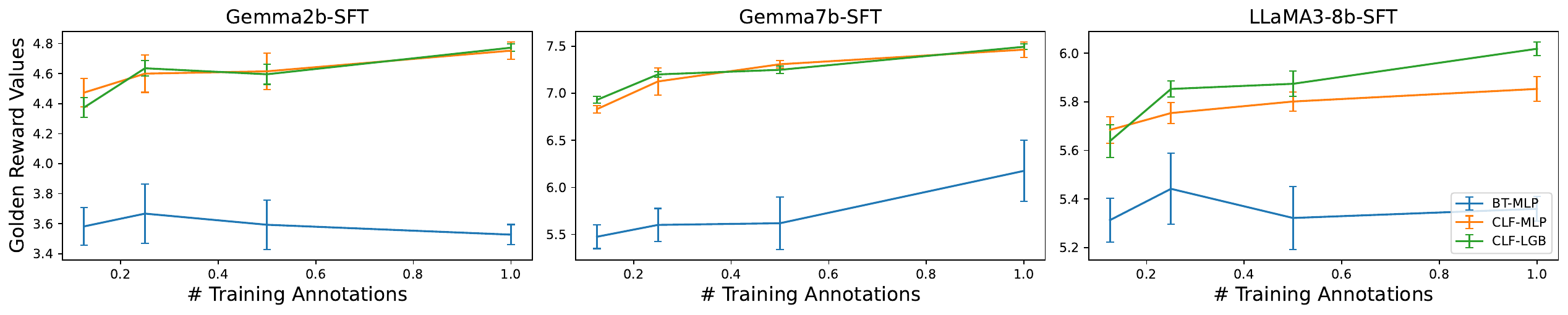}
    \vspace{-0.2cm}
    \caption{\small \textit{Changing the annotation quality. Dataset: Helpsteer, UltraFeedback.}}\vspace{-0.3cm}
    \label{fig:experiment_2size_in_main_more_data}
\end{figure}
\begin{figure}[h]
    \centering
    \includegraphics[width=0.49\linewidth]{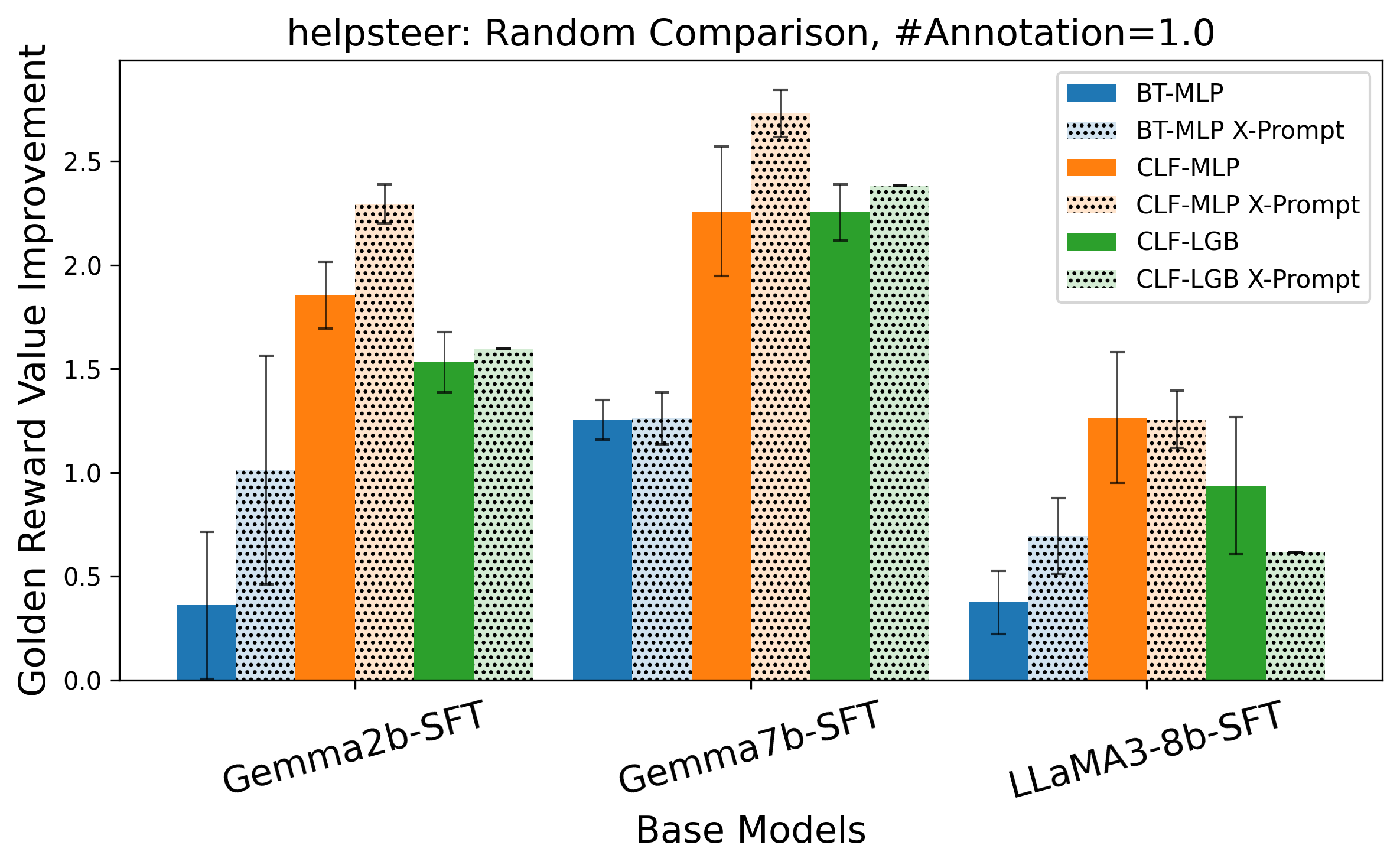}
    \includegraphics[width=0.49\linewidth]{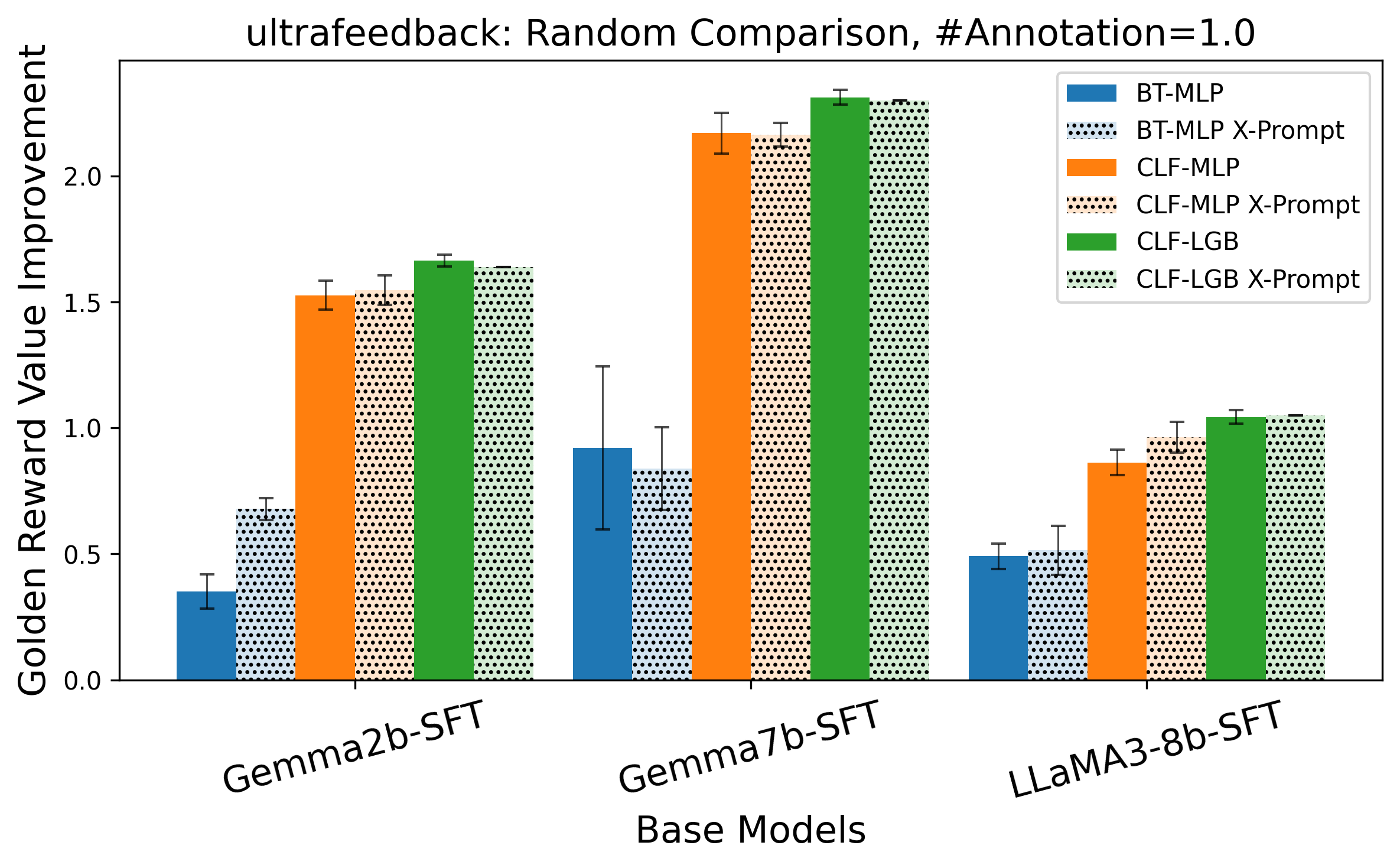}
    \caption{\small Results comparing cross-prompt comparison based annotations. Preference annotations on cross-prompt comparisons outperform same-prompt comparisons.}
    \label{fig:experiment3_performance_part1_more_data}
\end{figure}

\begin{figure}[h!]
    \centering
    \vspace{-0.2cm}
    \includegraphics[width=0.49\linewidth]{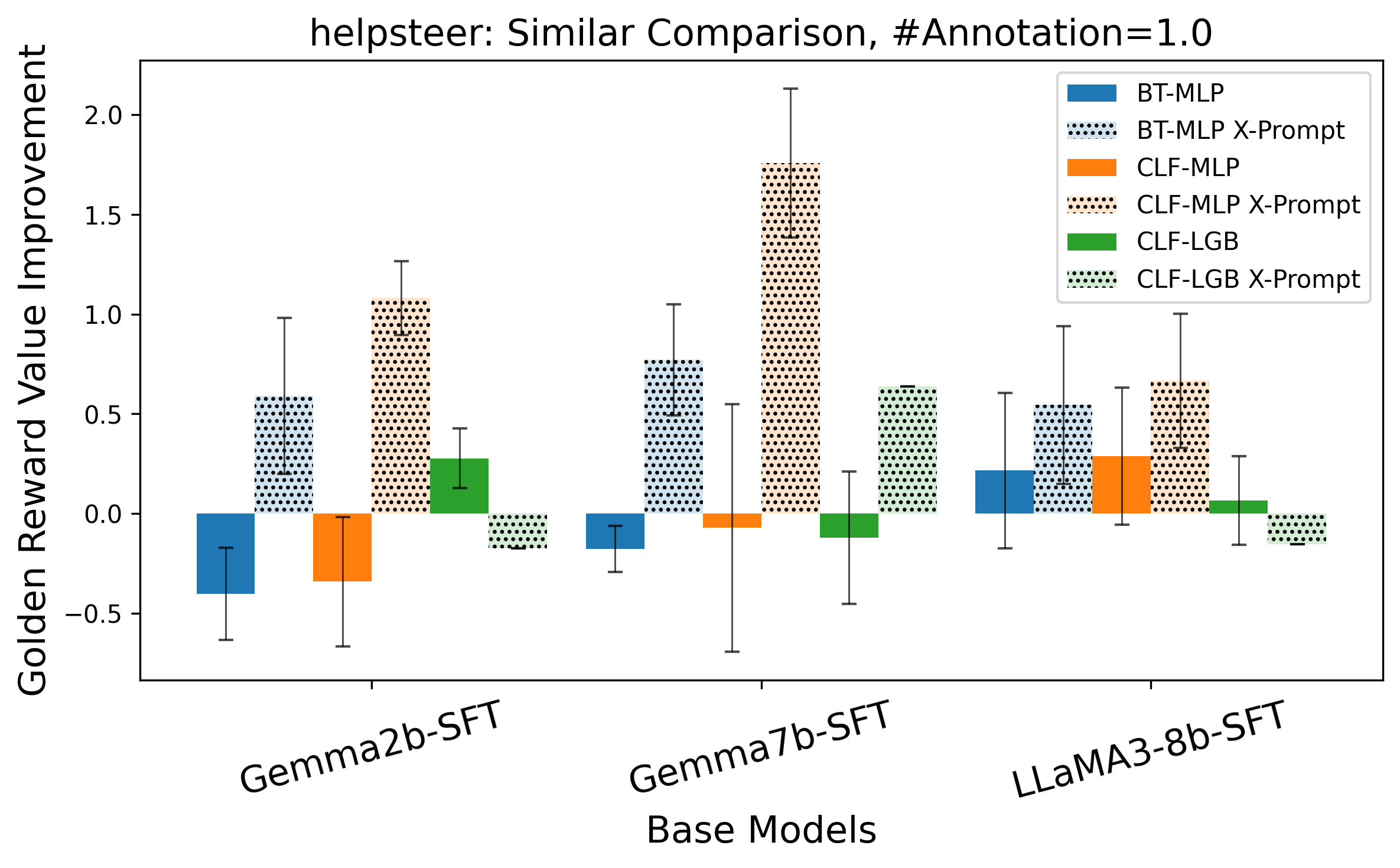}
    \includegraphics[width=0.49\linewidth]{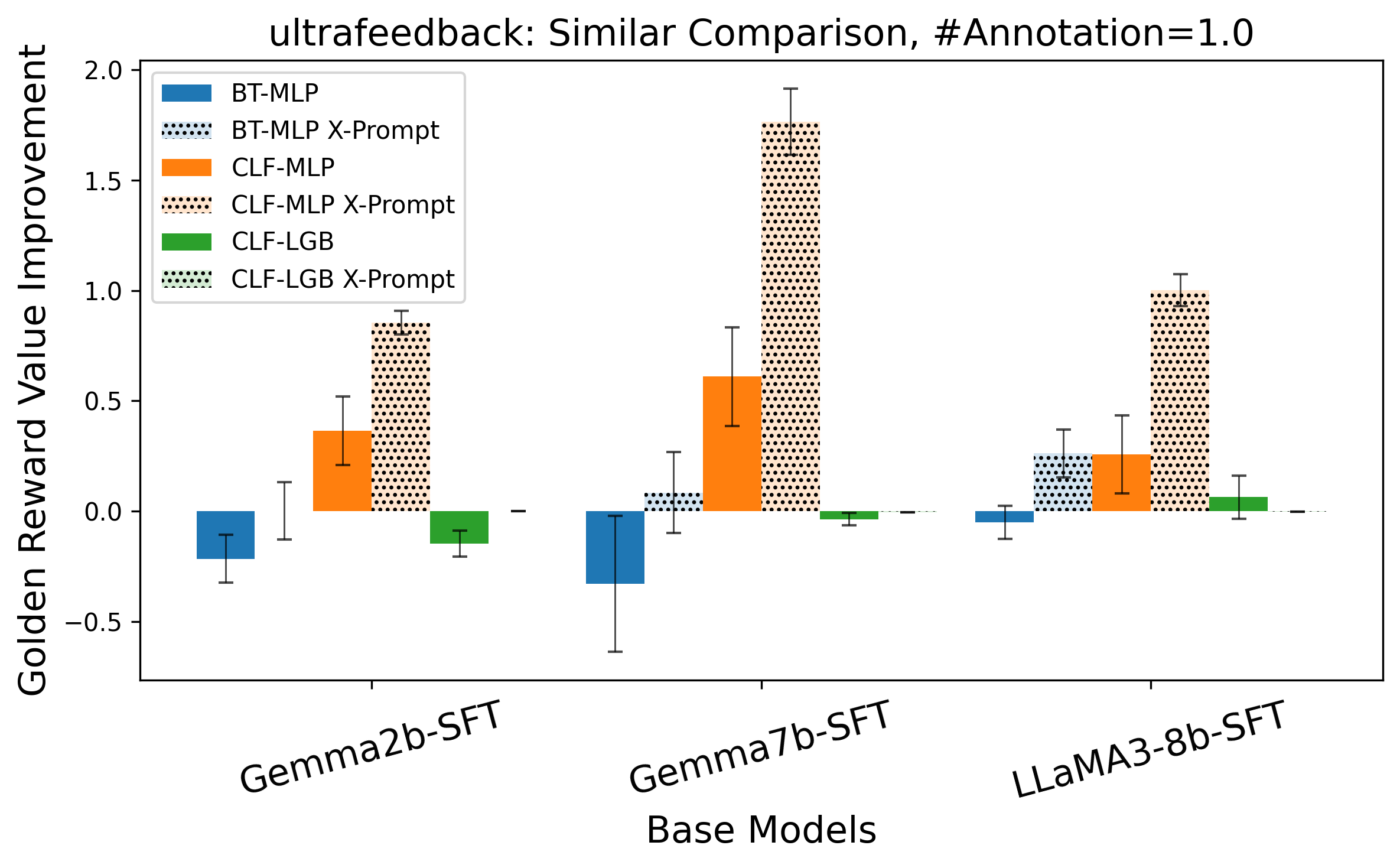}
    \includegraphics[width=0.49\linewidth]{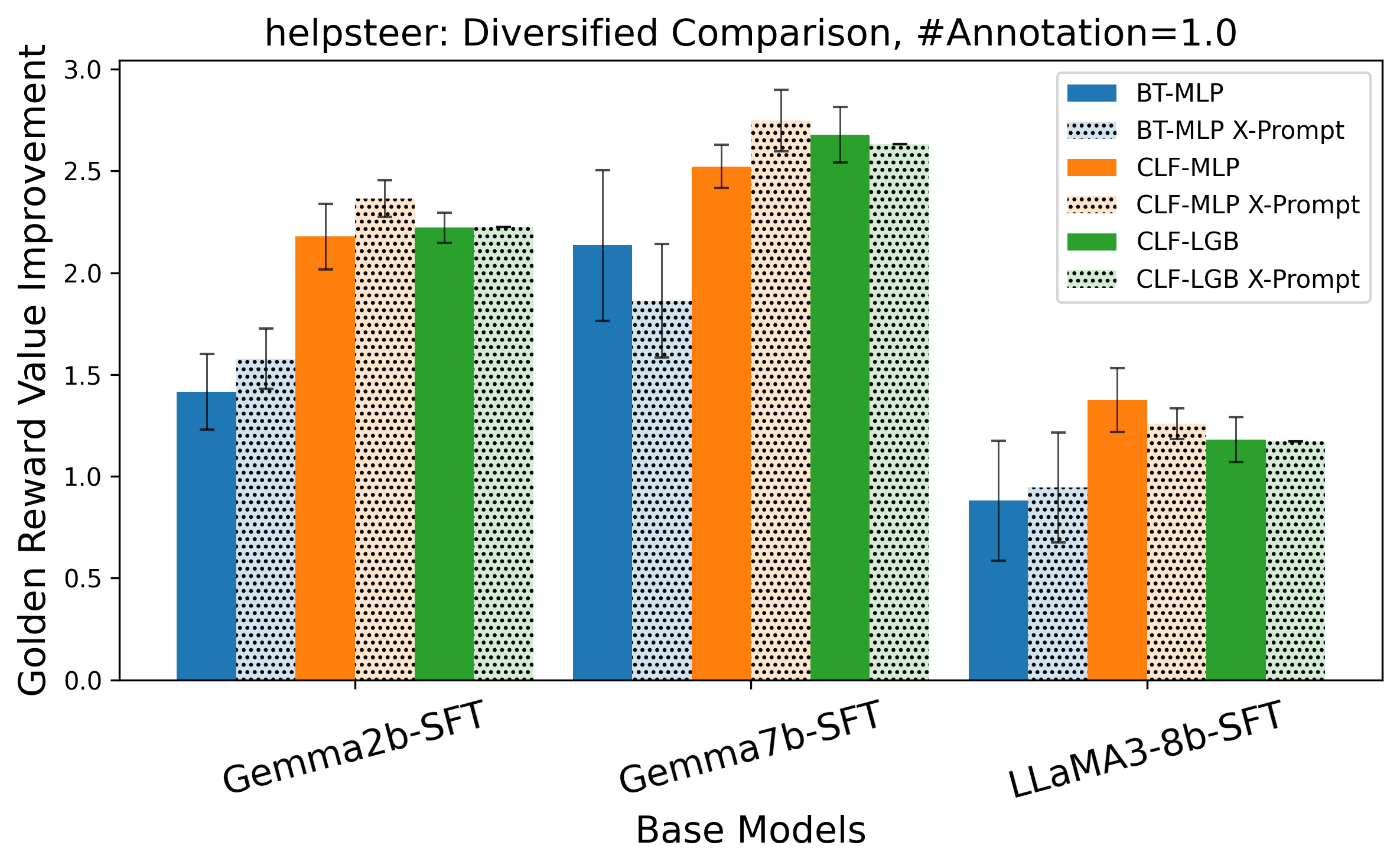}
    \includegraphics[width=0.49\linewidth]{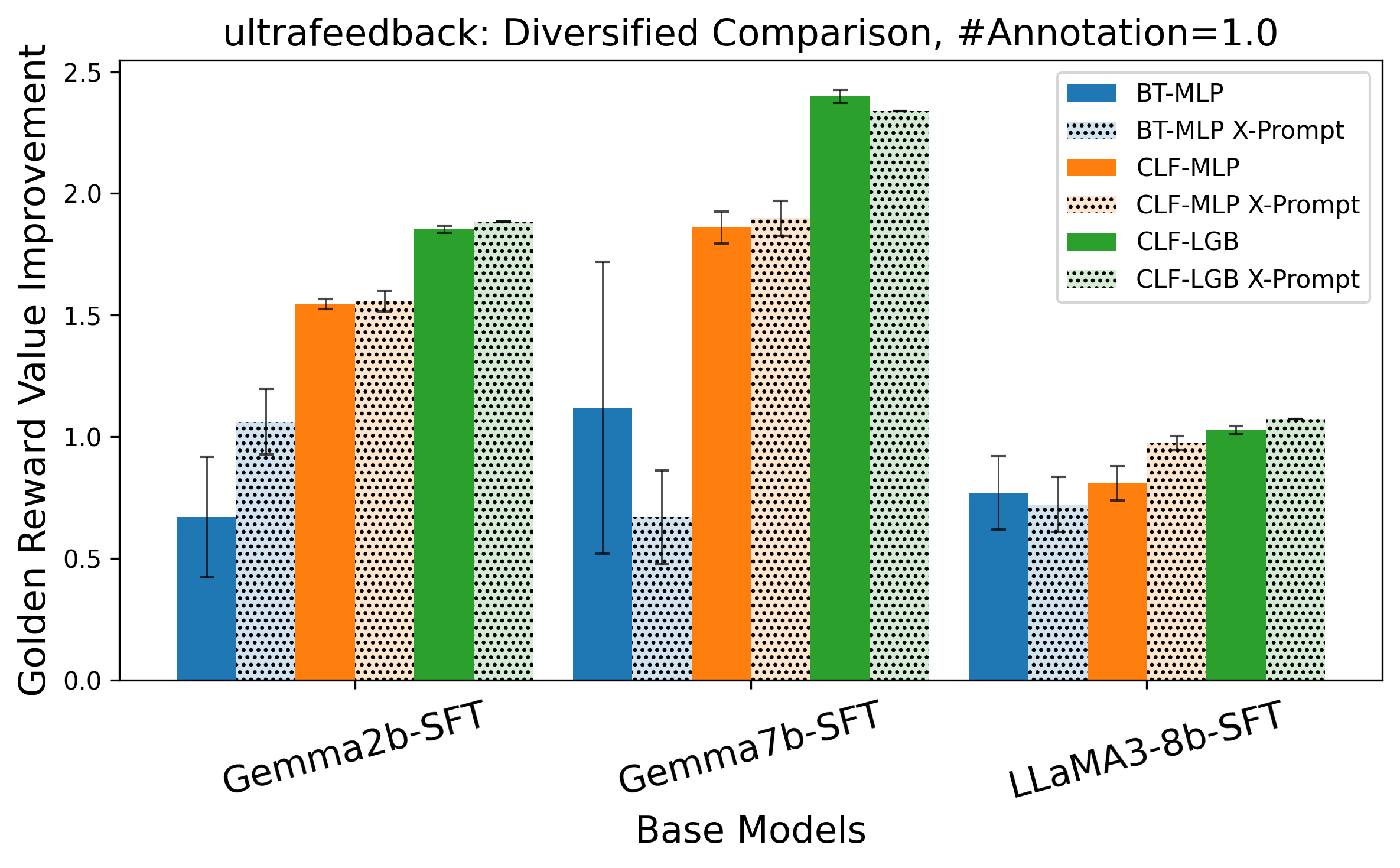}
    \vspace{-0.2cm}
    \caption{\small \textit{Results comparing cross-prompt comparison-based annotations on synthetically generated similar or diversified comparison pairs.} Cross-prompt comparison significantly improves the performance of reward modeling with same-prompt response pairs lacking diversity. Error bars are from 5 runs with different seeds.}\vspace{-0.4cm}
    \label{fig:experiment3_performance_part2_more_data}
\end{figure}
\newpage
\clearpage
\bibliography{main}

\end{document}